\newcommand{\mc}[1]{\mathcal{#1}}
\newcommand{\norm}[1]{\left\|{#1}\right\|} 
\newcommand{\norms}[1]{\|{#1}\|} 
\newcommand{\zeros}{{\mathbf{0}}}
\newcommand{\defeq}{:=}
\newcommand{\eqdef}{=:}
\newcommand{\half}{\tfrac{1}{2}}
\newcommand{\bhalf}{\frac{1}{2}}
\newcommand{\reals}{\mathbb{R}} 
\newcommand{\Z}{\mathbb{Z}}
\newcommand{\naturals}{\mathbb{N}} 
\newcommand{\N}{\mathbb{N}}
\long\def\@makecaption#1#2{
  \vskip 0.8ex
  \setbox\@tempboxa\hbox{\small {\bf #1:} #2}
  \parindent 1.5em  
  \dimen0=\hsize
  \advance\dimen0 by -3em
  \ifdim \wd\@tempboxa >\dimen0
  \hbox to \hsize{
    \parindent 0em
    \hfil 
    \parbox{\dimen0}{\def\baselinestretch{0.96}\small
      {\bf #1.} #2
    } 
    \hfil}
  \else \hbox to \hsize{\hfil \box\@tempboxa \hfil}
  \fi
}
\newcommand{\<}{\left\langle} 
\renewcommand{\>}{\right\rangle}
\renewcommand{\iff}{\Leftrightarrow}
\renewcommand{\P}{\mathbb{P}} 
\providecommand{\sign}{\mathop{\rm sign}}
\newtheorem{theorem}{Theorem}
\newtheorem{lemma}[theorem]{Lemma}
\newtheorem{proposition}[theorem]{Proposition}
\newtheorem{definition}{Definition}
\newtheorem{assumption}{Assumption}
\theoremstyle{definition}
\newtheorem{remark}{Remark}
\def\1{\bm{1}}
\def\ra{{\textnormal{a}}}
\def\rb{{\textnormal{b}}}
\def\rx{{\textnormal{x}}}
\def\ry{{\textnormal{y}}}
\def\rvx{{\mathbf{x}}}
\def\va{{\bm{a}}}
\def\vb{{\bm{b}}}
\def\vg{{\bm{g}}}
\def\vp{{\bm{p}}}
\def\vq{{\bm{q}}}
\def\vr{{\bm{r}}}
\def\vt{{\bm{t}}}
\def\vv{{\bm{v}}}
\def\vx{{\bm{x}}}
\def\vy{{\bm{y}}}
\def\vz{{\bm{z}}}
\def\mA{{\bm{A}}}
\def\mH{{\bm{H}}}
\def\mI{{\bm{I}}}
\def\mS{{\bm{S}}}
\DeclareMathAlphabet{\mathsfit}{\encodingdefault}{\sfdefault}{m}{sl}
\SetMathAlphabet{\mathsfit}{bold}{\encodingdefault}{\sfdefault}{bx}{n}
\def\gE{{\mathcal{E}}}
\def\sS{{\mathbb{S}}}
\def\sX{{\mathbb{X}}}
\newcommand{\E}{\mathbb{E}}
\newcommand{\R}{\mathbb{R}}
\definecolor{darkblue}{rgb}{0.0,0.0,0.65}
\definecolor{darkred}{rgb}{0.65,0.0,0.0}
\definecolor{darkgreen}{rgb}{0.0,0.65,0.0}
\title{Minibatch vs Local SGD with Shuffling:\\Tight Convergence Bounds and Beyond}
\author{Chulhee Yun\\
KAIST AI\\
\texttt{chulhee.yun@kaist.ac.kr} \\
\And
Shashank Rajput \\
Univ.\ of Wisconsin-Madison CS\\
\texttt{rajput3@wisc.edu} \\
\And
Suvrit Sra \\
MIT EECS \\
\texttt{suvrit@mit.edu}
}
\newcommand{\sgd}{\textsc{Sgd}}
\newcommand{\randshuf}{\textsc{RR}}
\newcommand{\unif}{{\rm Unif}}
\newcommand{\ess}{\mathop{\rm ess}}
\newcommand{\syncshuf}{\textsc{SyncShuf}}
\newcommand{\funcclsobj}{\mc F_{\rm obj}}
\newcommand{\funcclscmp}{\mc F_{\rm cmp}}
\begin{document}

\maketitle

\begin{abstract}

In distributed learning, \textbf{local SGD} (also known as federated averaging) and its simple baseline \textbf{minibatch SGD} are widely studied optimization methods. Most existing analyses of these methods assume independent and unbiased gradient estimates obtained via \emph{with-replacement} sampling. In contrast, we study \emph{shuffling-based} variants: \textbf{minibatch} and \textbf{local Random Reshuffling}, which draw stochastic gradients without replacement and are thus closer to practice. For smooth functions satisfying the Polyak-{\L}ojasiewicz condition, we obtain convergence bounds (in the large epoch regime) which show that these shuffling-based variants converge \emph{faster} than their with-replacement counterparts. Moreover, we prove matching lower bounds showing that our convergence analysis is \emph{tight}. Finally, we propose an algorithmic modification called \textbf{synchronized shuffling} that leads to convergence rates \emph{faster} than our lower bounds in near-homogeneous settings.
 
\end{abstract}

\vspace*{-5pt}
\section{Introduction}
\vspace*{-4pt}
Distributed learning within the framework of federated learning \citep{konevcny2016federated,mcmahan2017communication} has witnessed increasing interest recently. A key property of this framework is that models are trained locally using only private data on devices/machines distributed across a network, while parameter updates are aggregated and synchronized at a server.\footnote{A distinctive feature of federated learning is that not all devices necessarily participate in the updates; however, we focus on the full participation setting in this paper.} Communication is often the key bottleneck for federated learning, which drives the search for algorithms that can train fast while requiring less communication---see \citet{li2020federated2,kairouz2021advances} for recent surveys.

A basic algorithm for federated learning is local stochastic gradient descent (SGD), also known as federated averaging. The goal is to minimize the global objective that is an average of the local objectives. In \textbf{local SGD}, we have $M$ machines and a server. After each round of communication, each of the $M$ machines locally runs $B$ steps of SGD on its local objective. Every $B$ iterations, the server aggregates the updated local iterates from the machines, averages them, and then synchronizes the machines with the average. Convergence analysis of local SGD and its variants has drawn great interest recently~\citep{dieuleveut2019communication,haddadpour2019local,haddadpour2019convergence,stich2019local,yu2019parallel,li2020federated,li2020convergence,koloskova2020unified,khaled2020tighter,spiridonoff2020local,karimireddy2020scaffold,stich2020error,qu2020federated}.

Of the many, the biggest motivation for our paper comes from the line of work by \citet{woodworth2020local,woodworth2020minibatch,woodworth2021min}.
In~\citep{woodworth2020local,woodworth2020minibatch}, \textbf{minibatch SGD} is studied as a simple yet powerful baseline for this intermittent communication setting. Instead of locally updating the iterates $B$ times, minibatch SGD aggregates $B$ gradients (evaluated at the last synced iterate) from each of the $M$ machines, forms a minibatch of size $MB$, and then updates the shared iterate. Given the same $M$ and $B$, local SGD and minibatch SGD have the same number of gradient computations per round of communication, so it is worthwhile to understand which converges faster.
\citet{woodworth2020local,woodworth2020minibatch} point out that many existing analyses on local SGD show inferior convergence rate compared to minibatch SGD. Through their new upper and lower bounds, they identify regimes where local SGD can be faster than minibatch SGD.

While the theory of local and minibatch SGD has seen recent progress, there is still a gap between what is analyzed versus what is actually used. Most theoretical results assume independent and unbiased gradient estimates obtained via \emph{with-replacement} sampling of stochastic gradients (i.e., choosing training data indices uniformly at random). In contrast, most practitioners use \emph{without-replacement} sampling, where they shuffle indices randomly and access them sequentially.

Convergence analysis of without-replacement methods is challenging because gradients sampled within an epoch lack independence. As a result, the standard theory based on independent gradient estimates does not apply to shuffling-based methods. While shuffling-based methods are believed to be faster in practice~\citep{bottou2009curiously}, broad theoretical understanding of such methods remains elusive, except for noteworthy recent progress mainly focusing on the analysis of SGD \citep{gurbuzbalaban2015random,haochen2018random,nagaraj2019sgd,nguyen2020unified,safran2020good,safran2021random,rajput2020closing,rajput2021permutation,ahn2020sgd,mishchenko2020random,mishchenko2021proximal,tran2021smg}. These results indicate that in the \emph{large-epoch regime} (where the number of epochs is greater than some threshold), without-replacement SGD converges faster than with-replacement SGD.

\vspace*{-5pt}
\subsection{Our contributions}
\vspace*{-3pt}
We analyze convergence rates of without-replacement versions of local and minibatch SGD, where local component functions are reshuffled at every epoch. We call the respective algorithms \textbf{local $\randshuf$} (Algorithm~\ref{alg:localRS}) and \textbf{minibatch $\randshuf$} (Algorithm~\ref{alg:minibatchRS}), and their with-replacement counterparts \textbf{local $\sgd$} and \textbf{minibatch $\sgd$}. 
Our key contributions are as follows:
\begin{itemize}[leftmargin=12pt]
\setlength{\itemsep}{0pt}
    \item In Section~\ref{sec:upperbounds}, we present convergence bounds on minibatch and local $\randshuf$ for $L$-smooth functions satisfying the $\mu$-Polyak-{\L}ojasiewicz condition~(Theorems~\ref{thm:minibatchRS} \& \ref{thm:localRS}).
    Our theorems give \emph{high-probability} bounds, a departure from the common in-expectation bounds in the literature.  We show that minibatch and local $\randshuf$ converge \emph{faster} than minibatch and local $\sgd$ when the number of epochs is sufficiently large. We also identify a regime where local $\randshuf$ converges as fast as minibatch $\randshuf$: when synchronization happens frequently enough and local objectives are not too heterogeneous.
    See also Appendix~\ref{sec:comparison} for a detailed comparison with existing upper bounds.
    
    \item In Section~\ref{sec:lowerbounds}, we prove that the upper bounds obtained in Section~\ref{sec:upperbounds} are \emph{tight}, in all factors except $L$ and $\mu$.
    We present Theorems~\ref{thm:minibatchRS-LB} \& \ref{thm:localRS-LB} and Proposition~\ref{prop:localRS-LB2} which show lower bounds that match the upper bound up to a factor of $L^2/\mu^2$. 
    Our lower bound on local $\randshuf$ indicates that if the synchronization interval $B$ is too large, then local $\randshuf$ has 
    \emph{no gain} from parallel computation.    
    
    \item In Section~\ref{sec:breaklower}, we propose a simple modification called synchronized shuffling that allows us to \emph{bypass} the lower bounds in Section~\ref{sec:lowerbounds}, at the cost of a slight increase in communication. By having the server broadcast random permutations to local machines, we show that in near-homogeneous settings, the modified algorithms converge faster than the lower bounds (Theorems~\ref{thm:minibatchRS-sync} \& \ref{thm:localRS-sync}).
    
    \item In Appendix~\ref{sec:exp}, we present numerical experiments that corroborate our theoretical findings.
\end{itemize}

\vspace*{-8pt}
\section{Problem setup}
\vspace*{-5pt}
\label{sec:setting}
\noindent\textbf{Notation.} 
For a natural number $a \in \N$, let $[a] := \{1, 2, \dots, a\}$. Let $\mc S_a$ be the set of all permutations of $[a]$. Since our indices start from $1$, we redefine the modulo operation between $a \in \Z$ and $b \in \N$ as $a \bmod b \defeq a - \lfloor \frac{a-1}{b}\rfloor b$, to make $a \bmod b \in [b]$.

\noindent\textbf{Optimization task.}
Consider $M$ machines, each with its objective 
   $F^m(\vx) \defeq \frac{1}{N}\sum_{i=1}^N f^m_i(\vx)$,
for $m \in [M]$. The $m$-th machine has access only to the gradients of its own $N$ local components $f^m_1(\vx), \dots, f^m_N(\vx)$. In this setting, we wish to minimize the global objective function which is an average of the local objectives: $F(\vx) \defeq \frac{1}{M} \sum_{m=1}^M F^m(\vx) = \frac{1}{MN} \sum_{m=1}^M \sum_{i=1}^N f^m_i(\vx)$. 

Further, we assume that each individual component function $f_i^m$ is $L$-smooth, so that
\begin{equation}
\label{eq:smooth}
    f_i^m(\vy) \leq f_i^m(\vx) +\< \nabla f_i^m(\vx), \vy-\vx \> + \tfrac{L}{2}\norm{\vy-\vx}^2,~
    \text{ for all }\vx,\vy\in \reals^d,
\end{equation}
and that the global objective $F$ satisfies the
$\mu$-Polyak-{\L}ojasiewicz (P{\L}) condition.\footnote{P{\L} functions can be thought as a nonconvex generalization of strongly convex functions.}
\begin{equation}
\label{eq:pl}
    \tfrac{1}{2} \norm{\nabla F(\vx)}^2 \geq \mu(F(\vx)-F^*)
    \text{ for all }\vx \in \reals^d,\quad\text{where}\ \mu > 0.
\end{equation}

\noindent\textbf{Algorithms.} Under the above setting, we analyze local $\randshuf$ (Algorithm~\ref{alg:localRS}) and minibatch $\randshuf$ (Algorithm~\ref{alg:minibatchRS}) and characterize their worst-case convergence rates.\footnote{In Algorithms~\ref{alg:localRS} and \ref{alg:minibatchRS}, consider $\syncshuf$ as \textsc{False} for now. We will discuss $\syncshuf$ in Section~\ref{sec:breaklower}.} 
The algorithms are run over $K$ epochs, i.e., $K$ passes over the entire component functions. At the beginning of epoch $k$, each machine $m$ shuffles its local component functions $\{f_i^m\}_{i=1}^N$ using a random permutation $\sigma_k^m \sim \unif(\mc S_N)$. In local $\randshuf$, each machine makes $B$ local $\randshuf$ updates to its iterate by sequentially accessing its shuffled component functions, before the server aggregates iterates from all the machines and then synchronizes the machines with the average iterate. In minibatch $\randshuf$, instead of making $B$ local updates, each machine collects $B$ gradients evaluated at the last iterate, and the server aggregates them to make an update using these $MB$ gradients. Since these two algorithms use the same amount of communication and local gradients, minibatch $\randshuf$ is a simple yet powerful baseline for local $\randshuf$.

\begin{algorithm}[t]\small
	\caption{Local $\randshuf$ (with and without $\syncshuf$)} \label{alg:localRS}
	\begin{algorithmic}[1]
		\Statex Input: Initialization $\vy_0$, step-size $\eta$, \# machines $M$, \# components $N$, \# epochs $K$, sync interval $B$.
		\State Initialize $\vx^m_{1,0} \defeq \vy_0$ for all $m \in [M]$.
		\For {$k \in [K]$}
		    \If {$\syncshuf$ = $\textsc{True}$} \Comment{Local $\randshuf$ with $\syncshuf$}
		        \State Sample $\sigma \sim \unif(\mc S_N)$, $\pi \sim \unif(\mc S_M)$.
		        \State Set $\sigma^m_k(i) \defeq \sigma((i+\frac{N}{M} \pi(m)) \bmod N)$ for all $m \in [M], i \in [N]$.
		    \Else \Comment{Local $\randshuf$}
		        \State Sample $\sigma^m_k \sim \unif(\mc S_N)$ independently and locally, for all $m \in [M]$.
		    \EndIf
		    \For {$i \in [N]$}
		        \For {$m \in [M]$} \textbf{locally}
		            \State Update $\vx^m_{k,i} \defeq \vx^m_{k,i-1} - \eta \nabla f^m_{\sigma^m_k(i)} (\vx^m_{k,i-1})$.
		        \EndFor
		        \If {$B$ divides $i$}
		            \State Aggregate and average $\vy_{k,\frac{i}{B}} \defeq \frac{1}{M} \sum_{m=1}^M \vx^m_{k,i}$.
		            \State Synchronize $\vx^m_{k,i} \defeq \vy_{k,\frac{i}{B}}$, for all $m \in [M]$.
		        \EndIf
		    \EndFor
            \State $\vx^m_{k+1,0} \defeq \vy_{k,\frac{N}{B}}$, for all $m \in [M]$.
        \EndFor
		\State \Return the last iterate $\vy_{K,\frac{N}{B}}$.
	\end{algorithmic}
\end{algorithm}

\begin{algorithm}[t]\small
	\caption{Minibatch $\randshuf$ (with and without $\syncshuf$)} \label{alg:minibatchRS}
	\begin{algorithmic}[1]
		\Statex Input: Initialization $\vx_0$, step-size $\eta$, \# machines $M$, \# components $N$, \# epochs $K$, sync interval $B$. 
		\State Initialize $\vx_{1,0} \defeq \vx_0$.
		\For {$k \in [K]$}
		    \If {$\syncshuf$ = $\textsc{True}$} \Comment{Minibatch $\randshuf$ with $\syncshuf$}
		        \State Sample $\sigma \sim \unif(\mc S_N)$, $\pi \sim \unif(\mc S_M)$.
		        \State Set $\sigma^m_k(i) \defeq \sigma((i+\frac{N}{M} \pi(m)) \bmod N)$ for all $m \in [M], i \in [N]$.
		    \Else \Comment{Minibatch $\randshuf$}
		        \State Sample $\sigma^m_k \sim \unif(\mc S_N)$ independently and locally, for all $m \in [M]$.
		    \EndIf
		    \For {$i \in [\frac{N}{B}]$}
		    \vspace{-3pt}
		    \State \label{line:alg2-1}Update $\vx_{k,i} \defeq \vx_{k,i-1} -  \frac{\eta}{M} {\displaystyle\sum\nolimits_{m=1}^M} \underbrace{\tfrac{1}{B}\sum\nolimits_{j=(i-1)B+1}^{iB} \nabla f^m_{\sigma^m_k(j)} (\vx_{k,i-1})}_{\text{averaging done \textbf{locally}}}$.
		    \vspace{-16pt}
		    \EndFor
		    \State $\vx_{k+1,0} \defeq \vx_{k,\frac{N}{B}}$.
        \EndFor
		\State \Return the last iterate $\vx_{K,\frac{N}{B}}$.
	\end{algorithmic}
\end{algorithm}

Below, we collect our assumptions on the algorithm parameters used throughout the paper. 
\begin{assumption}[Algorithm parameters]
\label{assm:algo-param}
We assume $M \geq 1$, $N \geq 2$, and $K \geq 1$. Also, assume that $B$ divides $N$. We restrict $1 \leq B \leq \frac{N}{2}$ for minibatch $\randshuf$ because $B = N$ makes the algorithm equal to GD. We also assume $2 \leq B \leq N$ for local $\randshuf$ because $B = 1$ makes the two algorithms the same. We choose a constant step-size scheme, i.e., $\eta > 0$ is kept constant over all updates.
\end{assumption}

We next state assumptions on intra- and inter-machine deviations used in this paper.\footnote{Assumptions~\ref{assm:intra-dev}, \ref{assm:inter-dev} \& \ref{assm:inter-comp-dev} require that they hold for the whole $\reals^d$. We discuss ways to avoid it in Appendix~\ref{sec:avoidRd}.}
\begin{assumption}[Intra-machine deviation]
\label{assm:intra-dev}
There exists $\nu \geq 0$ such that for all $m \in [M]$ and $i \in [N]$,
\begin{equation*}
    \norm{\nabla f^m_i(\vx) - \nabla F^m(\vx)} \leq \nu,~
    \text{ for all }\vx \in \reals^d.
\end{equation*}
\end{assumption}
Assumption~\ref{assm:intra-dev} requires that the difference between the gradient of each local component function $f^m_i(\vx)$ and its corresponding local objective function $F^m(\vx)$ is uniformly bounded. It models the variance of local components $f^m_i$ \emph{within} each machine. 
While the uniform boundedness requirement may look strong, we use this assumption to prove high-probability upper bounds, which are stronger than the common in-expectation bounds. See Appendix~\ref{sec:comparison} for comparisons with other assumptions, and also Appendix~\ref{sec:avoidRd} for ways to avoid uniform boundedness over the \emph{entire} $\reals^d$.

The next two assumptions capture the deviation \emph{across} different machines, i.e., the degree of heterogeneity, in two different levels of granularity: \emph{objective-wise} and \emph{component-wise}.
\begin{assumption}[Objective-wise inter-machine deviation]
\label{assm:inter-dev}
There exist $\tau \geq 0$ and $\rho \geq 1$ such that
\begin{equation*}
    \tfrac{1}{M} \sum\nolimits_{m=1}^M \norm{\nabla F^m(\vx)} \leq \tau + \rho \norm{\nabla F(\vx)},~
    \text{ for all }\vx \in \reals^d.
\end{equation*}
\end{assumption}
Assumption~\ref{assm:inter-dev} models the heterogeneity by bounding the mean of $\norm{\nabla F^m}$ by a constant plus a multiplicative factor times $\norm{\nabla F}$. The assumption includes the homogeneous case (i.e., $F^1 = \dots = F^M = F$) by $\tau = 0$ and $\rho = 1$. Assumption~\ref{assm:inter-dev} is weaker than many other heterogeneity assumptions in the literature (e.g., \citet{karimireddy2020scaffold}); see Appendix~\ref{sec:comparison} for detailed comparisons.


 
Assumption~\ref{assm:inter-dev} measures heterogeneity by only considering the local objectives $F^m$, not the local components $f^m_i$.
We consider a more fine-grained notion of heterogeneity in Assumption~\ref{assm:inter-comp-dev}: 
\begin{assumption}[Component-wise inter-machine deviation]
\label{assm:inter-comp-dev}
For all $i \in [N]$, let $\bar f_i \defeq \frac{1}{M} \sum_{m=1}^M f^m_i$.
There exist $\lambda \geq 0$ such that for all $m \in [M]$ and $i \in [N]$,
\begin{equation*}
    \norm{\nabla f^m_i(\vx) - \nabla \bar f_i(\vx)} \leq \lambda,~
    \text{ for all }\vx \in \reals^d.
\end{equation*}
\end{assumption}
Assumption~\ref{assm:inter-comp-dev} states that the gradients of the $i$-th components of local machines are ``close'' to each other. The assumption subsumes the component-wise homogeneous setting, i.e., $f^1_i = f^2_i = \cdots = f^M_i$, by $\lambda = 0$. In distributed learning, this choice corresponds to the setting where each machine has the same training dataset.
Assumption~\ref{assm:inter-comp-dev} with $\lambda > 0$ is also relevant to the case where each device has a slightly perturbed (e.g., by data augmentation techniques) version of a certain dataset. 
It is straightforward to check that Assumption~\ref{assm:inter-comp-dev} implies Assumption~\ref{assm:inter-dev} with $\tau = \lambda$ and $\rho = 1$.

We conclude this section by defining the function classes we study in this paper.
\begin{definition}[Function classes]
\label{def:funccls}
We consider two classes of global objective functions $F$, also taking into account their local objectives $F^m$ and local components $f^m_i$. We assume throughout that $f^m_i$ are differentiable and $F$ is bounded from below.
\begin{align*}
    \funcclsobj(L, \mu, \nu, \tau, \rho) 
    &\!\defeq\!
    \big \{ F \mid \text{$F$ is $\mu$-P{\L}; $f^m_i$ are $L$-smooth; $F$, $F^m$, $f^m_i$ satisfy Assumptions \ref{assm:intra-dev} \& \ref{assm:inter-dev}} \big \},\\
    \funcclscmp(L, \mu, \nu, \lambda) 
    &\!\defeq\!
    \big \{ F \mid \text{$F$ is $\mu$-P{\L}; $f^m_i$ are $L$-smooth; $F$, $F^m$, $f^m_i$ satisfy Assumptions \ref{assm:intra-dev} \& \ref{assm:inter-comp-dev}} \big \}.
\end{align*}
\end{definition}
\vspace*{-5pt}
Notice that $\funcclsobj(L,\mu,\nu,\tau,\rho) \supset \funcclscmp(L, \mu, \nu, \tau)$ for any $\rho \geq 1$. We only make the P{\L} assumption on the \emph{global objective} $F$, not on the \emph{local objectives} $F^m$ nor on the \emph{local components} $f^m_i$. Using $L$ and $\mu$, we define the \emph{condition number} $\kappa \defeq \nicefrac{L}{\mu} \geq 1$.

\vspace*{-3pt}
\section{Convergence analysis of minibatch and local $\randshuf$}
\label{sec:upperbounds}


\vspace*{-5pt}
\subsection{Upper bound for minibatch $\randshuf$}
\vspace*{-3pt}
We first begin with the convergence result for minibatch $\randshuf$ on $\funcclsobj(L, \mu, \nu, \tau, \rho)$, which exhibits a faster large-epoch rate compared to the single-machine setting. 
For upper bounds, we use $\tilde {\mc O}(\cdot)$ to hide universal constants and logarithmic factors of $\frac{1}{\delta}$, $M$, $N$, $K$, and $B$.

\begin{theorem}[Upper bound for minibatch $\randshuf$]
\label{thm:minibatchRS}
Suppose that minibatch $\randshuf$ has parameters satisfying Assumption~\ref{assm:algo-param}. 
For any $F \in \funcclsobj(L, \mu, \nu, \tau, \rho)$, 
consider running the algorithm using step-size $\eta = \frac{B\log(MNK^2)}{\mu NK}$ for epochs $K \geq 6\kappa \log(MNK^2)$. Then, with probability at least $1-\delta$,
\begin{equation}
\label{eq:thm-minibatchRS}
    F(\vx_{K,\frac{N}{B}}) - F^* \leq \frac{F(\vx_0)-F^*}{MNK^2}
    + \tilde{\mc O} \left (
    \frac{L^2}{\mu^3}\frac{\nu^2}{MNK^2}
    \right ).
\end{equation}
\end{theorem}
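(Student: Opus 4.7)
My plan is a two-level analysis: (i) establish a per-epoch contraction for the suboptimality $F(\vx_{k,0}) - F^*$, and then (ii) iterate across $K$ epochs with the prescribed step size. Within step (i), the key move is to compare the true minibatch $\randshuf$ trajectory $\{\vx_{k,i}\}_{i=0}^{N/B}$ during epoch $k$ to a ``virtual'' single gradient step at the anchor $\vx_{k,0}$ of effective size $\eta N/B$. Because every minibatch already averages over all $M$ machines evaluated at a common iterate, the heterogeneity quantities $\tau, \rho$ never enter; only the intra-machine deviation $\nu$ drives the noise. Applying $L$-smoothness between consecutive iterates and telescoping over the $N/B$ steps of the epoch gives
\begin{equation*}
F(\vx_{k+1,0}) - F(\vx_{k,0}) \leq -\eta \sum_{i=1}^{N/B} \dotprod{\nabla F(\vx_{k,i-1})}{g_{k,i}} + \tfrac{L\eta^2}{2} \sum_{i=1}^{N/B} \norms{g_{k,i}}^2,
\end{equation*}
where $g_{k,i}$ is the minibatch gradient at step $i$. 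Setting $Z_{k,i} := g_{k,i}|_{\vx_{k,0}} - \nabla F(\vx_{k,0})$ and splitting the right-hand side around the anchor decomposes the progress into three pieces: a deterministic gradient piece $-\eta(N/B)\norms{\nabla F(\vx_{k,0})}^2$, which the P{\L} inequality converts to a $(1 - \eta\mu N/B)$-contraction of suboptimality; a drift piece of order $L\norms{\vx_{k,i-1} - \vx_{k,0}}$; and a stochastic piece built from the $Z_{k,i}$.

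The crux of the $1/K^2$ rate lies in the interplay of two properties of $\{Z_{k,i}\}$. First, cross-machine independence together with the standard without-replacement variance lemma gives $\E\norms{Z_{k,i}}^2 \lesssim \frac{\nu^2}{MB}\cdot\frac{N-B}{N-1}$, with the $1/M$ factor coming from averaging over machines. Second, and more importantly, $\sum_{i=1}^{N/B} Z_{k,i} = 0$ holds identically, since each local component is used exactly once per epoch; this identity annihilates the first-order noise in the cross term $\sum_i\dotprod{\nabla F(\vx_{k,0})}{Z_{k,i}}$, leaving only second-order fluctuations mediated by the drift $\vx_{k,i-1} - \vx_{k,0}$, which is itself bounded recursively in terms of past $\nabla F$'s and $Z_{k,j}$. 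Combining these controls with the step-size restriction $\eta L N/B \leq \log(MNK^2)/K \leq 1/6$ (guaranteed by $K \geq 6\kappa\log(MNK^2)$) yields a per-epoch recursion of the form
\begin{equation*}
F(\vx_{k+1,0}) - F^* \leq \big(1 - \tfrac{\eta\mu N}{B}\big)(F(\vx_{k,0}) - F^*) + \tilde O\!\big(\tfrac{\eta^3 L^2 \nu^2 N^2}{\mu M B^3}\big),
\end{equation*}
whose noise coefficient is what eventually produces the stated rate.

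Iterating $K$ times with $\eta = \frac{B\log(MNK^2)}{\mu N K}$ makes the contraction factor $\big(1 - \tfrac{\log(MNK^2)}{K}\big)^K \leq (MNK^2)^{-1}$, turning the initial gap into the $(MNK^2)^{-1}(F(\vx_0)-F^*)$ term of \eqref{eq:thm-minibatchRS}. The geometric summation of the per-epoch noise yields $\tilde O\!\left(\frac{L^2 \nu^2}{\mu^3 MNK^2}\right)$. For the high-probability upgrade, I would apply a Freedman-type martingale inequality to $\sum_{k,i}\dotprod{\nabla F(\vx_{k,0})}{Z_{k,i}}$ and its analogs (treating each epoch's permutations as one independent randomness block), together with a Bernstein bound on $\sum_{k,i}\norms{Z_{k,i}}^2$; these contribute only $\log(1/\delta)$ overhead absorbed into $\tilde O(\cdot)$. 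The main obstacle will be the \emph{pathwise} control of the intra-epoch drift: within an epoch $\vx_{k,i-1} - \vx_{k,0}$ depends on $Z_{k,1}, \ldots, Z_{k,i-1}$, so obtaining a high-probability statement (rather than the usual in-expectation RR result) requires propagating the noise bound through the intra-epoch recursion pathwise and then union-bounding across $K$ epochs without losing the $(MNK^2)^{-1}$ scaling.
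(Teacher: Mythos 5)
Your high-level decomposition is sound and is in fact the same strategy as the paper: anchor the epoch at $\vx_{k,0}$, extract a full-gradient step, and control the leftover drift. The paper applies $L$-smoothness of $F$ once across the endpoints of the epoch and uses a summation-by-parts trick to isolate $\eta N\nabla F(\vx_{k,0})$ plus a noise vector $\vr_k$; you instead apply step-level smoothness and observe that $\sum_{i=1}^{N/B}Z_{k,i}=0$ kills the first-order cross term. These are algebraically equivalent routes to the same per-epoch recursion, and your intuition that the answer depends only on $\nu$ (not $\tau,\rho$) because every minibatch already averages across machines is correct.

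The gap is in the concentration step, and it is not just a matter of technical polish. First, the specific quantity you propose to concentrate, $\sum_{k,i}\dotprod{\nabla F(\vx_{k,0})}{Z_{k,i}}$, is identically zero by your own cancellation identity, so a Freedman bound there is vacuous. The noise that actually needs controlling lives in the drift $\norm{\vx_{k,i-1}-\vx_{k,0}}$, and after unrolling the intra-epoch recursion this reduces to high-probability control of the \emph{prefix sums} $\bignorm{\sum_{j\le n}Z_{k,j}}$ uniformly in $n\in[N/B]$, with the $1/\sqrt{M}$ improvement from averaging over machines and the Serfling-type $(1-(n-1)/N)$ depletion factor. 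Bernstein on $\sum_{k,i}\norm{Z_{k,i}}^2$ does not supply this: the dominant error in the per-epoch recursion is mediated by products of prefix sums with individual $Z_{k,i}$, and the prefix sums are not sums of independent or even freely martingaled increments (without-replacement dependence within each machine). The paper resolves exactly this with a purpose-built vector-valued Hoeffding--Serfling inequality for the mean of $M$ independent without-replacement sums (its Lemma~\ref{lem:concineq}), proved by constructing a Doob martingale over the sequential reveal of the $M$ permutations and applying a Pinelis-type Banach-space martingale inequality. Your proposal gestures toward ``treating each epoch's permutations as one independent randomness block,'' but that does not give the needed uniform-in-$n$ bound inside an epoch. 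You would need to derive or cite such a Serfling-type maximal inequality to close the argument.

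Two smaller points. Your per-epoch noise coefficient $\tilde{\mc O}\bigl(\eta^3 L^2\nu^2 N^2/(\mu MB^3)\bigr)$ carries an extra factor of $\mu^{-1}$; summing geometrically and substituting $\eta=\tfrac{B\log(MNK^2)}{\mu NK}$ would then yield $\mu^{-4}$ rather than the stated $\mu^{-3}$, so it should read $\tilde{\mc O}\bigl(\eta^3 L^2\nu^2 N^2/(MB^3)\bigr)$. Also, the bound $\E\norms{Z_{k,i}}^2\lesssim\tfrac{\nu^2}{MB}\cdot\tfrac{N-B}{N-1}$ is a per-step variance and is not by itself enough; what enters the final bound is the prefix-sum analogue, which is the content of the Hoeffding--Serfling lemma.
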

\vspace*{-15pt}
\begin{proof}
The proof is in Appendix~\ref{sec:proof-thm-minibatchRS}. The key challenge in the convergence analysis of our shuffling-based method stems from the indices sampled within an epoch being dependent on each other. For example, if $f_1^m$ is accessed already, then the index $i=1$ will not be used in later iterations of the epoch; this dependence significantly complicates the analysis. Our approach starts with realizing that for any permutation~$\sigma$, $\sum_{i=1}^N f_{\sigma(i)}^m = NF^m$. We decompose gradients $\nabla f^m_{\sigma^m_k(j)} (\vx_{k,i-1})$ (see Line~\ref{line:alg2-1} of Algorithm~\ref{alg:minibatchRS}) into $\nabla f^m_{\sigma^m_k(j)} (\vx_{k,0})$ plus noise, then aggregate all updates over an epoch to get ``one big step of GD plus noise'': $\vx_{k+1,0} = \vx_{k,0} - \eta N \nabla F(\vx_{k,0}) + \eta^2 \vr_k$. We bound the noise $\vr_k$ using Lemma~\ref{lem:concineq} (Appendix~\ref{sec:proof-lem-concineq}), which is our extension of the Hoeffding-Serfling inequality to the mean of $M$ independent without-replacement sums of vectors; the lemma might be of independent interest too. Lemma~\ref{lem:concineq} shows that averaging accumulated gradients over $M$ machines reduces variance by $M$, which leads to the reduction by a factor of $M$ in the bound~\eqref{eq:thm-minibatchRS}.
\end{proof}
\vspace*{-7pt}

Theorem~\ref{thm:minibatchRS} shows that for large enough epochs $K \gtrsim \kappa$, minibatch $\randshuf$ converges at a rate of $\tilde{\mc O} (\frac{L^2 \nu^2}{\mu^3 MNK^2})$, with high probability. 
Compared to the large-epoch rate $\tilde{\mc O}(\frac{L^2 \nu^2}{\mu^3 NK^2})$ of single-machine $\randshuf$~(e.g., \citet{ahn2020sgd}), we see an additional factor $M$ in the denominator, which highlights the advantage of multiple machines.
If we compare against the with-replacement counterpart, it is known that for strongly convex and smooth $F$, the optimal convergence rate of minibatch $\sgd$ is $\Theta (\frac{\nu^2}{\mu MNK})$,\footnote{The optimal rate for (with-replacement) $\sgd$ after $R$ iterations is $\Theta (\frac{\nu^2}{\mu R})$ (see e.g., \citet{rakhlin2012making}).
With-replacement minibatching reduces the variance $\nu^2$ to $\frac{\nu^2}{MB}$, and $R = \frac{NK}{B}$. However, achieving the optimal rate for \emph{last iterates} typically requires carefully designed step-size schemes \citep{jain2019making}.} 
which is worse than our bound~\eqref{eq:thm-minibatchRS} if $K \gtrsim \kappa^2$.
Also notable is that the convergence rate does \emph{not} depend on the heterogeneity constants (i.e., $\tau$ and $\rho$ from Assumption~\ref{assm:inter-dev}) of the local objective functions. This observation that minibatch $\randshuf$ is ``immune'' to heterogeneity is consistent with minibatch $\sgd$ in the with-replacement setting~\citep{woodworth2020minibatch}.

\noindent\textbf{Epoch vs communication complexity.}
One might wonder why~\eqref{eq:thm-minibatchRS} does not have the batch size $B$. In \eqref{eq:thm-minibatchRS}, we wrote convergence rates in terms of epochs $K$, which captures the gradient computation complexity because the same number of gradients are evaluated in a single epoch regardless of $B$. If we are interested in communication complexity instead, we can write \eqref{eq:thm-minibatchRS} in terms of the number of communication rounds $R \defeq \frac{NK}{B}$ and get a rate of $\tilde{\mc O} (\frac{L^2 \nu^2 N}{\mu^3 MB^2R^2})$.
From these, we can also discuss the \emph{overall cost} of the algorithm. If the cost of a communication round is $c_c$, and the cost of local gradient computations over an epoch is $c_e$, then the total cost to obtain an $\epsilon$-accurate solution is
\begin{equation}
\label{eq:minibatch-totalcost}
    C_{\rm minibatch}(\epsilon) = \tilde {\mc O}\biggl(
    \frac{c_c \nu \sqrt{N}}{B\sqrt{M\epsilon}} +
    \frac{c_e \nu}{\sqrt{MN\epsilon}}
    \biggr),
\end{equation}
omitting $L$ and $\mu$ for simplicity.
The total cost shows that there is essentially no harm increasing the batch size $B$ in minibatch $\randshuf$, as we can get more accurate estimates of true gradients as $B$ becomes larger. In the next subsection, we will see that this is not the case in local $\randshuf$.

\noindent\textbf{What about $K \lesssim \kappa$?}
We remark that all upper bounds in this paper hold only for the ``large-epoch'' regime, where $K \gtrsim \kappa$. 
Such requirements are common in the literature of without-replacement SGD \citep{haochen2018random,nagaraj2019sgd,rajput2020closing,ahn2020sgd}, and there is a recent result \citep{safran2021random} suggesting that faster convergence of without-replacement SGD may \emph{not} be possible in the $K \lesssim \kappa$ regime. We defer a more detailed discussion on this regime to Section~\ref{sec:lowerbounds}, after Theorem~\ref{thm:minibatchRS-LB}.

\vspace*{-5pt}
\subsection{Upper bound for local $\randshuf$}
\label{sec:upperbound-local}
\vspace*{-3pt}
Next, we are interested in how fast local $\randshuf$ can converge, what is the optimal batch size $B$, and whether local $\randshuf$ can be as fast as minibatch $\randshuf$. 
\begin{restatable}[Upper bound for local $\randshuf$]{theorem}{thmlocalRS}
\label{thm:localRS}
Suppose that local $\randshuf$ has parameters satisfying Assumption~\ref{assm:algo-param}. 
For any $F \in \funcclsobj(L, \mu, \nu, \tau, \rho)$, 
consider running the algorithm using step-size $\eta = \frac{\log(MNK^2)}{\mu NK}$ for epochs $K \geq 7\rho\kappa \log(MNK^2)$. Then, with probability at least $1-\delta$,
\begin{equation}
\label{eq:thm-localRS}
    F(\vy_{K,\frac{N}{B}}) - F^* \leq \frac{F(\vy_0)-F^*}{MNK^2}
    + \tilde{\mc O} \left (
    \frac{L^2}{\mu^3} 
    \left ( 
    \frac{\nu^2}{MNK^2}
    +
    \frac{\nu^2 B}{N^2K^2}
    +
    \frac{\tau^2 B^2}{N^2K^2}
    \right )
    \right ).
\end{equation}
\end{restatable}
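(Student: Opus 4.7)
The plan is to extend the ``one-epoch-equals-one-step-of-GD-plus-noise'' framework from the proof of Theorem~\ref{thm:minibatchRS}. Fix epoch $k$ and write $\vy \defeq \vy_{k-1,N/B}$ (its starting synced iterate) and $\vy' \defeq \vy_{k,N/B}$ (its ending one). Summing all local updates over the epoch and using the identity $\sum_{i=1}^N \nabla f^m_{\sigma^m_k(i)}(\vy) = N\nabla F^m(\vy)$ gives
\[
  \vy' - \vy \;=\; -\eta N\,\nabla F(\vy) + \eta\,\vr_k,\qquad
  \vr_k \defeq \tfrac{1}{M}\sum_{m=1}^M\sum_{i=1}^N\bigl[\nabla f^m_{\sigma^m_k(i)}(\vy)-\nabla f^m_{\sigma^m_k(i)}(\vx^m_{k,i-1})\bigr].
\]
$L$-smoothness and Cauchy--Schwarz then give $\|\vr_k\|^2 \leq \tfrac{L^2 N}{M}\sum_{m,i}\|\vx^m_{k,i-1}-\vy\|^2$, reducing the problem to a high-probability bound on the client drift.

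The technical heart of the proof is a drift bound that separates contributions from \emph{prior} sync intervals and the \emph{current} one. Writing $i-1 = rB + s$ with $r\in\{0,\dots,N/B-1\}$ and $s\in\{0,\dots,B-1\}$, and using $\vy$ in place of $\vy_{k,0/B}$ by convention,
\[
  \vx^m_{k,i-1}-\vy \;=\; \underbrace{(\vy_{k,r/B}-\vy)}_{\text{aggregated over }r\text{ past syncs}} + \underbrace{(\vx^m_{k,i-1}-\vy_{k,r/B})}_{\text{single machine, }\leq s\text{ steps}}.
\]
The aggregated piece is already averaged across machines, so unrolling it exactly as in Theorem~\ref{thm:minibatchRS} and invoking Lemma~\ref{lem:concineq} bounds its squared norm by $\lesssim \eta^2\bigl((rB)^2\|\nabla F(\vy)\|^2 + rB\nu^2/M\bigr)$ with high probability. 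The single-machine piece is a without-replacement partial sum of $s \leq B$ local gradients on machine $m$; linearizing the local updates (justified because $\eta L B \lesssim 1$ under the step-size and epoch assumption $K \geq 7\rho\kappa\log(MNK^2)$) and applying Assumptions~\ref{assm:intra-dev} and~\ref{assm:inter-dev} together with a Hoeffding--Serfling-type tail bound on a single-machine shuffling sum bounds its squared norm by $\lesssim \eta^2\bigl(s\nu^2 + s^2\tau^2 + s^2\rho^2\|\nabla F(\vy)\|^2\bigr)$---importantly with \emph{no} $1/M$ factor, because the local drifts on different machines are independent but do not share a common mean. Summing over $(m,r,s)$, a short calculation yields, up to polylog factors,
\[
  \tfrac{1}{M}\sum_{m,i}\|\vx^m_{k,i-1}-\vy\|^2
  \;\lesssim\;
  \eta^2 N^2\Bigl(\tfrac{\nu^2}{M} + \tfrac{B\nu^2}{N} + \tfrac{B^2\tau^2}{N}\Bigr)
  + \eta^2 N^2 B^2 \rho^2\|\nabla F(\vy)\|^2.
\]

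Plugging this drift bound into the smoothness descent inequality $F(\vy')\leq F(\vy)+\nabla F(\vy)^{\!\top}(\vy'-\vy)+\tfrac{L}{2}\|\vy'-\vy\|^2$, splitting the cross term $\eta\nabla F(\vy)^{\!\top}\vr_k$ via AM--GM, invoking the $\mu$-PL condition on $F$, and absorbing the $\rho^2\|\nabla F(\vy)\|^2$ drift contribution into the contraction factor (valid because $\eta L B \rho = \tilde O(\rho\kappa/K) \ll 1$) yields a per-epoch recursion
\[
  F(\vy')-F^* \;\leq\; (1-\eta N\mu)(F(\vy)-F^*) + C L^2 \eta^3 N^2\Bigl(\tfrac{\nu^2}{M}+\tfrac{B\nu^2}{N}+\tfrac{B^2\tau^2}{N}\Bigr).
\]
Iterating over the $K$ epochs with $\eta = \log(MNK^2)/(\mu NK)$ and union-bounding the $K$ uses of Lemma~\ref{lem:concineq} at failure level $\delta/K$ yields~\eqref{eq:thm-localRS}. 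The main obstacle is the drift decomposition above: only the aggregated portion inherits the $1/M$ variance reduction from Lemma~\ref{lem:concineq}, while the local-since-last-sync portion cannot, and this dichotomy is exactly what produces the two distinct $\tfrac{\nu^2}{MNK^2}$ and $\tfrac{\nu^2 B}{N^2K^2}$ terms. A secondary subtlety is that the heterogeneity contribution $\tfrac{\tau^2 B^2}{N^2K^2}$ comes from the \emph{mean} single-machine drift $\eta s\,\nabla F^m(\vy)$, whose fluctuations across $m$ are controlled by Assumption~\ref{assm:inter-dev} only in $\ell_1$ and so do not shrink with $M$.
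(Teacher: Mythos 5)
Your linearization-plus-drift decomposition is genuinely different from the paper's proof. The paper never passes through a bound of the form $\|\vr_k\|^2 \lesssim L^2 \sum_{m,i}\|\vx^m_{k,i-1}-\vy\|^2$; instead it applies summation by parts twice to write $\vy_{k+1,0}=\vy_{k,0}-\eta N\nabla F(\vy_{k,0})+\eta^2\vr_{k,1}+\eta^2\vr_{k,2}-\eta^3\vr_{k,3}$, where each $\vr_{k,j}$ is an explicit sum whose building blocks are partial sums of $\nabla f^m_{\sigma^m_k(\cdot)}(\vy_{k,0})$, and it bounds each $\|\vr_{k,j}\|$ by the triangle inequality followed by Lemma~\ref{lem:concineq}. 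The squaring $\|\vr_k\|\mapsto\|\vr_k\|^2$ happens only at the very end, \emph{after} Assumption~\ref{assm:inter-dev} has already been applied.

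That ordering is not cosmetic, and it is where your sketch breaks. Your step ``$\|\vr_k\|^2\le\tfrac{L^2N}{M}\sum_{m,i}\|\vx^m_{k,i-1}-\vy\|^2$'' converts the problem into bounding the \emph{squared} per-machine drift, and its heterogeneity part reduces to controlling $\tfrac{1}{M}\sum_m\|\nabla F^m(\vy)\|^2$. But Assumption~\ref{assm:inter-dev} only bounds $\tfrac{1}{M}\sum_m\|\nabla F^m(\vy)\|\le\tau+\rho\|\nabla F(\vy)\|$, and by Jensen $\tfrac{1}{M}\sum_m\|\nabla F^m\|^2\ge\bigl(\tfrac{1}{M}\sum_m\|\nabla F^m\|\bigr)^2$ with no reverse inequality: the $\ell_2$ average can be a factor of $M$ larger (e.g.\ all heterogeneity concentrated on one machine). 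Your per-machine claim ``$\|\vx^m_{k,rB+s}-\vy_{k,r}\|^2\lesssim\eta^2(s\nu^2+s^2\tau^2+s^2\rho^2\|\nabla F(\vy)\|^2)$'' is therefore not implied by Assumption~\ref{assm:inter-dev} for an individual $m$ (the correct per-machine bound has $s^2\|\nabla F^m(\vy)\|^2$), and the subsequent average over $m$ is not controlled either. You in fact flag this yourself in the last sentence (``controlled by Assumption~\ref{assm:inter-dev} only in $\ell_1$''), but the Cauchy--Schwarz step earlier is precisely what forces you into $\ell_2$, so the tension is not resolved. The repair is exactly what the paper does: bound $\|\vr_k\|\le\tfrac{L}{M}\sum_{m,i}\|\vx^m_{k,i-1}-\vy\|$, keep the drift sum in $\ell_1$ so that $\tfrac{1}{M}\sum_m\|\nabla F^m(\vy)\|$ appears and Assumption~\ref{assm:inter-dev} applies, and square only at the end. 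A secondary, fixable looseness is the claim that $\|\vy_{k,r}-\vy\|^2\lesssim\eta^2\bigl((rB)^2\|\nabla F(\vy)\|^2+rB\nu^2/M\bigr)$ ``by unrolling as in Theorem~\ref{thm:minibatchRS}'': the aggregates $\vy_{k,r}$ are produced by local (not minibatch) updates, so they inherit within-interval drift, and controlling them either requires a small induction in $r$ or---as the paper does---pivoting all gradient evaluations to $\vy_{k,0}$ and handling the correction through the $\mS_l$, $\mH^m_i$ matrices so no conditioning on intermediate iterates is needed.
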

\vspace*{-15pt}
\begin{proof}
The proof is in Appendix~\ref{sec:proof-thm-localRS}. We take the same ``big GD step plus noise'' approach as in Theorem~\ref{thm:minibatchRS}; however, due to local updates, bounding the noise is much more involved. In the proof, we obtain the epoch update $\vy_{k+1,0} = \vy_{k,0} - \eta N \nabla F(\vx_{k,0}) +  \eta^2 \vr_{k,1} + \eta^2 \vr_{k,2} - \eta^3\vr_{k,3}$, where $\vr_{k,1}$ and $\vr_{k,3}$ contain errors introduced by local updates. Noise from local updates accumulates over $B$ iterations, which cannot be remedied by averaging over $M$ machines. They result in two additional terms in the rate~\eqref{eq:thm-localRS}, one from intra-machine variance and the other from heterogeneity.
\end{proof}
\vspace*{-11pt}

\subsubsection{Discussion of Theorem~\ref{thm:localRS}}
\label{sec:upperbound-local-disc}
\vspace*{-3pt}
Let us compare our high-probability bound~\eqref{eq:thm-localRS} with existing in-expectation bounds. For strongly convex $F$, the corresponding \emph{last-iterate} bound of local $\sgd$ is $\tilde {\mc O} (\frac{L\nu^2}{\mu^2 MNK}+\frac{L^2 \nu^2 B}{\mu^3 N^2K^2}+\frac{L^2\tau^2 B^2}{\mu^3 N^2 K^2})$\footnote{\label{footnote:1}Due to differences in assumptions, many existing rates cannot be compared directly. These rates are the ones we consider ``comparable'' to our bound. See Appendix~\ref{sec:comparison} for more detailed comparisons.}
\citep{khaled2020tighter,spiridonoff2020local,qu2020federated}. Notice that \eqref{eq:thm-localRS} is better than this with-replacement bound when $K \gtrsim \kappa$. For \emph{average iterates}, there are known bounds $\tilde {\mc O}(\frac{\nu^2}{\mu MNK} + \frac{L \nu^2 B}{\mu^2 N^2K^2} + \frac{L \tau^2 B^2}{\mu^2 N^2K^2})$\textsuperscript{\ref{footnote:1}}~\citep{koloskova2020unified, woodworth2020minibatch} which are smaller than the last-iterate bound by a factor of $\kappa$. 
It is unclear if averaging iterates could improve our rate, because most such analyses exploit Jensen’s inequality, which we cannot use for nonconvex $F$.

\noindent\textbf{Dependence on $\tau$ and $\rho$.}
Out of the two heterogeneity constants $\tau$ and $\rho$ (Assumption~\ref{assm:inter-dev}), $\rho$ does not appear in \eqref{eq:thm-localRS}, and it only affects the epoch requirement $K \gtrsim \rho \kappa$. Consider the case $\tau = 0$ and $\rho > 1$, which is heterogeneous but in the ``interpolation regime,'' because $\nabla F^m(\vx) = \zeros$ whenever $\nabla F(\vx) = \zeros$. In such a case, the rate~\eqref{eq:thm-localRS} 
is equal to the homogeneous case.

\noindent\textbf{Using $B = \Theta(N)$ is no better than single-machine.}
A close look at Theorem~\ref{thm:localRS} reveals a rather surprising fact. Even in the homogeneous case ($\tau = 0$), if we choose $B = \Theta(N)$, then local $\randshuf$ converges at the rate of $\tilde{\mc O}(\frac{1}{NK^2})$: 
the same rate as the single-machine $\randshuf$! 
In Section~\ref{sec:lowerbounds}, we show that this observation is not due to a suboptimal analysis; the rate $\tilde{\mc O}(\frac{1}{NK^2})$ is tight for $B = \Theta(N)$.

\noindent\textbf{Trade-off in the choice of $B$.}
As done for Theorem~\ref{thm:minibatchRS}, we can compute from \eqref{eq:thm-localRS} that the total cost of local $\randshuf$ for $\epsilon$-accuracy is (omitting $L$ and $\mu$ for simplicity)
\begin{equation}
\label{eq:local-totalcost}
    \vspace*{-1pt}
    C_{\rm local}(\epsilon) = \tilde {\mc O}\bigg( 
    c_c \bigg ( 
    \frac{\nu \sqrt{N}}{B \sqrt{M \epsilon}}
    + \frac{\nu}{\sqrt{B\epsilon}}
    + \frac{\tau}{\sqrt{\epsilon}}
    \bigg) + 
    c_e \bigg ( 
    \frac{\nu}{\sqrt{MN\epsilon}}
    + \frac{\nu\sqrt{B}}{N\sqrt{\epsilon}}
    + \frac{\tau B}{N\sqrt{\epsilon}}
    \bigg) 
    \bigg ).
\end{equation}
Note that for local $\randshuf$, there exists a trade-off between communication and epoch complexity in the choice of $B$. If $B$ is too small, this reduces the number of epochs required but increases communication costs. On the other hand, if $B$ is too large, this reduces communication rounds but errors that accumulate in local updates get severer, resulting in the need for more epochs. Hence, the optimal choice of $B$ must balance the two complexity measures. The existence of this trade-off is indeed different from minibatch $\randshuf$ where larger $B$ always reduces the total cost $C_{\rm minibatch}(\epsilon)$.

\noindent\textbf{When can local $\randshuf$ match minibatch $\randshuf$?}
Comparing the convergence rates \eqref{eq:thm-minibatchRS} and \eqref{eq:thm-localRS}, we can identify some regimes in which local $\randshuf$ converges as fast as minibatch $\randshuf$. 
In a nutshell, if machines are not too heterogeneous and communication happens frequently, then local $\randshuf$ can have the same upper bound as minibatch $\randshuf$.
For example, if $B$ is chosen to be a constant, $M \lesssim N$, and $\tau \lesssim \nu \sqrt{N/M}$, then the $\tilde{\mc O} (\frac{L^2 \nu^2}{\mu^3 MNK^2})$ term in \eqref{eq:thm-localRS} becomes the dominating factor and hence matches \eqref{eq:thm-minibatchRS}. Another example of such a regime is when $B \lesssim \frac{N}{M}$ and $\tau \lesssim \nu \sqrt{M/N}$.
Note that this comparison assumes that the same values of $B$ are chosen for both algorithms. Also, such ``frequent communication'' regimes are favorable if the communication cost $c_c$ is small.


\noindent\textbf{Can local $\randshuf$ ever beat minibatch $\randshuf$?}
The upper bounds \eqref{eq:thm-minibatchRS} and \eqref{eq:thm-localRS} indicate that local $\randshuf$ is always no better than minibatch $\randshuf$, at least for the function class $\funcclsobj(L, \mu, \nu, \tau, \rho)$. 
This is in fact consistent with \citet{woodworth2020local,woodworth2020minibatch}, because the authors identify a regime where local $\sgd$ performs better than minibatch $\sgd$ for convex objective functions, but fail to do so for strongly convex functions.
However, as was also pointed out in \citet{woodworth2020local}, there is a simple extreme scenario in which local $\randshuf$ can be faster: when $\nu \approx \tau \approx 0$ and $\rho \approx 1$. In this case, we have $f^m_i \approx F$ for all $m$ and $i$, so local $\randshuf$ corresponds to $NK$ steps of GD, whereas minibatch $\randshuf$ corresponds to $\frac{NK}{B}$ steps of GD. Clearly, local $\randshuf$ will converge faster, exploiting the advantage of more updates.
Finding out other such regimes is an important future direction.
\vspace*{-5pt}
\section{Matching lower bounds}
\label{sec:lowerbounds}
\vspace*{-5pt}
In Section~\ref{sec:upperbounds}, we presented large-epoch upper bounds (i.e., for $K \gtrsim \kappa$) for constant step-size minibatch and local $\randshuf$. In this section, we prove matching lower bounds to show that the upper bounds are tight, in all factors except $L$ and $\mu$. 
We use $\Omega(\cdot)$ to hide universal constants in lower bounds.

\vspace*{-6pt}
\subsection{Lower bound for minibatch $\randshuf$}
\vspace*{-5pt}
\begin{theorem}[Lower bound for minibatch $\randshuf$]
\label{thm:minibatchRS-LB}
Suppose that minibatch $\randshuf$ has parameters satisfying Assumption~\ref{assm:algo-param}. 
Additionally, assume that $N$ is a multiple of $2$.
Then, there exist large enough constants $c_1, c_2 > 0$ such that the following holds:
For $L$ and $\mu$ satisfying $\kappa = \frac{L}{\mu} \geq c_1$,
there exists a function $F \in \funcclscmp(L, \mu, \nu, 0)$ such that for any constant step-size $\eta$,
\begin{equation}
\label{eq:thm-minibatchRS-LB}
    \E\left [ F(\vx_{K,\frac{N}{K}}) - F^* \right ] = 
    \begin{cases}
    \Omega\left(\frac{\nu^2}{\mu MNK}\right) & \text { if } K < c_2 \kappa,\\
    \Omega\left(\frac{\nu^2}{\mu MNK^2}\right) & \text { if } K \geq c_2 \kappa.
    \end{cases}
\end{equation}
\end{theorem}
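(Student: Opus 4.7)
The plan is to exhibit a hard instance in $\funcclscmp(L,\mu,\nu,0)$ on which the expected suboptimality of minibatch $\randshuf$ can be computed in closed form. Since $\lambda = 0$ forces every machine to share the same components, the construction is specified by $N$ shared functions $\bar f_1, \dots, \bar f_N$. I would work in $d = 2$ in order to separate the $\mu$- and $L$-scales, taking
\begin{equation*}
    \bar f_i(\vx) = \tfrac{\mu}{2}\, x_1^2 + b_i\, x_1 + \tfrac{L}{2}\, x_2^2,
    \qquad b_i \in \{+\nu,-\nu\},\ \textstyle\sum_i b_i = 0.
\end{equation*}
Then $F(\vx) = \tfrac{\mu}{2} x_1^2 + \tfrac{L}{2} x_2^2$ attains $F^* = 0$ at the origin; each $\bar f_i$ has Hessian $\diag(\mu, L)$, so is $L$-smooth; $F$ is $\mu$-strongly convex, hence $\mu$-PL; Assumption~\ref{assm:intra-dev} holds with the prescribed $\nu$ since $|\nabla \bar f_i - \nabla F|$ in the $x_1$-direction equals $|b_i| = \nu$; and Assumption~\ref{assm:inter-comp-dev} holds with $\lambda = 0$ trivially. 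The $x_2$-coordinate is deterministic and serves only to enforce the stability constraint $\eta \leq 2/L$; all the stochasticity lives in $x_1$.

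Next, I would solve the scalar linear recursion that minibatch $\randshuf$ induces on $x_1$. The update in line~\ref{line:alg2-1} becomes
\begin{equation*}
    x_{k,i} = (1-\eta\mu)\, x_{k,i-1} - \eta\, \hat b_{k,i},
    \qquad \hat b_{k,i} \defeq \tfrac{1}{MB}\textstyle\sum_{m=1}^M \sum_{j=(i-1)B+1}^{iB} b_{\sigma^m_k(j)},
\end{equation*}
where the noise $\hat b_{k,i}$ is zero-mean, independent across epochs, and correlated within an epoch. Unrolling over one epoch and squaring yields a clean expression for $\E[x_{k+1,0}^2]$ as a geometric contraction of $\E[x_{k,0}^2]$ plus an $\eta^2$-weighted double sum of within-epoch covariances $\E[\hat b_{k,i}\hat b_{k,i'}]$. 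The crucial quantitative step is a tight two-sided estimate of this covariance sum: by the classical without-replacement variance formula each $\hat b_{k,i}$ has variance $\Theta(\nu^2(N-B)/(MB(N-1)))$ (it is the average of $M$ independent without-replacement batches from the same population $\{b_1,\dots,b_N\}$), while the deterministic within-epoch constraint $\sum_{i=1}^{N/B} \hat b_{k,i} = 0$ (every machine visits every $b_j$ exactly once per epoch) induces negative correlations among the $\hat b_{k,i}$'s that one must control to match the upper bound. Iterating across $K$ epochs gives $\E[F(\vx_{K,N/B})] \geq \tfrac{\mu}{2}(1-\eta\mu)^{2NK/B}\, x_{1,0}^2 + (\text{accumulated noise})$.

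Finally, I would optimize the lower bound over the constant step-size $\eta \in (0, 2/L]$ separately in each regime. When $K \geq c_2 \kappa$, choosing $\eta \approx 1/(\mu NK/B)$ is large enough to contract the initial error arbitrarily yet small enough to keep the per-epoch noise tiny; evaluating the weighted covariance sum tightly (which is where the shuffling advantage manifests in the scaling of the noise with $\eta$) yields the $\Omega(\nu^2/(\mu MNK^2))$ rate, with the extra $1/K$ compared to i.i.d.\ minibatch SGD being precisely the random-reshuffling gain. When $K < c_2 \kappa$, the stability ceiling $\eta \leq 1/L$ binds before the noise can be further reduced, leaving only the standard SGD-like rate $\Omega(\nu^2/(\mu MNK))$; this corresponds to the well-known fact that without-replacement sampling offers no asymptotic gain in the small-epoch regime. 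The main obstacle I expect is the sharp two-sided control of the weighted covariance sum under the zero-sum within-epoch constraint and the $M$-fold averaging across independent permutations: one must ensure both that the $1/M$ factor is tight and that the specific balanced sign pattern $b_i \in \{\pm\nu\}$ does not accidentally cancel more of the noise than the upper bound allows. This is essentially the matching second-moment lower bound companion to the Hoeffding--Serfling-type concentration inequality (Lemma~\ref{lem:concineq}) used on the upper-bound side.
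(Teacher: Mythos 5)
Your construction is purely (and symmetrically) quadratic in the noisy coordinate, and this forces the dynamics to be an exactly linear recursion: $\E[x_{1,K}] = (1-\eta\mu)^{NK/B}x_{1,0}$, so after subtracting the deterministically contracted initial condition the iterate is exactly zero-mean, and the only quantity available to you is the second moment $\E[x_{1,K}^2]$. Tracking that second moment carefully (with the within-epoch zero-sum constraint and the tight Safran--Shamir--type estimate $\Phi' = \Theta(\min\{1/(\eta\mu),\,\eta^2\mu^2(N/B)^3\})$ for the weighted-covariance sum), the per-epoch noise contribution scales like $\eta^4\mu^2\nu^2 N^3/(MB^4)$, and accumulating over $K$ epochs and then minimizing over the step-size (the minimum occurs near $\eta = B/(\mu NK)$) yields $\E[F(x_K)] = \Omega(\nu^2/(\mu M N K^3))$ --- not the claimed $\Omega(\nu^2/(\mu MNK^2))$. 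You are off by a factor of $K$, precisely because a zero-mean variance accumulates only linearly in $K$ in the square.

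The missing idea in the paper is that the hard instance for the large-epoch regime is \emph{not} a symmetric quadratic: it is the piecewise (``skewed'') quadratic $F_2(x) = (L\mathbf{1}_{x\leq 0} + \mu\mathbf{1}_{x > 0})\tfrac{x^2}{2}$, with the $\pm\nu$ linear perturbations split across components exactly as you propose. The asymmetry in curvature ($L$ for $x<0$, $\mu$ for $x>0$) means that near the minimizer the restoring force is much weaker on the positive side, so the random reshuffling imbalance creates a systematic one-sided \emph{drift} $\E[x_{k+1,0} - x_{k,0}] = \Omega\bigl(\eta^2 L\nu N^{3/2}/(\sqrt{M}B^2)\bigr)$ (Lemmas~\ref{lem:lwrBndBias}--\ref{lem:coupling}). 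This first-moment bias accumulates additively over epochs --- a $K$-fold gain in the magnitude rather than a $\sqrt{K}$-fold one --- and Jensen's inequality $\E[x_K^2] \geq (\E[x_K])^2$ then delivers $\Omega(\nu^2/(\mu MNK^2))$. Your symmetric quadratic construction is essentially the paper's $F_3$ (there the curvature of the noisy coordinate is $L$, not $\mu$, so stability at $\eta \gtrsim 1/(LN/B)$ is the binding constraint), and it correctly gives the small-epoch $\Omega(\nu^2/(\mu MNK))$ rate; but for the large-epoch rate you need the biased drift from the piecewise quadratic, and your plan as stated would not close the gap.
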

\vspace*{-15pt}
\begin{proof}
We prove Theorem~\ref{thm:minibatchRS-LB} in Appendix~\ref{sec:proof-thm-minibatchRS-LB}.
The proof is an extension of \citet{rajput2020closing,safran2020good,safran2021random} to minibatch $\randshuf$. We will sketch some key intuitions after Theorem~\ref{thm:localRS-LB}.
\end{proof}
\vspace*{-8pt}

First notice that the function $F$ is from $\funcclscmp(L, \mu, \nu, 0)$, where all the machines are component-wise homogeneous. As seen in Definition~\ref{def:funccls}, $\funcclscmp(L, \mu, \nu, 0) \subset \funcclsobj(L, \mu, \nu, \tau, \rho)$ for any $\tau \geq 0$ and $\rho \geq 1$, so Theorem~\ref{thm:minibatchRS-LB} provides a lower bound for $\funcclscmp(\cdot)$ and $\funcclsobj(\cdot)$, with arbitrary heterogeneity constants.
We assume that $N$ is even because we construct functions $g_1$ and $g_2$ such that 
$f^m_i \defeq g_1$ if $i \leq \frac{N}{2}$, and $f^m_i \defeq g_2$ if $i > \frac{N}{2}$.
One can remove this assumption by using a zero function when $N$ is odd (see e.g., \citet{safran2020good}). 
It is rather unsatisfactory that our theorem requires large enough constants $c_1$ and $c_2$; we believe a tighter analysis can relax this restriction.

Theorem~\ref{thm:minibatchRS-LB} proves lower bounds for two different regimes: $K \gtrsim \kappa$ and $K \lesssim \kappa$. 
In the large-epoch regime ($K \gtrsim \kappa$), we can observe that the lower bound $\Omega(\frac{\nu^2}{\mu MNK^2})$ matches the upper bound~\eqref{eq:thm-minibatchRS} in Theorem~\ref{thm:minibatchRS}, modulo a factor of $\kappa^2$. Tightening the $\kappa^2$ gap between upper and lower bounds is left for future work. 
In the small-epoch regime ($K \lesssim \kappa$), we observe that the lower bound $\Omega(\frac{\nu^2}{\mu MNK})$ exactly matches the convergence rate of (with-replacement) minibatch $\sgd$; hence, the lower bound implies that minibatch $\randshuf$ has no hope for faster convergence than minibatch $\sgd$, at least in the constant step-size and small-epoch regime. This observation is in line with \citet{safran2021random}.

\noindent\textbf{Upper bounds for $K \lesssim \kappa$?}
Even for single-machine $\randshuf$ ($M = 1$), proving an upper bound that matches the small-epoch lower bound $\Omega(\frac{\nu^2}{\mu NK})$ still remains a challenge.
\citet[Theorem~2]{nagaraj2019sgd} prove an upper bound for non-quadratic strongly convex functions that matches $\Omega(\frac{\nu^2}{\mu NK})$ if $NK\gtrsim \kappa^2$; however, they use suffix averaging, so it is not directly comparable to Theorem~\ref{thm:minibatchRS-LB} which considers last iterates.
\citet{safran2021random} prove upper bounds for quadratic strongly convex functions, but assume that their Hessian matrices commute. For noncommutative cases, proving a small-epoch upper bound seems to require some form of matrix AM-GM inequalities, whose availability is an open problem~\citep{recht2012toward,lai2020recht,de2020random,yun2021open}.

\begin{remark}[Strong convexity in construction]
We note that all lower bounds in this paper are constructed with strongly convex functions, a stronger assumption than P{\L} functions~\eqref{eq:pl}. Thus, our lower bounds are also applicable to strong convexity counterparts of $\funcclsobj(\cdot)$ and $\funcclscmp(\cdot)$.
\end{remark}

\vspace*{-6pt}
\subsection{Lower bounds for local $\randshuf$}
\vspace*{-5pt}
In this subsection, we present lower bounds for local $\randshuf$. We prove two bounds that correspond to homogeneous and heterogeneous cases. By combining the two bounds, we get a lower bound that matches our upper bound \eqref{eq:thm-localRS} in Theorem~\ref{thm:localRS} up to a factor of $\kappa^2$.
\begin{theorem}[Lower bound for local $\randshuf$: homogeneous case]
\label{thm:localRS-LB}
Suppose that local $\randshuf$ has parameters satisfying Assumption~\ref{assm:algo-param}. 
Additionally, assume that $B$ is a multiple of $4$. 
Then, there exist large enough constants $c_3, c_4 > 0$ such that the following holds:
For $L$ and $\mu$ satisfying $\kappa = \frac{L}{\mu} \geq c_3$,
there exists a function $F \in \funcclscmp(L, \mu, \nu, 0)$ such that for any constant step-size $\eta$,
\begin{equation}
\label{eq:thm-localRS-LB}
    \E\left [ F(\vy_{K,\frac{N}{K}}) - F^* \right ] = 
    \begin{cases}
    \Omega\left(\frac{\nu^2}{\mu MNK}\right) & \text { if } K < {\max \left \{c_4 \kappa, \frac{MB}{N} \right \}},\\
    \Omega\left(\frac{\nu^2}{\mu MNK^2}+\frac{\nu^2 B}{\mu N^2K^2}\right) & \text { if } K \geq {\max \left \{c_4 \kappa, \frac{MB}{N} \right \}}.
    \end{cases}
\end{equation}
\end{theorem}
\vspace*{-15pt}
\begin{proof}
The proof is in Appendix~\ref{sec:proof-thm-localRS-LB}. 
For the large-epoch lower bounds in Theorems~\ref{thm:minibatchRS-LB} and \ref{thm:localRS-LB}, we use ``skewed'' quadratics $f_i^m(x) = (L1_{x\leq 0}+\mu1_{x>0})\frac{x^2}{2} + z_i \nu x$, where $z_i = +1$ if $i \leq \frac{N}{2}$ and $z_i = -1$ otherwise. 
For $x \approx 0$, the imbalance results in a ``drift'' towards positive $x$, whose strength is approximately proportional to the absolute value of partial sums of random permutations over $\frac{N}{2}$ $+1$'s and $\frac{N}{2}$ $-1$'s. By averaging the sums over $M$ machines (minibatch $\randshuf$), their absolute values shrink by $\frac{1}{\sqrt{M}}$; in contrast, if each machine makes local updates (local $\randshuf$), the magnitude of the drift cannot be reduced with $M$, because we average after local iterates already have taken $B$ ``big'' steps.
The proof uses techniques from \citet{rajput2020closing}. 
\end{proof}
\begin{proposition}[Lower bound for local $\randshuf$: heterogeneous case]
\label{prop:localRS-LB2}
Suppose that local $\randshuf$ has parameters satisfying Assumption~\ref{assm:algo-param}. Additionally, assume that $B$ is a multiple of $2$ and $\kappa = \frac{L}{\mu} \geq 2$.
Then, there exists a function $F \in \funcclsobj(L, \mu, 0, \tau, 1)$ such that for any constant step-size $\eta$,
\begin{equation}
\label{eq:prop-localRS-LB2}
    \E\left [ F(\vy_{K,\frac{N}{K}}) - F^* \right ] = 
    \Omega\left(\frac{\tau^2 B^2}{\mu N^2K^2}\right).
\end{equation}
\end{proposition}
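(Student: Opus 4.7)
The plan is to construct a one-dimensional, two-machine ($M=2$) instance in which local $\randshuf$---which, with $\nu = 0$, reduces to deterministic local gradient descent---accumulates a heterogeneity-induced ``client drift'' of magnitude at least $\tau B/(\mu NK)$ regardless of the constant step size $\eta$, so that the resulting lower bound on $F(\vy_K) - F^\ast$ is of order $\tau^2 B^2/(\mu N^2 K^2)$.

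\textbf{Construction.} Set $F(x) = \tfrac{\mu}{2} x^2$, so $F^\ast = 0$ at $x^\ast = 0$. Pick a smooth, globally bounded-Lipschitz perturbation $\phi : \reals\to\reals$ with $|\phi'|_\infty \leq \tau$ and $|\phi''|_\infty \leq \mu$ for which both $\phi'(0)$ and $\phi''(0)$ are nonzero---a convenient explicit choice is the phase-shifted sinusoid $\phi(x) = (\tau^2/\mu)\bigl[\sin(\mu x/\tau + \pi/4) - 1/\sqrt{2}\bigr]$, which satisfies $\phi(0)=0$, $\phi'(0)=\tau/\sqrt{2}$, $\phi''(0) = -\mu/\sqrt{2}$. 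Set $F^1 = F + \phi$, $F^2 = F - \phi$, and $f^m_i = F^m$ for every $i \in [N]$, $m \in \{1,2\}$. Then $\nu = 0$ holds trivially; $\|\nabla(F^m - F)\|_\infty = |\phi'|_\infty \leq \tau$ gives Assumption~\ref{assm:inter-dev} with $\rho = 1$; $|\nabla^2 F^m| \leq 2\mu \leq L$ (using $\kappa \geq 2$) makes each $f^m_i$ $L$-smooth; and $F$ is $\mu$-strongly convex, hence $\mu$-P{\L}. Thus $F \in \funcclsobj(L,\mu,0,\tau,1)$.

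\textbf{Per-sync-interval dynamics and fixed point.} A Taylor expansion of $B$ local GD steps on $F^m$ about a base point $\vy$, averaged over $m$, gives
\begin{equation*}
\bar{\vx}_B - \vy \;=\; -\eta B\,\nabla F(\vy) \;+\; \eta^2 \binom{B}{2}\,\overline{H_m(\vy)\nabla F^m(\vy)} \;+\; O(\eta^3).
\end{equation*}
At $\vy = 0$, $\nabla F(0) = 0$ and the second-order term equals $\eta^2\binom{B}{2}\phi''(0)\phi'(0) = -\eta^2\binom{B}{2}\tau\mu/2$---crucially, $\phi_m''(0)\phi_m'(0)$ is the same for both $m$ because both $\phi'$ and $\phi''$ flip sign from $m=1$ to $m=2$, so the bias survives averaging. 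Linearizing in $\vy$ yields the per-sync-interval affine map $\vy \mapsto (1-\eta B\mu)\vy + c_{\rm eff}$ with $c_{\rm eff} \approx -\eta^2 B^2 \tau\mu/4$. Composing over the $T = NK/B$ sync intervals produces $\vy_K = A^K \vy_0 + (1-A^K)\,y_\ast$, where $A \approx (1-\eta\mu)^N$ and $y_\ast \;=\; c_{\rm eff}/(\eta B\mu) \;\approx\; -\eta B\tau/4$.

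\textbf{Uniform-in-$\eta$ lower bound.} Initialize $\vy_0 = -\tau B/(\mu NK)$. Since $y_\ast(\eta) = -\eta B\tau/4 < 0$ for every $\eta > 0$, $\vy_0$ and $y_\ast(\eta)$ share a sign, so $|\vy_K| = A^K|\vy_0| + (1-A^K)|y_\ast(\eta)|$. A short case split on $A^K$ then gives the claim uniformly in $\eta$: if $A^K \geq 1/2$, then $|\vy_K| \geq A^K|\vy_0| \geq \tau B/(2\mu NK)$; otherwise $A^K < 1/2$ forces $\eta\mu NK \geq \log 2$, so $|\vy_K| \geq (1-A^K)|y_\ast(\eta)| \geq \tfrac{1}{2}\cdot\eta B\tau/4 \geq (\log 2)\,\tau B/(8\mu NK)$. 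In both regimes, $\mu$-strong convexity of $F$ yields
\begin{equation*}
F(\vy_K) - F^\ast \;\geq\; \tfrac{\mu}{2}\,|\vy_K|^2 \;=\; \Omega\!\left(\tfrac{\tau^2 B^2}{\mu N^2K^2}\right),
\end{equation*}
as claimed (the expectation is redundant since the algorithm is deterministic under $\nu = 0$).

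\textbf{Main obstacle.} The Taylor expansion above is rigorous only in the regime $\eta B\mu \ll 1$; for larger step sizes (but still below the stability threshold $\eta L \lesssim 2$), local GD iterates may explore the nonlinear part of $F^m$ where the sinusoidal $\phi$ contributes higher-order corrections that the linearization does not capture. I expect this to be handled by carefully bounding the $O(\eta^3)$ remainder against the leading $\eta^2$ bias on a small neighborhood of the origin, together with a separate divergence argument for $\eta$ above the stability threshold that makes $F(\vy_K)$ trivially large. The assumption $\kappa \geq 2$ enters precisely to guarantee $|\phi''| \leq \mu \leq L - \mu$ and hence $L$-smoothness of each $f^m_i$; the assumption that $B$ is even plays no essential role in this one-dimensional construction and appears to be inherited for consistency with the homogeneous-case Theorem~\ref{thm:localRS-LB}.
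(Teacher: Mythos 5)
Your approach captures the right mechanism---heterogeneity-induced client drift with opposite first-derivatives but matching curvature, giving a bias that survives averaging---but your construction introduces a genuine, unclosed gap that the paper avoids by a cleaner choice of functions.

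The gap is exactly what you flag in your ``Main obstacle'' paragraph: your per-sync-interval map $\vy \mapsto (1-\eta B\mu)\vy + c_{\rm eff}$ is obtained by Taylor-expanding $B$ local GD steps on a sinusoidal $\phi$, and is valid only modulo an $O(\eta^3)$ remainder that you do not bound. But Proposition~\ref{prop:localRS-LB2} must hold for \emph{every} constant step size $\eta$, and your dichotomy (either $A^K \geq 1/2$ so the initialization survives, or $A^K < 1/2$ so the fixed point dominates) is only meaningful if the exact one-period map really is affine with coefficient $A \in (0,1)$. With a sinusoidal perturbation, for step sizes of order $1/(\mu B)$ or larger the local iterates leave the linearization regime of $\phi$, the ``fixed point'' $y_\ast = c_{\rm eff}/(\eta B\mu)$ is no longer meaningful, and the composition formula $\vy_K = A^K\vy_0 + (1-A^K)y_\ast$ simply does not hold. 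A separate divergence argument for large $\eta$ is also not free: a bounded $\phi$ added to $\mu x^2/2$ gives a globally quadratic $F^m$, so the iterates need not diverge even for $\eta L$ near $2$.

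The paper sidesteps all of this by making the per-interval dynamics \emph{exactly} affine for every $\eta$: it takes $f^m_i = f_1(x) = -\tau x$ on half the machines (linear, zero curvature) and $f^m_i = f_2(x) = \mu x^2 + \tau x$ on the other half (constant curvature $2\mu$). Then $B$ local GD steps on each machine are computed in closed form, and the averaged map is $y_{i+1} = \tfrac12\bigl(1+(1-2\eta\mu)^B\bigr) y_i + \tfrac{\eta\tau}{2}\bigl(B-\sum_{j=0}^{B-1}(1-2\eta\mu)^j\bigr)$ with no remainder to control. The drift term $B-\sum_{j=0}^{B-1}(1-2\eta\mu)^j$ is nonnegative for all $\eta\geq 0$ and is bounded below explicitly in each of four step-size ranges, and the contraction coefficient $\tfrac12(1+(1-2\eta\mu)^B)$ is handled exactly (this is also where evenness of $B$ genuinely enters---it keeps $(1-2\eta\mu)^B\geq 0$ so the coefficient never goes negative, contrary to your guess that the parity assumption plays no role). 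If you want to rescue your argument, the right fix is to replace the sinusoid by the paper's piecewise-polynomial perturbation so that there is no $O(\eta^3)$ remainder at all; otherwise you owe a nontrivial estimate that is not yet in the proposal.
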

\vspace*{-15pt}
\begin{proof}
We note that Proposition~\ref{prop:localRS-LB2} is almost identical to Theorem~II of \citet{karimireddy2020scaffold}; however, we provide a proof specific to our algorithm in Appendix~\ref{sec:proof-prop-localRS-LB2}.
\end{proof}
\vspace*{-8pt}


Theorem~\ref{thm:localRS-LB} constructs a component-wise homogeneous function from $\funcclscmp(L, \mu, \nu, 0)$ and Proposition~\ref{prop:localRS-LB2} constructs a heterogeneous function from $\funcclsobj(L, \mu, 0, \tau, 1)$. Since $\funcclscmp(L, \mu, \nu, 0) \cup \funcclsobj(L, \mu, 0, \tau, 1) \subset \funcclsobj(L, \mu, \nu, \tau, \rho)$ for any $\rho \geq 1$, combining \eqref{eq:thm-localRS-LB} and \eqref{eq:prop-localRS-LB2} for the $K \geq \max\{c_4 \kappa, \frac{MB}{N}\}$ case gives a lower bound $\Omega (\max\{\frac{\nu^2}{\mu MNK^2}+\frac{\nu^2 B}{\mu N^2K^2}, \frac{\tau^2 B^2}{\mu N^2K^2} \})$ that matches the large-epoch upper bound~\eqref{eq:thm-localRS} in Theorem~\ref{thm:localRS}, up to a factor of $\kappa^2$.
When $\kappa N \gtrsim MB$, $c_4 \kappa$ becomes the dominating term in the $\max$, in which case the threshold in \eqref{eq:thm-localRS-LB} is $\Theta(\kappa)$.
Tightening the $\kappa^2$ gap as well as removing additional requirements such as $\kappa \geq c_3$ and $\kappa N \gtrsim MB$ are left for future work.

\noindent\textbf{Using $B = \Theta(N)$ does not help, indeed.}
In Section~\ref{sec:upperbound-local-disc}, we observed that if $B = \Theta(N)$, then even in the homogeneous case ($\tau = 0$), local $\randshuf$ converges at the rate of $\tilde {\mc O}(\frac{1}{NK^2})$. This is the same rate as single-machine $\randshuf$, meaning the efforts by $M-1$ machines become meaningless.
Our lower bound~\eqref{eq:thm-localRS-LB} shows that $\tilde {\mc O}(\frac{1}{NK^2})$ is in fact the best we can hope for (treating $L$ and $\mu$ as constants). 
In order to make the best use of $M$ machines, $B$ should be smaller than $\Theta(N)$, as suggested in Section~\ref{sec:upperbound-local-disc}.
In an existing work, \citet{mishchenko2021proximal} consider local $\randshuf$ with $B = N$ as a special case of a proximal algorithm. In Theorem~8 of \citet{mishchenko2021proximal}, 
the authors claim ``the convergence bound improves with the number of
devices involved'' because the bound has a factor of $M$ in the denominator. However, at least under our assumption, this is not the case; if we apply our Assumption~\ref{assm:inter-dev} to upper-bound their $\sigma_*$, the term ``$N\sigma_*^2$'' in the numerator grows linearly with $M$. Hence, our bounds do not contradict \citet{mishchenko2021proximal}; see Appendix~\ref{sec:comparison} for details.

\begin{remark}[Small-epoch bound is likely loose]
We note that while we focused on deriving a matching large-epoch lower bound, we did not try hard to tighten the small-epoch lower bound. Our small-epoch lower bound in \eqref{eq:thm-localRS-LB} misses a term (such as $\frac{\nu^2 B}{\mu N^2K^2}$) that corresponds to the error from local updates. We leave investigations on small-epoch lower and upper bounds for future work.
\end{remark}

\vspace*{-6pt}
\section{Synchronized shuffling: how to bypass lower bounds}
\label{sec:breaklower}
\vspace*{-5pt}
Recall from the total complexity of minibatch $\randshuf$~\eqref{eq:minibatch-totalcost} that the total cost shrinks with a factor of $\frac{1}{\sqrt{M}}$. Using $M$ machines, we are only getting a $\sqrt{M}$-factor speedup. Ideally, we hope to see a \emph{linear speedup}, i.e., cost inverse proportional to $M$. Hence, Theorem~\ref{thm:minibatchRS} falls short of achieving this goal, and our lower bound in Theorem~\ref{thm:minibatchRS-LB} confirms that linear speedup is indeed impossible.


In this section, we show that the desired linear speedup is possible, at least in some special cases.
We consider the component-wise near-homogeneous case (i.e., Assumption~\ref{assm:inter-comp-dev} with small $\lambda$) and discuss how a simple modification to minibatch and local $\randshuf$ can let us ``break'' the lower bounds and achieve linear speedup.
This comes at a cost of broadcasting permutations:
at the beginning of the $k$-th epoch, the server samples 
$\sigma \sim \unif(\mc S_N)$ and $\pi \sim \unif(\mc S_M)$, and broadcasts them to the machines. Then, local machines choose their permutations $\sigma^m_k$ to be \emph{shifted} versions of $\sigma$,\footnote{We assume for simplicity that $M$ divides $N$.} i.e.,
   $\sigma^m_k(i) \defeq \sigma\left(\left(i+\tfrac{N}{M}\pi(m)\right) \bmod N\right)$. 
We call this trick \textbf{synchronized shuffling}, denoted as $\syncshuf$. Please revisit Algorithms~\ref{alg:localRS} and \ref{alg:minibatchRS} for the precise descriptions of the modified algorithms \textbf{local $\randshuf$ with $\syncshuf$} and \textbf{minibatch $\randshuf$ with $\syncshuf$}, respectively.

The intuition why this should help is simple. In the proof of $\randshuf$, we aggregate the component gradients over an epoch (i.e., $N$ iterations) to write it as a full gradient plus noise. If we are in the component-wise homogeneous setting and permutations are synchronized, then instead of aggregating $N$ component gradients on a single machine, we can aggregate $\frac{N}{M}$ component gradients on $M$ machines to get a full gradient. This allows us to reduce the ``noise'' from without-replacement sampling.
We emphasize here that we do not necessarily set $B = \frac{N}{M}$ to get a full gradient every time; our analysis works for arbitrary $B$ and $M$, as long as both divide $N$.
See Appendix~\ref{sec:syncshuf-illust} for a detailed illustration of $\syncshuf$; also, see Appendix~\ref{sec:exp} for experiments showing its effectiveness.

The idea of synchronized shuffling is similar to approaches in distributed learning that shuffle and partition datasets and distribute them to local machines~(see e.g., \citet{lee2017speeding, meng2017convergence}). In contrast, we do not communicate data, but communicate how to permute datasets stored in local machines. 
\citet[Theorem~3.3]{meng2017convergence} provide an analysis for a distributed method similar to minibatch $\randshuf$, but fail to show convergence to global minima in strongly convex cases. 
We also note that an independent concurrent result \citep{szlendak2021permutation} uses the same idea as $\syncshuf$ to build compressors for communication-efficient distributed optimization.

\vspace*{-5pt}
\subsection{Upper bounds for minibatch and local $\randshuf$ with $\syncshuf$}
\vspace*{-5pt}
With $\syncshuf$, we can show that the $M$'s appearing in the convergence rates (\eqref{eq:thm-minibatchRS} and \eqref{eq:thm-localRS}) in Theorems~\ref{thm:minibatchRS} and \ref{thm:localRS} can be replaced with $M^2$, for a more stringent function class $\funcclscmp(\cdot)$ that requires bounded component-wise inter-machine deviation (Assumption~\ref{assm:inter-comp-dev}).
\begin{theorem}[Upper bound for minibatch $\randshuf$ with $\syncshuf$]
\label{thm:minibatchRS-sync}
Suppose that minibatch $\randshuf$ with $\syncshuf$ has parameters satisfying Assumption~\ref{assm:algo-param}. Additionally assume that $M$ divides $N$. For any $F \in \funcclscmp(L, \mu, \nu, \lambda)$, consider running the algorithm using step-size $\eta = \frac{B\log(M^2NK^2)}{\mu NK}$ for epochs $K \geq 6\kappa \log(M^2NK^2)$. Then, with probability at least $1-\delta$,
\begin{equation}
\label{eq:thm-minibatchRS-sync}
    F(\vx_{K,\frac{N}{B}}) - F^* \leq \frac{F(\vx_0)-F^*}{M^2NK^2}
    + \tilde{\mc O} \left (
    \frac{L^2}{\mu^3}
    \left (
    \frac{\nu^2}{M^2NK^2}
    +
    \frac{\lambda^2}{MK^2}
    \right )
    \right ).
\end{equation}
\end{theorem}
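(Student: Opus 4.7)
The plan is to follow the ``one-epoch as GD-step-plus-noise'' template used in Theorem~\ref{thm:minibatchRS}, replacing its concentration inequality with a sharper one that exploits $\syncshuf$ together with Assumption~\ref{assm:inter-comp-dev}; the payoff is the extra $1/M$ factor in the squared bound seen in~\eqref{eq:thm-minibatchRS-sync}. I first decompose every local component as $\nabla f^m_i = \nabla \bar f_i + \Delta^m_i$, where $\bar f_i := \frac{1}{M}\sum_{m'}f^{m'}_i$, so that $\|\Delta^m_i(\vx)\|\le\lambda$ pointwise and $\sum_m \Delta^m_i \equiv \zeros$ for every $i$. Substituting this into Line~\ref{line:alg2-1} of Algorithm~\ref{alg:minibatchRS} and summing the updates over $i=1,\dots,N/B$ yields an epoch recursion of the form $\vx_{k+1,0}=\vx_{k,0}-\frac{\eta N}{B}\nabla F(\vx_{k,0})+\vr_{k}+\vs_{k}$, in which $\vr_k$ collects the drift of the homogeneous part (gradients of $\bar f$ at intermediate iterates versus $\vx_{k,0}$) and $\vs_k$ collects the residual $\Delta^m_i$ contributions.

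The heart of the argument is to bound $\vr_k$ with an extra $1/\sqrt{M}$ gain over the corresponding quantity in Theorem~\ref{thm:minibatchRS}. Under $\syncshuf$, for every $\ell\in\{1,\dots,N/M\}$ I have the identity
\[
\sum_{m=1}^{M}\sum_{j=1}^{\ell}\nabla\bar f_{\sigma^m_k(j)}(\vx_{k,0})
=\sum_{p\in S_\ell}\nabla\bar f_{\sigma(p)}(\vx_{k,0}),
\qquad
S_\ell:=\bigcup_{m=1}^{M}\bigl\{(j+\tfrac{N}{M}\pi(m))\bmod N:j\in[\ell]\bigr\},
\]
with $|S_\ell|=M\ell$, all positions distinct, and $S_{N/M}=[N]$; in particular the $\ell=N/M$ sum equals $N\nabla F(\vx_{k,0})$ exactly. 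Since $\sigma\sim\unif(\mc S_N)$, the multiset $\{\sigma(p):p\in S_\ell\}$ is a uniform without-replacement sample of size $M\ell$ from $[N]$, so a vector-valued Hoeffding--Serfling inequality in the spirit of Lemma~\ref{lem:concineq}, but rewritten for a single length-$M\ell$ prefix rather than for $M$ independent length-$\ell$ prefixes, controls the partial sums by $\nu\sqrt{M\ell(1-M\ell/N)\log(\cdot)}$. Combined with the exact cancellation at $\ell=N/M$, this shortens the effective ``noise envelope'' by a factor of $M$ and gives a $1/M$ reduction in $\|\vr_k\|^{2}$, producing the $\tfrac{\nu^{2}}{M^{2}NK^{2}}$ term of~\eqref{eq:thm-minibatchRS-sync}.

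For $\vs_k$, I would exploit $\sum_m \Delta^m_i\equiv\zeros$ and $\|\Delta^m_i\|\le\lambda$. Conditional on $\sigma$, for each $j$ the marginal of $\sigma^m_k(j)$ over the uniform $\pi$ is uniform on a fixed $M$-element set $T_\sigma(j)\subset[N]$, and a direct calculation gives $\E_\pi\bigl[\sum_m\Delta^m_{\sigma^m_k(j)}(\vx_{k,0})\bigr]=\frac{1}{M}\sum_{i\in T_\sigma(j)}\sum_m\Delta^m_i(\vx_{k,0})=\zeros$; an Azuma-type inequality over the randomness of $\pi$ then gives a width of $\lambda/\sqrt{M}$ for the $\pi$-average. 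The remaining piece, from the iterate drift $\vx_{k,i-1}-\vx_{k,0}$, is controlled by $L$-smoothness and fed back into the analysis of $\vr_k$. Squaring and accumulating these bounds over the $N/B$ intra-epoch steps produces the $\frac{L^{2}\lambda^{2}}{\mu^{3}MK^{2}}$ term. Feeding the refined per-epoch noise into the same $(1-\mu\eta N/B)^{K}$ contraction used for Theorem~\ref{thm:minibatchRS}, with $\eta=\frac{B\log(M^2NK^2)}{\mu NK}$ and $K\ge 6\kappa\log(M^2NK^2)$, gives simultaneously the initialization decay $\frac{F(\vx_0)-F^*}{M^2NK^2}$ and the stated noise terms of~\eqref{eq:thm-minibatchRS-sync}.

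The main obstacle is the concentration step: Lemma~\ref{lem:concineq} is stated for $M$ \emph{independent} permutations, while under $\syncshuf$ the per-machine permutations are tightly coupled via the shared $\sigma$ and $\pi$. I expect the cleanest remedy to be a single-permutation vector Hoeffding--Serfling inequality on the length-$M\ell$ prefix identified above, combined with a maximal inequality in $\ell$ to retain the correct logarithmic factor; only then is the $1/M^{2}$ (rather than just $1/M$) factor of \eqref{eq:thm-minibatchRS-sync} recovered. A secondary but delicate point is decoupling the randomness of $\pi$ from that of $\sigma$ in the $\vs_k$ analysis, which I would handle by conditioning on $\sigma$ before exposing the Azuma gain across machines and then taking expectations over $\sigma$.
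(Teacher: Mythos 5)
Your proposal follows the paper's proof in Appendix~\ref{sec:proof-thm-minibatchRS-sync} essentially step for step: the decomposition $\nabla f^m_i = \nabla\bar f_i + \Delta^m_i$ (paper's $\vp_i/\vq_i$ split), the exact cancellation at every full $N/M$-cycle of the synchronized shifts with a single-permutation Hoeffding--Serfling bound (Lemma~\ref{lem:concineq} with $M\leftarrow 1$) on the sub-$N$ remainder $\beta(i)$, and conditioning on $\sigma$ to expose without-replacement concentration over $\pi$ for the $\Delta^m$ contribution all match the paper. Two small remarks: your worry about needing a maximal inequality in $\ell$ is unfounded---a plain union bound over $i\in[N/B-1]$ and $k\in[K]$, exactly as in Theorem~\ref{thm:minibatchRS}, already delivers the $1/M^2$ factor; and your slicing of the $\Delta$ part by iteration index $j$ (applying the $\pi$-martingale per $j$) is the dual of the paper's slicing by target index $l$ via the sets $\mc J_l$, with both yielding the same order for $\|\vq_i\|$ and hence the same final rate.
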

\vspace*{-5pt}
The proof of Theorem~\ref{thm:minibatchRS-sync} is presented in Appendix~\ref{sec:proof-thm-minibatchRS-sync}. One can check that if the component-wise deviation constant $\lambda$ satisfies $\lambda \lesssim \frac{\nu}{\sqrt{MN}}$ (i.e., near-homogeneous), then the rate~\eqref{eq:thm-minibatchRS-sync} becomes $\tilde{\mc O} (\frac{1}{M^2NK^2})$. It is then easy to confirm that $M$ machines reduce total costs by $\frac{1}{M}$---a linear speedup.

A similar speedup can be shown for local $\randshuf$. 
In Appendix~\ref{sec:proof-thm-localRS-sync}, we prove that
\begin{theorem}[Upper bound for local $\randshuf$ with $\syncshuf$]
\label{thm:localRS-sync}
Suppose that local $\randshuf$ with $\syncshuf$ has parameters satisfying Assumption~\ref{assm:algo-param}. Additionally assume that $M$ divides $N$. For any $F \in \funcclscmp(L, \mu, \nu, \lambda)$, consider running the algorithm with step-size $\eta = \frac{\log(M^2NK^2)}{\mu NK}$ for epochs $K \geq 7\kappa \log(M^2NK^2)$. Then, with probability at least $1-\delta$,
\begin{equation}
\label{eq:thm-localRS-sync}
    F(\vy_{K,\frac{N}{B}}) - F^* \leq \frac{F(\vy_0) - F^*}{M^2NK^2} 
    + 
    \tilde {\mc O} \left ( \frac{L^2}{\mu^3} 
    \left ( 
    \frac{\nu^2}{M^2NK^2}
    +
    \frac{\nu^2 B}{N^2K^2}
    +
    \frac{\lambda^2 B^2}{N^2K^2}
    +
    \frac{\lambda^2}{MK^2}
    \right )
    \right ).
\end{equation}
\end{theorem}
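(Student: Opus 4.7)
The plan is to adapt the ``GD step plus noise'' framework from the proof of Theorem~\ref{thm:localRS}, replacing the bound on the shuffling-noise term by a sharper one that exploits synchronized shuffling, in the same spirit as the proof of Theorem~\ref{thm:minibatchRS-sync}. First, one derives the epoch-level identity
\[
\vy_{k+1,0} = \vy_{k,0} - \eta N \nabla F(\vy_{k,0}) + \eta^2 \vr_{k,1} + \eta^2 \vr_{k,2} - \eta^3 \vr_{k,3},
\]
by Taylor expanding each local gradient $\nabla f^m_{\sigma^m_k(i)}(\vx^m_{k,i-1})$ around $\vy_{k,0}$ and summing over $i \in [N]$ and $m \in [M]$, using the key identity $\sum_{i=1}^N \nabla f^m_{\sigma^m_k(i)}(\vy_{k,0}) = N \nabla F^m(\vy_{k,0})$ that holds for every permutation $\sigma^m_k$. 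As in Theorem~\ref{thm:localRS}, $\vr_{k,1}$ and $\vr_{k,3}$ absorb the Hessian and higher-order corrections arising from the drift of the local iterates $\vx^m_{k,i-1}$ away from $\vy_{k,0}$.

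The bounds on $\vr_{k,1}$ and $\vr_{k,3}$ depend only on the local trajectory on each machine and on \emph{marginal} statistics of $\sigma^m_k$ (which is still $\unif(\mc S_N)$ even under $\syncshuf$), so they carry over essentially verbatim from the proof of Theorem~\ref{thm:localRS}. They contribute the $\tilde{\mc O}(L^2 \nu^2 B/(\mu^3 N^2 K^2))$ and $\tilde{\mc O}(L^2 \lambda^2 B^2/(\mu^3 N^2 K^2))$ terms in~\eqref{eq:thm-localRS-sync}, where $\tau$ from Theorem~\ref{thm:localRS} is replaced by $\lambda$ because Assumption~\ref{assm:inter-comp-dev} implies Assumption~\ref{assm:inter-dev} with $\tau=\lambda$ and $\rho=1$. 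No improvement over Theorem~\ref{thm:localRS} is expected for these two terms, because the local-update errors accumulate within each machine over $B$ inner steps and cannot be remedied by synchronizing permutations across machines.

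The entire improvement must therefore come from the bound on $\vr_{k,2}$, which is, up to lower-order terms, a without-replacement sum of the machine-averaged quantity $\tfrac{1}{M}\sum_{m=1}^M \nabla f^m_{\sigma^m_k(i)}(\vy_{k,0})$. Writing $\nabla f^m_j = \nabla \bar f_j + (\nabla f^m_j - \nabla \bar f_j)$ decomposes $\vr_{k,2}$ into (a) a single-permutation without-replacement sum of the averaged components $\nabla \bar f_j(\vy_{k,0})$ driven by the master permutation $\sigma$, and (b) an inner average over $m$ of the cross-machine deviations, which are uniformly bounded by $\lambda$ via Assumption~\ref{assm:inter-comp-dev} and have zero mean in $m$. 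The structural property of synchronized shuffling---that for each $i$ the indices $\{\sigma^m_k(i)\}_{m\in[M]}=\{\sigma((i+(N/M)\pi(m))\bmod N)\}_{m\in[M]}$ hit exactly one slot in each of the $M$ blocks of size $N/M$ of $\sigma$---is the same fact that drives the $1/M^2$ savings in Theorem~\ref{thm:minibatchRS-sync} and will drive them here. A suitable extension of Lemma~\ref{lem:concineq} applied to part (a) yields a squared concentration scale shrinking by $1/M^2$ relative to the single-permutation baseline, producing the $\tilde{\mc O}(L^2\nu^2/(\mu^3 M^2 N K^2))$ contribution; a Hoeffding-style bound on the machine-averaged deviations in part (b) shrinks by $1/M$ and produces the $\tilde{\mc O}(L^2 \lambda^2/(\mu^3 M K^2))$ contribution.

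Finally, these noise bounds are plugged into the standard P{\L} descent argument: smoothness of $F$ together with the decomposition above yields a per-epoch contraction of the form $F(\vy_{k+1,0}) - F^* \leq (1 - c\,\eta N \mu)(F(\vy_{k,0}) - F^*) + (\text{squared noise})$ for some absolute constant $c>0$, valid once $\eta$ is small enough to absorb the Lipschitz cross terms, which the stated choices of $\eta$ and $K$ ensure. Unrolling over $K$ epochs and taking a union bound across epochs then recovers~\eqref{eq:thm-localRS-sync}. The main obstacle is the dependence structure in the bound on $\vr_{k,2}$: unlike in Theorem~\ref{thm:localRS}, the permutations $\{\sigma^m_k\}_{m \in [M]}$ are coupled through the shared $\sigma$ and $\pi$, so Lemma~\ref{lem:concineq} (which requires independent without-replacement sources across machines) does not apply off the shelf. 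The fix is a two-level conditioning---on $\pi$ for part (a), turning it into a single-machine without-replacement sum driven by $\sigma$, and on $\sigma$ for part (b), turning it into an average over $M$ bounded zero-mean deviations whose independence across $m$ is induced by the uniform random shift $\pi$---and carefully combining the two concentration pieces so that both the $1/(M^2 N)$ and $1/M$ savings are preserved simultaneously without introducing extra polylogarithmic factors is the delicate technical step; everything else is a routine adaptation of the templates already developed in Theorems~\ref{thm:localRS} and~\ref{thm:minibatchRS-sync}.
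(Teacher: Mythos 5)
Your proposal matches the paper's proof essentially step for step: it starts from the three-term noise decomposition $\vr_{k,1},\vr_{k,2},\vr_{k,3}$ of Theorem~\ref{thm:localRS}, carries the $\vr_{k,1}$, $\vr_{k,3}$ bounds over with $\tau\leftarrow\lambda$, $\rho\leftarrow 1$, decomposes $\vr_{k,2}$ into a master-permutation partial sum of $\nabla\bar f$ plus a cross-machine deviation sum exactly as in Theorem~\ref{thm:minibatchRS-sync}, and closes via the P{\L} descent and per-epoch union bound. One small correction: the deviations in part (b) are \emph{not} independent over $m$ under random $\pi$ --- for each slot $l$ they form a without-replacement sum over the permutation $\pi$ of the machines --- so the concentration step there is again Lemma~\ref{lem:concineq} (Hoeffding--Serfling, conditioned on $\sigma$), not a plain Hoeffding bound over independent terms; with that adjustment the outline is correct and coincides with the paper's argument.
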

\vspace*{-5pt}
We can similarly check that if $B \lesssim \frac{N}{M^2}$ and $\lambda \lesssim \frac{\nu}{\sqrt{MN}}$, i.e., frequent communication and near-homogeneity, then the $\tilde{\mc O} (\frac{1}{M^2NK^2})$ term dominates in \eqref{eq:thm-localRS-sync}, and hence gives a linear speedup that matches the best rate of minibatch $\randshuf$ with $\syncshuf$~\eqref{eq:thm-minibatchRS-sync}. Nevertheless, we note again that for local $\randshuf$, such a small $B$ is favorable only when the communication cost $c_c$ is small (recall \eqref{eq:local-totalcost}).
\vspace*{-5pt}
\section{Conclusion}
\vspace*{-5pt}
We studied convergence bounds for local $\randshuf$ and minibatch $\randshuf$, which are the practical without-replacement versions of local and minibatch $\sgd$ studied in the theory literature. 
For smooth functions satisfying the Polyak-{\L}ojasiewicz condition, we showed large-epoch convergence bounds for minibatch and local $\randshuf$ that are faster than their with-replacement counterparts. We also proved matching lower bounds showing that our convergence analysis is tight. We also proposed a simple modification called synchronized shuffling that leads to convergence rates faster than our lower bounds in near-homogeneous settings. 
Immediate future research directions include extension to small-epoch regimes, as well as to general convex and nonconvex functions.
\section*{Ethics Statement}
This paper develops theoretical guarantees for popular distributed stochastic optimization algorithms.
Therefore, the authors do not see any particular concerns related to its ethical aspects or future societal consequences.

\section*{Reproducibility Statement}
This paper is a theoretical work, without any experimental results. Definitions and assumptions are provided in Section~\ref{sec:setting}. Our theoretical contributions as well as some additionally required assumptions are clearly stated in Sections~\ref{sec:upperbounds}, \ref{sec:lowerbounds}, and \ref{sec:breaklower}.
Complete proofs of all the theorems are provided in the appendix.

\bibliography{biblio}
\bibliographystyle{iclr2022_conference}

\newpage
\appendix

\setcounter{tocdepth}{2}
\tableofcontents
\section{Comparisons with assumptions and rates in existing results}
\label{sec:comparison}

In this section, we compare our assumptions and convergence bounds against other existing results mentioned in the main text. 
Most existing results that study independent and unbiased gradient estimates state their assumptions in terms of the expectation over the randomness in the estimate; for such assumptions, we adapt them to our finite sum setting in order to make for easier comparison.

\paragraph{Heterogeneity assumptions.}
We start by discussing our definition of objective-wise heterogeneity (Assumption~\ref{assm:inter-dev}), namely that there exist $\tau \geq 0$ and $\rho \geq 1$ such that 
\begin{equation}
\label{eq:sec-comparison-1}
    \frac{1}{M} \sum_{m=1}^M \norm{\nabla F^m(\vx)} \leq \tau + \rho \norm{\nabla F(\vx)},~
    \text{ for all }\vx \in \reals^d.
\end{equation}
Perhaps the most relevant to this assumption is the $(G,B)$-BGD assumption that appears in \citet{karimireddy2020scaffold}: For all $\vx \in \reals^d$,
\begin{equation}
\label{eq:sec-comparison-2}
    \frac{1}{M} \sum_{m=1}^M \norm{\nabla F^m(\vx)}^2 \leq G^2 + B^2 \norm{\nabla F(\vx)}^2.
\end{equation}
Note that thanks to Jensen's inequality and $a^2 + b^2 \leq (a+b)^2$ for $a,b\geq0$, \eqref{eq:sec-comparison-2} implies
\begin{equation*}
    \left( \frac{1}{M} \sum_{m=1}^M\norm{\nabla F^m(\vx)} \right)^2
    \leq
    \frac{1}{M} \sum_{m=1}^M \norm{\nabla F^m(\vx)}^2 
    \leq 
    G^2 + B^2 \norm{\nabla F(\vx)}^2
    \leq
    (G+B\norm{\nabla F(\vx)})^2,
\end{equation*}
and hence \eqref{eq:sec-comparison-1} with $\tau = G$ and $\rho = B$. Therefore, our Assumption~\ref{assm:inter-dev} is weaker than the $(G,B)$-BGD assumption.
Several papers \citep{haddadpour2019convergence,li2020federated} use the same assumption~\eqref{eq:sec-comparison-2}, with $G = 0$; therefore, Assumption~\ref{assm:inter-dev} also subsumes the heterogeneity assumption posed in these papers. 
Note that $G=0$ implies that, the minima for $F$ are also the minima for $F^m$, for every $m$, and hence $G=0$ results in a weak form of heterogeneity.

Some papers \citep{yu2019parallel,li2020convergence,qu2020federated} assume bounded local gradients: for all $m \in [M]$ and $\vx \in \reals^d$,
\begin{equation}
\label{eq:sec-comparison-3}
    \frac{1}{N} \sum_{i=1}^N \norm{\nabla f^m_i(\vx)}^2 \leq G^2,
\end{equation}
and this in fact implies the $(G,0)$-BGD assumption~\eqref{eq:sec-comparison-2}. To see why, from Jensen's inequality
\begin{equation*}
    \norm{\nabla F^m(\vx)}^2 
    \defeq 
    \norm{\frac{1}{N}\sum_{i=1}^N \nabla f^m_i(\vx)}^2
    \leq
    \frac{1}{N} \sum_{i=1}^N \norm{\nabla f^m_i(\vx)}^2 \leq G^2.
\end{equation*}
Therefore, \eqref{eq:sec-comparison-3} is a stronger assumption for objective-wise heterogeneity than Assumption~\ref{assm:inter-dev}.

In Theorem~3 of \citet{woodworth2020minibatch}, the authors use the following assumption on heterogeneity: for all $\vx \in \reals^d$,
\begin{equation}
\label{eq:sec-comparison-4}
    \frac{1}{M}\sum_{m=1}^M \norm{\nabla F^m(\vx) - \nabla F(\vx)}^2 \leq \bar \zeta^2.
\end{equation}
Noting that $\frac{1}{M}\sum_{m=1}^M \norm{\nabla F^m(\vx) - \nabla F(\vx)}^2 = \frac{1}{M} \sum_{m=1}^M\norm{\nabla F^m(\vx)}^2 - \norm{\nabla F(\vx)}^2$, we can see that \eqref{eq:sec-comparison-4} implies \eqref{eq:sec-comparison-2} and hence Assumption~\ref{assm:intra-dev}~\eqref{eq:sec-comparison-1}, with $\tau = \bar \zeta$ and $\rho = 1$.

Indeed, there are also some results that make weaker heterogeneity assumptions than ours~\eqref{eq:sec-comparison-1}, by requiring bounded deviation only at the global optimum $\vx^*$.
Given the global optimum $\vx^*$ of $F$, \citet{koloskova2020unified} define
\begin{equation}
\label{eq:sec-comparison-5}
    \bar \zeta_*^2 \defeq \frac{1}{M}\sum_{m=1}^M \norm{\nabla F^m(\vx^*)}^2,
\end{equation}
and use this constant in their bounds. \citet{khaled2020tighter} also define a similar quantity that can capture heterogeneity, but does not provide a result on strongly convex functions in the heterogeneous setting. While assuming bounded $\bar \zeta_*^2$ \eqref{eq:sec-comparison-5} is weaker than Assumption~\ref{assm:inter-dev} in the sense that only a bound at $\vx^*$ is required, we note that these assumptions cannot be applied easily in nonconvex settings; in fact, for nonconvex (but not necessarily P{\L}) functions, \citet{koloskova2020unified} also use \eqref{eq:sec-comparison-2} as their heterogeneity assumption.

\paragraph{Intra-machine variance assumptions.}
We next consider our notion of intra-machine deviation (Assumption~\ref{assm:intra-dev}), namely that there exists $\nu \geq 0$ such that for all $m \in [M]$ and $i \in [N]$,
\begin{equation}
\label{eq:sec-comparison-6}
    \norm{\nabla f^m_i(\vx) - \nabla F^m(\vx)} \leq \nu,~
    \text{ for all }\vx \in \reals^d.
\end{equation}
In many existing results using independent and unbiased gradient estimates~\citep{karimireddy2020scaffold,yu2019parallel,li2020convergence,qu2020federated,woodworth2020local,woodworth2020minibatch,woodworth2021min}, the bounded variance assumption is adopted: For all $m \in [M]$ and $\vx\in \reals^d$,
\begin{equation}
\label{eq:sec-comparison-7}
    \frac{1}{N}\sum_{i=1}^N \norm{\nabla f^m_i(\vx) - \nabla F^m(\vx)}^2 \leq \sigma^2.
\end{equation}
We note that the bounded local gradients assumption~\eqref{eq:sec-comparison-3} also implies \eqref{eq:sec-comparison-7}, by
\begin{align}
    \frac{1}{N}\sum_{i=1}^N \norm{\nabla f^m_i(\vx) - \nabla F^m(\vx)}^2
    &\leq 
    \frac{1}{N}\sum_{i=1}^N \norm{\nabla f^m_i(\vx) - \nabla F^m(\vx)}^2
    + \norm{\nabla F^m(\vx)}^2\notag\\
    &= 
    \frac{1}{N}\sum_{i=1}^N \norm{\nabla f^m_i(\vx)}^2
    \leq G^2.\label{eq:sec-comparison-19}
\end{align}
Some other papers~\citep{haddadpour2019local,haddadpour2019convergence,spiridonoff2020local} consider a generalized version of \eqref{eq:sec-comparison-7}, namely that for all $m \in [M]$ and $\vx\in \reals^d$,
\begin{equation}
\label{eq:sec-comparison-18}
    \frac{1}{N}\sum_{i=1}^N \norm{\nabla f^m_i(\vx) - \nabla F^m(\vx)}^2 \leq c\norm{\nabla F^m(\vx)}^2+\sigma^2,
\end{equation}
for $c, \sigma \geq 0$.
There are other papers~\citep{koloskova2020unified,khaled2020tighter} that use intra-machine variance at the global minimum $\vx^*$ in their bounds. \citet{koloskova2020unified} define 
\begin{equation}
\label{eq:sec-comparison-8}
    \bar \sigma_*^2 \defeq \frac{1}{MN} \sum_{m=1}^M \sum_{i=1}^N \norm{\nabla f^m_i(\vx^*) - \nabla F^m(\vx^*)}^2,
\end{equation}
for the global minimum $\vx^*$, and use it in their strong convexity and convexity bounds. \citet{khaled2020tighter} define
\begin{equation}
\label{eq:sec-comparison-9}
    \sigma_{\rm opt}^2 \defeq \frac{1}{MN}\sum_{m=1}^M \sum_{i=1}^N \norm{\nabla f^m_i(\vx^*)}^2,
\end{equation}
which is used in their bound on strongly convex functions in homogeneous cases. Note that for homogeneous cases, $\nabla F^m(\vx^*) = \nabla F(\vx^*) = \zeros$, so $\bar \sigma_*^2 = \sigma_{\rm opt}^2$.

We note that in contrast to our discussion on inter-machine deviation (Assumption~\ref{assm:inter-dev}), the intra-machine variance assumptions in the existing literature are weaker than our Assumption~\ref{assm:intra-dev}. However, we utilize our stronger assumption to prove our \emph{high-probability} upper bounds, which is a departure from \emph{in-expectation} bounds in the literature.

\paragraph{Existing upper bounds on local $\sgd$.}
In the discussion after Theorem~\ref{thm:localRS}, we mentioned some recent upper bounds on (with-replacement) local $\sgd$. We make more detailed comparisons here.
For the reader's convenience, we restate our theorem on local $\randshuf$ below.
\thmlocalRS*

We start with last-iterate bounds in homogeneous cases. Theorem~3 and Corollary~3 of \citet{khaled2020tighter} consider local $\sgd$ on $\mu$-strongly convex and $L$-smooth $F$. \citet{khaled2020tighter} allow variable synchronization intervals, where the maximum interval is upper-bounded by $B$. In this setting, Corollary~3 of \citet{khaled2020tighter} shows that local $\sgd$ after $T$ total local update steps yield
\begin{equation}
\label{eq:sec-comparison-10}
    \E \left [ \norm{\bar \vx_{T} - \vx_*}^2 \right ] 
    =
    \tilde {\mc O} \left (
    \frac{\norm{\bar \vx_{0} - \vx_*}^2}{T^2} 
    + 
    \frac{\sigma_{\rm opt}^2}{\mu^2 M T}
    +
    \frac{L\sigma_{\rm opt}^2 (B-1)}{\mu^3 T^2}
    \right ),
\end{equation}
where $\bar \vx_i$ is the average over all $M$ machines' $i$-th local iterates, and $\sigma_{\rm opt}$ is from \eqref{eq:sec-comparison-9}. Noting that $T$ corresponds to $NK$ in local $\sgd$ and $\sigma_{\rm opt}$ corresponds to $\nu$ in Assumption~\ref{assm:intra-dev}, \eqref{eq:sec-comparison-10} is comparable to an upper bound\footnote{Note that an additional factor $L$ is due to conversion from squared distance to function value.}
\begin{equation}
\label{eq:sec-comparison-11}
    \E\left [F(\bar \vx_T)-F^*\right] = \tilde{\mc O} \left ( \frac{L \nu^2}{\mu^2 MNK} + \frac{L^2 \nu^2 B}{\mu^3 N^2K^2} \right ).
\end{equation}
Here, we do not compare $\frac{F(\vy_0)-F*}{MNK^2}$ and $\frac{L\norm{\bar \vx_0-\vx_*}^2}{T^2}$ because in Theorem~\ref{thm:localRS}, the term $\frac{F(\vy_0)-F*}{MNK^2}$ can be made arbitrarily ``small'' (e.g., $\frac{F(\vy_0)-F*}{(MNK)^l}$ for any $l \in \N$) by changing the log factor in $\eta$.

A similar homogeneous, strongly convex, and smooth setting is considered in \citet{spiridonoff2020local}, under a intra-machine variance assumption defined in \eqref{eq:sec-comparison-18}.
Theorem~1 of \citet{spiridonoff2020local} proves general theorem statement for arbitrary synchronization intervals. If we specialize to constant interval $B$, Corollary~1 of \citet{spiridonoff2020local} gives
\begin{equation}
\label{eq:sec-comparison-12}
    \E\left [F(\bar \vx_T)-F^*\right]
    = \tilde{\mc O}
    \left(
    \frac{\beta^2(F(\bar \vx_0)-F^*)}{T^2}
    + 
    \frac{L\sigma^2}{\mu^2 M T}
    +
    \frac{L^2\sigma^2(B-1)}{\mu^3 T^2}
    \right),
\end{equation}
where $\beta \geq 2\kappa^2$ is a constant defined to choose algorithm parameters such as the step-size. Noting again that $T$ corresponds to $NK$ and $\sigma$ in \eqref{eq:sec-comparison-18} is comparable to $\nu$ in Assumption~\ref{assm:intra-dev}, \eqref{eq:sec-comparison-12} also translates to \eqref{eq:sec-comparison-11}.

The next last-iterate bound we compare against is \citet{qu2020federated}. Theorem~1 of \citet{qu2020federated} uses bounded intra-machine variance assumption~\eqref{eq:sec-comparison-7} and bounded gradient assumption~\eqref{eq:sec-comparison-3}. Specializing Theorem~1 of \citet{qu2020federated} to full device participation and uniform weight ($p_1 = \dots = p_N = \frac{1}{N}$) case, their bound reads
\begin{equation}
\label{eq:sec-comparison-13}
    \E\left [F(\bar \vx_T)-F^*\right]
    = \tilde{\mc O}
    \left(
    \frac{L \sigma^2}{\mu^2 M T}
    +
    \frac{L^2 G^2 B^2}{\mu^3 T^2}
    \right).
\end{equation}
Recalling that $T$ corresponds to $NK$, $\sigma$ to $\nu$ in Assumption~\ref{assm:intra-dev}, and $G$ to the heterogeneity constant $\tau$ in Assumption~\ref{assm:inter-dev} and also $\nu$ (due to  \eqref{eq:sec-comparison-19}), \eqref{eq:sec-comparison-13} translates to
\begin{equation}
\label{eq:sec-comparison-14}
    \E\left [F(\bar \vx_T)-F^*\right]
    = \tilde{\mc O}
    \left(
    \frac{L \nu^2}{\mu^2 M NK}
    +
    \frac{L^2 (\nu^2+\tau^2) B^2}{\mu^3 N^2 K^2}
    \right).
\end{equation}
Comparing the local $\sgd$ last-iterate bounds~\eqref{eq:sec-comparison-11} and \eqref{eq:sec-comparison-14} against our local $\randshuf$ bound \eqref{eq:thm-localRS} in Theorem~\ref{thm:localRS}, we can see that the last iterate of local $\randshuf$ satisfies a smaller upper bound as soon as $K \geq \kappa$. Admittedly, this is not a fully rigorous comparison given the differences in assumptions and types of bounds; nevertheless, we believe that the comparison at least provides some degree of evidence for faster convergence of local $\randshuf$ than local $\sgd$.
It is also interesting to see that the ``error from local updates'' terms match in with- and without-replacement bounds.

Next, we review existing average-iterate bounds mentioned in the main text, which are better than the last-iterate bounds. \citet{koloskova2020unified} present a unifying framework for analyzing distributed optimization algorithms over networks, which can specialize to local $\sgd$. For $\mu$-strongly convex and $L$-smooth $F$, Theorem~2 of \citet{koloskova2020unified} shows that
\begin{equation}
\label{eq:sec-comparison-15}
    \E \left [ F(\hat \vx) - F^* \right ]
    =
    \tilde{\mc O}
    \left (
    L B\norm{\vx_0-\vx^*}^2 \exp\left( - \frac{\mu T}{L B} \right )
    +
    \frac{\bar \sigma_*^2}{\mu M T}
    +
    \frac{L \bar \sigma_*^2 B}{\mu^2 T^2}
    +
    \frac{L \bar \zeta_*^2 B^2}{\mu^2 T^2}
    \right ),
\end{equation}
where $\hat \vx$ is some weighted average of iterates and $\bar \zeta_*^2$ and $\bar \sigma_*^2$ are defined in \eqref{eq:sec-comparison-5} and \eqref{eq:sec-comparison-8}, respectively. Noting that $T$ corresponds to $NK$, $\bar \zeta_*$ to $\tau$ in Assumption~\ref{assm:inter-dev}, and $\bar \sigma_*$ to $\nu$ in Assumption~\ref{assm:intra-dev} (although our assumptions are stronger), we can see that the bound~\eqref{eq:sec-comparison-15} for large enough $T$ can be translated into
\begin{equation}
\label{eq:sec-comparison-16}
    \E \left [ F(\hat \vx) - F^* \right ]
    =
    \tilde{\mc O}
    \left (
    \frac{\nu^2}{\mu M NK}
    +
    \frac{L \nu^2 B}{\mu^2 N^2K^2}
    +
    \frac{L \tau^2 B^2}{\mu^2 N^2K^2}
    \right ).
\end{equation}
Notice that \eqref{eq:sec-comparison-16} is smaller than the last-iterate bounds~\eqref{eq:sec-comparison-11} and \eqref{eq:sec-comparison-14} by a factor of $\kappa$.
Similarly, Theorem~3 of \citet{woodworth2020minibatch} proves that for $\mu$-strongly convex and $L$-smooth $F$, 
\begin{equation}
\label{eq:sec-comparison-17}
    \E \left [ F(\tilde \vx) - F^* \right ]
    =
    \tilde{\mc O}
    \left (
    \frac{L^2 \norm{\vx_0-\vx^*}^2}{L T+\mu T^2}
    +
    \frac{\bar \sigma_*^2}{\mu M T}
    +
    \frac{L \sigma^2 B}{\mu^2 T^2}
    +
    \frac{L \bar \zeta^2 B^2}{\mu^2 T^2}
    \right ),
\end{equation}
for some weighted average of iterates $\tilde \vx$. Here, $\bar \sigma_*^2$, $\sigma^2$, and $\bar \zeta^2$ are as defined in \eqref{eq:sec-comparison-8}, \eqref{eq:sec-comparison-7}, and \eqref{eq:sec-comparison-4}, respectively. Substituting $\nu$ to its comparable constants $\bar \sigma_*$ and $\sigma$, and $\tau$ to $\bar \zeta$, we can similarly check that \eqref{eq:sec-comparison-17} can be converted to \eqref{eq:sec-comparison-16}.

\paragraph{Comparison to \citet{mishchenko2021proximal}.}
In \citet{mishchenko2021proximal}, the authors study a proximal algorithm referred to as Proximal Random Reshuffling (ProxRR), and obtain a distributed optimization algorithm called FedRR as a special case. 
If we set $R \equiv 0$ in FedRR, the algorithm then is equal to local $\randshuf$ with $B = N$, i.e., the one that synchronizes only after one entire epoch.

Assuming that all component functions $f^m_i(\vx)$ are $\mu$-strongly convex\footnote{This is in fact quite strong compared to this paper, because we only assume $F$ to be P{\L}, not $F^m$ nor $f^m_i$.} and $L$-smooth, and objective-wise homogeneity $F^1 = \dots = F^m = F$, the authors obtain Theorem~8~\citep{mishchenko2021proximal}, which states that
\begin{align}
\label{eq:sec-comparison-20}
    \E\left [ \norm{\vy_{K} - \vx^*}^2 \right]
    \leq
    (1-\eta \mu)^{NK} \norm{\vy_0 - \vx^*}^2 + \frac{\eta^2 L MN \sigma_{\rm opt}^2}{\mu M},
\end{align}
where $\vy_0$ and $\vy_K$ are the initialization and last iterate of the algorithm, and $\sigma_{\rm opt}^2$ was defined above in \eqref{eq:sec-comparison-9}. The term $MN\sigma_{\rm opt}^2$ in \eqref{eq:sec-comparison-20} corresponds to the term ``$N\sigma_*^2$'' as per the notation in \citet{mishchenko2021proximal}. If we apply our Assumption~\ref{assm:inter-dev} to bound $\sigma_{\rm opt}^2$, we get $\sigma_{\rm opt}^2 \leq \nu^2$, which reduces the last term in \eqref{eq:sec-comparison-20} to $\frac{\eta^2 L\nu^2 N}{\mu}$. If we substitute $\eta = \frac{\log(MNK^2)}{\mu NK}$ to the bound~\eqref{eq:sec-comparison-20}, we get
\begin{equation}
\label{eq:sec-comparison-21}
    \E\left [ \norm{\vy_{K} - \vx^*}^2 \right]
    \leq
    \frac{\norm{\vy_0 - \vx^*}^2}{MNK^2} + 
    \tilde{\mc O}
    \left (
    \frac{L \nu^2}{\mu^3 NK^2}
    \right ),
\end{equation}
which translates to the same convergence rate on $F(\vy_K)-F^*$ as single-machine $\randshuf$. For the heterogeneous setting, applying Lemma~3 of \citet{mishchenko2021proximal} to Theorem~2, we can obtain
\begin{align}
    \E\left [ \norm{\vy_{K} - \vx^*}^2 \right]
    &\leq
    (1-\eta \mu)^{NK} \norm{\vy_0 - \vx^*}^2 \notag\\
    &\quad+ \frac{2\eta^2 L}{\mu M}\sum_{m=1}^M \left ( N^2 \norm{\nabla F^m (\vx^*)}^2 + \frac{1}{4} \sum_{i=1}^N \norm{\nabla f^m_i(\vx^*)-\nabla F^m(\vx^*)}^2 \right )\notag\\
    &=(1-\eta \mu)^{NK} \norm{\vy_0 - \vx^*}^2 +
    \frac{2\eta^2LN^2}{\mu M} \sum_{m=1}^M \norm{\nabla F^m (\vx^*)}^2
    + \frac{\eta^2 L \bar \sigma_*^2 N}{2\mu},
    \label{eq:sec-comparison-22}
\end{align}
where $\bar \sigma_*^2$ was defined in \eqref{eq:sec-comparison-8}.
Note that in Lemma~3 \citep{mishchenko2021proximal}, the function ``$F_m$'' in the authors' notation is equal to $NF^m$ in our notation.
Recall that the $(G,B)$-BGD assumption~\eqref{eq:sec-comparison-2} is ``comparable'' to Assumption~\ref{assm:inter-dev}~\eqref{eq:sec-comparison-1}. If we apply \eqref{eq:sec-comparison-2} to the bound~\eqref{eq:sec-comparison-22}, we get $\frac{1}{M}\sum_{m=1}^M\norm{\nabla F^m(\vx^*)}^2 \leq G^2$. Similarly, if we apply Assumption~\ref{assm:inter-dev}, we get $\bar \sigma_*^2 \leq \nu^2$. Substituting these upper bounds and $\eta = \frac{\log(MNK^2)}{\mu NK}$ to \eqref{eq:sec-comparison-22} gives
\begin{align}
\label{eq:sec-comparison-23}
    \E\left [ \norm{\vy_{K} - \vx^*}^2 \right]
    \leq
    \frac{\norm{\vy_0 - \vx^*}^2}{MNK^2} + 
    \tilde{\mc O}
    \left (
    \frac{L G^2}{\mu^3 K^2}
    +
    \frac{L \nu^2}{\mu^3 NK^2}
    \right ),
\end{align}
and after translating this into a bound on function value, we get an upper bound $\tilde{\mc O}(\frac{L^2\nu^2}{\mu^3 NK^2} + \frac{L^2G^2}{\mu^3 K^2})$ which in fact matches our upper bound~\eqref{eq:thm-localRS} in Theorem~\ref{thm:localRS} when we set $B = N$. 

The two upper bounds obtained for homogeneous~\eqref{eq:sec-comparison-21} and heterogeneous~\eqref{eq:sec-comparison-23} settings indicate that, at least under assumptions on intra- and inter-machine deviation such as ours, the claimed advantage that ``the convergence bound improves with the number of
devices involved'' \citep{mishchenko2021proximal} is not achievable. As our lower bound shows, one needs to choose $B$ smaller than $N$ in order to get the most out of parallelism. That being said, since \citet{mishchenko2021proximal} is free of uniform intra- and inter-machine deviation assumptions, there may still exist certain scenarios where multiple machines can speed up performance even with $B = N$.

\section{More detailed illustration of synchronized shuffling}
\label{sec:syncshuf-illust}
In this section, we provide a more detailed explanation on synchronized shuffling that we introduced in Section~\ref{sec:breaklower}.
For the illustration, let us consider the component-wise homogeneous case. Component-wise homogeneity means that all the machines have the same set of components: $f^1_i = f^2_i = \cdots = f^M_i =: f_i$ for $i \in [N]$. Hence, we have $F = F^m$ for all $m \in [M]$ and our goal is to minimize $F = \frac{1}{N} \sum_{i=1}^N f_i$.

In the proof of Theorem 1 (presented in Appendix~\ref{sec:proof-thm-minibatchRS}; see Appendix~\ref{sec:proof-upper-outline} for a sketch), we add the component gradients over an epoch and then use the following key identity: for any permutation $\sigma$,
\begin{equation}
\label{eq:perm-identity}
    \sum_{i=1}^N \nabla f_{\sigma(i)} = \sum_{i=1}^N \nabla f_i = N\nabla F.
\end{equation}
We use this identity~\eqref{eq:perm-identity} to represent the per-epoch progress as ``one big GD step plus noise.'' 
For the rest of the proof we bound the ``noise'' term, and the key to bounding it is to upper bound the norm of summations of the following form, for $i \in [N/B-1]$ (see \eqref{eq:proof-thm-minibatchRS-4} and \eqref{eq:proof-thm-minibatchRS-3}):
\begin{equation}
\label{eq:perm-identity-2}
    \frac{1}{M} \sum_{m=1}^M \sum_{j=1}^{iB} \nabla f_{\sigma^m_k(j)} (\vx_{k,0}).
\end{equation}
To bound the norm, we decompose it into two terms using the triangle inequality
\begin{equation*}
    \norm{\frac{1}{M} \sum_{m=1}^M \sum_{j=1}^{iB} \nabla f_{\sigma^m_k(j)} (\vx_{k,0})}
    \!\leq\!
    \norm{\frac{1}{M} \sum_{m=1}^M \sum_{j=1}^{iB} \nabla f_{\sigma^m_k(j)} (\vx_{k,0}) - iB \nabla F(\vx_{k,0})} + 
    iB\norm{\nabla F(\vx_{k,0})},
\end{equation*}
and we use concentration bounds~\eqref{eq:proof-thm-minibatchRS-14} on the first term of the RHS, which gives a high-probability upper bound on the RHS of the form $\tilde O\left (\nu \sqrt{\frac{iB}{M}} \right) + iB\norm{\nabla F}$~\eqref{eq:proof-thm-minibatchRS-6}.

The key to proving fast convergence is to make the upper bound above as small as possible. To make the bound \eqref{eq:proof-thm-minibatchRS-6} even smaller, we wish to be able to apply the identity~\eqref{eq:perm-identity} to the summation \eqref{eq:perm-identity-2}. However, under the standard way of choosing permutations $\sigma^m_k$ independently over machines, one cannot apply the identity because we do not sum over all $j = 1, \dots, N$.
This limitation motivates our proposed technique \emph{synchronized shuffling}, a manipulation on the choices of $\sigma^m_k$ that lets us prove even faster convergence.

Recall the definition of synchronized shuffling. At the beginning of the $k$-th epoch, the server samples 
$\sigma \sim \unif(\mc S_N)$ and $\pi \sim \unif(\mc S_M)$, and broadcasts them to the machines. Then, local machines choose their permutations $\sigma^m_k$ to be \emph{shifted} versions of $\sigma$: $\sigma^m_k(i) \defeq \sigma\left(\left(i+\tfrac{N}{M}\pi(m)\right) \bmod N\right)$.

Now set $N = 6$ and $M = 3$.
Assume for simplicity that the permutation $\pi \in \mc S_3$ of machines satisfies $\pi(m) = m$ for $m \in [3]$.\footnote{In fact, permuting the machines by $\pi$ is not required in the component-wise homogeneous setting (i.e., when $\lambda = 0$ in Assumption~\ref{assm:inter-comp-dev}).} Suppose the server samples $\sigma = (\sigma(1), \sigma(2), \sigma(3), \sigma(4), \sigma(5), \sigma(6))$ and broadcasts it. Under synchronized shuffling, the local machines choose
\begin{align*}
 \sigma^1_k &= (\sigma(3), \sigma(4), \sigma(5), \sigma(6), \sigma(1), \sigma(2)),\\
 \sigma^2_k &= (\sigma(5), \sigma(6), \sigma(1), \sigma(2), \sigma(3), \sigma(4)),\\
 \sigma^3_k &= (\sigma(1), \sigma(2), \sigma(3), \sigma(4), \sigma(5), \sigma(6)).
\end{align*}
One can see that each permutation is a shifted version of $\sigma$, with an offset that is a multiple of $\frac{N}{M} = 2$.

Now consider adding $\nabla f_{\sigma^m_k(i)}$ over $m = 1,2,3$ and $j = 1,2$ (in fact, any two consecutive $j$'s will do). By synchronized shuffling, we get a summation over all $N = 6$ component functions, which by \eqref{eq:perm-identity} gives us the full gradient:
\begin{equation*}
    \sum_{m=1}^3 \sum_{j=1}^2 \nabla f_{\sigma^m_k(j)} = \sum_{i=1}^6 \nabla f_i = 6\nabla F.
\end{equation*}
The point here is that the permutation identity~\eqref{eq:perm-identity} can be applied to the summations~\eqref{eq:perm-identity-2} to further reduce their norm bounds. This is in contrast to sampling independent $\sigma^m_k$'s where one cannot apply~\eqref{eq:perm-identity}. For this reason, synchronized shuffling significantly reduces the noise that comes from without-replacement sampling, thus resulting in faster convergence rates in (near-)homogeneous cases.

\section{Experimental results}
\label{sec:exp}
In this section, we present some simple numerical experiments that support our theoretical analysis. We evaluate the performance of the algorithms considered in this paper on the ``hard instance'' constructed in our lower bounds (Theorems~\ref{thm:minibatchRS-LB} and \ref{thm:localRS-LB}).

Our hard instance $F \in \funcclscmp(L, \mu, \nu, 0)$ is a function in the component-wise homogeneous setting, where all the machines have the same set of local component functions: $f^1_i = f^2_i = \cdots = f^M_i \eqdef f_i$.
In the proofs of Theorems~\ref{thm:minibatchRS-LB} and \ref{thm:localRS-LB}, we construct the global objective $F(x) = \frac{1}{N} \sum_{i=1}^N f_i(x)$ as the following:
\begin{align*}
    f_i(x) \defeq (L1_{x\leq 0}+\mu1_{x>0})\frac{x^2}{2} + z_i \nu x,
    ~~
    z_i = \begin{cases}
    +1 &\text{ if } 1\leq i \leq N/2,\\
    -1 &\text{ if } N/2 < i \leq N.
    \end{cases}
\end{align*}
With this set of component functions, the global objective $F(x)=(L1_{x\leq 0}+\mu1_{x>0})\frac{x^2}{2}$ is $\mu$-strongly convex and $L$-smooth with a unique global minimizer at $x = 0$.

We compare the performance of the algorithms on this problem instance, with $L = 100$, $\mu = 1$, $\nu = 1$, $N = 768$, and $M = 16$, while varying the choice of $B \in \{1, 4, 16, 64, 256\}$ and $K \in \{1, 3, 5, 7, 10, 30, 50, 70, 100, 300, 500, 700, 1000\}$. For each value of $B$ and $K$, we run the algorithms for $K$ epochs ($KN/B$ communication rounds for with-replacement algorithms) starting at $x_0 = 0$ and return the values of $F$ evaluated at the last iterates. Note that the algorithms are not deterministic, because the sampling/shuffling schemes are random. In order to account for randomness, for each combination of $(\text{algorithm},B,K)$ we execute 20 independent runs of the algorithm and plot the mean, first quartile, and third quartile of the final objective values.

In the subsequent subsections, we compare the following seven algorithms with constant step-sizes.
\begin{itemize}
    \item Minibatch $\randshuf$, $\eta = \frac{B\log(MNK^2)}{\mu NK}$;
    \item Local $\randshuf$, $\eta = \frac{\log(MNK^2)}{\mu NK}$;
    \item Minibatch $\randshuf$ with $\syncshuf$, $\eta = \frac{B\log(MNK^2)}{\mu NK}$;
    \item Local $\randshuf$ with $\syncshuf$, $\eta = \frac{\log(MNK^2)}{\mu NK}$;
    \item Single-machine $\randshuf$ with minibatch size $B$, $\eta = \frac{\log(NK^2)}{\mu NK}$;
    \item With-replacement minibatch $\sgd$, $\eta = \frac{B\log(MNK^2)}{\mu NK}$;
    \item With-replacement local $\sgd$, $\eta = \frac{\log(MNK^2)}{\mu NK}$.
\end{itemize}

\subsection{$\syncshuf$ improves convergence of minibatch/local $\randshuf$}
\begin{figure}[t]
    \centering
    \begin{subfigure}[t]{0.45\textwidth}
        \centering
        \includegraphics[width=\textwidth]{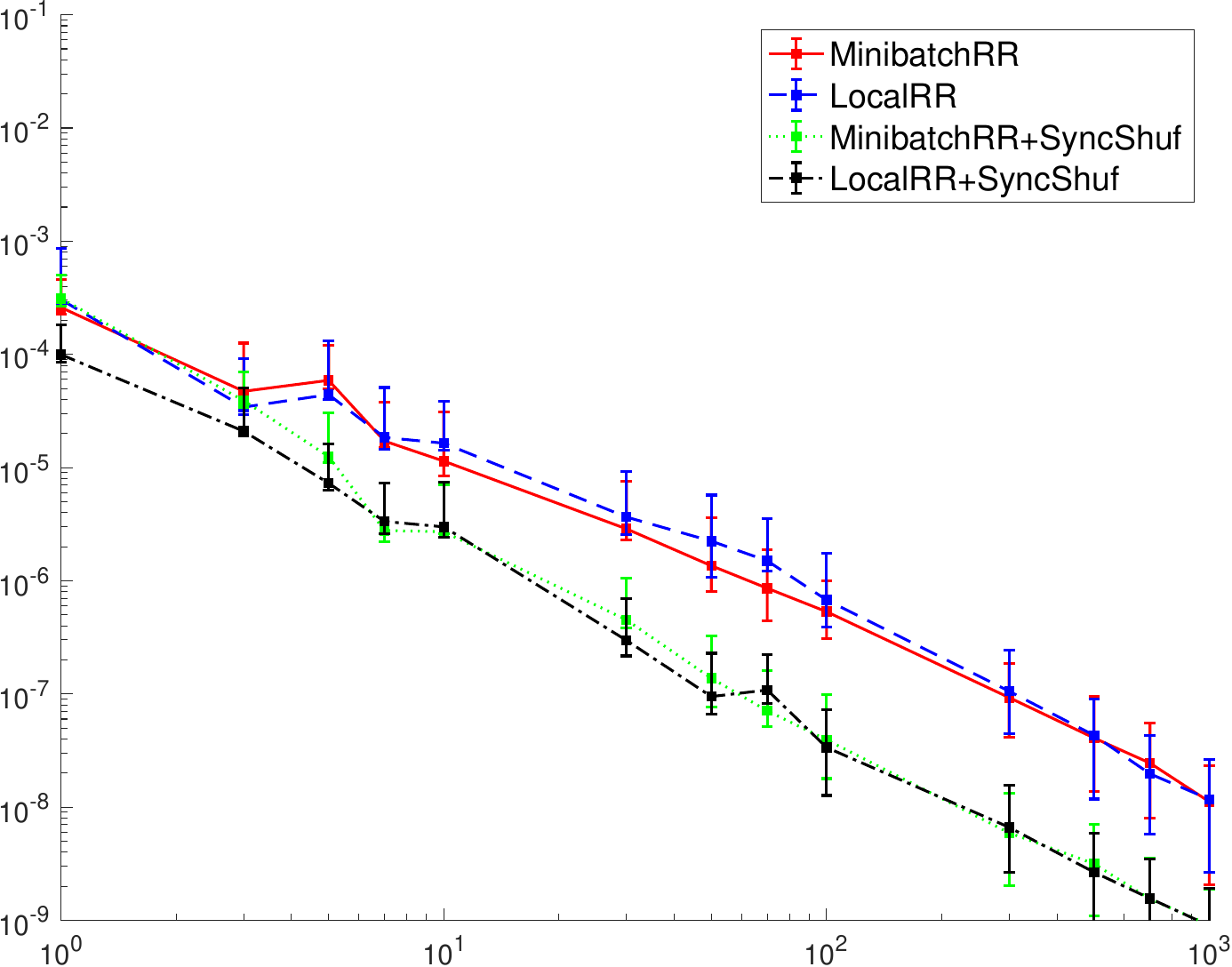}
        \caption{$B=1$}
    \end{subfigure}
    \quad
    \begin{subfigure}[t]{0.45\textwidth}
        \centering
        \includegraphics[width=\textwidth]{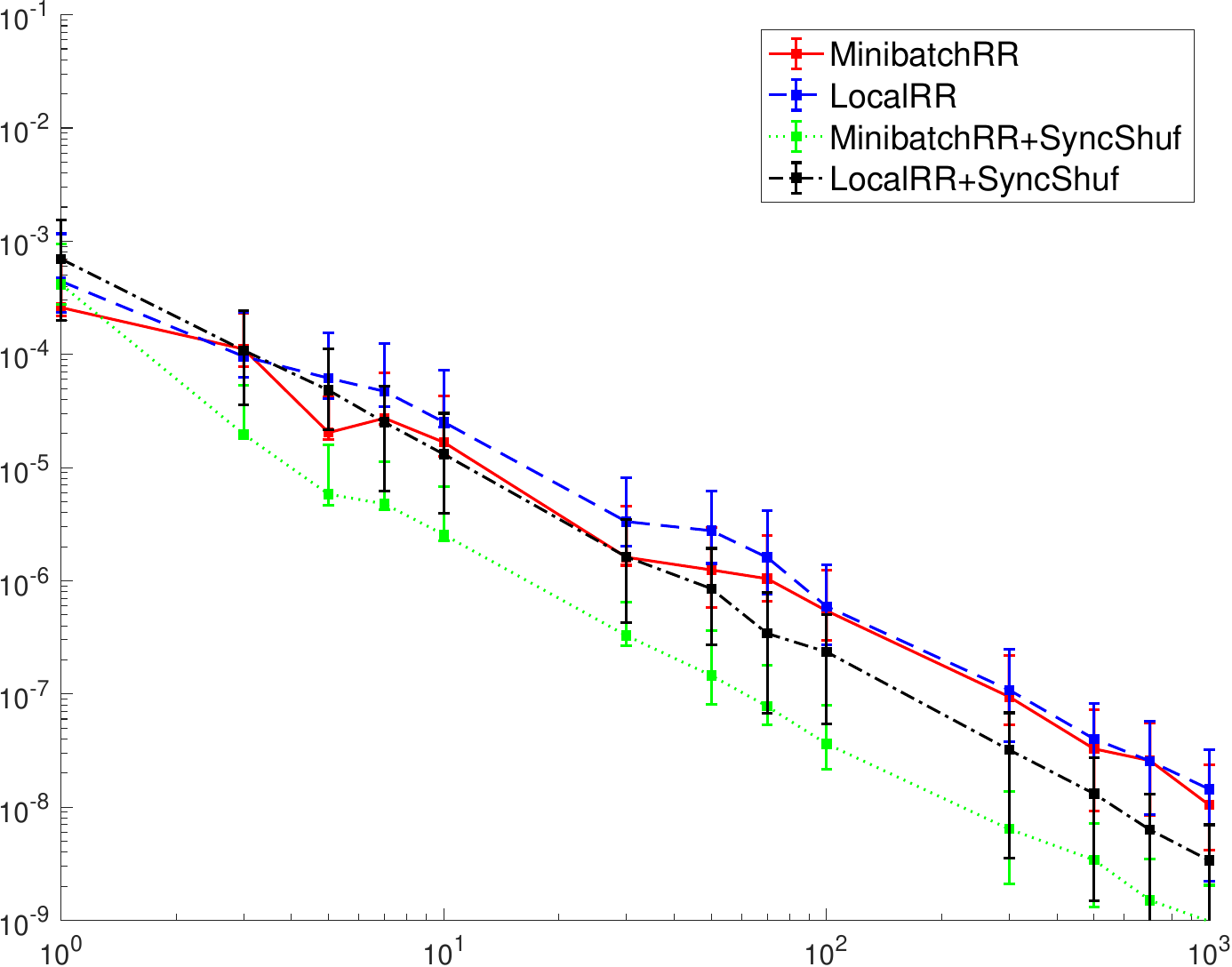}
        \caption{$B=4$}
    \end{subfigure}
    \begin{subfigure}[t]{0.45\textwidth}
        \centering
        \includegraphics[width=\textwidth]{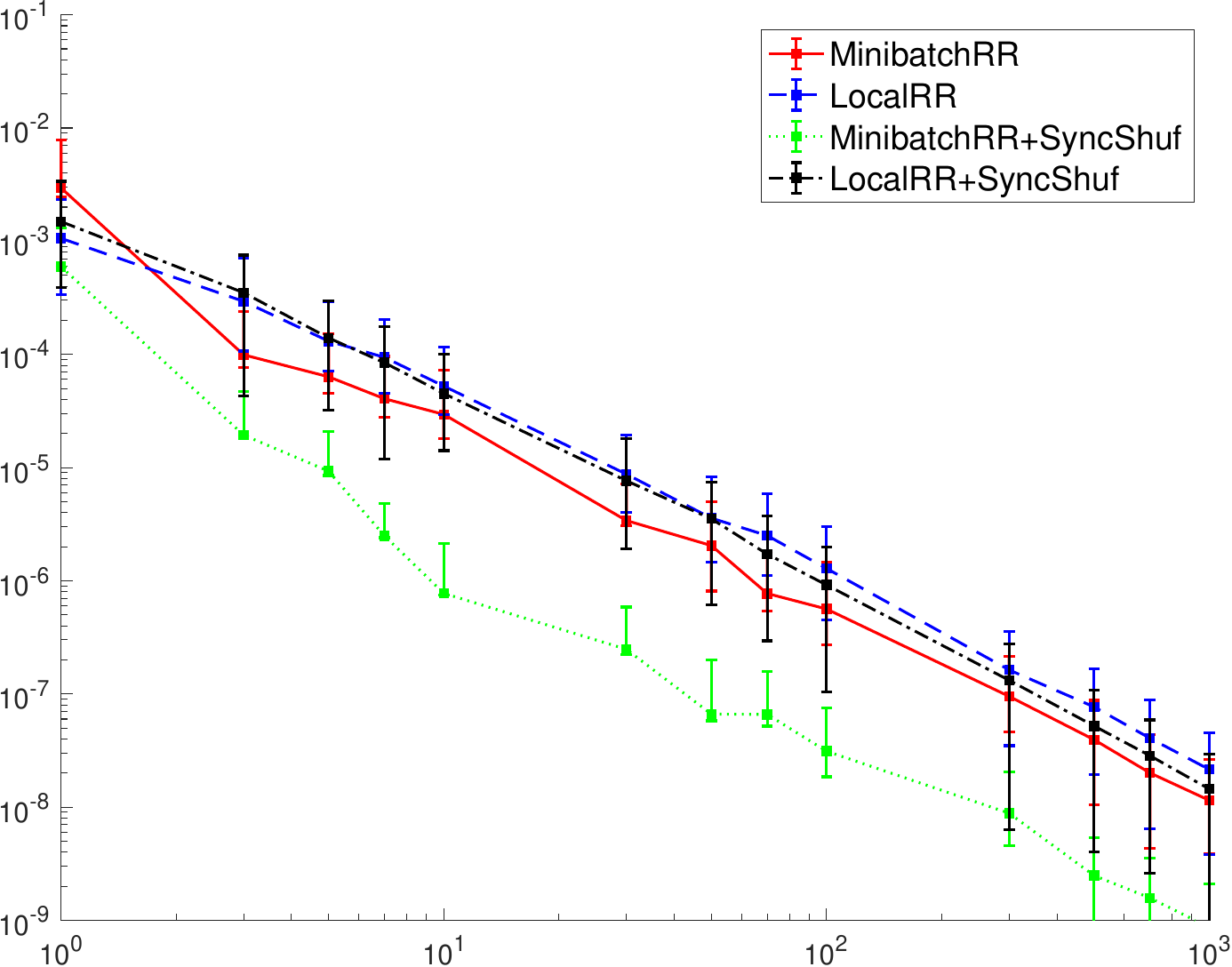}
        \caption{$B=16$}
    \end{subfigure}
    \quad
    \begin{subfigure}[t]{0.45\textwidth}
        \centering
        \includegraphics[width=\textwidth]{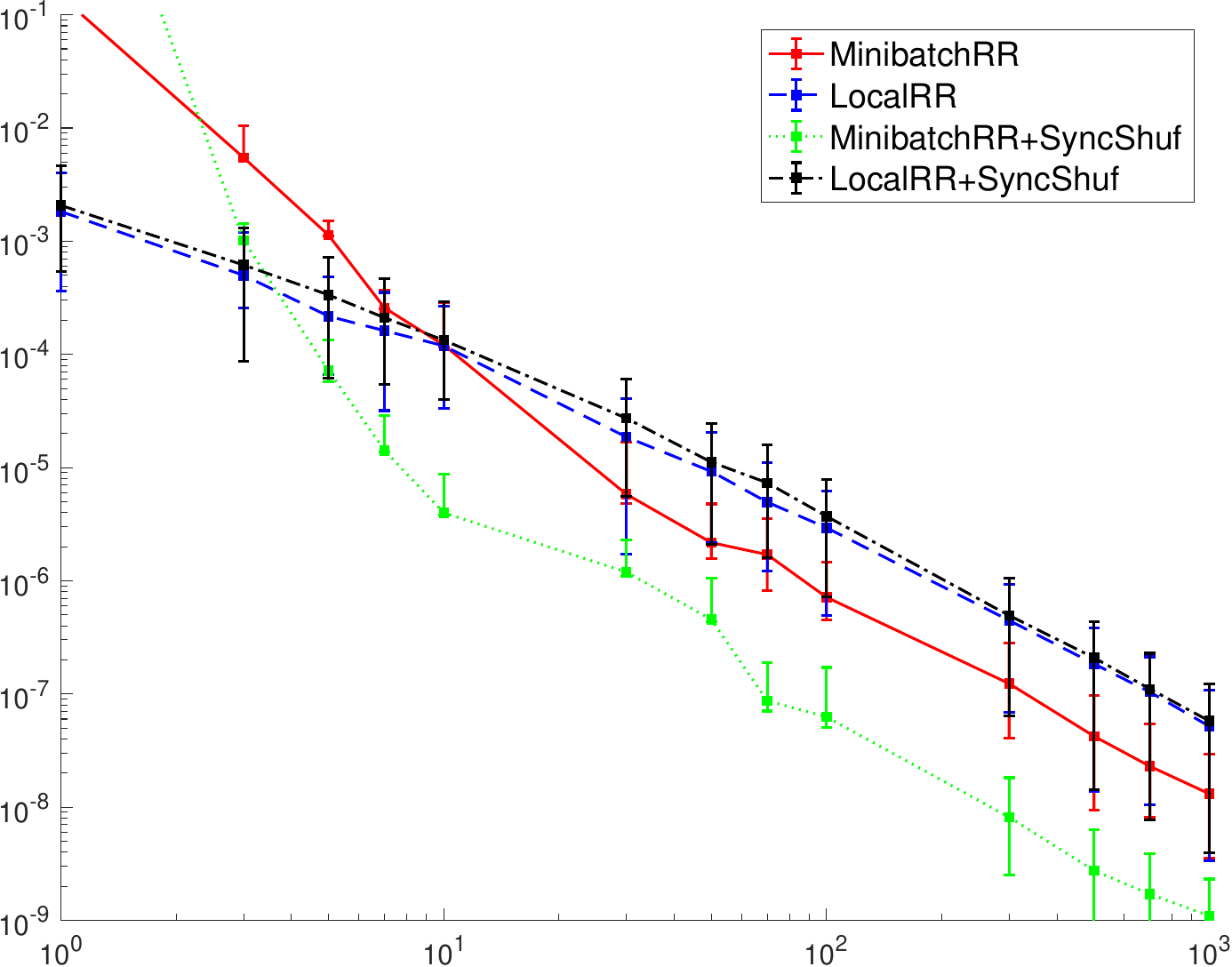}
        \caption{$B=64$}
    \end{subfigure}
    \begin{subfigure}[t]{0.45\textwidth}
        \centering
        \includegraphics[width=\textwidth]{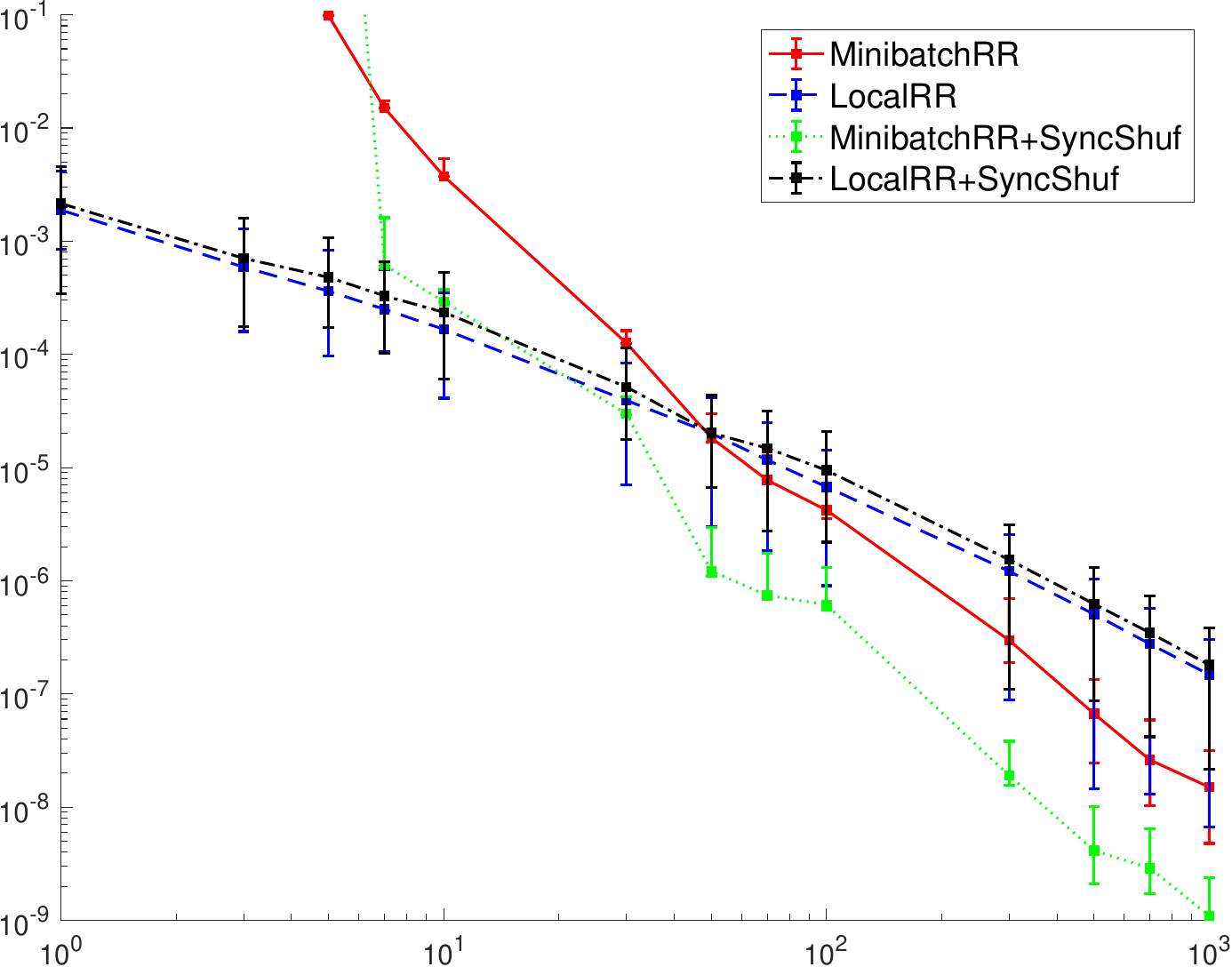}
        \caption{$B=256$}
    \end{subfigure}
    \caption{Comparison between minibatch $\randshuf$ and local $\randshuf$, with and without $\syncshuf$. Best viewed in color. The algorithm versions with $\syncshuf$ converge faster. Also note the performance degradation as $B$ increases, as expected by our theory.}
    \label{fig:exp1}
\end{figure}

In Figure~\ref{fig:exp1}, we compare minibatch $\randshuf$ and local $\randshuf$, with and without $\syncshuf$. 
Each plot in Figure~\ref{fig:exp1} shows how the methods' performance changes with $K$, for a fixed value $B$. Each point on the curve is the mean of the final objective function values over 20 independent runs of the corresponding algorithm with the specific $B$ and $K$, and its error bar indicates the first and third quartiles.

Recall from our Theorems~\ref{thm:minibatchRS}, \ref{thm:localRS}, \ref{thm:minibatchRS-sync}, and \ref{thm:localRS-sync} that the four methods satisfy the following convergence bounds, in homogeneous settings (i.e., $\tau = \lambda = 0$):
\begin{itemize}
    \item Minibatch $\randshuf$: $\tilde{\mc O} \left (\frac{L^2}{\mu^3}  \frac{\nu^2}{MNK^2} \right )$,
    \item Local $\randshuf$: $\tilde{\mc O} \left (
    \frac{L^2}{\mu^3} 
    \left ( 
    \frac{\nu^2}{MNK^2}
    +
    \frac{\nu^2 B}{N^2K^2}
    \right ) \right )$,
    \item Minibatch $\randshuf$ with $\syncshuf$: $\tilde{\mc O} \left (\frac{L^2}{\mu^3}  \frac{\nu^2}{M^2NK^2} \right )$,
    \item Local $\randshuf$ with $\syncshuf$: $\tilde{\mc O} \left (
    \frac{L^2}{\mu^3} 
    \left ( 
    \frac{\nu^2}{M^2NK^2}
    +
    \frac{\nu^2 B}{N^2K^2}
    \right ) \right )$.
\end{itemize}
In fact, if $B = 1$, local $\randshuf$ is identical to minibatch $\randshuf$. Figure~\ref{fig:exp1}(a) confirms that this is indeed true, and also that the versions with $\syncshuf$ outperforms the ones without $\syncshuf$. This corroborates the additional $M$ factor speedup in our bounds.
In Figure~\ref{fig:exp1}(b) and \ref{fig:exp1}(c), we can see that as $B$ increases, the performance of local $\randshuf$ with $\syncshuf$ degrades and becomes closer to local $\randshuf$ without $\syncshuf$. This shows that the $\tilde{\mc O} \left ( \frac{L^2}{\mu^3} \frac{\nu^2 B}{N^2K^2} \right)$ term starts to dominate. Also, as we increase $B$ further, in Figure~\ref{fig:exp1}(c) and \ref{fig:exp1}(d) we see that local $\randshuf$ (without $\syncshuf$) also starts to degrade and its gap between minibatch $\randshuf$ becomes larger. Again, this means that the $\tilde{\mc O} \left ( \frac{L^2}{\mu^3} \frac{\nu^2 B}{N^2K^2} \right)$ term becomes the dominant factor in the local $\randshuf$ bound.
The performance of minibatch $\randshuf$, with and without $\syncshuf$, stays relatively independent of $B$. One thing to note is that for large values of $B$, the small-epoch behavior of minibatch $\randshuf$ looks rather unstable. The choice of step-size $\eta = \frac{B\log(MNK^2)}{\mu NK}$ seems to cause overshooting when $B$ is large and $K$ is small. We note that this does not contradict our convergence analysis because our theorems only characterize the large-epoch behavior ($K$ above certain thresholds) of the algorithms. Perhaps in the small-epoch regime, our choice of $\eta$ is not necessarily optimal and a smaller $\eta$ is needed to prevent overshooting.

\subsection{Local $\randshuf$ becomes closer to single-machine $\randshuf$ as $B \to N$}
\begin{figure}[t]
    \centering
    \begin{subfigure}[t]{0.45\textwidth}
        \centering
        \includegraphics[width=\textwidth]{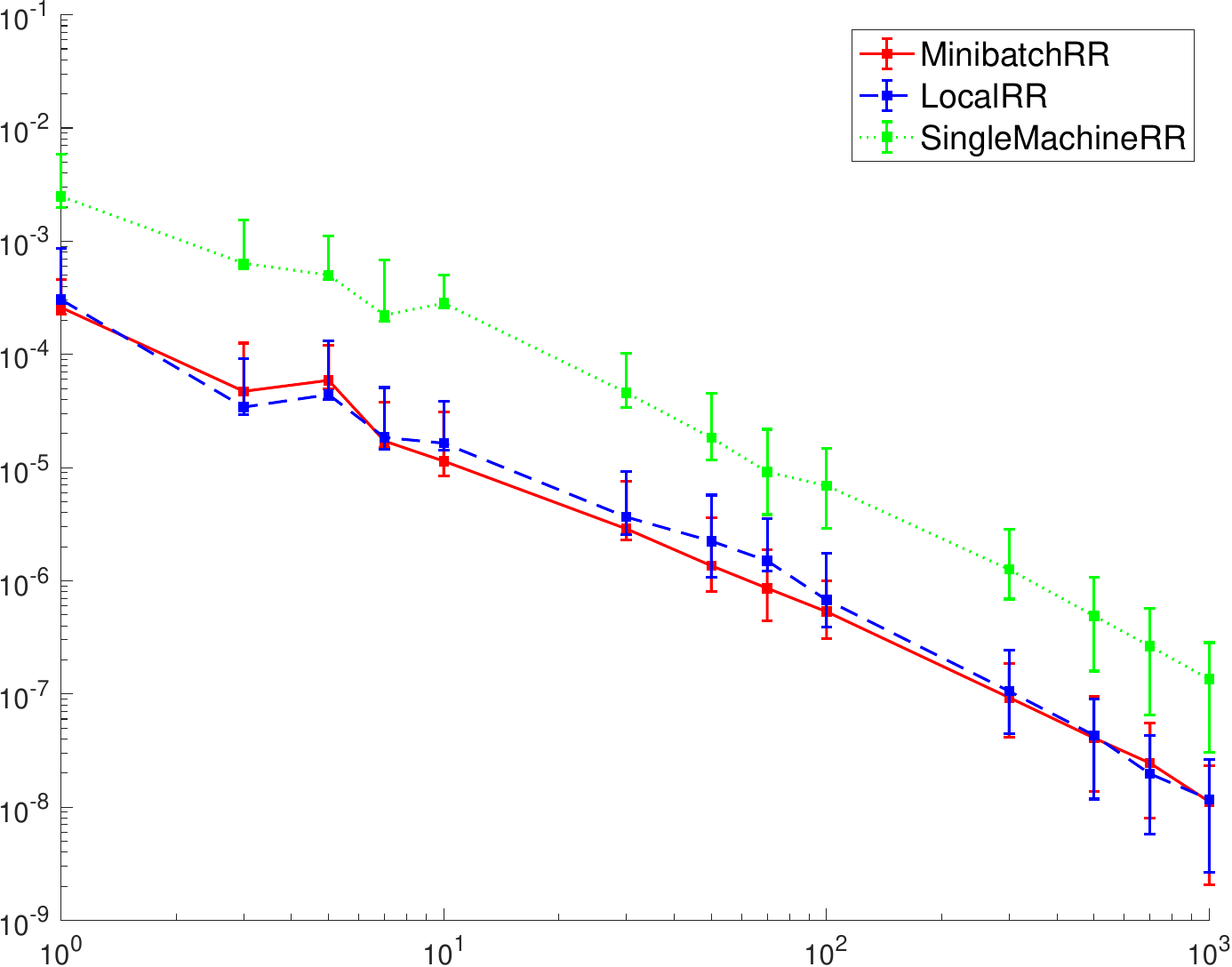}
        \caption{$B=1$}
    \end{subfigure}
    \quad
    \begin{subfigure}[t]{0.45\textwidth}
        \centering
        \includegraphics[width=\textwidth]{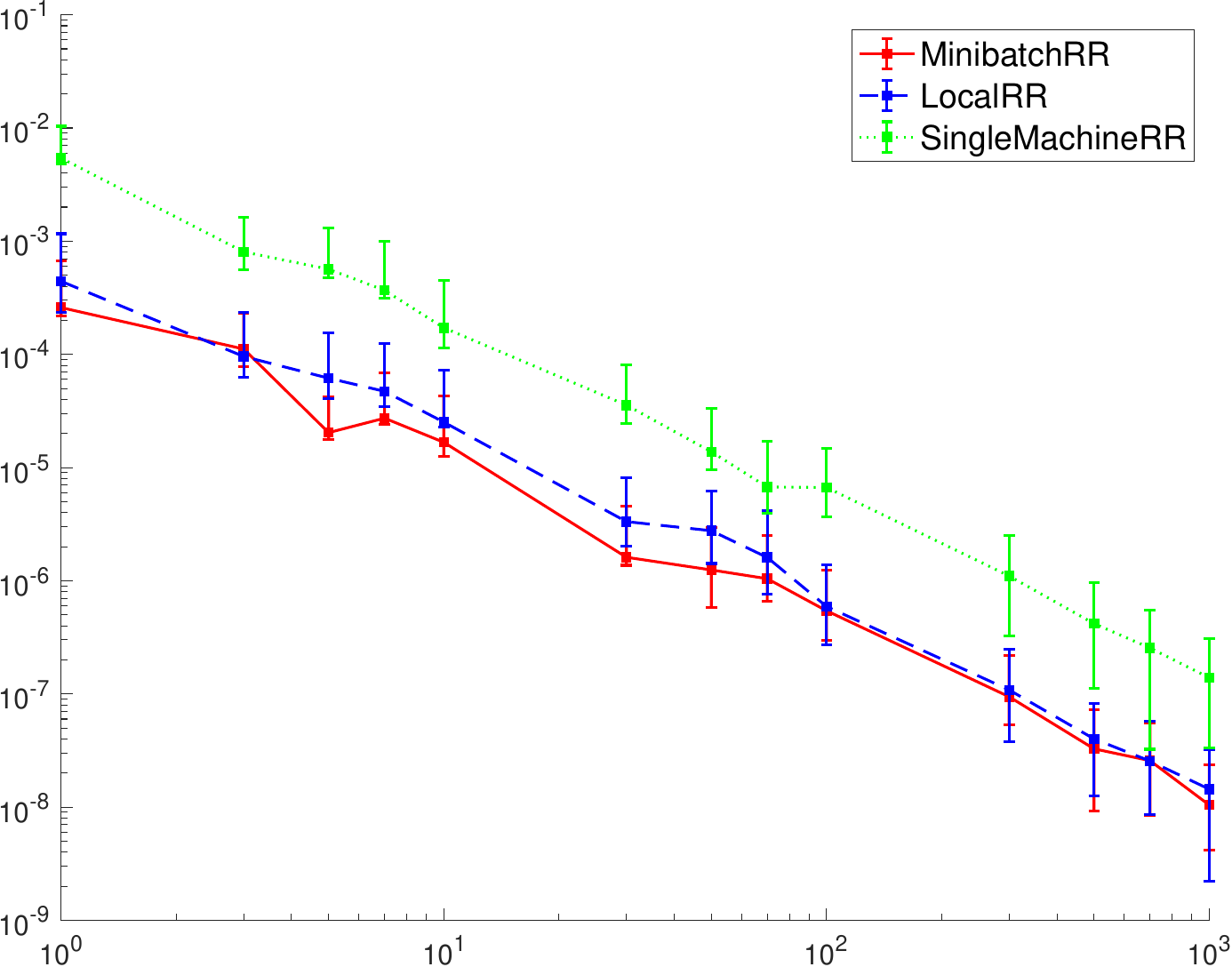}
        \caption{$B=4$}
    \end{subfigure}
    \begin{subfigure}[t]{0.45\textwidth}
        \centering
        \includegraphics[width=\textwidth]{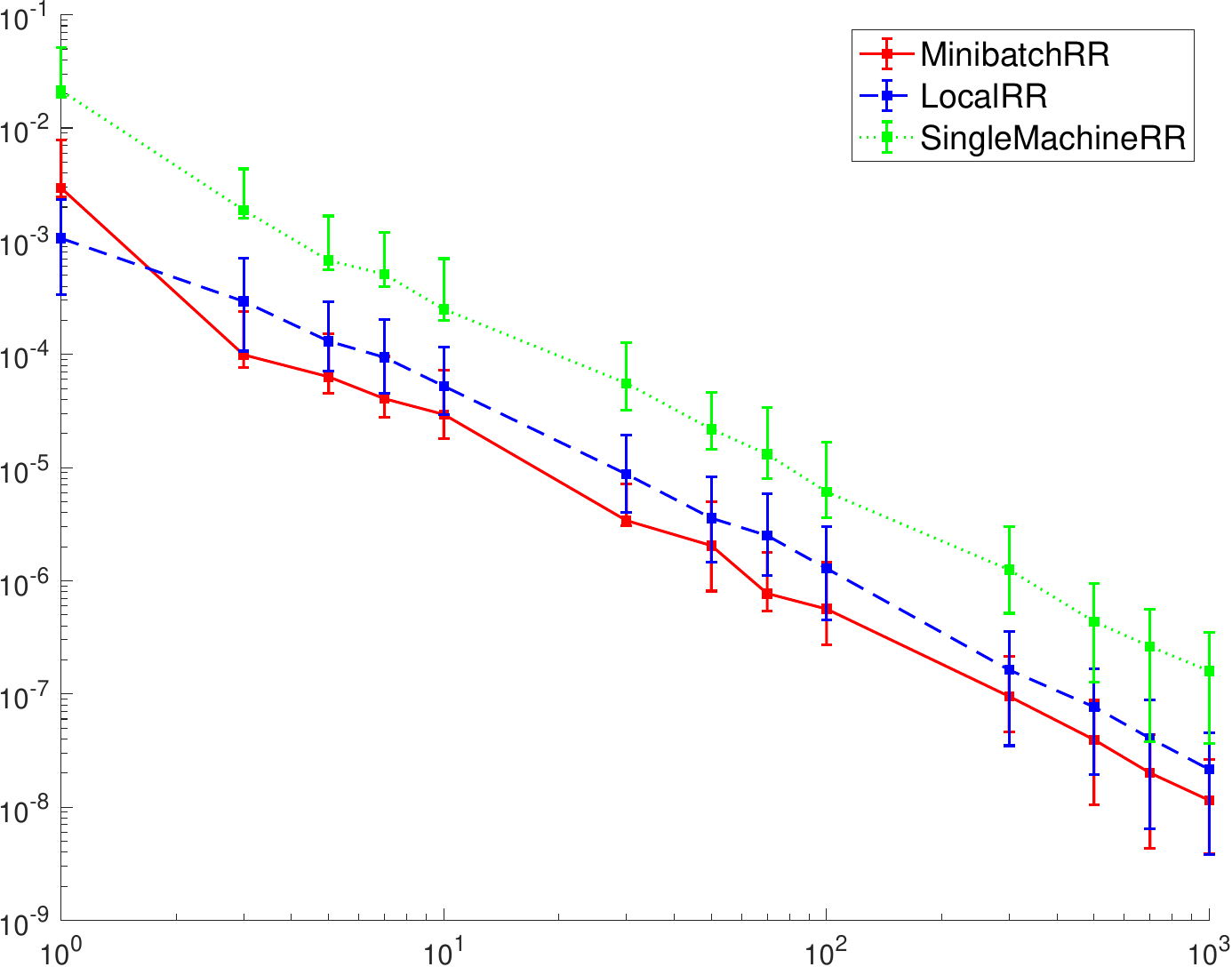}
        \caption{$B=16$}
    \end{subfigure}
    \quad
    \begin{subfigure}[t]{0.45\textwidth}
        \centering
        \includegraphics[width=\textwidth]{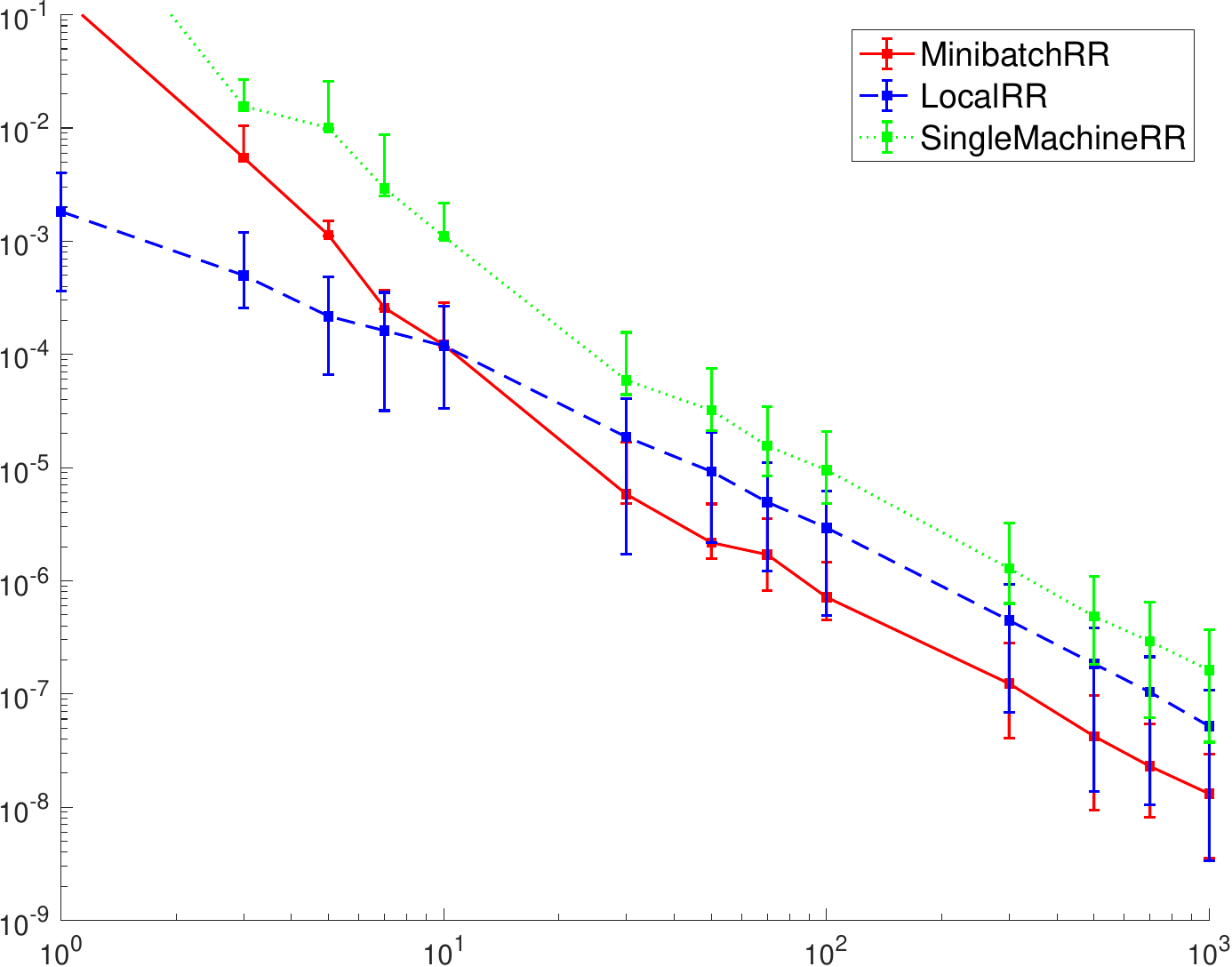}
        \caption{$B=64$}
    \end{subfigure}
    \begin{subfigure}[t]{0.45\textwidth}
        \centering
        \includegraphics[width=\textwidth]{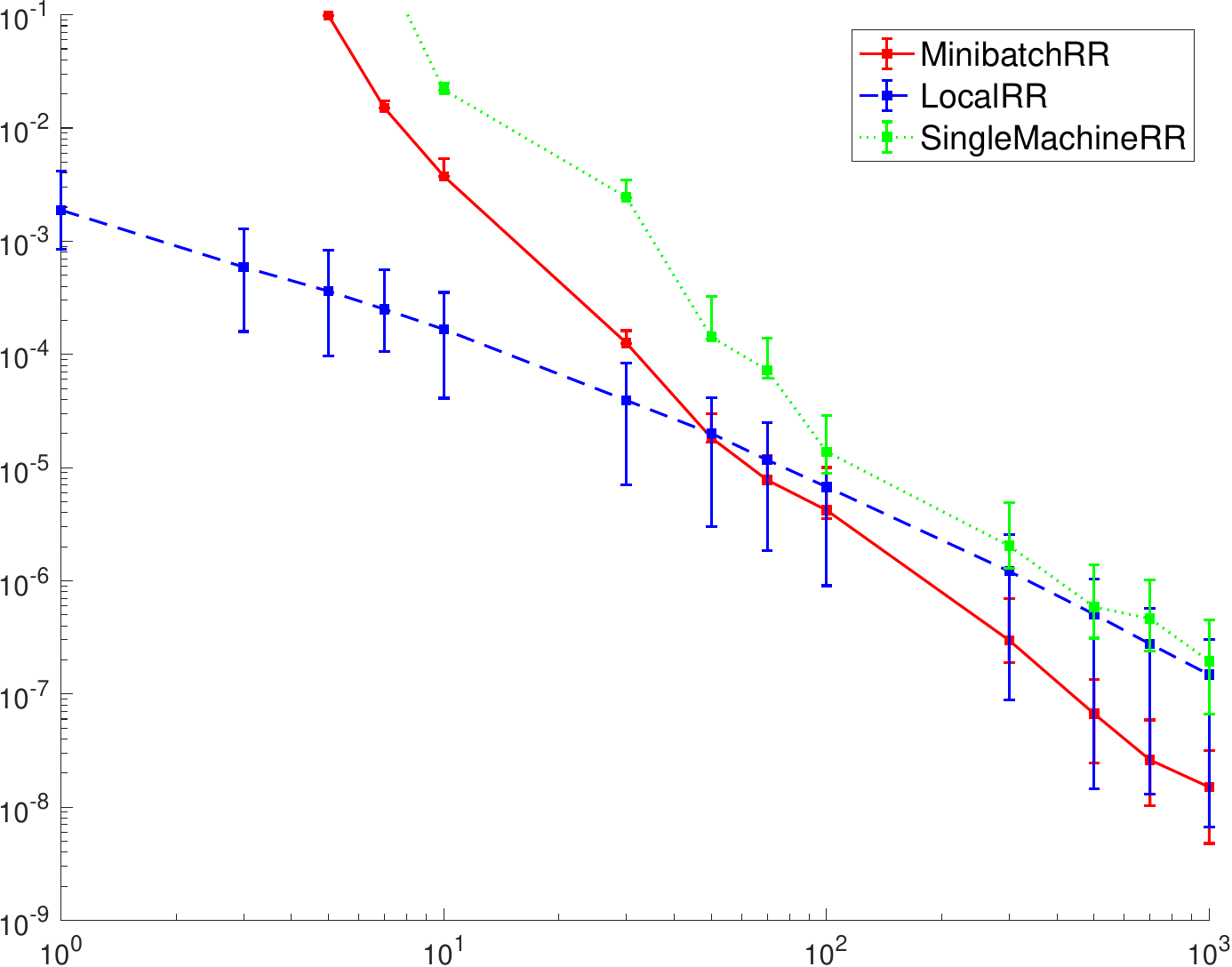}
        \caption{$B=256$}
    \end{subfigure}
    \caption{Comparison of minibatch $\randshuf$, local $\randshuf$, and single-machine $\randshuf$. Best viewed in color. The large-epoch performance of local $\randshuf$ becomes similar to that of single-shuffle $\randshuf$ as $B$ becomes closer to $N$.}
    \label{fig:exp2}
\end{figure}
Our next set of plots presented in Figure~\ref{fig:exp2} provides a comparison of minibatch $\randshuf$, local $\randshuf$, and single-machine $\randshuf$ (i.e., minibatch $\randshuf$ with $M = 1$).
In Theorems~\ref{thm:localRS} and \ref{thm:localRS-LB}, we showed that when $B = \Theta(N)$, then the convergence of local $\randshuf$ becomes just as fast as the single-machine $\randshuf$. Indeed, we can observe from Figure~\ref{fig:exp2} that this is really the case. As $B$ increases, the curve of local $\randshuf$ moves closer and closer to that of single-machine $\randshuf$, especially in the large-epoch regime.

\subsection{With- vs.\ without-replacement sampling}
\begin{figure}[t]
    \centering
    \begin{subfigure}[t]{0.45\textwidth}
        \centering
        \includegraphics[width=\textwidth]{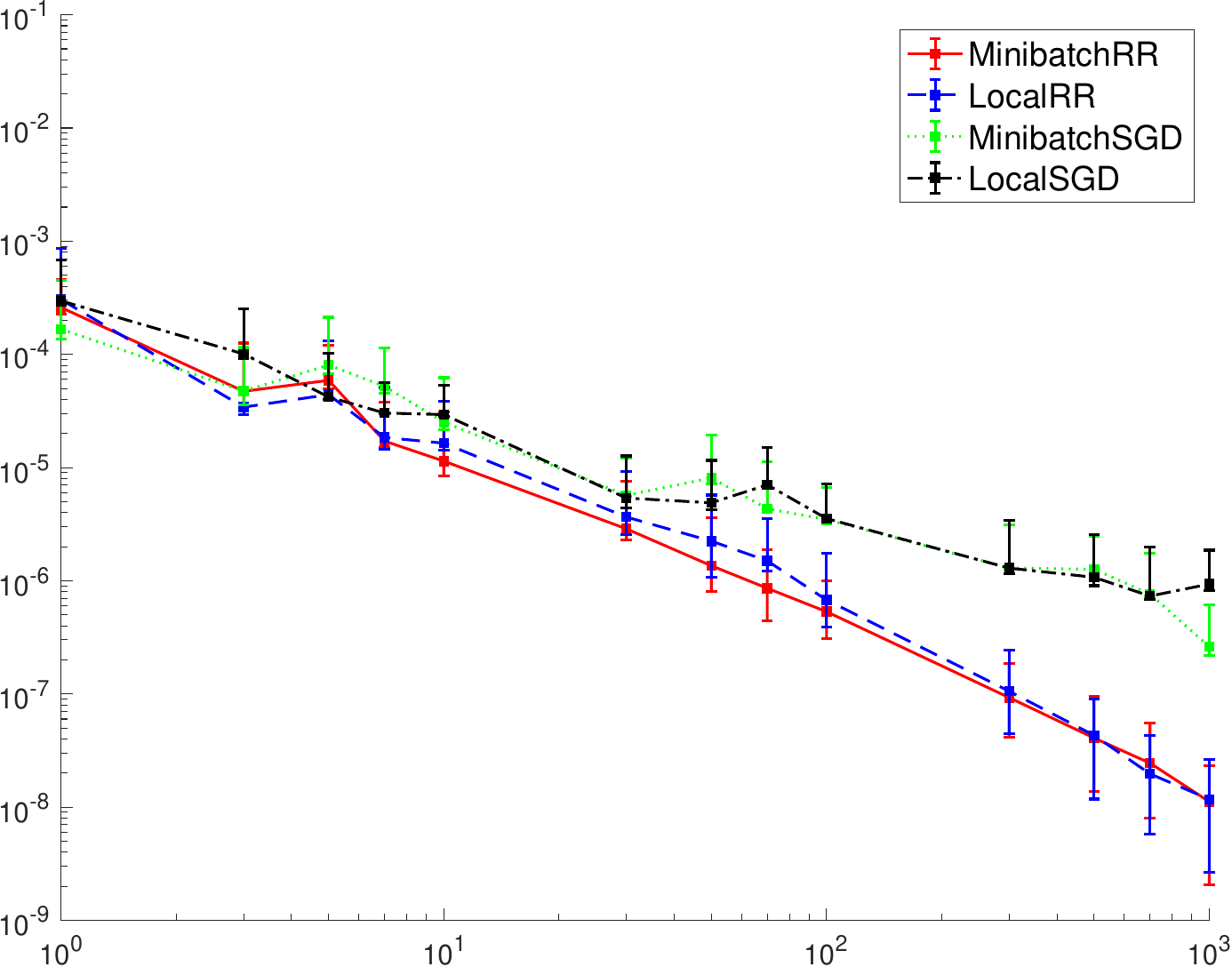}
        \caption{$B=1$}
    \end{subfigure}
    \quad
    \begin{subfigure}[t]{0.45\textwidth}
        \centering
        \includegraphics[width=\textwidth]{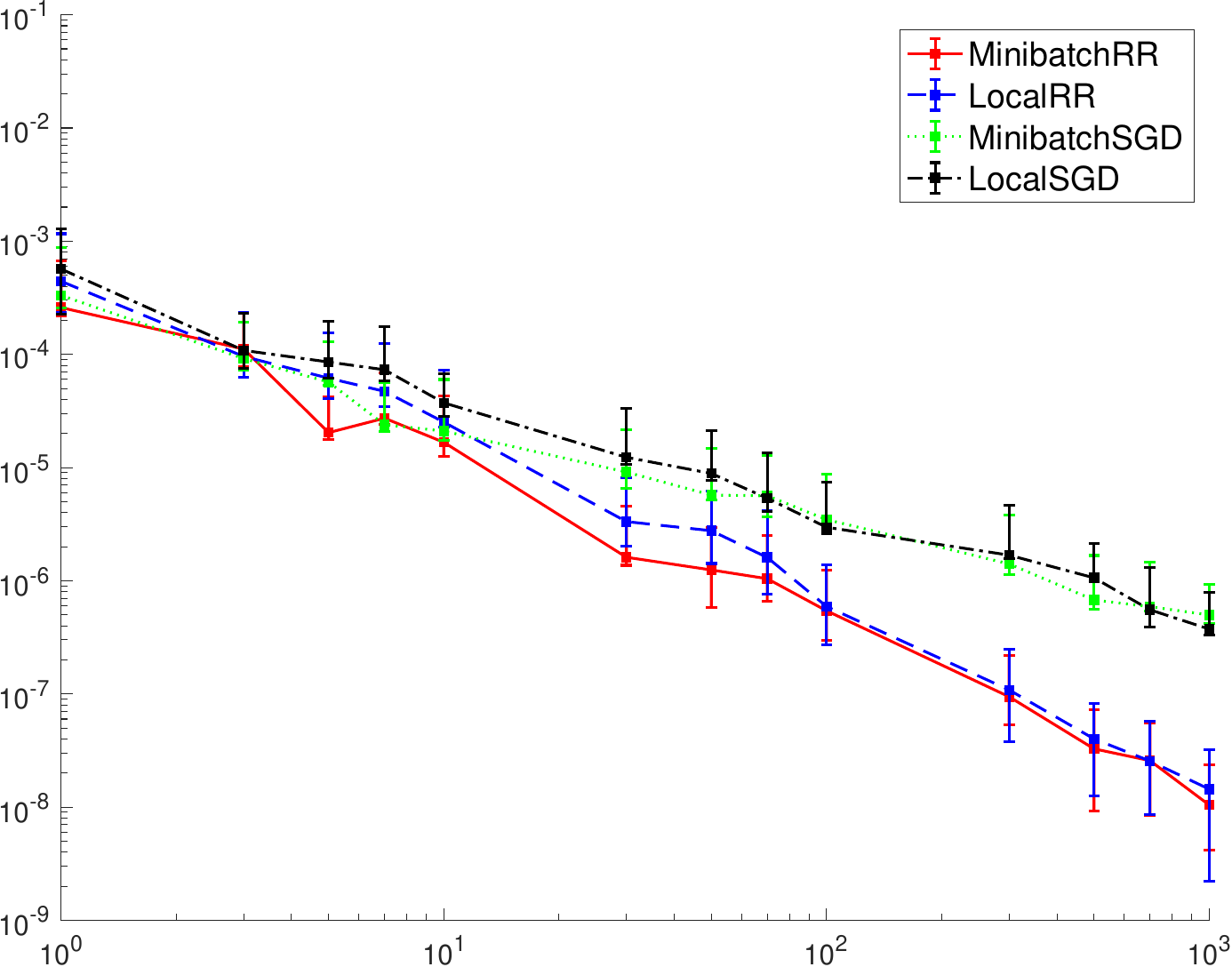}
        \caption{$B=4$}
    \end{subfigure}
    \begin{subfigure}[t]{0.45\textwidth}
        \centering
        \includegraphics[width=\textwidth]{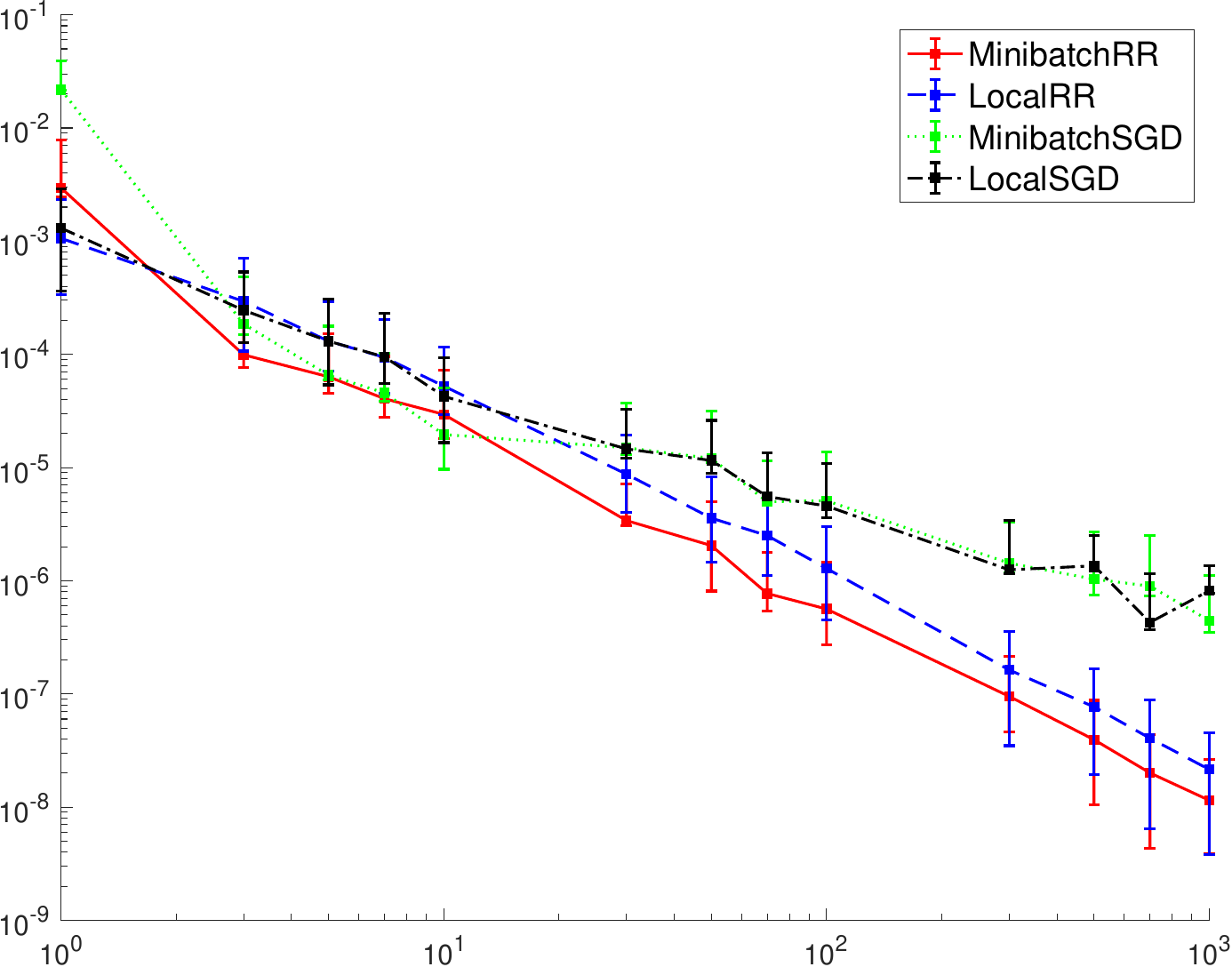}
        \caption{$B=16$}
    \end{subfigure}
    \quad
    \begin{subfigure}[t]{0.45\textwidth}
        \centering
        \includegraphics[width=\textwidth]{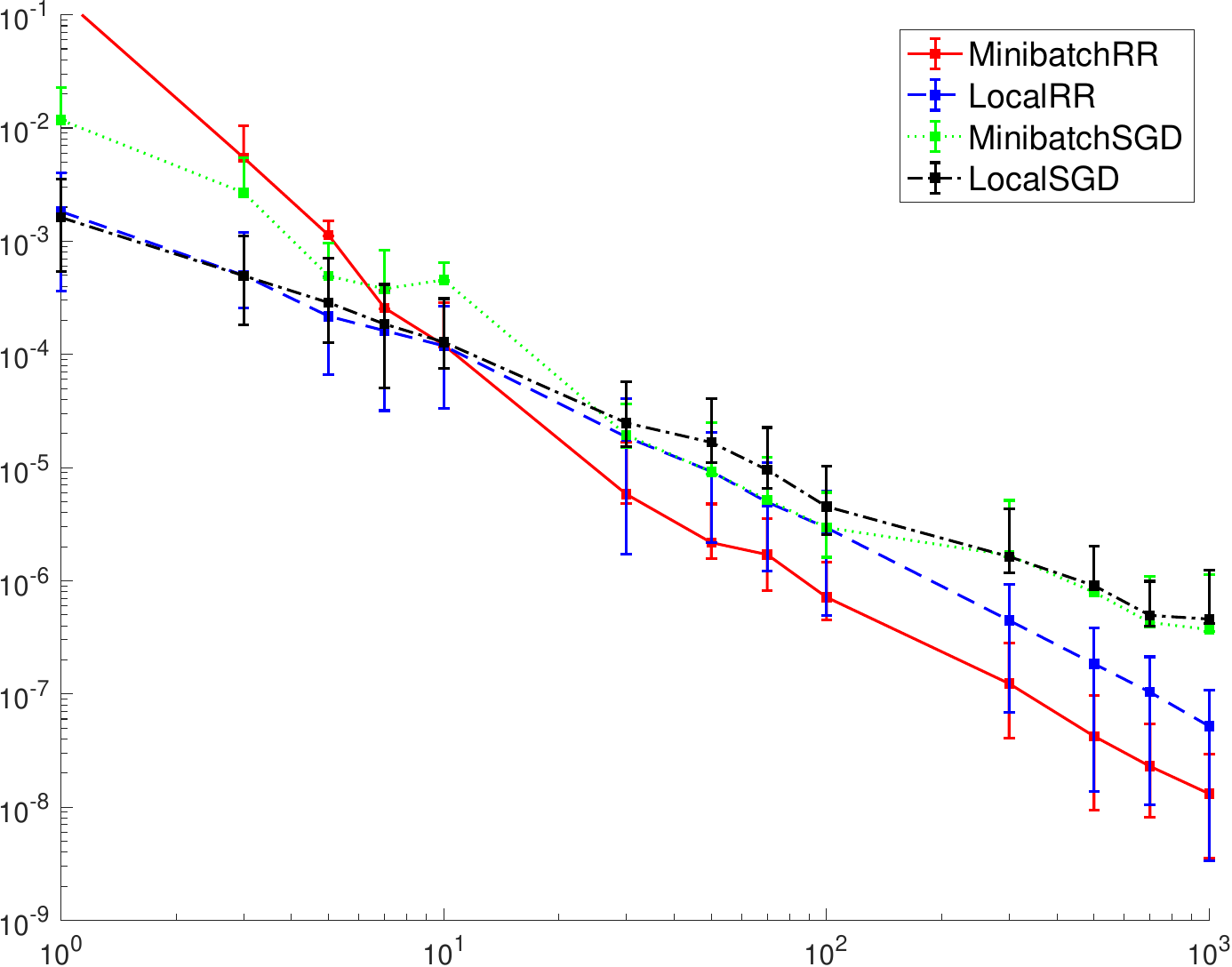}
        \caption{$B=64$}
    \end{subfigure}
    \begin{subfigure}[t]{0.45\textwidth}
        \centering
        \includegraphics[width=\textwidth]{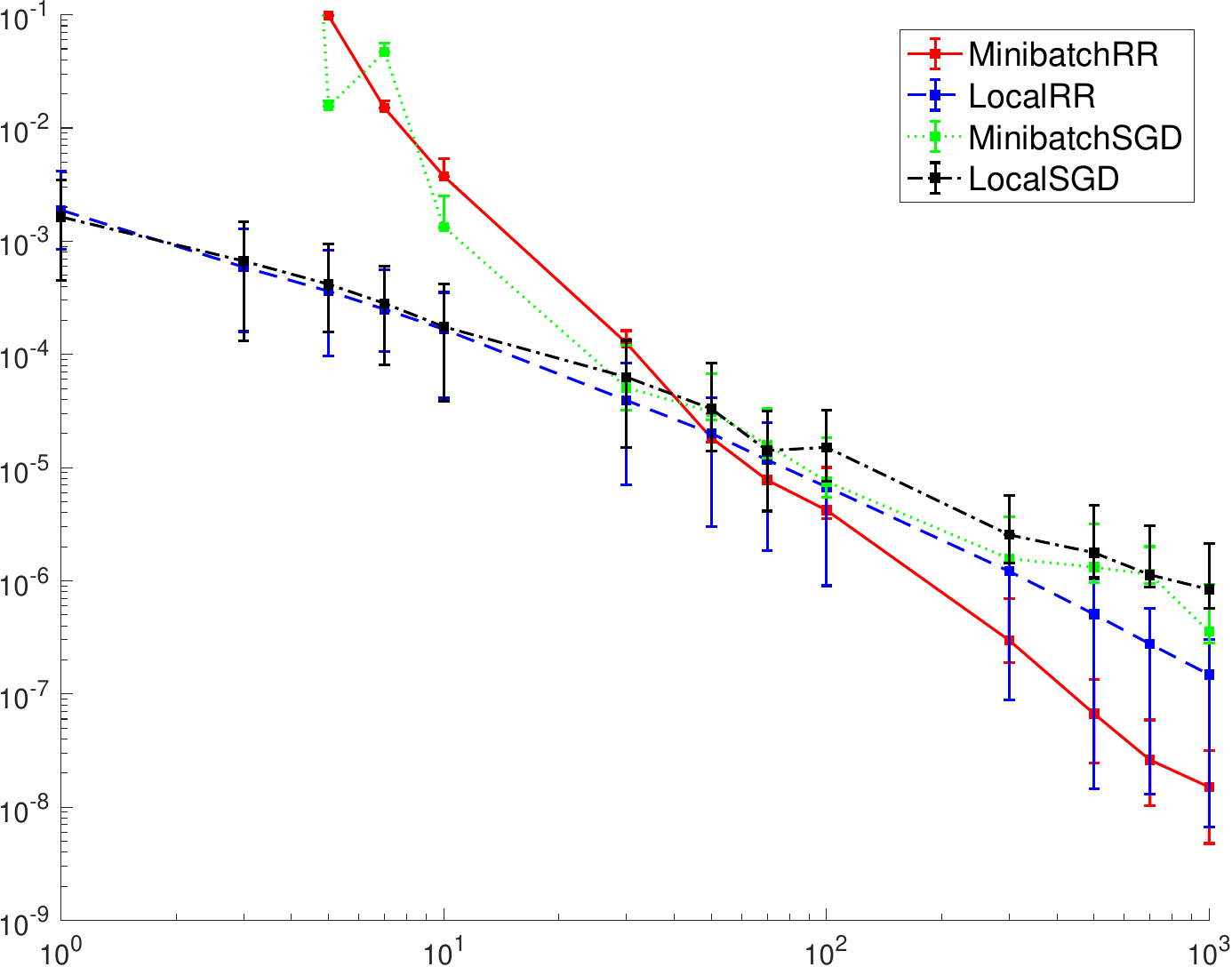}
        \caption{$B=256$}
    \end{subfigure}
    \caption{Comparison of with-replacement ($\sgd$) and without-replacement ($\randshuf$) versions. Best viewed in color. Without-replacement versions converge faster than with-replacement versions, at least in our problem instance. Also note that the two versions perform similarly in the small-epoch regime, which supports our theoretical findings.}
    \label{fig:exp3}
\end{figure}

Lastly, in Figure~\ref{fig:exp3} we compare the with-replacement and without-replacement versions of minibatch/local SGD. In all plots, we can see that the without-replacement versions outperform with-replacement ones, at least for our problem instance. It is also intriguing to note that the two versions perform very similarly in the small-epoch regime (for $K$ up to $\sim 10$), but without-replacement starts to outperform for larger $K$'s. This observation supports our theoretical prediction from Theorem~\ref{thm:minibatchRS-LB} that in the small-epoch regime, minibatch $\randshuf$ can at best perform as fast as minibatch $\sgd$.
\section{Proofs of upper bounds}
In this section, we provide proofs of our upper bounds stated in Sections~\ref{sec:upperbounds} and \ref{sec:breaklower}. We start by describing a high-level proof outline that we use for all the proofs presented in this section (Appendix~\ref{sec:proof-upper-outline}).
In the subsequent subsections, we prove Theorems~\ref{thm:minibatchRS}, \ref{thm:localRS}, \ref{thm:minibatchRS-sync}, and \ref{thm:localRS-sync}, in the order they appeared in the main text.
The next subsection (Appendix~\ref{sec:proof-lem-concineq}) states and proves a key lemma that gives concentration bounds for the mean of multiple without-replacement sums of vectors. This general-purpose lemma can be of independent interest and it can prove useful in various other settings.
Lastly, in Appendix~\ref{sec:avoidRd} we discuss how we can modify the theorem statements to remove the requirement in Assumptions~\ref{assm:intra-dev}, \ref{assm:inter-dev}, and \ref{assm:inter-comp-dev} that they must hold for the entire $\reals^d$.

\paragraph{Notation.} Throughout this section, we use the product notation $\prod$ in a slightly unconventional manner. For indices $i \leq j$ and square matrices $\mA_i, \mA_{i+1}, \dots, \mA_{j-1}, \mA_{j}$, we use $\prod_{l=j}^i \mA_l$ to denote the matrix product $\mA_j \mA_{j-1} \cdots \mA_{i+1} \mA_{i}$. If $i > j$, then $\prod_{l=j}^i \mA_l = \mI$.

\subsection{Proof outline}
\label{sec:proof-upper-outline}
The proofs of upper bounds follow a common structure, consisting of the following three steps: 
\begin{enumerate}
    \item writing one epoch as one step of GD plus noise;
    \item getting a high-probability upper bound on the noise term using concentration inequalities;
    \item obtaining the convergence rate using the bounds on the noise term.
\end{enumerate}

We first unroll the update equations over an epoch, and write an epoch of the algorithms as one step of gradient descent plus noise:\footnote{In case of local $\randshuf$, we replace $\vx_{k,0}$ with $\vy_{k,0}$.}
\begin{equation*}
    \vx_{k+1,0} = \vx_{k,0} - \eta N \nabla F(\vx_{k,0}) + \eta^2 \vr_k.
\end{equation*}
Substituting the above to the definition of $L$-smoothness of $F$ and arranging terms, we obtain  
\begin{align}
    &\, F(\vx_{k+1,0}) - F(\vx_{k,0})\notag\\
    \leq&\, \< \nabla F(\vx_{k,0}), \vx_{k+1,0} - \vx_{k,0} \> + \frac{L}{2} \norm{\vx_{k+1,0} - \vx_{k,0}}^2\notag\\
    \leq&\, -\eta N \norm{\nabla F(\vx_{k,0})}^2 +\eta^2 \norm{\nabla F(\vx_{k,0})} \norm{\vr_k} + \frac{\eta^2L}{2} \norm{N\nabla F(\vx_{k,0}) + \eta \vr_k}^2\nonumber\\
    \leq&\, (- \eta N + \eta^2 L N^2) \norm{\nabla F(\vx_{k,0})}^2 
    + \eta^2 \norm{\nabla F(\vx_{k,0})} \norm{\vr_k}
    + \eta^4 L \norm{\vr_k}^2.\label{eq:proof-thm-sketch-1}
\end{align}
The next step is to get high-probability upper bounds on the term $\norm{\vr_k}$. This is done by applying our concentration inequality lemma (Lemma~\ref{lem:concineq}) to partial without-replacement sums of component gradients. As a result, we will get upper bounds on $\norm{\vr_k}$ and $\norm{\vr_k}^2$, for $k = 1, \dots, K$, which hold with probability at least $1-\delta$.

In the last part, we substitute the high-probability bounds to \eqref{eq:proof-thm-sketch-1} and invoke the definition of P{\L} functions to get a per-epoch progress bound.
We then unroll the per-epoch inequality for all epochs $k = 1, \dots, K$.
Arranging the terms in the resulting inequality gives our desired convergence bound that holds with probability at least $1-\delta$.

\subsection{Proof of upper bound for minibatch $\randshuf$ (Theorem~\ref{thm:minibatchRS})}
\label{sec:proof-thm-minibatchRS}
\paragraph{One epoch as one step of GD plus noise.}
To simplify the notation throughout the proof, we will prove the same convergence rate for a rescaled update rule and step-size:
\begin{equation}
\label{eq:proof-thm-minibatchRS-1}
    \vx_{k,i} \defeq \vx_{k,i-1} -  \frac{\eta}{M} \sum_{m=1}^M \sum_{j=(i-1)B+1}^{iB} \nabla f^m_{\sigma^m_k(j)} (\vx_{k,i-1})
\end{equation}
for $i \in [N/B]$ and $k \in [K]$, and $\eta = \frac{\log (MNK^2)}{\mu N K}$. Note that the gradient term is scaled up by $B$ and the step-size is scaled down by $B$. We will prove the convergence rate for this equivalent algorithm.

We start the proof by unrolling the update equations over an epoch and expressing the progress as
\begin{equation*}
    \vx_{k+1,0} = \vx_{k,0} - \eta N F(\vx_{k,0}) + \eta^2 \vr_k,
\end{equation*}
i.e., one step of full gradient descent plus some noise.

To this end, we decompose the gradient $\nabla f^m_{\sigma^m_k(j)}(\vx_{k,i-1})$ into the signal $\nabla f^m_{\sigma^m_k(j)}(\vx_{k,0})$ and noise:
\begin{align*}
    &\,\nabla f^m_{\sigma^m_k(j)}(\vx_{k,i-1})
    = \nabla f^m_{\sigma^m_k(j)} (\vx_{k,0}) + \nabla f^m_{\sigma^m_k(j)} (\vx_{k,i-1}) - \nabla f^m_{\sigma^m_k(j)} (\vx_{k,0})\\
    =&\, \nabla f^m_{\sigma^m_k(j)} (\vx_{k,0})
    + \left [\int_{0}^1 \nabla^2 f^m_{\sigma^m_k(j)}(\vx_{k,0} + t(\vx_{k,i-1}-\vx_{k,0})) dt \right ] (\vx_{k,i-1} - \vx_{k,0}),
\end{align*}
where $\nabla^2 f(\vx)$ denotes the Hessian of $f$ at $\vx$, whenever it exists.
We remark that the integral exists, due to the following reason. Since we assumed that each $f^m_{\sigma^m_k(j)}$ is differentiable and smooth, its gradient $\nabla f^m_{\sigma^m_k(j)}$ is Lipschitz continuous, and hence absolutely continuous. This means that $\nabla f^m_{\sigma^m_k(j)}$ is differentiable almost everywhere (i.e., $\nabla^2 f^m_{\sigma^m_k(j)}(\vx)$ exists a.e.), and the fundamental theorem of calculus for Lebesgue integral holds; hence the integral exists. 

To simplify notation, we define the following for all $i \in [N/B]$:
\begin{align*}
    \vg_{i} &\defeq \frac{1}{M} \sum_{m=1}^M \sum_{j=(i-1)B + 1}^{iB} \nabla f^m_{\sigma^m_k(j)} (\vx_{k,0}),\\
    \mH_{i} &\defeq \frac{1}{M} \sum_{m=1}^M \sum_{j=(i-1)B + 1}^{iB} \int_{0}^1 \nabla^2 f^m_{\sigma^m_k(j)}(\vx_{k,0} + t(\vx_{k,i-1}-\vx_{k,0})) dt,
\end{align*}
so that we can write \eqref{eq:proof-thm-minibatchRS-1} as
\begin{align}
\label{eq:proof-thm-minibatchRS-2}
    \vx_{k,i} = \vx_{k,i-1} - \eta \vg_i - \eta \mH_i (\vx_{k,i-1} - \vx_{k,0}).
\end{align}
From $L$-smoothness of $f^m_i$'s, it is straightforward to check that $\norm{\mH_i} \leq LB$. Unrolling \eqref{eq:proof-thm-minibatchRS-2} for $i = 1, \dots, N/B$, it turns out that we can write
\begin{align*}
    \vx_{k+1,0} 
    &= \vx_{k,0} - \eta \sum_{i=1}^{N/B} \left ( \prod_{j=N/B}^{i+1} (\mI - \eta \mH_j) \right ) \vg_i.
\end{align*}
Due to summation by parts, the following identity holds:
\begin{equation*}
    \sum_{i=1}^{N/B} a_i b_i = a_{{N}/{B}} \sum_{j=1}^{N/B} b_j - \sum_{i=1}^{N/B-1}(a_{i+1}-a_i) \sum_{j=1}^i b_j.
\end{equation*}
We apply this to the last term, by substituting $a_i = \prod_{j=N/B}^{i+1} (\mI-\eta \mH_j)$ and $b_i = \vg_{i}$:
\begin{align*}
    &\eta\sum_{i=1}^{N/B} \left ( \prod_{j=N/B}^{i+1} (\mI - \eta \mH_j) \right ) \vg_i\\
    =&\, 
    \eta\sum_{j=1}^{N/B} \vg_j
    -
    \eta\sum_{i=1}^{N/B-1}
    \left ( \prod_{t=N/B}^{i+2} (\mI-\eta \mH_{t}) - \prod_{t=N/B}^{i+1} (\mI-\eta \mH_{t}) \right ) \sum_{j=1}^i \vg_{j}\\
    =&\,
    \eta N \nabla F(\vx_{k,0})
    - 
    \eta^2
    \underbrace{
    \sum_{i=1}^{N/B-1}
    \left ( \prod_{t=N/B}^{i+2} (\mI-\eta \mH_{t}) \right )
    \mH_{i+1} \sum_{j=1}^i \vg_{j}
    }_{\eqdef \vr_k}.
\end{align*}
With the ``noise'' $\vr_k$ defined as above, we can write
    $\vx_{k+1,0} = \vx_{k,0} - \eta N \nabla F(\vx_{k,0}) + \eta^2 \vr_k$,
as desired. Next, it follows from $L$-smoothness of $F$ that
\begin{align}
    &\, F(\vx_{k+1,0}) - F(\vx_{k,0})\notag\\
    \leq&\, \< \nabla F(\vx_{k,0}), \vx_{k+1,0} - \vx_{k,0} \> + \frac{L}{2} \norm{\vx_{k+1,0} - \vx_{k,0}}^2\notag\\
    \leq&\, -\eta N \norm{\nabla F(\vx_{k,0})}^2 +\eta^2 \norm{\nabla F(\vx_{k,0})} \norm{\vr_k} + \frac{\eta^2L}{2} \norm{N\nabla F(\vx_{k,0}) + \eta \vr_k}^2\nonumber\\
    \leq&\, (- \eta N + \eta^2 L N^2) \norm{\nabla F(\vx_{k,0})}^2 
    + \eta^2 \norm{\nabla F(\vx_{k,0})} \norm{\vr_k}
    + \eta^4 L \norm{\vr_k}^2, \label{eq:proof-thm-minibatchRS-9}
\end{align}
where the last inequality used $\norm{\va+\vb}^2 \leq 2 \norm{\va}^2 + 2\norm{\vb}^2$.

\paragraph{Bounding noise term using concentration.}
It is left to bound $\norm{\vr_k}$. 
We have
\begin{align}
    \norm{\vr_{k}}
    &= \norm{\sum_{i=1}^{N/B-1} \left ( \prod_{t=N/B}^{i+2} (\mI-\eta \mH_t) \right ) \mH_{i+1} \sum_{j=1}^i \vg_{j}}\notag\\
    &\leq \sum_{i=1}^{N/B-1} \norm{\left ( \prod_{t=N/B}^{i+2} (\mI-\eta \mH_t) \right ) \mH_{i+1} \sum_{j=1}^i \vg_{j}}\notag\\
    &\leq LB (1+\eta LB)^{N/B}\sum_{i=1}^{N/B-1} \norm{\sum_{j=1}^i \vg_{j}},\label{eq:proof-thm-minibatchRS-4}
\end{align}
where the last step used $\norms{\mH_{i}} \leq LB$ for $i \in [N/B]$.
Recall from the theorem statement that 
$K \geq 6 \kappa \log (MNK^2)$ and $\eta = \frac{\log (MNK^2)}{\mu N K}$. This means that
\begin{equation}
\label{eq:proof-thm-minibatchRS-5}
    (1+\eta L B)^{N/B} 
    = 
    \left( 1+\frac{\kappa B \log(MNK^2)}{NK} \right)^{N/B}
    \leq 
    \left( 1+\frac{B}{6 N} \right)^{N/B}
    \leq 
    e^{1/6}.
\end{equation}
Now, we use Lemma~\ref{lem:concineq} to bound the norm of
\begin{equation}
\label{eq:proof-thm-minibatchRS-3}
    \sum_{j=1}^i \vg_j = \frac{1}{M} \sum_{m=1}^M \sum_{j=1}^{iB} \nabla f^m_{\sigma^m_k(j)} (\vx_{k,0}).
\end{equation}
Note that for any epoch $k$, the permutations $\sigma^1_k, \dots, \sigma^M_k$ are independent of the first iterate $\vx_{k,0}$ of the epoch, and hence independent of all $\nabla f^m_i(\vx_{k,0})$. 
Therefore, we can apply Lemma~\ref{lem:concineq} to the partial sum \eqref{eq:proof-thm-minibatchRS-3}, with $\vv^m_i \leftarrow \nabla f^m_i(\vx_{k,0})$, $n \leftarrow iB$, and $\delta \leftarrow \frac{B\delta}{NK}$.
By Lemma~\ref{lem:concineq}, with probability at least $1-\frac{B \delta}{NK}$, we have
\begin{align}
\label{eq:proof-thm-minibatchRS-14}
    \norm{\frac{1}{iBM} \sum_{m=1}^M \sum_{j=1}^{iB} \nabla f^m_{\sigma^m_k(j)} (\vx_{k,0}) - 
    \nabla F (\vx_{k,0})}
    \leq
    \nu \sqrt{\frac{8\log\frac{2NK}{B\delta}}{iBM}}.
\end{align}
Using this concentration bound, with probability at least $1-\frac{B \delta}{NK}$ we have
\begin{align}
\label{eq:proof-thm-minibatchRS-6}
    \norm{\sum_{j=1}^i \vg_j}
    \leq
    iB\nu \sqrt{\frac{8\log\frac{2NK}{B\delta}}{iBM}}
    + iB \norm{\nabla F(\vx_{k,0})}
    = 
    \nu \sqrt{\frac{8 i B \log\frac{2NK}{B\delta}}{M}}
    + iB \norm{\nabla F(\vx_{k,0})}.
\end{align}
We can now substitute \eqref{eq:proof-thm-minibatchRS-5} and \eqref{eq:proof-thm-minibatchRS-6} to \eqref{eq:proof-thm-minibatchRS-4} to get
\begin{align}
    \norm{\vr_k}
    &\leq
    e^{1/6} LB \sum_{i=1}^{N/B-1} \left ( \nu \sqrt{\frac{8 i B \log\frac{2NK}{B\delta}}{M}}
    + iB \norm{\nabla F(\vx_{k,0})} \right )\notag\\
    & \leq \frac{e^{1/6}\sqrt{8} L\nu B^{3/2} }{M^{1/2}} \int_{1}^{N/B} \sqrt{t} dt \sqrt{\log\frac{2NK}{B\delta}} + e^{1/6} LB^2 \norm{\nabla F(\vx_{k,0})} \sum_{i=1}^{N/B-1} i\notag\\
    & \leq \frac{5 L\nu(N^{3/2}-B^{3/2})}{2M^{1/2}} \sqrt{\log\frac{2NK}{B\delta}} 
    + LN(N-B) \norm{\nabla F(\vx_{k,0})},\label{eq:proof-thm-minibatchRS-7}
\end{align}
which holds with probability at least $1-\frac{\delta}{K}$, due to the union bound over $i = 1, \dots, N/B-1$.
The bound \eqref{eq:proof-thm-minibatchRS-7} holds for all $k \in [K]$ with probability $1-\delta$ if we apply the union bound over $k = 1, \dots, K$. Next, by $(a+b)^2 \leq 2a^2 + 2b^2$, we have
\begin{align}
    \norm{\vr_k}^2 
    \leq 
    \frac{25 L^2 \nu^2 (N^{3/2}-B^{3/2})^2}{2M} \log\frac{2NK}{B\delta} 
    + 
    2 L^2N^2(N-B)^2 \norm{\nabla F(\vx_{k,0})}^2, \label{eq:proof-thm-minibatchRS-8}
\end{align}
which also holds for all $k \in [K]$ with probability at least $1-\delta$.

\paragraph{Getting a high-probability convergence rate.}
Given our high-probability bounds~\eqref{eq:proof-thm-minibatchRS-7} and \eqref{eq:proof-thm-minibatchRS-8}, we can substitute them to \eqref{eq:proof-thm-minibatchRS-9} and get
\begin{align}
    &\, F(\vx_{k+1,0}) - F(\vx_{k,0})\notag\\
    \leq&\, (- \eta N + \eta^2 L N^2) \norm{\nabla F(\vx_{k,0})}^2 
    + \eta^2 \norm{\nabla F(\vx_{k,0})} \norm{\vr_k}
    + \eta^4 L \norm{\vr_k}^2\notag\\
    \leq&\, \left( - \eta N + \eta^2 L N^2 + \eta^2 LN(N-B) + 2 \eta^4 L^3N^2(N-B)^2 \right) \norm{\nabla F(\vx_{k,0})}^2\notag\\
    &\,+ \frac{5\eta^2 L\nu(N^{3/2}-B^{3/2})}{2M^{1/2}} \sqrt{\log\frac{2NK}{B\delta}} \norm{\nabla F(\vx_{k,0})}\notag\\
    &\,
    + \frac{25 \eta^4 L^3 \nu^2 (N^{3/2}-B^{3/2})^2}{2M} \log\frac{2NK}{B\delta}.\label{eq:proof-thm-minibatchRS-10}
\end{align}
The second term in the RHS of \eqref{eq:proof-thm-minibatchRS-10} can be bounded using $ab \leq \frac{a^2}{2}+\frac{b^2}{2}$:
\begin{align*}
    &\,\frac{5\eta^2 L\nu(N^{3/2}-B^{3/2})}{2M^{1/2}} \sqrt{\log\frac{2NK}{B\delta}} \norm{\nabla F(\vx_{k,0})}\\
    =&\, \left ( \frac{\eta^{1/2}N^{1/2}}{2} \norm{\nabla F(\vx_{k,0})} \right ) 
    \left ( \frac{5\eta^{3/2}L\nu(N^{3/2}-B^{3/2})}{M^{1/2}N^{1/2}} \sqrt{\log\frac{2NK}{B\delta}} \right )\\
    \leq&\, \frac{\eta N}{8} \norm{\nabla F(\vx_{k,0})}^2
    + \frac{25 \eta^3 L^2 \nu^2 (N^{3/2}-B^{3/2})^2}{2MN} \log\frac{2NK}{B\delta}.
\end{align*}
Putting this inequality to \eqref{eq:proof-thm-minibatchRS-10} and noting $N-B \leq N$ gives
\begin{align}
    F(\vx_{k+1,0}) - F(\vx_{k,0})
    &\leq \left (- \frac{7}{8} \eta N + 2\eta^2 L N^2 + 2 \eta^4 L^3N^4 \right ) \norm{\nabla F(\vx_{k,0})}^2\notag\\
    &\quad\quad+
    \frac{25 \eta^3 L^2 (1+\eta LN) \nu^2 (N^{3/2}-B^{3/2})^2}{2MN} \log\frac{2NK}{B\delta}.\label{eq:proof-thm-minibatchRS-11}
\end{align}
Recall from $K \geq 6 \kappa \log (MNK^2)$ and $\eta = \frac{\log (MNK^2)}{\mu N K}$ that $\eta L N \leq \frac{1}{6}$. Since the inequality $-\frac{7}{8}z + 2z^2 + 2z^4 \leq -\frac{1}{2}z$ holds on $z \in [0,\frac{1}{6}]$, we have
\begin{equation*}
    - \frac{7}{8} \eta N + 2\eta^2 L N^2 + 2 \eta^4 L^3N^4 \leq - \frac{1}{2} \eta N.
\end{equation*}
Applying this bound to \eqref{eq:proof-thm-minibatchRS-11} results in
\begin{align*}
    F(\vx_{k+1,0}) - F(\vx_{k,0})
    \leq - \frac{\eta N}{2} \norm{\nabla F(\vx_{k,0})}^2
    +
    \frac{15 \eta^3 L^2 \nu^2 (N^{3/2}-B^{3/2})^2}{MN}
    \log\frac{2NK}{B\delta}.
\end{align*}
We now recall that $F$ is $\mu$-P{\L}, so $\norm{\nabla F(\vx_{k,0})}^2 \geq 2 \mu (F(\vx_{k,0})-F^*)$:
\begin{align}
    F(\vx_{k+1,0}) - F^*
    \leq (1 - \eta \mu N) (F(\vx_{k,0}) - F^*) 
    +
    \frac{15 \eta^3 L^2 \nu^2 (N^{3/2}-B^{3/2})^2}{MN}
    \log\frac{2NK}{B\delta}.\label{eq:proof-thm-minibatchRS-12}
\end{align}
Recall that \eqref{eq:proof-thm-minibatchRS-12} holds for all $k \in [K]$, with probability $1-\delta$. Therefore, by unrolling the inequality,
\begin{align}
    F(\vx_{K,\frac{N}{B}}) - F^*
    &\leq (1 - \eta \mu N)^K (F(\vx_0) - F^*)
    \notag\\
    &\qquad +\frac{15 \eta^3 L^2 \nu^2 (N^{3/2}-B^{3/2})^2}{MN}
    \log\frac{2NK}{B\delta} \sum_{k=0}^{K-1} (1-\eta \mu N)^k\notag\\
    &\leq (1 - \eta \mu N)^K (F(\vx_0) - F^*)
    + \frac{15 \eta^2 L^2 \nu^2 (N^{3/2}-B^{3/2})^2}{\mu MN^2}
    \log\frac{2NK}{B\delta}.\label{eq:proof-thm-minibatchRS-13}
\end{align}
Lastly, substituting $\eta = \frac{\log (MNK^2)}{\mu N K}$ gives
\begin{align}
    F(\vx_{K,\frac{N}{B}}) - F^*
    &\leq \frac{F(\vx_0) - F^*}{MNK^2} 
    + 
    \frac{15 L^2 \nu^2 (N^{3/2}-B^{3/2})^2 \log\frac{2NK}{B\delta} \log^2 (MNK^2) }{\mu^3 M N^4 K^2} \notag\\
    &= \frac{F(\vx_0) - F^*}{MNK^2} 
    + 
    \tilde {\mc O} \left ( \frac{L^2 \nu^2}{\mu^3} \frac{1}{MNK^2} \right ).
    \label{eq:proof-thm-minibatchRS-16}
\end{align}

\paragraph{Getting an in-expectation bound from the high-probability bound.}
We conclude this subsection by briefly describing how we can obtain an in-expectation bound from the high-probability bound we just proved.
Recall that the bound we proved above holds under the event $E$ that all the concentration bounds used throughout the proof hold.
The key to proving an in-expectation bound is to obtain an upper bound under its complement $E^c$, i.e., conditioned on the event that at least one of our concentration bounds does not hold. We do so by repeating the same proof \emph{without} ever using the Hoeffding-Serfling bounds (Lemma~\ref{lem:concineq}). Of course, this leads to a much looser bound, but we can choose $\delta$ to be small enough so that the desired bound $\tilde {\mc O} \left ( \frac{L^2 \nu^2}{\mu^3} \frac{1}{MNK^2} \right )$ holds in expectation.

For the version without concentration bounds, the proof proceeds in the same way until it starts diverge at \eqref{eq:proof-thm-minibatchRS-14}. Instead of applying concentration inequalities, we loosely bound the quantity as the following:
\begin{align*}
    &\,\norm{\frac{1}{iBM} \sum_{m=1}^M \sum_{j=1}^{iB} \nabla f^m_{\sigma^m_k(j)} (\vx_{k,0}) - 
    \nabla F (\vx_{k,0})}\\
    =&\,
    \norm{\frac{1}{iBM} \sum_{m=1}^M \sum_{j=1}^{iB} \nabla f^m_{\sigma^m_k(j)} (\vx_{k,0}) - 
    \frac{1}{M} \sum_{m=1}^M F^m (\vx_{k,0})}\\
    \leq&\,
    \frac{1}{M} \sum_{m=1}^M \norm{\frac{1}{iB} \sum_{j=1}^{iB} \nabla f^m_{\sigma^m_k(j)} (\vx_{k,0}) - 
    F^m (\vx_{k,0})}
    \leq
    \nu.
\end{align*}
With this bound, the RHS of the upper bound~\eqref{eq:proof-thm-minibatchRS-6} on $\norms{\sum_{j=1}^i \vg_j}$ becomes $iB \nu + iB \norm{\nabla F(\vx_{k,0})}$. This results in the bounds on $\norm{\vr_k}$ and $\norm{\vr_k}^2$ (corresponding to \eqref{eq:proof-thm-minibatchRS-7} and \eqref{eq:proof-thm-minibatchRS-8}) that read
\begin{align*}
    \norm{\vr_k}
    &\leq 
    L\nu N(N-B)
    + LN(N-B) \norm{\nabla F(\vx_{k,0})},\\
    \norm{\vr_k}
    &\leq 
    2L^2\nu^2 N^2(N-B)^2
    + 2L^2N^2(N-B)^2 \norm{\nabla F(\vx_{k,0})}^2.
\end{align*}
The rest is substituting the bounds above to \eqref{eq:proof-thm-minibatchRS-9}, and going through the same steps to obtain the final bound. The resulting bound that corresponds to \eqref{eq:proof-thm-minibatchRS-13} is 
\begin{align*}
    F(\vx_{K,\frac{N}{B}}) - F^*
    \leq (1 - \eta \mu N)^K (F(\vx_0) - F^*)
    + \frac{7 \eta^2 L^2 \nu^2 (N-B)^2}{3\mu},
\end{align*}
which, by substituting $\eta = \frac{\log(MNK^2)}{\mu N K}$, yields
\begin{align}
    F(\vx_{K,\frac{N}{B}}) - F^*
    \leq 
    \frac{F(\vx_0) - F^*}{MNK^2} 
    + 
    \tilde {\mc O} \left ( \frac{L^2 \nu^2}{\mu^3} \frac{1}{K^2} \right ).
    \label{eq:proof-thm-minibatchRS-15}
\end{align}

To finish the proof of in-expectation bound, choose $\delta = \frac{1}{MN}$. 
Recall that the probabilistic event $E$ occurs when all our concentration bounds hold.
Conditioned on $E$, which occurs with probability at least $1 - \frac{1}{MN}$, the tighter bound~\eqref{eq:proof-thm-minibatchRS-16} holds, with $\log \frac{1}{\delta}$ replaced by $\log (MN)$. The complement event $E^c$ occurs with probability at most $\frac{1}{MN}$, under which the looser bound~\eqref{eq:proof-thm-minibatchRS-15} is true. Thus, in expectation,
\begin{align*}
    \E \left [F(\vx_{K,\frac{N}{B}}) - F^* \right]
    &= \P(E) \E \left [F(\vx_{K,\frac{N}{B}}) - F^* \mid E \right]
    + \P(E^c) \E \left [F(\vx_{K,\frac{N}{B}}) - F^* \mid E^c \right]\\
    &\leq \E \left [F(\vx_{K,\frac{N}{B}}) - F^* \mid E \right]
    + \frac{1}{MN} \E \left [F(\vx_{K,\frac{N}{B}}) - F^* \mid E^c \right]\\
    &\leq \frac{3(F(\vx_0) - F^*)}{2MNK^2} 
    + \tilde {\mc O} \left ( \frac{L^2 \nu^2}{\mu^3} \frac{1}{MNK^2} \right ).
\end{align*}
For the remaining high-probability upper bounds proved in the paper, we can similarly follow this process to obtain matching (up to log factors) in-expectation upper bounds.

\subsection{Proof of upper bound for local $\randshuf$ (Theorem~\ref{thm:localRS})}
\label{sec:proof-thm-localRS}

\paragraph{One epoch as one step of GD plus noise.}
The update rule of local $\randshuf$ can be written as the following. For $k \in [K]$, $i \in [N]$, and $m \in [M]$,
\begin{align}
\label{eq:proof-thm-localRS-1}
    \vx^m_{k,i} \defeq 
    \begin{cases}
    \vx^m_{k,i-1} - \eta \nabla f^m_{\sigma^m_k(i)} (\vx^m_{k,i-1}) & \text{ if $B$ does not divide $i$},\\
    \frac{1}{M} \sum_{m=1}^M (\vx^m_{k,i-1} - \eta \nabla f^m_{\sigma^m_k(i)} (\vx^m_{k,i-1})) \eqdef \vy_{k,i/B} & \text{ if $B$ divides $i$}.
    \end{cases}
\end{align}
Recall that $\vx^m_{k,0}$'s are the initial points of an epoch and they all the same regardless of the machine $m$. We define $\vy_{k,0} \defeq \vx^1_{k,0}$. 
For any $i$ in the range of $(l-1)B+1\leq i \leq lB$ for some $l \in [M]$, we will use $\vy_{k,l-1}$ as the ``pivot'' and decompose the gradients into the ones evaluated at $\vy_{k,l-1}$ plus noise terms.
\begin{align*}
    &\,\nabla f^m_{\sigma^m_k(i)} (\vx^m_{k,i-1})
    = 
    \nabla f^m_{\sigma^m_k(i)} (\vy_{k,l-1})
    + 
    \nabla f^m_{\sigma^m_k(i)} (\vx^m_{k,i-1}) 
    - 
    \nabla f^m_{\sigma^m_k(i)} (\vy_{k,l-1})\\
    =&\,
    \nabla f^m_{\sigma^m_k(i)} (\vy_{k,l-1})
    + 
    \underbrace{\left [\int_{0}^1 \nabla^2 f^m_{\sigma^m_k(i)}(\vy_{k,l-1} + t(\vx^m_{k,i-1}-\vy_{k,l-1})) dt \right ]}
    _{\eqdef \mH^m_i}
    (\vx^m_{k,i-1} - \vy_{k,l-1}),
\end{align*}
where the integral $\mH^m_i$ exists due to the reason discussed in Appendix~\ref{sec:proof-thm-minibatchRS}. Also note from $L$-smoothness of $f^m_i$'s that $\norm{\mH^m_i} \leq L$.
Using the decomposition, one can unroll the updates~\eqref{eq:proof-thm-localRS-1} and write $\vy_{k,l}$ in terms of $\vy_{k,l-1}$ in the following way:
\begin{align}
\label{eq:proof-thm-localRS-2}
    \vy_{k,l} = \vy_{k,l-1} - \frac{\eta}{M} \sum_{m=1}^M \sum_{i=(l-1)B+1}^{lB} \left ( \prod_{j=lB}^{i+1} ( \mI - \eta \mH^m_j ) \right ) \nabla f^m_{\sigma^m_k(i)} (\vy_{k,l-1}),
\end{align}
for $l = 1, \dots, N/B$. Next, we again decompose the gradient $\nabla f^m_{\sigma^m_k(i)} (\vy_{k,l-1})$, this time using $\vy_{k,0}$ as the pivot:
\begin{align*}
    &\,\nabla f^m_{\sigma^m_k(i)} (\vy_{k,l-1})
    = 
    \nabla f^m_{\sigma^m_k(i)} (\vy_{k,0})
    + 
    \nabla f^m_{\sigma^m_k(i)} (\vy_{k,l-1}) 
    - 
    \nabla f^m_{\sigma^m_k(i)} (\vy_{k,0})\\
    =&\,
    \nabla f^m_{\sigma^m_k(i)} (\vy_{k,0})
    + 
    \underbrace{\left [\int_{0}^1 \nabla^2 f^m_{\sigma^m_k(i)}(\vy_{k,0} + t(\vy_{k,l-1}-\vy_{k,0})) dt \right ]}
    _{\eqdef \tilde \mH^m_i}
    (\vy_{k,l-1} - \vy_{k,0}).
\end{align*}
This decomposition allows us to rewrite \eqref{eq:proof-thm-localRS-2} in the following form
\begin{equation}
\label{eq:proof-thm-localRS-3}
    \vy_{k,l} = \vy_{k,l-1} - \eta \vt_l - \eta \mS_l (\vy_{k,l-1} - \vy_{k,0}),
\end{equation}
where
\begin{align}
    \vt_l &\defeq 
    \frac{1}{M} \sum_{m=1}^M \sum_{i=(l-1)B+1}^{lB} \left ( \prod_{j=lB}^{i+1} ( \mI - \eta \mH^m_j ) \right ) \nabla f^m_{\sigma^m_k(i)} (\vy_{k,0}),\label{eq:proof-thm-localRS-5}
    \\
    \mS_l &\defeq 
    \frac{1}{M} \sum_{m=1}^M \sum_{i=(l-1)B+1}^{lB} \left ( \prod_{j=lB}^{i+1} ( \mI - \eta \mH^m_j ) \right ) \tilde \mH^m_i.\label{eq:proof-thm-localRS-6}
\end{align}
Unrolling \eqref{eq:proof-thm-localRS-3} for $l = 1, \dots, N/B$ then gives the progress over an epoch:
\begin{align}
\label{eq:proof-thm-localRS-4}
    \vy_{k+1,0} = \vy_{k,0} - \eta \sum_{l=1}^{N/B} \left ( \prod_{j=N/B}^{l+1} (\mI - \eta \mS_j) \right ) \vt_l.
\end{align}
As done in the proof of Theorem~\ref{thm:minibatchRS} (Appendix~\ref{sec:proof-thm-minibatchRS}), we will express \eqref{eq:proof-thm-localRS-4} as one step of GD on $F$ plus some noise. Of course, the noise terms here will be more complicated to handle than they were in Theorem~\ref{thm:minibatchRS}. 
Due to summation by parts, the following identity holds:
\begin{equation*}
    \sum_{l=1}^{N/B} a_l b_l = a_{N/B} \sum_{j=1}^{N/B} b_j - \sum_{l=1}^{N/B-1}(a_{l+1}-a_l) \sum_{j=1}^l b_j.
\end{equation*}
We apply this to the last term of \eqref{eq:proof-thm-localRS-4}, by substituting $a_i = \prod_{j=N/B}^{l+1} (\mI-\eta \mS_j)$ and $b_l = \vt_{l}$:
\begin{align}
    \eta \sum_{l=1}^{N/B} \left ( \prod_{j=N/B}^{l+1} (\mI - \eta \mS_j) \right ) \vt_l
    =
    \eta \sum_{l=1}^{N/B} \vt_l
    -
    \eta^2 \sum_{l=1}^{N/B-1} \left ( \prod_{j=N/B}^{l+2} (\mI - \eta \mS_j) \right ) \mS_{l+1} \sum_{j=1}^l \vt_j.
    \label{eq:proof-thm-localRS-7}
\end{align}
We also apply the summation by parts to the inner summation of $\vt_l$'s~\eqref{eq:proof-thm-localRS-5}:
\begin{align}
    \vt_l \defeq &\,
    \frac{1}{M} \sum_{m=1}^M \sum_{i=(l-1)B+1}^{lB} \left ( \prod_{j=lB}^{i+1} ( \mI - \eta \mH^m_j ) \right ) \nabla f^m_{\sigma^m_k(i)} (\vy_{k,0})\notag\\
    =&\, 
    \frac{1}{M} \sum_{m=1}^M \sum_{i=(l-1)B+1}^{lB} \nabla f^m_{\sigma^m_k(i)} (\vy_{k,0})\notag\\
    &~-
    \frac{\eta}{M} \sum_{m=1}^M \sum_{i=(l-1)B+1}^{lB-1} \left ( \prod_{t=lB}^{i+2} ( \mI - \eta \mH^m_t ) \right ) \mH^m_{i+1} \sum_{j=(l-1)B+1}^i \nabla f^m_{\sigma^m_k(j)} (\vy_{k,0})\label{eq:proof-thm-localRS-8}
\end{align}
Substituting \eqref{eq:proof-thm-localRS-8} to \eqref{eq:proof-thm-localRS-7} gives 
\begin{align*}
    \vy_{k+1,0} = \vy_{k,0} 
    - \eta N \nabla F(\vy_{k,0})
    + \eta^2 \vr_{k,1} + \eta^2 \vr_{k,2} - \eta^3 \vr_{k,3},
\end{align*}
where $\vr_{k,1}$, $\vr_{k,2}$, and $\vr_{k,3}$ are noise terms defined as
\begin{align*}
    \vr_{k,1} 
    &\defeq 
    \frac{1}{M} \sum_{l=1}^{N/B} \sum_{m=1}^M \sum_{i=(l-1)B+1}^{lB-1} \left ( \prod_{t=lB}^{i+2} ( \mI - \eta \mH^m_t ) \right ) \mH^m_{i+1} \sum_{t=(l-1)B+1}^i \nabla f^m_{\sigma^m_k(t)} (\vy_{k,0}),\\
    \vr_{k,2}
    &\defeq
    \frac{1}{M} \sum_{l=1}^{N/B-1} \left ( \prod_{j=N/B}^{l+2} (\mI - \eta \mS_j) \right ) \mS_{l+1} \sum_{m=1}^M \sum_{t=1}^{lB} \nabla f^m_{\sigma^m_k(t)} (\vy_{k,0}),\\
    \vr_{k,3}
    &\defeq
    \frac{1}{M} \sum_{l=1}^{N/B-1} \left ( \prod_{j=N/B}^{l+2} (\mI - \eta \mS_j) \right ) \mS_{l+1} \times \\
    &\qquad \sum_{j=1}^l \sum_{m=1}^M \sum_{i=(j-1)B+1}^{jB-1} \left ( \prod_{t=jB}^{i+2} ( \mI - \eta \mH^m_t ) \right ) \mH^m_{i+1} \sum_{t=(j-1)B+1}^i \nabla f^m_{\sigma^m_k(t)} (\vy_{k,0}).
\end{align*}
Defining $\vr_k \defeq \vr_{k,1}+\vr_{k,2}-\eta\vr_{k,3}$, it follows from $L$-smoothness of $F$ that
\begin{align}
    &\, F(\vy_{k+1,0}) - F(\vy_{k,0})\notag\\
    \leq&\, \< \nabla F(\vy_{k,0}), \vy_{k+1,0} - \vy_{k,0} \> + \frac{L}{2} \norm{\vy_{k+1,0} - \vy_{k,0}}^2\notag\\
    \leq&\, -\eta N \norm{\nabla F(\vy_{k,0})}^2 +\eta^2 \norm{\nabla F(\vy_{k,0})} \norm{\vr_k} + \frac{\eta^2L}{2} \norm{N\nabla F(\vy_{k,0}) + \eta \vr_k}^2\nonumber\\
    \leq&\, (- \eta N + \eta^2 L N^2) \norm{\nabla F(\vy_{k,0})}^2 
    + \eta^2 \norm{\nabla F(\vy_{k,0})} \norm{\vr_k}
    + \eta^4 L \norm{\vr_k}^2. \label{eq:proof-thm-localRS-9}
\end{align}

\paragraph{Bounding noise terms using concentration.}
We next bound $\norm{\vr_k}$ by bounding each $\norm{\vr_{k,1}}$, $\norm{\vr_{k,2}}$, and $\norm{\vr_{k,3}}$. From this point on, we write $\vg^m_t \defeq \nabla f^m_{\sigma^m_k(t)} (\vy_{k,0})$ to simplify notation.
\begin{align}
    \norm{\vr_{k,1}}
    &= \norm{\frac{1}{M} \sum_{l=1}^{N/B} \sum_{m=1}^M \sum_{i=(l-1)B+1}^{lB-1} \left ( \prod_{t=lB}^{i+2} ( \mI - \eta \mH^m_t ) \right ) \mH^m_{i+1} \sum_{t=(l-1)B+1}^i \vg^m_t}\notag\\
    &\leq \frac{1}{M} \sum_{l=1}^{N/B} \sum_{m=1}^M \sum_{i=(l-1)B+1}^{lB-1} 
    \norm{\left ( \prod_{t=lB}^{i+2} ( \mI - \eta \mH^m_t ) \right ) \mH^m_{i+1} \sum_{t=(l-1)B+1}^i \vg^m_t}\notag\\
    &\leq \frac{L (1+\eta L)^{B}}{M} \sum_{l=1}^{N/B} \sum_{m=1}^M \sum_{i=(l-1)B+1}^{lB-1} \norm{\sum_{t=(l-1)B+1}^i \vg^m_t},\label{eq:proof-thm-localRS-10}
\end{align}
where we used $\norm{\mH^m_i} \leq L$. 
Recall from the theorem statement that 
$K \geq 7 \rho \kappa \log (MNK^2)$ and $\eta = \frac{\log (MNK^2)}{\mu N K}$. This means that
\begin{equation}
\label{eq:proof-thm-localRS-11}
    (1+\eta L)^{B} 
    = 
    \left( 1+\frac{\kappa \log(MNK^2)}{NK} \right)^{B}
    \leq 
    \left( 1+\frac{1}{7\rho N} \right)^{B}
    \leq 
    \left( 1+\frac{1}{7 B} \right)^{B}
    \leq 
    e^{1/7}.
\end{equation}
Also note from the definition of $\mS_l$~\eqref{eq:proof-thm-localRS-6} that $\norm{\mS_l} \leq LB (1+\eta L)^B \leq e^{1/7}LB$,
which we use to get similar bounds for the next two terms $\vr_{k,2}$ and $\vr_{k,3}$.
\begin{align}
    \norm{\vr_{k,2}}
    &= \norm{\frac{1}{M} \sum_{l=1}^{N/B-1} \left ( \prod_{j=N/B}^{l+2} (\mI - \eta \mS_j) \right ) \mS_{l+1} \sum_{m=1}^M \sum_{t=1}^{lB} \vg^m_t}\notag\\
    &\leq \frac{e^{1/7} LB (1+e^{1/7} \eta LB)^{N/B}}{M} \sum_{l=1}^{N/B-1}
    \norm{\sum_{m=1}^M \sum_{t=1}^{lB} \vg^m_t},\label{eq:proof-thm-localRS-12}
\end{align}
and we can bound
\begin{align}
    &\,(1+e^{1/7} \eta LB)^{N/B}
    =
    \left ( 1+\frac{e^{1/7} \kappa B \log(MNK^2)}{NK} \right )^{N/B}\notag\\
    \leq&\, 
    \left ( 1+\frac{e^{1/7} B}{7\rho N} \right )^{N/B}
    \leq
    \exp\left (\frac{e^{1/7}}{7}\right ).\label{eq:proof-thm-localRS-13}
\end{align}
We similarly bound the norm of the last noise term $\vr_{k,3}$:
\begin{align}
    \norm{\vr_{k,3}}
    &\leq \frac{e^{1/7} LB (1+e^{1/7} \eta LB)^{N/B}}{M} \times \notag\\
    &\qquad \sum_{l=1}^{N/B-1}
    \norm{\sum_{j=1}^l \sum_{m=1}^M \sum_{i=(j-1)B+1}^{jB-1} \left ( \prod_{t=jB}^{i+2} ( \mI - \eta \mH^m_t ) \right ) \mH^m_{i+1} \sum_{t=(j-1)B+1}^i \vg^m_t}\notag\\
    &\leq \frac{e^{1/7} L^2B (1+e^{1/7} \eta LB)^{N/B} (1+\eta L)^B}{M}
    \sum_{l=1}^{N/B-1} \sum_{j=1}^l \sum_{m=1}^M \sum_{i=(j-1)B+1}^{jB-1} \norm{\sum_{t=(j-1)B+1}^i \vg^m_t}\notag\\
    &\leq \frac{11 L^2B }{7M}
    \sum_{l=1}^{N/B-1} \sum_{j=1}^l \sum_{m=1}^M \sum_{i=(j-1)B+1}^{jB-1} \norm{\sum_{t=(j-1)B+1}^i \vg^m_t},\label{eq:proof-thm-localRS-14}
\end{align}
where the last inequality used \eqref{eq:proof-thm-localRS-11}, \eqref{eq:proof-thm-localRS-13}, and $e^{2/7} \exp(e^{1/7}/7) \leq 11/7$.

Given the bounds~\eqref{eq:proof-thm-localRS-10}, \eqref{eq:proof-thm-localRS-12}, and \eqref{eq:proof-thm-localRS-14}, we now use Lemma~\ref{lem:concineq} to get high-probability bounds for the partial sums of $\vg^m_t$ that appear in the bounds.
For any $i$ satisfying $(j-1)B+1 \leq i \leq jB-1$, where $j \in [N/B]$, and for any $m \in [M]$, the following bound holds with probability at least $1 - \frac{\delta}{2MNK}$:
\begin{align*}
    &\,\norm{\frac{1}{i-(j-1)B} \sum_{t=(l-1)B+1}^i \vg^m_t - \frac{1}{N} \sum_{t=1}^N \vg^m_t}\\
    =&\, 
    \norm{\frac{1}{i-(j-1)B} \sum_{t=(l-1)B+1}^i \nabla f^m_{\sigma^m_k(t)} (\vy_{k,0}) - \nabla F^m (\vy_{k,0})}
    \leq
    \nu\sqrt{\frac{8\log \frac{4MNK}{\delta}}{i-(j-1)B}}.
\end{align*}
From this, with probability at least $1 - \frac{\delta}{2MNK}$ we have
\begin{align}
\label{eq:proof-thm-localRS-15}
    \norm{\sum_{t=(j-1)B+1}^i \vg^m_t}
    \leq
    \nu \sqrt{8(i-(j-1)B) \log \tfrac{4MNK}{\delta}}
    + (i-(j-1)B) \norm{\nabla F^m(\vy_{k,0})}.
\end{align}
Similarly, for $l \in [N/B-1]$, the following bound holds with probability at least $1-\frac{B \delta}{2NK}$:
\begin{align*}
    &\,\norm{\frac{1}{lBM} \sum_{m=1}^M \sum_{t=1}^{lB} \vg^m_t - \frac{1}{MN} \sum_{m=1}^M \sum_{t=1}^N \vg^m_t}\\
    =&\, 
    \norm{\frac{1}{lBM} \sum_{m=1}^M \sum_{t=1}^{lB} \nabla f^m_{\sigma^m_k(t)} (\vy_{k,0}) - \nabla F (\vy_{k,0})}
    \leq
    \nu\sqrt{\frac{8\log \frac{4NK}{B\delta}}{lBM}},
\end{align*}
which gives us
\begin{align}
\label{eq:proof-thm-localRS-16}
    \norm{\sum_{m=1}^M \sum_{t=1}^{lB} \vg^m_t} 
    \leq
    \nu \sqrt{8lBM \log \tfrac{4NK}{B\delta}}
    +
    lBM \norm{\nabla F(\vy_{k,0})}
\end{align}
By applying the union bound, with probability at least $1-\frac{\delta}{K}$, the bound \eqref{eq:proof-thm-localRS-15} holds for all $m \in [M]$ and $i \in \bigcup_{j=1}^{N/B} [(j-1)B+1:jB-1]$, and the bound \eqref{eq:proof-thm-localRS-16} holds for all $l \in [N/B-1]$.

We now substitute the bounds~\eqref{eq:proof-thm-localRS-15} and \eqref{eq:proof-thm-localRS-16} to \eqref{eq:proof-thm-localRS-10}, \eqref{eq:proof-thm-localRS-12}, and \eqref{eq:proof-thm-localRS-14} to get upper bounds for $\norm{\vr_{k,1}}$, $\norm{\vr_{k,2}}$, and $\norm{\vr_{k,3}}$, respectively. First,
\begin{align}
    &\,\norm{\vr_{k,1}}\notag\\
    \leq&\, \frac{e^{1/7} L}{M} \sum_{l=1}^{N/B} \sum_{m=1}^M \sum_{i=(l-1)B+1}^{lB-1} 
    \!\!\!\left (\nu \sqrt{8(i-(l-1)B) \log\tfrac{4MNK}{\delta}} + (i-(l-1)B) \norm{\nabla F^m(\vy_{k,0})} \right )
    \notag\\
    =&\, \frac{e^{1/7} \sqrt{8} L\nu N}{B} \left(\sum_{i=1}^{B-1} \sqrt{i} \right) \sqrt{\log\frac{4MNK}{\delta}}
    + \frac{e^{1/7} LN}{B} \left(\sum_{i=1}^{B-1} i \right ) \left(\frac{1}{M} \sum_{m=1}^M \norm{\nabla F^m(\vy_{k,0})} \right)\notag\\
    \leq&\, \frac{24 L\nu N(B^{3/2}-1)}{11B} \sqrt{\log\frac{4MNK}{\delta}}
    + 
    \frac{3LN(B-1)}{5} \left(\tau + \rho \norm{\nabla F(\vy_{k,0})} \right ),\label{eq:proof-thm-localRS-17}
\end{align}
where the last inequality used $\sum_{i=1}^{B-1} \sqrt{i} \leq \int_1^B \sqrt{z}dz = \frac{2}{3}(B^{3/2}-1)$ and Assumption~\ref{assm:inter-dev}.
For the next noise term, we have $e^{1/7} (1+e^{1/7} \eta LB)^{N/B} \leq 7/5$, so
\begin{align}
    \norm{\vr_{k,2}}
    &\leq \frac{7LB}{5M} \sum_{l=1}^{N/B-1} \left ( \nu \sqrt{8lBM \log \tfrac{4NK}{B\delta}}
    +
    lBM \norm{\nabla F(\vy_{k,0})} \right)\notag\\
    &= 
    \frac{7\sqrt{8} L\nu B^{3/2}}{5M^{1/2}} \left ( \sum_{l=1}^{N/B-1} \sqrt l \right) \sqrt{\log \frac{4NK}{B\delta}}
    + \frac{7LB^2}{5} \left (\sum_{l=1}^{N/B-1} l\right) \norm{\nabla F(\vy_{k,0})}\notag\\
    &\leq
    \frac{8L\nu (N^{3/2}-B^{3/2})}{3M^{1/2}} \sqrt{\log \frac{4NK}{B\delta}} + \frac{7L N(N-B)}{10} \norm{\nabla F(\vy_{k,0})}.\label{eq:proof-thm-localRS-18}
\end{align}
Lastly,
\begin{align}
    \norm{\vr_{k,3}}
    &\leq \frac{11 L^2B }{7M}
    \sum_{l=1}^{N/B-1} \sum_{j=1}^l \sum_{m=1}^M \sum_{i=(j-1)B+1}^{jB-1} 
    \bigg (\nu \sqrt{8(i-(j-1)B) \log\tfrac{4MNK}{\delta}} \notag\\
    &\hspace{170pt}+ (i-(j-1)B) \norm{\nabla F^m(\vy_{k,0})} \bigg )\notag\\
    &= \frac{11\sqrt{8} L^2 \nu B}{7} \left ( \sum_{l=1}^{N/B-1} l \right ) \left ( \sum_{i=1}^{B-1}\sqrt{i} \right ) \sqrt{\log\frac{4MNK}{\delta}}\notag\\
    &\quad~+ \frac{11L^2 B}{7} \left ( \sum_{l=1}^{N/B-1} l \right ) \left ( \sum_{i=1}^{B-1} i \right ) \left( \frac{1}{M} \sum_{m=1}^M \norm{\nabla F^m(\vy_{k,0})} \right)\notag\\
    &\leq \frac{3 L^2 \nu N(N-B)(B^{3/2}-1)}{2B} \sqrt{\log\frac{4MNK}{\delta}}\notag\\
    &\quad~+ \frac{2 L^2 N(N-B)(B-1)}{5} (\tau + \rho \norm{\nabla F(\vy_{k,0})}).\label{eq:proof-thm-localRS-19}
\end{align}
Recalling the definition $\vr_k \defeq \vr_{k,1}+\vr_{k,2}-\eta\vr_{k,3}$, we get an upper bound for $\norm{\vr_k}$ from \eqref{eq:proof-thm-localRS-17}, \eqref{eq:proof-thm-localRS-18}, and \eqref{eq:proof-thm-localRS-19}:
\begin{align}
    \norm{\vr_k} 
    &\leq \norm{\vr_{k,1}} + \norm{\vr_{k,2}} + \eta \norm{\vr_{k,3}}\notag\\
    &\leq L\nu \sqrt{\log \frac{4MNK}{\delta}} \left ( \frac{24N(B^{3/2}-1)}{11B} + \frac{8(N^{3/2}-B^{3/2})}{3M^{1/2}} + \frac{3\eta L N(N-B)(B^{3/2}-1)}{2B} \right )\notag\\
    &\quad + L \tau \left ( \frac{3N(B-1)}{5} + \frac{2\eta LN(N-B)(B-1)}{5} \right )\notag\\
    &\quad + L \norm{\nabla F(\vy_{k,0})} \left( \frac{3\rho N(B-1)}{5} + \frac{7N(N-B)}{10} + \frac{2 \eta L \rho N(N-B)(B-1)}{5} \right).\label{eq:proof-thm-localRS-20}
\end{align}
Recall again that we have $K \geq 7 \rho \kappa \log (MNK^2)$ and $\eta = \frac{\log (MNK^2)}{\mu N K}$, so $\eta L N \leq 1/7$. Using this and $N-B \leq N$, we can further simplify \eqref{eq:proof-thm-localRS-20}.
\begin{align}
    \norm{\vr_k}
    &\leq L\nu \sqrt{\log \frac{4MNK}{\delta}} \underbrace{\left ( \frac{12N(B^{3/2}-1)}{5B} + \frac{8(N^{3/2}-B^{3/2})}{3M^{1/2}} \right )}_{\eqdef \Phi} + \frac{2L \tau N (B-1)}{3}\notag\\
    &\quad + L \norm{\nabla F(\vy_{k,0})} \left( \frac{2\rho N(B-1)}{3} + \frac{7N(N-B)}{10} \right)\notag\\
    &\leq L\nu \Phi \sqrt{\log \frac{4MNK}{\delta}} + \frac{2L \tau N(B-1)}{3} + \frac{7L\rho N^2}{5} \norm{\nabla F(\vy_{k,0})},\label{eq:proof-thm-localRS-21}
\end{align}
which holds with probability at least $1-\frac{\delta}{K}$.
The bound \eqref{eq:proof-thm-localRS-21} holds for all $k \in [K]$ with probability $1-\delta$ if we apply the union bound over $k = 1, \dots, K$. Next, by $(a+b+c)^2 \leq 3a^2 + 3b^2 + 3c^2$, we have
\begin{align}
    \norm{\vr_k}^2
    \leq 
    3 L^2 \nu^2 \Phi^2 \log\frac{4MNK}{\delta} 
    + 
    \frac{4L^2 \tau^2 N^2 (B-1)^2}{3}
    +
    \frac{147L^2\rho^2 N^4}{25} \norm{\nabla F(\vy_{k,0})}^2, \label{eq:proof-thm-localRS-22}
\end{align}
which also holds for all $k \in [K]$ with probability at least $1-\delta$.

\paragraph{Getting a high-probability convergence rate.}
Given our high-probability bounds~\eqref{eq:proof-thm-localRS-21} and \eqref{eq:proof-thm-localRS-22}, we can substitute them to \eqref{eq:proof-thm-localRS-9} and get
\begin{align}
    &\, F(\vy_{k+1,0}) - F(\vy_{k,0})\notag\\
    \leq&\, (- \eta N + \eta^2 L N^2) \norm{\nabla F(\vy_{k,0})}^2 
    + \eta^2 \norm{\nabla F(\vy_{k,0})} \norm{\vr_k}
    + \eta^4 L \norm{\vr_k}^2\notag\\
    \leq&\, \left( - \eta N + \eta^2 L N^2 + \frac{7\eta^2 L\rho N^2}{5} + \frac{147\eta^4L^3 \rho^2 N^4}{25} \right) \norm{\nabla F(\vy_{k,0})}^2\notag\\
    &\,+ \eta^2 L\nu \Phi \sqrt{\log \frac{4MNK}{\delta}} \norm{\nabla F(\vy_{k,0})} + 3 \eta^4 L^3 \nu^2 \Phi^2 \log\frac{4MNK}{\delta}\notag\\
    &\,+ \frac{2\eta^2L\tau N (B-1)}{3} \norm{\nabla F(\vy_{k,0})} + \frac{4 \eta^4 L^3 \tau^2 N^2 (B-1)^2}{3}.\label{eq:proof-thm-localRS-23}
\end{align}
The following terms in the RHS of \eqref{eq:proof-thm-localRS-23} can be bounded using $ab \leq \frac{a^2}{2}+\frac{b^2}{2}$:
\begin{align}
    &\,\eta^2 L\nu \Phi \sqrt{\log \frac{4MNK}{\delta}} \norm{\nabla F(\vy_{k,0})}\notag\\
    =&\,\left ( \frac{\eta^{1/2} N^{1/2}}{\sqrt 8} \norm{\nabla F(\vy_{k,0})} \right )
    \left ( \frac{\sqrt 8 \eta^{3/2} L \nu \Phi}{N^{1/2}}\sqrt{\log \frac{4MNK}{\delta}}  \right )\notag\\
    \leq&\,\frac{\eta N}{16} \norm{\nabla F(\vy_{k,0})}^2 + \frac{4 \eta^3 L^2 \nu^2 \Phi^2}{N} \log \frac{4MNK}{\delta},\label{eq:proof-thm-localRS-24}\\
    &\,\frac{2\eta^2L\tau N(B-1)}{3} \norm{\nabla F(\vy_{k,0})}\notag\\
    =&\,\left ( \frac{\eta^{1/2} N^{1/2}}{\sqrt 8} \norm{\nabla F(\vy_{k,0})} \right ) \left ( \frac{2\sqrt{8}\eta^{3/2}L\tau N^{1/2} (B-1)}{3} \right )\notag\\
    \leq&\,\frac{\eta N}{16} \norm{\nabla F(\vy_{k,0})}^2 + \frac{16 \eta^3 L^2 \tau^2 N (B-1)^2}{9}.\label{eq:proof-thm-localRS-25}
\end{align}
Substituting \eqref{eq:proof-thm-localRS-24} and \eqref{eq:proof-thm-localRS-25} to \eqref{eq:proof-thm-localRS-23} results in
\begin{align}
    &\, F(\vy_{k+1,0}) - F(\vy_{k,0})\notag\\
    \leq&\,
    \left( - \frac{7}{8} \eta N + \eta^2 L N^2 + \frac{7\eta^2 L\rho N^2}{5} + \frac{147 \eta^4L^3 \rho^2 N^4}{25} \right) \norm{\nabla F(\vy_{k,0})}^2\notag\\
    &\,+ \frac{\eta^3 L^2 \nu^2 ( 4 + 3\eta L N) \Phi^2}{N} \log \frac{4MNK}{\delta}
    + \frac{4 \eta^3 L^2 \tau^2 (4+3 \eta L N) N (B-1)^2}{9}.\label{eq:proof-thm-localRS-26}
\end{align}
Again, we have $K \geq 7 \rho \kappa \log (MNK^2)$ and $\eta = \frac{\log (MNK^2)}{\mu N K}$, so $\eta L \rho N \leq \frac{1}{7}$. Since the inequality $-\frac{7}{8} z + \frac{12}{5} z^2 + \frac{147}{25}z^4 \leq -\frac{1}{2}z$ holds on $z \in [0,\frac{1}{7}]$, we have
\begin{align*}
    &\,-\frac{7}{8} \eta N + \eta^2 L N^2 + \frac{7\eta^2 L\rho N^2}{5} + \frac{147\eta^4L^3 \rho^2 N^4}{25}\\
    \leq&\,-\frac{7}{8} \eta N + \frac{12\eta^2 L\rho N^2}{5} + \frac{147\eta^4L^3 \rho^3 N^4}{25}
    \leq -\frac{1}{2} \eta N.
\end{align*}
Substituting this inequality to \eqref{eq:proof-thm-localRS-26}, together with $4+3\eta L N \leq \frac{31}{7} < \frac{9}{2}$, yields
\begin{align*}
    F(\vy_{k+1,0}) - F(\vy_{k,0})
    &\leq - \frac{\eta N}{2} \norm{\nabla F(\vy_{k,0})}^2
    +
    \frac{9 \eta^3 L^2 \nu^2 \Phi^2}{2 N}\log\frac{4MNK}{\delta}+2 \eta^3 L^2 \tau^2 N (B-1)^2.
\end{align*}
We now recall that $F$ is $\mu$-P{\L}, so $\norm{\nabla F(\vy_{k,0})}^2 \geq 2 \mu (F(\vy_{k,0})-F^*)$:
\begin{align}
    F(\vy_{k+1,0}) - F^*
    &\leq (1 - \eta \mu N) (F(\vy_{k,0}) - F^*) \notag\\
    &\qquad+
    \frac{9 \eta^3 L^2 \nu^2 \Phi^2}{2 N}\log\frac{4MNK}{\delta}
    +
    2 \eta^3 L^2 \tau^2 N (B-1)^2.\label{eq:proof-thm-localRS-27}
\end{align}
Recall that \eqref{eq:proof-thm-localRS-27} holds for all $k \in [K]$, with probability $1-\delta$. Therefore, by unrolling the inequality,
\begin{align}
    F(\vy_{K,\frac{N}{B}}) - F^*
    &\leq (1 - \eta \mu N)^K (F(\vy_0) - F^*) \notag\\
    &\qquad+ \left (
    \frac{9 \eta^3 L^2 \nu^2 \Phi^2}{2 N}\log\frac{4MNK}{\delta}
    +
    2 \eta^3 L^2 \tau^2 N (B-1)^2
    \right ) \sum_{k=0}^{K-1} (1-\eta \mu N)^k   \notag\\
    &\leq (1 - \eta \mu N)^K (F(\vy_0) - F^*) \notag\\
    &\qquad+ 
    \frac{9 \eta^2 L^2 \nu^2 \Phi^2}{2 \mu N^2}\log\frac{4MNK}{\delta}
    +
    \frac{2 \eta^2 L^2 \tau^2 (B-1)^2}{\mu}.
    \label{eq:proof-thm-localRS-28}
\end{align}
Recall that $\Phi \defeq \frac{12N(B^{3/2}-1)}{5B} + \frac{8(N^{3/2}-B^{3/2})}{3M^{1/2}}$, hence
\begin{align*}
    \Phi^2 \leq \frac{288 N^2 (B^{3/2}-1)^2}{25 B^2} + \frac{128(N^{3/2}-B^{3/2})^2}{9M}.
\end{align*}
Substituting this inequality and also $\eta = \frac{\log (MNK^2)}{\mu N K}$ gives
\begin{align*}
    &\,F(\vy_{K,\frac{N}{B}}) - F^*\\
    \leq&\, \frac{F(\vy_0) - F^*}{MNK^2} + \frac{2 L^2 \tau^2 (B-1)^2}{\mu^3 N^2 K^2} \log^2(MNK^2) \notag\\
    &\quad+ 
    \frac{9 L^2 \nu^2}{2 \mu^3 N^4 K^2}\log\frac{4MNK}{\delta} \log^2(MNK^2)
    \left( \frac{288 N^2 (B^{3/2}-1)^2}{25 B^2} + \frac{128(N^{3/2}-B^{3/2})^2}{9M} \right)\notag\\
    =&\, \frac{F(\vy_0) - F^*}{MNK^2} 
    + 
    \tilde {\mc O} \left ( \frac{L^2 \tau^2}{\mu^3} \frac{B^2}{N^2K^2} \right )
    + 
    \tilde {\mc O} \left ( \frac{L^2 \nu^2}{\mu^3} \frac{B}{N^2K^2} \right )
    +
    \tilde {\mc O} \left ( \frac{L^2 \nu^2}{\mu^3} \frac{1}{MNK^2} \right ),
\end{align*}
with probability at least $1-\delta$. This finishes the proof.

\subsection{Proof of upper bound for minibatch $\randshuf$ with $\syncshuf$ (Theorem~\ref{thm:minibatchRS-sync})}
\label{sec:proof-thm-minibatchRS-sync}
The first part (``One epoch as one step of GD plus noise'') of the proof is identical to that of Theorem~\ref{thm:minibatchRS}. We start from the second part.

\paragraph{Bounding noise term using concentration.}
It is left to bound $\norm{\vr_k}$. As seen in \eqref{eq:proof-thm-minibatchRS-4}, we have
\begin{align}
    \norm{\vr_{k}}
    \leq LB (1+\eta LB)^{N/B}\sum_{i=1}^{N/B-1} \norm{\sum_{j=1}^i \vg_{j}}.\label{eq:proof-thm-minibatchRS-sync-1}
\end{align}
Recall from the theorem statement that 
$K \geq 6 \kappa \log (MNK^2)$ and $\eta = \frac{\log (MNK^2)}{\mu N K}$. This means that
\begin{equation}
\label{eq:proof-thm-minibatchRS-sync-2}
    (1+\eta L B)^{N/B} 
    = 
    \left( 1+\frac{\kappa B \log(MNK^2)}{NK} \right)^{N/B}
    \leq 
    \left( 1+\frac{B}{6 N} \right)^{N/B}
    \leq 
    e^{1/6}.
\end{equation}
Next, we bound the norm of
\begin{equation}
\label{eq:proof-thm-minibatchRS-sync-3}
    \sum_{j=1}^i \vg_j = \frac{1}{M} \sum_{m=1}^M \sum_{j=1}^{iB} \nabla f^m_{\sigma^m_k(j)} (\vx_{k,0}),
\end{equation}
exploiting our modification $\syncshuf$ as well as Lemma~\ref{lem:concineq}.
For each $\nabla f^m_{\sigma^m_k(j)} (\vx_{k,0})$, we first add and subtract its corresponding $\nabla \bar f_{\sigma^m_k(j)} (\vx_{k,0})$, where $\bar f_i \defeq \frac{1}{M} \sum_{m=1}^M f^m_i$ as defined in Assumption~\ref{assm:inter-comp-dev}. 
This way, \eqref{eq:proof-thm-minibatchRS-sync-3} can be decomposed into two sums $\sum_{j=1}^i \vg_j = \frac{1}{M}(\vp_i + \vq_i)$, where
\begin{align*}
    \vp_i &\defeq \sum_{m=1}^M \sum_{j=1}^{iB} \nabla \bar f_{\sigma^m_k(j)} (\vx_{k,0}),\\
    \vq_i &\defeq \sum_{m=1}^M \sum_{j=1}^{iB} \nabla f^m_{\sigma^m_k(j)} (\vx_{k,0}) - \nabla \bar f_{\sigma^m_k(j)} (\vx_{k,0}).
\end{align*}
Using this decomposition, we will derive high-probability bounds for $\norm{\vp_i}$ and $\norm{\vq_i}$.

To simplify expressions to follow, we decompose $iBM$ (i.e., the total number of component gradients that are summed up) into a multiple of $N$ and the remainder. Let
\begin{align*}
    \alpha(i) \defeq \left \lfloor \frac{iBM}{N} \right \rfloor,~~
    \beta(i) \defeq iBM - N \alpha(i),
\end{align*}
so that $iBM$ is decomposed into $N\alpha(i)$ and the remainder $0 \leq \beta(i) < N$. Using this new notation, we can write $\vp_i$ as
\begin{align}
\label{eq:proof-thm-minibatchRS-sync-4}
    \vp_i = \sum_{m=1}^M \sum_{j=1}^{\frac{N\alpha(i)}{M}} \nabla \bar f_{\sigma^m_k(j)} (\vx_{k,0}) + 
    \sum_{m=1}^M \sum_{j=\frac{N\alpha(i)}{M}+1}^{iB} \nabla \bar f_{\sigma^m_k(j)} (\vx_{k,0}).
\end{align}
Here, recall that with $\syncshuf$, we defined $\sigma^m_k(j) \defeq \sigma((j+\frac{N}{M} \pi(m)) \bmod N)$. With this choice of ``shifted'' permutations, one can notice that 
$\{\sigma^m_k(j)\}_{m=1, j=1}^{M,\frac{N}{M}} = [N]$, meaning that adding $\bar f_{\sigma^m_k(j)}$ for $m \in [M]$ and $j \in [N/M]$ results in the sum of all $N$ $\bar f_i$'s. In fact, this happens if we sum over $m \in [M]$ and any $N/M$ consecutive $j$'s.
From this observation and $F = \frac{1}{N}\sum_{i=1}^N \bar f_i$, \eqref{eq:proof-thm-minibatchRS-sync-4} can be written as
\begin{align}
\label{eq:proof-thm-minibatchRS-sync-5}
    \vp_i = N\alpha(i) \nabla F(\vx_{k,0}) + 
    \sum_{m=1}^M \sum_{j=\frac{N\alpha(i)}{M}+1}^{iB} \nabla \bar f_{\sigma( (j+\frac{Nm}{M}) \bmod N)} (\vx_{k,0}).
\end{align}
Assume for now that $\beta(i) > 0$, i.e., $N\alpha(i) < iBM$.
The summation in the second term of RHS in \eqref{eq:proof-thm-minibatchRS-sync-5} is a without-replacement sum (note that the indices $j + \frac{Nm}{M}$ do not overlap) of $\beta(i)$ terms.
Hence, it is equal in distribution to $\sum_{j=1}^{\beta(i)} \nabla \bar f_{\sigma(j)} (\vx_{k,0})$.
Also, from Assumption~\ref{assm:intra-dev}, it can be easily checked that for any $i \in [N]$
\begin{equation*}
    \norm{\nabla \bar f_i(\vx_{k,0}) - \nabla F(\vx_{k,0})} \leq \nu.
\end{equation*}
These observations mean that we can apply Lemma~\ref{lem:concineq} to get a concentration bound, with $M \leftarrow 1$, $\vv^1_i \leftarrow \nabla \bar f_i(\vx_{k,0})$, $n \leftarrow \beta(i)$, and $\delta \leftarrow \frac{B\delta}{2NK}$.
By Lemma~\ref{lem:concineq}, with probability at least $1-\frac{B \delta}{2NK}$ (over the randomness in $\sigma$), we have
\begin{align}
\label{eq:proof-thm-minibatchRS-sync-6}
    \norm{\frac{1}{\beta(i)} \sum_{m=1}^M \sum_{j=\frac{N\alpha(i)}{M}+1}^{iB} 
    \nabla \bar f_{\sigma( (j+\frac{Nm}{M}) \bmod N)} (\vx_{k,0}) - \nabla F(\vx_{k,0})}
    \leq \nu \sqrt{\frac{8\log\frac{4NK}{B\delta}}{\beta(i)}}.
\end{align}
Combining \eqref{eq:proof-thm-minibatchRS-sync-5} and \eqref{eq:proof-thm-minibatchRS-sync-6}, we get the following upper bound on $\norm{\vp_i}$, which holds with probability at least $1-\frac{B \delta}{2NK}$.
\begin{align}
    \norm{\vp_i} 
    &\leq \norm{iBM \nabla F(\vx_{k,0})} 
    + \norm{\sum_{m=1}^M \sum_{j=\frac{N\alpha(i)}{M}+1}^{iB} \nabla \bar f_{\sigma( (j+\frac{Nm}{M}) \bmod N)} (\vx_{k,0}) - \beta(i) \nabla F(\vx_{k,0})}\notag\\
    &\leq iBM \norm{\nabla F(\vx_{k,0})} + \nu \sqrt{8 \beta(i)  \log\frac{4NK}{B\delta}}\notag\\
    &\leq iBM \norm{\nabla F(\vx_{k,0})} + \nu \sqrt{N} \sqrt{8 \log\frac{4NK}{B\delta}}\label{eq:proof-thm-minibatchRS-sync-7},
\end{align}
where the last inequality used $\beta(i) < N$. Also recall that we assumed $\beta(i) > 0$ in order to use Lemma~\ref{lem:concineq} and derive \eqref{eq:proof-thm-minibatchRS-sync-7}. However, note that even with $\beta(i) = 0$, the bound \eqref{eq:proof-thm-minibatchRS-sync-7} trivially holds.

We next bound $\norm{\vq_i}$. This time, we will apply Lemma~\ref{lem:concineq} to the permutation $\pi$ over the local machines. To do this, we will condition on a fixed instantiation of the permutation $\sigma$ and derive a high-probability bound that holds with conditional probability at least $1-\frac{B \delta}{2NK}$. The conditional probability is at least $1-\frac{B \delta}{2NK}$ irrespective of the choice of $\sigma$, so we can conclude that the (unconditional) probability that our bound holds is also at least $1-\frac{B \delta}{2NK}$.

Without loss of generality, choose the instantiation $\sigma(l) = l$ for all $l \in [N]$. With this $\sigma$, we have $\sigma^m_k(j) \defeq (j+\frac{N}{M} \pi(m)) \bmod N$, so the vector $\vq_i$ reads
\begin{align}
\label{eq:proof-thm-minibatchRS-sync-8}
    \vq_i = \sum_{m=1}^M \sum_{j=1}^{iB} \nabla f^m_{(j+\frac{N}{M} \pi(m)) \bmod N} (\vx_{k,0}) - \nabla \bar f_{(j+\frac{N}{M} \pi(m)) \bmod N} (\vx_{k,0}).
\end{align}
Let us consider rewriting this summation as the sum over $l \in [N]$, where $l$ appears in the subscript of the component functions. One can check that 
\begin{equation*}
    l = \left (j+\frac{N}{M} \pi(m) \right) \bmod N
    ~~\iff~~
    \frac{N}{M} \mid (l-j) \text { and }
    \pi(m) = (l-j)\frac{M}{N} \bmod M,
\end{equation*}
where $a \mid b$ denotes ``$a$ divides $b$.''
From this, we can rewrite \eqref{eq:proof-thm-minibatchRS-sync-8} as
\begin{align}
\label{eq:proof-thm-minibatchRS-sync-9}
    \vq_i = \sum_{l=1}^N 
    \underbrace{\sum_{\substack{j \in [iB]\\\frac{N}{M} \mid (l-j)}}
    \left (
    \nabla f_l^{\pi^{-1}\left ( (l-j)\frac{M}{N} \bmod M \right )} (\vx_{k,0}) - \nabla \bar f_l (\vx_{k,0})
    \right )}_{\eqdef \vq_{i,l}}.
\end{align}

By the same reasoning above and below \eqref{eq:proof-thm-minibatchRS-sync-5}, we can see from \eqref{eq:proof-thm-minibatchRS-sync-9} that for a given index $l \in [N]$, the cardinality of the set $\mc J_l \defeq \{j \in [iB] : \frac{N}{M} \mid (l-j) \}$ is either $\alpha(i)$ or $\alpha(i)+1$. From this, we notice that each $\vq_{i,l}$ is a without-replacement sum of $\alpha(i)$ or $\alpha(i)+1$ terms. 
For now, suppose $\alpha(i) > 0$. For each $\vq_{i,l}$, we can apply Lemma~\ref{lem:concineq} to it, and show that with probability (conditioned on the instantiation $\sigma(l) = l$) at least $1-\frac{B\delta}{2N^2K}$, we have
\begin{align*}
    \norm{\vq_{i,l}}
    \leq
    \begin{cases}
    \lambda \sqrt{8(\alpha(i))\log\frac{4N^2K}{B\delta}} & \text{ if } |\mc J_l| = \alpha(i),\\
    \lambda \sqrt{8(\alpha(i)+1)\log\frac{4N^2K}{B\delta}} & \text{ if } |\mc J_l| = \alpha(i)+1.
    \end{cases}
\end{align*}
Note that cases in the RHS are all bounded from above by $\lambda \sqrt{16\alpha(i)\log\frac{4N^2K}{B\delta}}$. Applying union bound on all $l \in [N]$, we get that with probability at least $1-\frac{B\delta}{2NK}$, we have
\begin{align}
\label{eq:proof-thm-minibatchRS-sync-10}
    \norm{\vq_i} \leq \lambda N \sqrt{16\alpha(i)\log\frac{4N^2K}{B\delta}}
    \leq \lambda N \sqrt{\frac{16iBM}{N} \log\frac{4N^2K}{B\delta}}
    = 4\lambda \sqrt{iBMN \log\frac{4N^2K}{B\delta}}.
\end{align}
Now consider the case $\alpha(i) = 0$. Recall from the definition $\alpha(i) \defeq \left \lfloor \frac{iBM}{N} \right \rfloor$ that $\alpha(i) = 0$ implies $iBM < N$. In this case, $\vq_{i,l} = \zeros$ for $N-iBM$ indices $l$ satisfying $|\mc J_l| = 0$, and $\norm{\vq_{i,l}} \leq \lambda$ for the remaining $iBM$ $l$'s satisfying $|\mc J_l| = 1$. Summing up, $\norm{\vq_i}$ is bounded from above by $\lambda iBM$, which is in fact less than the upper bound in \eqref{eq:proof-thm-minibatchRS-sync-10}. Therefore, the bound \eqref{eq:proof-thm-minibatchRS-sync-10} holds even for $\alpha(i) = 0$.

Recall that our goal was to find a bound on the norm of $\sum_{j=1}^i \vg_j = \frac{1}{M}(\vp_i + \vq_i)$. From the high-probability bounds obtained in \eqref{eq:proof-thm-minibatchRS-sync-7} and \eqref{eq:proof-thm-minibatchRS-sync-10}, with probability at least $1-\frac{B\delta}{NK}$,
\begin{align}
\label{eq:proof-thm-minibatchRS-sync-11}
    \norm{\sum_{j=1}^i \vg_j} \leq iB \norm{\nabla F(\vx_{k,0})} + \frac{\nu \sqrt{N}}{M} \sqrt{8 \log \frac{4NK}{B\delta}} + 4 \lambda \sqrt{\frac{iBN}{M}} \sqrt{\log \frac{4N^2K}{B\delta}}.
\end{align}

We can now substitute \eqref{eq:proof-thm-minibatchRS-sync-2} and \eqref{eq:proof-thm-minibatchRS-sync-11} to \eqref{eq:proof-thm-minibatchRS-sync-1} to get
\begin{align}
    \norm{\vr_k}
    &\leq
    e^{1/6} LB \sum_{i=1}^{N/B-1} \left ( iB \norm{\nabla F(\vx_{k,0})} + \frac{\nu \sqrt{N}}{M} \sqrt{8 \log \frac{4NK}{B\delta}} + 4 \lambda \sqrt{\frac{iBN}{M}} \sqrt{\log \frac{4N^2K}{B\delta}} \right )\notag\\
    &\leq e^{1/6} LB^2 \norm{\nabla F(\vx_{k,0})} \sum_{i=1}^{N/B-1} i 
    + \frac{e^{1/6} \sqrt{8} L \nu N^{1/2}(N-B)}{M} \sqrt{\log \frac{4NK}{B\delta}}\notag\\
    &\quad+ \frac{4 e^{1/6} L\lambda B^{3/2} N^{1/2}}{M^{1/2}} \int_{1}^{N/B} \sqrt{t} dt \sqrt{\log \frac{4N^2K}{B\delta}}\notag\\
    & \leq LN(N-B) \norm{\nabla F(\vx_{k,0})} + \frac{7 L \nu N^{1/2}(N-B)}{2M} \sqrt{\log \frac{4NK}{B\delta}}\notag\\
    &\quad+ \frac{7 L\lambda N^{1/2}(N^{3/2}-B^{3/2})}{2M^{1/2}} \sqrt{\log \frac{4N^2K}{B\delta}},
    \label{eq:proof-thm-minibatchRS-sync-12}
\end{align}
which holds with probability at least $1-\frac{\delta}{K}$, due to the union bound over $i = 1, \dots, N/B-1$.
The bound \eqref{eq:proof-thm-minibatchRS-sync-12} holds for all $k \in [K]$ with probability $1-\delta$ if we apply the union bound over $k = 1, \dots, K$. Next, by $(a+b+c)^2 \leq 3a^2 + 3b^2 + 3c^2$, we have
\begin{align}
    \norm{\vr_k}^2 
    &\leq 
    3 L^2N^2(N-B)^2 \norm{\nabla F(\vx_{k,0})}^2
    +
    \frac{147 L^2 \nu^2 N(N-B)^2}{4M^2} \log \frac{4NK}{B\delta}\notag\\
    &\quad+ \frac{147 L^2\lambda^2 N(N^{3/2}-B^{3/2})^2}{4M} \log \frac{4N^2K}{B\delta},
    \label{eq:proof-thm-minibatchRS-sync-13}
\end{align}
which also holds for all $k \in [K]$ with probability at least $1-\delta$.

\paragraph{Getting a high-probability convergence rate.}
Given our high-probability bounds~\eqref{eq:proof-thm-minibatchRS-sync-12} and \eqref{eq:proof-thm-minibatchRS-sync-13}, we can substitute them to \eqref{eq:proof-thm-minibatchRS-9} and get
\begin{align}
    &\, F(\vx_{k+1,0}) - F(\vx_{k,0})\notag\\
    \leq&\, (- \eta N + \eta^2 L N^2) \norm{\nabla F(\vx_{k,0})}^2 
    + \eta^2 \norm{\nabla F(\vx_{k,0})} \norm{\vr_k}
    + \eta^4 L \norm{\vr_k}^2\notag\\
    \leq&\, \left( - \eta N + \eta^2 L N^2 + \eta^2 LN(N-B) + 3 \eta^4 L^3N^2(N-B)^2 \right) \norm{\nabla F(\vx_{k,0})}^2\notag\\
    &\,+ 
    \frac{7 \eta^2 L \nu N^{1/2}(N-B)}{2M} \sqrt{\log \frac{4NK}{B\delta}} \norm{\nabla F(\vx_{k,0})}
    +
    \frac{147 \eta^4 L^3 \nu^2 N(N-B)^2}{4M^2} \log \frac{4NK}{B\delta}
    \notag\\
    &\,+
    \frac{7 \eta^2 L\lambda N^{1/2}(N^{3/2}-B^{3/2})}{2M^{1/2}} \sqrt{\log \frac{4N^2K}{B\delta}} \norm{\nabla F(\vx_{k,0})}\notag\\
    &\,+ \frac{147 \eta^4 L^3 \lambda^2 N(N^{3/2}-B^{3/2})^2}{4M} \log \frac{4N^2K}{B\delta}.\label{eq:proof-thm-minibatchRS-sync-14}
\end{align}
Two terms in the RHS of \eqref{eq:proof-thm-minibatchRS-sync-14} can be bounded using $ab \leq \frac{a^2}{2}+\frac{b^2}{2}$:
\begin{align}
    &\,\frac{7 \eta^2 L \nu N^{1/2}(N-B)}{2M} \sqrt{\log \frac{4NK}{B\delta}} \norm{\nabla F(\vx_{k,0})}\notag\\
    =&\, \left ( \frac{\eta^{1/2}N^{1/2}}{2\sqrt 2} \norm{\nabla F(\vx_{k,0})} \right ) 
    \left ( \frac{7\sqrt{2}\eta^{3/2}L\nu(N-B)}{M} \sqrt{\log\frac{4NK}{B\delta}} \right )\notag\\
    \leq&\, \frac{\eta N}{16} \norm{\nabla F(\vx_{k,0})}^2
    + \frac{49 \eta^3 L^2 \nu^2 (N-B)^2}{M^2} \log\frac{4NK}{B\delta},\label{eq:proof-thm-minibatchRS-sync-15}\\
    &\,\frac{7 \eta^2 L\lambda N^{1/2}(N^{3/2}-B^{3/2})}{2M^{1/2}} \sqrt{\log \frac{4N^2K}{B\delta}} \norm{\nabla F(\vx_{k,0})}\notag\\
    =&\, \left ( \frac{\eta^{1/2}N^{1/2}}{2\sqrt 2} \norm{\nabla F(\vx_{k,0})} \right ) 
    \left ( \frac{7\sqrt{2}\eta^{3/2}L\lambda(N^{3/2}-B^{3/2})}{M^{1/2}} \sqrt{\log\frac{4N^2K}{B\delta}} \right )\notag\\
    \leq&\, \frac{\eta N}{16} \norm{\nabla F(\vx_{k,0})}^2
    + \frac{49 \eta^3 L^2 \lambda^2 (N^{3/2}-B^{3/2})^2}{M} \log\frac{4N^2K}{B\delta},\label{eq:proof-thm-minibatchRS-sync-16}
\end{align}
Putting inequalities \eqref{eq:proof-thm-minibatchRS-sync-15} and \eqref{eq:proof-thm-minibatchRS-sync-16} to \eqref{eq:proof-thm-minibatchRS-sync-14} and noting $N-B \leq N$ gives
\begin{align}
    F(\vx_{k+1,0}) - F(\vx_{k,0})
    &\leq \left (- \frac{7}{8} \eta N + 2\eta^2 L N^2 + 3 \eta^4 L^3N^4 \right ) \norm{\nabla F(\vx_{k,0})}^2\notag\\
    &\quad\quad+
    \frac{49 \eta^3 L^2 (4+3\eta LN) \nu^2 (N-B)^2}{4M^2} \log\frac{4NK}{B\delta}\notag\\
    &\quad\quad+
    \frac{49 \eta^3 L^2 (4+3\eta LN) \lambda^2 (N^{3/2}-B^{3/2})^2}{4M} \log\frac{4N^2K}{B\delta}.\label{eq:proof-thm-minibatchRS-sync-17}
\end{align}
Recall from $K \geq 6 \kappa \log (M^2NK^2)$ and $\eta = \frac{\log (M^2NK^2)}{\mu N K}$ that $\eta L N \leq \frac{1}{6}$. Since the inequality $-\frac{7}{8}z + 2z^2 + 3z^4 \leq -\frac{1}{2}z$ holds on $z \in [0,\frac{1}{6}]$, we have
\begin{equation*}
    - \frac{7}{8} \eta N + 2\eta^2 L N^2 + 3 \eta^4 L^3N^4 \leq - \frac{1}{2} \eta N.
\end{equation*}
Applying this bound to \eqref{eq:proof-thm-minibatchRS-sync-17} results in
\begin{align*}
    F(\vx_{k+1,0}) - F(\vx_{k,0})
    &\leq - \frac{\eta N}{2} \norm{\nabla F(\vx_{k,0})}^2
    +
    \frac{56 \eta^3 L^2 \nu^2 (N-B)^2}{M^2} \log\frac{4NK}{B\delta}\\
    &\quad+
    \frac{56 \eta^3 L^2 \lambda^2 (N^{3/2}-B^{3/2})^2}{M} \log\frac{4N^2K}{B\delta}
\end{align*}
We now recall that $F$ is $\mu$-P{\L}, so $\norm{\nabla F(\vx_{k,0})}^2 \geq 2 \mu (F(\vx_{k,0})-F^*)$:
\begin{align}
    F(\vx_{k+1,0}) - F^*
    &\leq (1 - \eta \mu N) (F(\vx_{k,0}) - F^*) 
    +
    \frac{56 \eta^3 L^2 \nu^2 (N-B)^2}{M^2} \log\frac{4NK}{B\delta}\notag\\
    &\quad+
    \frac{56 \eta^3 L^2 \lambda^2 (N^{3/2}-B^{3/2})^2}{M} \log\frac{4N^2K}{B\delta}
    \label{eq:proof-thm-minibatchRS-sync-18}
\end{align}
Recall that \eqref{eq:proof-thm-minibatchRS-sync-18} holds for all $k \in [K]$, with probability $1-\delta$. Therefore, by unrolling the inequality, and using $\sum_{k=0}^{K-1} (1-\eta \mu N)^k \leq \frac{1}{\eta \mu N}$, we get
\begin{align}
    F(\vx_{K,\frac{N}{B}}) - F^*
    &\leq (1 - \eta \mu N)^K (F(\vx_0) - F^*)
    +
    \frac{56 \eta^2 L^2 \nu^2 (N-B)^2}{\mu M^2 N} \log\frac{4NK}{B\delta}\notag\\
    &\quad+
    \frac{56 \eta^2 L^2 \lambda^2 (N^{3/2}-B^{3/2})^2}{\mu M N} \log\frac{4N^2K}{B\delta}
    \label{eq:proof-thm-minibatchRS-sync-19}
\end{align}
Lastly, substituting $\eta = \frac{\log (M^2NK^2)}{\mu N K}$ to \eqref{eq:proof-thm-minibatchRS-sync-19} gives
\begin{align*}
    F(\vx_{K,\frac{N}{B}}) - F^*
    &\leq \frac{F(\vx_0) - F^*}{M^2NK^2} 
    + 
    \frac{56 L^2 \nu^2 (N-B)^2 \log\frac{4NK}{B\delta} \log^2(M^2 NK^2)}{\mu^3 M^2 N^3 K^2} \notag\\
    &\quad+
    \frac{56 L^2 \lambda^2 (N^{3/2}-B^{3/2})^2 \log\frac{4N^2K}{B\delta} \log^2(M^2 NK^2)}{\mu^3 M N^3 K^2} \\
    &= \frac{F(\vx_0) - F^*}{M^2NK^2}
    + 
    \tilde {\mc O} \left ( \frac{L^2}{\mu^3} \left ( \frac{\nu^2}{M^2NK^2} + \frac{\lambda^2}{MK^2} \right ) \right ).
\end{align*}

\subsection{Proof of upper bound for local $\randshuf$ with $\syncshuf$ (Theorem~\ref{thm:localRS-sync})}
\label{sec:proof-thm-localRS-sync}
The first part (``One epoch as one step of GD plus noise'') of the proof is identical to that of Theorem~\ref{thm:localRS}. The first part defines our ``noise'' $\vr_k$ as the sum of three terms $\vr_k \defeq \vr_{k,1}+\vr_{k,2}-\eta\vr_{k,3}$. We start from the second part.

\paragraph{Bounding noise terms using concentration.}
We next bound $\norm{\vr_k}$ by bounding each $\norm{\vr_{k,1}}$, $\norm{\vr_{k,2}}$, and $\norm{\vr_{k,3}}$. 
We have already seen from \eqref{eq:proof-thm-localRS-10}, \eqref{eq:proof-thm-localRS-11}, \eqref{eq:proof-thm-localRS-12}, \eqref{eq:proof-thm-localRS-13}, and \eqref{eq:proof-thm-localRS-14} in Appendix~\ref{sec:proof-thm-localRS} that
\begin{align}
    \norm{\vr_{k,1}}
    &\leq \frac{e^{1/7}L}{M} \sum_{l=1}^{N/B} \sum_{m=1}^M \sum_{i=(l-1)B+1}^{lB-1} \norm{\sum_{t=(l-1)B+1}^i \nabla f^m_{\sigma^m_k(t)} (\vy_{k,0})},\label{eq:proof-thm-localRS-sync-1}\\
    \norm{\vr_{k,2}}
    &\leq \frac{7 LB}{5M} \sum_{l=1}^{N/B-1}
    \norm{\sum_{m=1}^M \sum_{t=1}^{lB} \nabla f^m_{\sigma^m_k(t)} (\vy_{k,0})},\label{eq:proof-thm-localRS-sync-2}\\
    \norm{\vr_{k,3}}
    &\leq \frac{11 L^2B }{7M}
    \sum_{l=1}^{N/B-1} \sum_{j=1}^l \sum_{m=1}^M \sum_{i=(j-1)B+1}^{jB-1} \norm{\sum_{t=(j-1)B+1}^i \nabla f^m_{\sigma^m_k(t)} (\vy_{k,0})}.\label{eq:proof-thm-localRS-sync-3}
\end{align}
As in the previous subsections, the key is to bound the norm of the partial sums of $\nabla f^m_{\sigma^m_k(t)} (\vy_{k,0})$ using Lemma~\ref{lem:concineq}. For the summations appearing in \eqref{eq:proof-thm-localRS-sync-1} and \eqref{eq:proof-thm-localRS-sync-3}, we apply Lemma~\ref{lem:concineq} in the same way as \eqref{eq:proof-thm-localRS-15}. For any $i$ satisfying $(j-1)B+1 \leq i \leq jB-1$, where $j \in [N/B]$, and for any $m \in [M]$, the following bound holds with probability at least $1 - \frac{\delta}{3MNK}$:
\begin{align}
    \norm{\sum_{t=(j-1)B+1}^i \nabla f^m_{\sigma^m_k(t)} (\vy_{k,0})}
    &\leq
    \nu \sqrt{8(i-(j-1)B) \log \tfrac{6MNK}{\delta}}\notag\\
    &\quad+ (i-(j-1)B) \norm{\nabla F^m(\vy_{k,0})}.\label{eq:proof-thm-localRS-sync-4}
\end{align}
For the summation that appear in \eqref{eq:proof-thm-localRS-sync-2}, we use the techniques from Appendix~\ref{sec:proof-thm-minibatchRS-sync}. For each $l \in [N/B-1]$, we can similarly decompose $\sum_{m=1}^M \sum_{t=1}^{lB} \nabla f^m_{\sigma^m_k(t)} (\vy_{k,0})$ into $\vp_l + \vq_l$, where
\begin{align*}
    \vp_l &\defeq \sum_{m=1}^M \sum_{t=1}^{lB} \nabla \bar f_{\sigma^m_k(t)} (\vy_{k,0}),\\
    \vq_l &\defeq \sum_{m=1}^M \sum_{t=1}^{lB} \nabla f^m_{\sigma^m_k(t)} (\vy_{k,0}) - \nabla \bar f_{\sigma^m_k(t)} (\vy_{k,0}).
\end{align*}
As done in Appendix~\ref{sec:proof-thm-minibatchRS-sync}, we can follow the same steps and show a high-probability bound, which is a slightly different version of \eqref{eq:proof-thm-minibatchRS-sync-7}: 
with probability at least $1-\frac{B \delta}{3NK}$,
\begin{align}
    \norm{\vp_l}
    \leq lBM \norm{\nabla F(\vy_{k,0})} + \nu \sqrt{N} \sqrt{8 \log\frac{6NK}{B\delta}}\label{eq:proof-thm-localRS-sync-5}.
\end{align}
Similarly, for $\norm{\vq_l}$ we can show a slight modification of \eqref{eq:proof-thm-minibatchRS-sync-10}:
\begin{align}
    \norm{\vq_l} \leq 4\lambda \sqrt{lBMN \log\frac{6N^2K}{B\delta}},
    \label{eq:proof-thm-localRS-sync-6}
\end{align}
which holds with probability at least $1-\frac{B\delta}{3NK}$.
Combining \eqref{eq:proof-thm-localRS-sync-5} and \eqref{eq:proof-thm-localRS-sync-6}, with probability at least $1-\frac{2B\delta}{3NK}$, we have
\begin{align}
    \norm{\sum_{m=1}^M \sum_{t=1}^{lB} \nabla f^m_{\sigma^m_k(t)} (\vy_{k,0})}
    &\leq lBM \norm{\nabla F(\vy_{k,0})} + \nu \sqrt{N} \sqrt{8 \log\frac{6NK}{B\delta}}\notag\\
    &\quad+ 4\lambda \sqrt{lBMN \log\frac{6N^2K}{B\delta}}.
    \label{eq:proof-thm-localRS-sync-7}
\end{align}
By applying the union bound, with probability at least $1-\frac{\delta}{K}$, the bound \eqref{eq:proof-thm-localRS-sync-4} holds for all $m \in [M]$ and $i \in \bigcup_{j=1}^{N/B} [(j-1)B+1:jB-1]$, and the bound \eqref{eq:proof-thm-localRS-sync-7} holds for all $l \in [N/B-1]$.

We now substitute the bounds~\eqref{eq:proof-thm-localRS-sync-4} and \eqref{eq:proof-thm-localRS-sync-7} to \eqref{eq:proof-thm-localRS-sync-1}, \eqref{eq:proof-thm-localRS-sync-2}, and \eqref{eq:proof-thm-localRS-sync-3} to get upper bounds for $\norm{\vr_{k,1}}$, $\norm{\vr_{k,2}}$, and $\norm{\vr_{k,3}}$, respectively. For $\norm{\vr_{k,1}}$ and $\norm{\vr_{k,3}}$, we can apply the same calculations as in \eqref{eq:proof-thm-localRS-17} and \eqref{eq:proof-thm-localRS-19}, modulo the fact that Assumption~\ref{assm:inter-dev} is now implied by Assumption~\ref{assm:inter-comp-dev}, with constants $\tau = \lambda$ and $\rho = 1$. We obtain
\begin{align}
    \norm{\vr_{k,1}} &\leq 
    \frac{24 L\nu N(B^{3/2}-1)}{11B} \sqrt{\log\frac{6MNK}{\delta}}
    + 
    \frac{3LN(B-1)}{5} \left(\lambda + \norm{\nabla F(\vy_{k,0})} \right ),\label{eq:proof-thm-localRS-sync-8}\\
    \norm{\vr_{k,3}} &\leq
    \frac{3 L^2 \nu N(N-B)(B^{3/2}-1)}{2B} \sqrt{\log\frac{6MNK}{\delta}}\notag\\
    &\quad~
    + \frac{2 L^2 N(N-B)(B-1)}{5} (\lambda + \norm{\nabla F(\vy_{k,0})}).\label{eq:proof-thm-localRS-sync-9}
\end{align}
For $\norm{\vr_{k,2}}$, we have
\begin{align}
    \norm{\vr_{k,2}}
    &\leq \frac{7LB}{5M} \sum_{l=1}^{N/B-1} \left ( lBM \norm{\nabla F(\vy_{k,0})} + \nu \sqrt{N} \sqrt{8 \log\frac{6NK}{B\delta}}
    +4\lambda \sqrt{lBMN \log\frac{6N^2K}{B\delta}} \right)\notag\\
    &=
    \frac{7LB^2}{5} \left (\sum_{l=1}^{N/B-1} l\right) \norm{\nabla F(\vy_{k,0})}
    +
    \frac{7\sqrt{8} L\nu N^{1/2}(N-B)}{5M} \sqrt{\log\frac{6NK}{B\delta}}\notag\\
    &\quad
    +
    \frac{28 L\lambda B^{3/2} N^{1/2}}{5M^{1/2}} \left ( \sum_{l=1}^{N/B-1} \sqrt l \right) \sqrt{\log \frac{6N^2K}{B\delta}}\notag\\
    &\leq
    \frac{7L N(N-B)}{10} \norm{\nabla F(\vy_{k,0})}
    +
    \frac{4L\nu N^{1/2}(N-B)}{M} \sqrt{\log\frac{6NK}{B\delta}}\notag\\
    &\quad
    +
    \frac{15 L\lambda N^{1/2}(N^{3/2}-B^{3/2})}{4M^{1/2}} \sqrt{\log \frac{6N^2K}{B\delta}}. \label{eq:proof-thm-localRS-sync-10}
\end{align}
Recalling the definition $\vr_k \defeq \vr_{k,1}+\vr_{k,2}-\eta\vr_{k,3}$, we get an upper bound for $\norm{\vr_k}$ from \eqref{eq:proof-thm-localRS-sync-8}, \eqref{eq:proof-thm-localRS-sync-9}, and \eqref{eq:proof-thm-localRS-sync-10}:
\begin{align}
    \norm{\vr_k} 
    &\leq \norm{\vr_{k,1}} + \norm{\vr_{k,2}} + \eta \norm{\vr_{k,3}}\notag\\
    &\leq L\nu \sqrt{\log \frac{6MNK}{\delta}} \left ( \frac{24N(B^{3/2}-1)}{11B} + \frac{4N^{1/2}(N-B)}{M} + \frac{3\eta L N(N-B)(B^{3/2}-1)}{2B} \right )\notag\\
    &\quad + L \lambda \left ( \frac{3N(B-1)}{5} + \frac{15N^{1/2}(N^{3/2}-B^{3/2})}{4M^{1/2}} \sqrt{\log \frac{6N^2K}{B\delta}}+ \frac{2\eta LN(N-B)(B-1)}{5} \right )\notag\\
    &\quad + L \norm{\nabla F(\vy_{k,0})} \left( \frac{3 N(B-1)}{5} + \frac{7N(N-B)}{10} + \frac{2 \eta L N(N-B)(B-1)}{5} \right).\label{eq:proof-thm-localRS-sync-11}
\end{align}
Recall again that we have $K \geq 7 \kappa \log (M^2NK^2)$ and $\eta = \frac{\log (M^2NK^2)}{\mu N K}$, so $\eta L N \leq 1/7$. Using this and $N-B \leq N$, we can further simplify \eqref{eq:proof-thm-localRS-sync-11}.
\begin{align}
    \norm{\vr_k}
    &\leq L\nu \sqrt{\log \frac{6MNK}{\delta}} \underbrace{\left ( \frac{12N(B^{3/2}-1)}{5B} + \frac{4N^{1/2}(N-B)}{M} \right )}_{\eqdef \Phi_\nu}\notag\\
    &\quad + L\lambda \sqrt{\log \frac{6N^2K}{B\delta}} \underbrace{\left (\frac{2N (B-1)}{3} + \frac{15N^{1/2}(N^{3/2}-B^{3/2})}{4M^{1/2}} \right )}_{\eqdef \Phi_\lambda}\notag\\
    &\quad + L \norm{\nabla F(\vy_{k,0})} \left( \frac{2 N(B-1)}{3} + \frac{7N(N-B)}{10} \right)\notag\\
    &\leq L\nu \Phi_\nu \sqrt{\log \frac{6MNK}{\delta}} + L\lambda \Phi_\lambda \sqrt{\log \frac{6N^2K}{B\delta}} + \frac{7L N^2}{5} \norm{\nabla F(\vy_{k,0})},\label{eq:proof-thm-localRS-sync-12}
\end{align}
which holds with probability at least $1-\frac{\delta}{K}$.
The bound \eqref{eq:proof-thm-localRS-sync-12} holds for all $k \in [K]$ with probability $1-\delta$ if we apply the union bound over $k = 1, \dots, K$. Next, by $(a+b+c)^2 \leq 3a^2 + 3b^2 + 3c^2$, we have
\begin{align}
    \norm{\vr_k}^2
    \leq 
    3 L^2 \nu^2 \Phi_\nu^2 \log\frac{6MNK}{\delta} 
    + 
    3 L^2 \lambda^2 \Phi_\lambda^2 \log \frac{6N^2K}{B\delta}
    +
    \frac{147L^2 N^4}{25} \norm{\nabla F(\vy_{k,0})}^2, \label{eq:proof-thm-localRS-sync-13}
\end{align}
which also holds for all $k \in [K]$ with probability at least $1-\delta$.

\paragraph{Getting a high-probability convergence rate.}
Given our high-probability bounds~\eqref{eq:proof-thm-localRS-sync-12} and \eqref{eq:proof-thm-localRS-sync-13}, we can substitute them to \eqref{eq:proof-thm-localRS-9} and get
\begin{align}
    &\, F(\vy_{k+1,0}) - F(\vy_{k,0})\notag\\
    \leq&\, (- \eta N + \eta^2 L N^2) \norm{\nabla F(\vy_{k,0})}^2 
    + \eta^2 \norm{\nabla F(\vy_{k,0})} \norm{\vr_k}
    + \eta^4 L \norm{\vr_k}^2\notag\\
    \leq&\, \left( - \eta N + \eta^2 L N^2 + \frac{7\eta^2 L N^2}{5} + \frac{147\eta^4L^3 N^4}{25} \right) \norm{\nabla F(\vy_{k,0})}^2\notag\\
    &\,+ \eta^2 L\nu \Phi_\nu \sqrt{\log \frac{6MNK}{\delta}} \norm{\nabla F(\vy_{k,0})} + 3 \eta^4 L^3 \nu^2 \Phi_\nu^2 \log\frac{6MNK}{\delta}\notag\\
    &\,+ \eta^2 L\lambda \Phi_\lambda \sqrt{\log \frac{6N^2K}{B\delta}} \norm{\nabla F(\vy_{k,0})} + 3 \eta^4 L^3 \lambda^2 \Phi_\lambda^2 \log\frac{6N^2K}{B\delta}.\label{eq:proof-thm-localRS-sync-14}
\end{align}
The following terms in the RHS of \eqref{eq:proof-thm-localRS-sync-14} can be bounded using $ab \leq \frac{a^2}{2}+\frac{b^2}{2}$:
\begin{align}
    &\,\eta^2 L\nu \Phi_\nu \sqrt{\log \frac{6MNK}{\delta}} \norm{\nabla F(\vy_{k,0})}\notag\\
    =&\,\left ( \frac{\eta^{1/2} N^{1/2}}{\sqrt 8} \norm{\nabla F(\vy_{k,0})} \right )
    \left ( \frac{\sqrt 8 \eta^{3/2} L \nu \Phi_\nu}{N^{1/2}}\sqrt{\log \frac{6MNK}{\delta}}  \right )\notag\\
    \leq&\,\frac{\eta N}{16} \norm{\nabla F(\vy_{k,0})}^2 + \frac{4 \eta^3 L^2 \nu^2 \Phi_\nu^2}{N} \log \frac{6MNK}{\delta},\label{eq:proof-thm-localRS-sync-15}\\
    &\,\eta^2 L\lambda \Phi_\lambda \sqrt{\log \frac{6N^2K}{B\delta}} \norm{\nabla F(\vy_{k,0})}\notag\\
    =&\,\left ( \frac{\eta^{1/2} N^{1/2}}{\sqrt 8} \norm{\nabla F(\vy_{k,0})} \right )
    \left ( \frac{\sqrt 8 \eta^{3/2} L \lambda \Phi_\lambda}{N^{1/2}}\sqrt{\log \frac{6N^2K}{B\delta}}  \right )\notag\\
    \leq&\,\frac{\eta N}{16} \norm{\nabla F(\vy_{k,0})}^2 + \frac{4 \eta^3 L^2 \lambda^2 \Phi_\lambda^2}{N} \log \frac{6N^2K}{B\delta}.\label{eq:proof-thm-localRS-sync-16}
\end{align}
Substituting \eqref{eq:proof-thm-localRS-sync-15} and \eqref{eq:proof-thm-localRS-sync-16} to \eqref{eq:proof-thm-localRS-sync-14} results in
\begin{align}
    &\, F(\vy_{k+1,0}) - F(\vy_{k,0})\notag\\
    \leq&\,
    \left( - \frac{7}{8} \eta N + \frac{12\eta^2 LN^2}{5} + \frac{147 \eta^4L^3 \rho^2 N^4}{25} \right) \norm{\nabla F(\vy_{k,0})}^2\notag\\
    &\,+ \frac{\eta^3 L^2 \nu^2 ( 4 + 3\eta L N) \Phi_\nu^2}{N} \log \frac{6MNK}{\delta}
    + \frac{\eta^3 L^2 \lambda^2 ( 4 + 3\eta L N) \Phi_\lambda^2}{N} \log \frac{6N^2K}{B\delta}.\label{eq:proof-thm-localRS-sync-17}
\end{align}
Again, we have $K \geq 7 \kappa \log (M^2NK^2)$ and $\eta = \frac{\log (M^2NK^2)}{\mu N K}$, so $\eta L N \leq \frac{1}{7}$. Since the inequality $-\frac{7}{8} z + \frac{12}{5} z^2 + \frac{147}{25}z^4 \leq -\frac{1}{2}z$ holds on $z \in [0,\frac{1}{7}]$, we have
\begin{align*}
    -\frac{7}{8} \eta N + \frac{12\eta^2 L\rho N^2}{5} + \frac{147\eta^4L^3 \rho^3 N^4}{25}
    \leq -\frac{1}{2} \eta N.
\end{align*}
Substituting this inequality to \eqref{eq:proof-thm-localRS-sync-17}, together with $4+3\eta L N \leq \frac{31}{7} < \frac{9}{2}$, yields
\begin{align*}
    F(\vy_{k+1,0}) - F(\vy_{k,0})
    &\leq - \frac{\eta N}{2} \norm{\nabla F(\vy_{k,0})}^2
    +
    \frac{9 \eta^3 L^2 \nu^2 \Phi_\nu^2}{2 N}\log \frac{6MNK}{\delta}\\
    &\quad+
    \frac{9 \eta^3 L^2 \lambda^2 \Phi_\lambda^2}{2 N}\log \frac{6N^2K}{B\delta}.
\end{align*}
We now recall that $F$ is $\mu$-P{\L}, so $\norm{\nabla F(\vy_{k,0})}^2 \geq 2 \mu (F(\vy_{k,0})-F^*)$:
\begin{align}
    F(\vy_{k+1,0}) - F^*
    &\leq (1 - \eta \mu N) (F(\vy_{k,0}) - F^*)
    +
    \frac{9 \eta^3 L^2 \nu^2 \Phi_\nu^2}{2 N}\log \frac{6MNK}{\delta}\notag\\
    &\quad+
    \frac{9 \eta^3 L^2 \lambda^2 \Phi_\lambda^2}{2 N}\log \frac{6N^2K}{B\delta}.\label{eq:proof-thm-localRS-sync-18}
\end{align}
Recall that \eqref{eq:proof-thm-localRS-sync-18} holds for all $k \in [K]$, with probability $1-\delta$. Therefore, by unrolling the inequality, and using $\sum_{k=0}^{K-1} (1-\eta \mu N)^k \leq \frac{1}{\eta \mu N}$, we get
\begin{align}
    F(\vy_{K,\frac{N}{B}}) - F^*
    &\leq (1 - \eta \mu N)^K (F(\vy_0) - F^*)
    +
    \frac{9 \eta^2 L^2 \nu^2 \Phi_\nu^2}{2 \mu N^2}\log \frac{6MNK}{\delta}\notag\\
    &\quad+
    \frac{9 \eta^2 L^2 \lambda^2 \Phi_\lambda^2}{2 \mu N^2}\log \frac{6N^2K}{B\delta}.
    \label{eq:proof-thm-localRS-sync-19}
\end{align}
Recall that $\Phi_\nu \defeq \frac{12N(B^{3/2}-1)}{5B} + \frac{4N^{1/2}(N-B)}{M}$ and $\Phi_\lambda \defeq \frac{2 N (B-1)}{3} + \frac{15N^{1/2}(N^{3/2}-B^{3/2})}{4M^{1/2}}$, hence
\begin{align*}
    \Phi_\nu^2 &\leq \frac{288 N^2 (B^{3/2}-1)^2}{25 B^2} + \frac{32N(N-B)^2}{M^2},\\
    \Phi_\lambda^2 &\leq \frac{8 N^2 (B-1)^2}{9} + \frac{225N(N^{3/2}-B^{3/2})^2}{2M}.
\end{align*}
Substituting these inequalities and also $\eta = \frac{\log (M^2NK^2)}{\mu N K}$ gives
\begin{align*}
    &\,F(\vy_{K,\frac{N}{B}}) - F^*\\
    \leq&\, \frac{F(\vy_0) - F^*}{M^2NK^2} + 
    \frac{9 L^2 \nu^2 \log\frac{6MNK}{\delta} \log^2(M^2NK^2)}{2 \mu^3 N^4 K^2}
    \left( \frac{288 N^2 (B^{3/2}-1)^2}{25 B^2} + \frac{32N(N-B)^2}{M^2} \right)\notag\\
    &\quad+ 
    \frac{9 L^2 \lambda^2 \log\frac{6N^2K}{B\delta} \log^2(M^2NK^2)}{2 \mu^3 N^4 K^2}
    \left( \frac{8 N^2 (B-1)^2}{9} + \frac{225N(N^{3/2}-B^{3/2})^2}{2M} \right)\notag\\
    =&\, \frac{F(\vy_0) - F^*}{M^2NK^2} 
    + 
    \tilde {\mc O} \left ( \frac{L^2}{\mu^3} 
    \left ( 
    \frac{\nu^2 B}{N^2K^2} 
    +
    \frac{\nu^2}{M^2NK^2} 
    +
    \frac{\lambda^2 B^2}{N^2K^2}
    +
    \frac{\lambda^2}{MK^2}
    \right )
    \right ),
\end{align*}
with probability at least $1-\delta$. This finishes the proof.

\subsection{A generalized vector-valued Hoeffding-Serfling inequality}
\label{sec:proof-lem-concineq}
We extend the vector-valued Hoeffding-Serfling inequality proved in \citet{schneider2016probability} to account for the mean of multiple independent without-replacement sums. 
\begin{lemma}
\label{lem:concineq}
Suppose there are $MN$ vectors $\{\vv^m_i\}_{m=1, i=1}^{M,N} \in \reals^d$ that satisfy $\norm{\vv^m_i-\bar \vv^m} \leq \nu$ for $m \in [M]$, where $\bar \vv^m \defeq \frac{1}{N} \sum_{i=1}^N \vv^m_i$. Consider $M$ independently and uniformly sampled permutations $\sigma_1, \dots, \sigma_M \sim \unif(\mc {S}_N)$. For any $n \leq N-1$, with probability at least $1-\delta$, we have 
\begin{equation}
    \norm{\frac{1}{Mn} \sum_{m=1}^M \sum_{i=1}^n \vv^m_{\sigma_{m}(i)} - \frac{1}{M} \sum_{m=1}^M \bar \vv^m} \leq \nu \sqrt{\frac{8(1-\frac{n-1}{N})\log\frac{2}{\delta}}{Mn}}.
\end{equation}
\end{lemma}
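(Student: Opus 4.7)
The plan is to reduce to Schneider's single-permutation Hoeffding--Serfling inequality by exploiting the independence of the permutations $\sigma_1,\dots,\sigma_M$, while preserving the dimension-free nature of Schneider's bound. First, I would write
\[
    \frac{1}{Mn}\sum_{m=1}^M\sum_{i=1}^n \vv^m_{\sigma_m(i)} - \frac{1}{M}\sum_{m=1}^M \bar\vv^m
    \;=\; \frac{1}{M}\sum_{m=1}^M Z_m,
    \qquad
    Z_m \defeq \frac{1}{n}\sum_{i=1}^n \vv^m_{\sigma_m(i)} - \bar\vv^m,
\]
so that $Z_1,\dots,Z_M$ are independent mean-zero random vectors in $\reals^d$, each built from samples that satisfy $\norm{\vv^m_i-\bar\vv^m}\le\nu$.

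The core of Schneider's single-permutation proof exhibits, for each $m$, a Doob-type martingale difference sequence $(d_k^{(m)})_{k=1}^n$ adapted to the filtration $\mc F_k^{(m)}\defeq\sigma(\sigma_m(1),\dots,\sigma_m(k))$, summing to $n Z_m$ and satisfying per-increment bounds that aggregate to a ``Serfling'' sum-of-squares
\[
    \sum_{k=1}^n \bigl\|d_k^{(m)}\bigr\|^2 \;\le\; C\,\nu^2\, n\,\bigl(1-\tfrac{n-1}{N}\bigr)
\]
for some absolute constant $C$; the factor $1-(n-1)/N$ is the finite-population correction, which is the whole point of the without-replacement strengthening over ordinary Hoeffding. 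Because $\sigma_1,\dots,\sigma_M$ are mutually independent, interleaving these $M$ martingales gives a single martingale of length $Mn$ with respect to the natural product filtration, whose final value is $n\sum_{m=1}^M Z_m$. Rescaling by $1/(Mn)$ to pass to the average multiplies every increment by $1/(Mn)$, so the combined sum-of-squared-norms of the rescaled increments is at most
\[
    \frac{1}{(Mn)^2}\sum_{m=1}^M \sum_{k=1}^n \bigl\|d_k^{(m)}\bigr\|^2
    \;\le\; \frac{C\,\nu^2\,(1-(n-1)/N)}{M n}.
\]
Applying Pinelis' dimension-free Azuma--Hoeffding inequality for Hilbert-space-valued martingales then yields
\[
    \P\!\left(\Bigl\|\tfrac{1}{M}\sum_m Z_m\Bigr\|\ge t\right)
    \;\le\; 2\exp\!\left(-\frac{M n\,t^2}{C'\,\nu^2\,(1-(n-1)/N)}\right),
\]
and inverting at confidence $1-\delta$, with the right absolute constant absorbed into the ``$8$'', produces exactly the claimed bound $\nu\sqrt{8(1-(n-1)/N)\log(2/\delta)/(Mn)}$.

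The main technical hurdle is staying dimension-free. A tempting shortcut is to fix a unit vector $\vu$, apply the scalar Hoeffding--Serfling inequality to $\langle \vu,Z_m\rangle$, tensorize in $m$ by independence to obtain a sub-Gaussian parameter of order $\nu^2(1-(n-1)/N)/(Mn)$ for $\langle \vu, \tfrac{1}{M}\sum_m Z_m\rangle$, and then take a union bound over an $\varepsilon$-net of the unit sphere to control $\norm{\cdot}$; but this introduces a spurious $d\log(1/\varepsilon)$ factor that the target bound does not allow. Circumventing it is what forces the route through a vector-valued martingale inequality (Pinelis in $2$-smooth Hilbert space), and in turn requires that Schneider's single-permutation argument already be in a form whose martingale increments concatenate cleanly across independent permutations---which it is. The rest is bookkeeping: verifying that independence of $\sigma_1,\dots,\sigma_M$ makes the interleaved sequence a genuine martingale with respect to the product filtration, and tracking the $1/M$ averaging carefully so that the correction $(1-(n-1)/N)$ and the rate $1/(Mn)$ land in the right places.
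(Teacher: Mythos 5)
Your plan coincides with the paper's: both build a single length-$Mn$ martingale by concatenating the Serfling-type Doob martingales across the $M$ independent permutations and then apply Pinelis's dimension-free inequality for Hilbert-space-valued martingales, with the finite-population correction coming from the Serfling increment bounds. One small slip: the Serfling martingale for machine $m$ has the $1/(N-k)$ rescaling built in (it is needed for the martingale property), so its terminal value is $\tfrac{n}{N-n}Z_m$ rather than $nZ_m$, and the final rescaling factor is therefore $\tfrac{N-n}{Mn}$, not $\tfrac{1}{Mn}$ — but this is exactly the bookkeeping you already flagged and does not change the argument.
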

\begin{proof}
The proof is an extension of Theorem~2 of \citet{schneider2016probability} which proves the $M = 1$ case for vectors in smooth separable Banach spaces.
We prove our extended concentration inequality for $\reals^d$, but we note that the proof technique can be applied directly to general smooth separable Banach spaces, as done in \citet{schneider2016probability}.
Below, we state a special case of Theorem~3 of \citet{pinelis1992approach} and Theorem~3.5 of \citet{pinelis1994optimum}, because this $\reals^d$ case serves our purpose.
\begin{lemma}[\citet{pinelis1992approach,pinelis1994optimum}]
\label{lem:pinelis}
Suppose that a sequence of random variables $\{\rvx_j\}_{j\geq0}$ is a martingale taking values in $\reals^d$, and $\sum_{j=1}^\infty \ess \sup \norm{\rvx_{j}-\rvx_{j-1}}^2 \leq c^2$ for some $c > 0$. Then, for $\lambda > 0$,
\begin{equation*}
    \P \big ( \sup \{\norm{\rvx_j}:j \geq 0\} \geq \lambda \big ) 
    \leq 
    2 \exp\left ( - \frac{\lambda^2}{2c^2} \right ).
\end{equation*}
\end{lemma}
The proof of Lemma~\ref{lem:concineq} proceeds by defining a sequence of random variables $\{\rvx_j\}$, showing that it is a martingale, and applying Lemma~\ref{lem:pinelis} to prove our concentration bound. For $m \in [M]$, define index functions $k_m: \naturals\cup\{0\} \to [0:n]$ in the following way:
\begin{align*}
    k_m(j) \defeq \max\{0,\min\{n,j-(m-1)n\}\}
    =
    \begin{cases}
    0 & \text{ if } j \leq (m-1)n,\\
    j-(m-1)n & \text{ if } (m-1)n+1\leq j \leq mn,\\
    n & \text{ if } j \geq mn+1.
    \end{cases}
\end{align*}
Using these index functions, we introduce the following sequence of random variables $\{\rvx_j\}$:
\begin{equation*}
    \rvx_j \defeq \sum_{m=1}^M \frac{1}{N-k_m(j)} \sum_{i=1}^{k_m(j)} (\vv^m_{\sigma_m(i)}-\bar \vv^m),
\end{equation*}
and show that this is a martingale, i.e., 
\begin{equation}
\label{eq:proof-lem-concineq-1}
    \E[\rvx_j \mid \rvx_1, \dots, \rvx_{j-1}] = \rvx_{j-1}
\end{equation}
for all $j \geq 1$.
Notice first that by definition of $k_m$'s we have $\rvx_{Mn} = \rvx_{Mn+1} = \rvx_{Mn+2} = \dots$, so \eqref{eq:proof-lem-concineq-1} is trivially satisfied for all $j > Mn$. Next, for any $j$ satisfying $(l-1)n+1 \leq j \leq ln$ where $l \in [M]$, we have
\begin{align}
    \rvx_j 
    &= \sum_{m=1}^{l-1} \frac{1}{N-n} \sum_{i=1}^n(\vv^m_{\sigma_m(i)}-\bar \vv^m) + \frac{1}{N-k_l(j)} \sum_{i=1}^{k_l(j)}(\vv^l_{\sigma_l(i)} - \bar \vv^l)\notag\\
    &= \rvx_{j-1} + \left ( \frac{1}{N-k_l(j)} - \frac{1}{N-k_l(j)+1} \right ) \sum_{i=1}^{k_l(j)-1}(\vv^l_{\sigma_l(i)} - \bar \vv^l)\notag\\
    &\quad+\frac{1}{N-k_l(j)}(\vv^l_{\sigma_l(k_l(j))} - \bar \vv^l)\label{eq:proof-lem-concineq-2}
\end{align}
Now note that for any $k \in [N-1]$, we have
\begin{align*}
    \E[\vv^l_{\sigma_l(k)} - \bar \vv^l \mid \sigma_l(1), \dots, \sigma_l(k-1)]
    &= \frac{1}{N-k+1} \sum_{i=k}^N (\vv^l_{\sigma_l(i)} - \bar \vv^l)\\
    &= - \frac{1}{N-k+1} \sum_{i=1}^{k-1} (\vv^l_{\sigma_l(i)} - \bar \vv^l),
\end{align*}
where the last equality used $\sum_{i=1}^N (\vv^l_{\sigma_l(i)} - \bar \vv^l) = \zeros$. Using applying this fact to \eqref{eq:proof-lem-concineq-2} and noting $\frac{1}{N-k_l(j)} - \frac{1}{N-k_l(j)+1} = \frac{1}{(N-k_l(j))(N-k_l(j)+1)}$,
\begin{align*}
    \E[\rvx_j \mid \rvx_1, \dots, \rvx_{j-1}]
    &=\rvx_{j-1} + \frac{1}{(N-k_l(j))(N-k_l(j)+1)} \sum_{i=1}^{k_l(j)-1}(\vv^l_{\sigma_l(i)} - \bar \vv^l)\\
    &\quad+ \frac{1}{N-k_l(j)} \E\big [\vv^l_{\sigma_l(k_l(j))} - \bar \vv^l \mid \rvx_1, \dots, \rvx_{j-1} \big]\\
    &= \rvx_{j-1},
\end{align*}
hence proving that $\{\rvx_j\}_{j \geq 0}$ is a martingale. We now apply Lemma~\ref{lem:pinelis} to our $\{\rvx_j\}$. For $j$ such that $(l-1)n + 1 \leq j \leq ln$, notice from \eqref{eq:proof-lem-concineq-2} that
\begin{align*}
    (N-k_l(j))(\rvx_j-\rvx_{j-1}) &= 
    \frac{1}{(N-k_l(j)+1)}\left [\sum_{i=1}^{k_l(j)-1}(\vv^l_{\sigma_l(i)} - \bar \vv^l) \right ]
    +
    (\vv^l_{\sigma_l(k_l(j))}-\bar \vv^l)\\
    &=
    -\frac{1}{(N-k_l(j)+1)}\left [\sum_{i=k_l(j)}^{N}(\vv^l_{\sigma_l(i)} - \bar \vv^l) \right ]
    +
    (\vv^l_{\sigma_l(k_l(j))}-\bar \vv^l),
\end{align*}
which leads to
\begin{align*}
    (N-k_l(j))\norm{\rvx_j-\rvx_{j-1}} 
    \leq 
    \frac{\nu \min\{k_l(j)-1,N-k_l(j)+1\}}{N-k_l(j)+1} + \nu
    \leq 2\nu,
\end{align*}
by the triangle inequality.
From this, we get the bound $c^2$ in the statement of Lemma~\ref{lem:pinelis}:
\begin{align*}
    &\,\sum_{j=1}^\infty \ess \sup \norm{\rvx_j-\rvx_{j-1}}^2
    = \sum_{j=1}^{Mn} \ess \sup \norm{\rvx_j-\rvx_{j-1}}^2
    \leq M \sum_{k=1}^n \frac{4\nu^2}{(N-k)^2}\\
    =&\,\frac{4\nu^2M}{(N-n)^2} + 4\nu^2 M \sum_{k=N-n+1}^{N-1} \frac{1}{k^2} 
    \leq \frac{4\nu^2M}{(N-n)^2} + \frac{4\nu^2 M(n-1)}{(N-n)N}
    = \frac{4\nu^2Mn}{(N-n)^2} \left( 1-\frac{n-1}{N}\right),
\end{align*}
where the second inequality used the inequality that $\sum_{k=a+1}^b \frac{1}{k^2} \leq \frac{b-a}{a(b+1)}$~\citep[Lemma~2.1]{serfling1974probability}. Now, applying Lemma~\ref{lem:pinelis} to $\{\rvx_j\}$ with $c^2 = \frac{4\nu^2Mn}{(N-n)^2} \left( 1-\frac{n-1}{N}\right)$ gives
\begin{align}
\label{eq:proof-lem-concineq-3}
    \P \big ( \norm{\rvx_{Mn}} \geq \lambda \big ) 
    \leq 
    \P \big ( \sup \{\norm{\rvx_j}:j \geq 0\} \geq \lambda \big ) 
    \leq 
    2 \exp\left ( - \frac{\lambda^2(N-n)^2  }{8\nu^2 Mn \left( 1-\tfrac{n-1}{N}\right)} \right ).
\end{align}
Recall from the definition of $\{\rvx_j\}$ that
\begin{align*}
    \rvx_{Mn} = \frac{1}{N-n} \sum_{m=1}^M \sum_{i=1}^{n} (\vv^m_{\sigma_m(i)}-\bar \vv^m).
\end{align*}
Substituting $\lambda = \frac{Mn\epsilon}{N-n}$ to \eqref{eq:proof-lem-concineq-3} gives
\begin{align*}
    \P \left ( \norm{\frac{1}{Mn} \sum_{m=1}^M \sum_{i=1}^{n} (\vv^m_{\sigma_m(i)}-\bar \vv^m)} \geq \epsilon \right ) 
    \leq 
    2 \exp\left ( - \frac{Mn \epsilon^2 }{8\nu^2 \left( 1-\tfrac{n-1}{N}\right)} \right ),
\end{align*}
which finishes the proof.
\end{proof}

\subsection{How can we avoid uniform bounds over $\reals^d$ in our assumptions?}
\label{sec:avoidRd}
In Section~\ref{sec:setting}, we introduced Assumptions~\ref{assm:intra-dev}, \ref{assm:inter-dev}, and \ref{assm:inter-comp-dev} on the intra- and inter-machine deviation. The assumptions required that inequalities such as $\norm{\nabla f^m_i(\vx) - \nabla F^m(\vx)} \leq \nu$ hold for all $\vx \in \reals^d$. In this subsection, we discuss more on this strong requirement ``entire $\reals^d$.''

In fact, the entire-$\reals^d$ requirement is posed in our assumptions to simplify the exposition of the main results, and is not strictly necessary. One can easily check from our proofs that the assumptions are only applied to the beginning iterates $\vx_{k,0}$ (for minibatch $\randshuf$) or $\vy_{k,0}$ (for local $\randshuf$) of epochs. Hence, if these iterates lie in a bounded set, then the constants $\nu$, $\tau$, $\rho$, and $\lambda$ may become much smaller, depending on problem instances.
Actually, if we explicitly assume that the iterates lie in a compact set $\sS$,\footnote{This \emph{bounded iterates assumption} is indeed used in some existing results such as \citet{haochen2018random, nagaraj2019sgd, rajput2020closing, ahn2020sgd}.} then Assumptions~\ref{assm:intra-dev}--\ref{assm:inter-comp-dev} are even \emph{guaranteed} to hold for some constants; e.g., for Assumption~\ref{assm:intra-dev}, we can choose 
\begin{equation*}
    \nu \defeq \max_{m\in[M], i\in [N],\vx \in \sS} \norm{\nabla f^m_i(\vx) - \nabla F^m(\vx)},
\end{equation*}
since the maximum always exists.

However, assuming that the iterates lie in a specific set $\sS$ can be problematic because the distance that the iterates travel depend on the objective functions. One cannot know a priori if all iterates will stay in a fixed set $\sS$; hence, explicitly assuming bounded iterates should be avoided.

Then, a natural question is whether we can \emph{prove} bounded iterates under some reasonable conditions, instead of assuming it. We point out that this can be done by applying the technique developed in \citet[Theorem~1]{ahn2020sgd} to our upper bound theorems. Using the technique, a modified version of our Theorem~\ref{thm:minibatchRS} can be written as follows:
\begin{theorem}[Best-iterate version of Theorem~\ref{thm:minibatchRS}]
\label{thm:minibatchRS-bestiter}
Suppose that minibatch $\randshuf$ has parameters satisfying Assumption~\ref{assm:algo-param}.
Assume that all local component functions $f^m_i$ are $L$-smooth, the global objective function $F$ is $\mu$-P{\L}, and the set of global minima of $F$ is nonempty and compact.
Consider running the algorithm using step-size $\eta = \frac{B\log(MNK^2)}{\mu NK}$ and initialization $\vx_{1,0} \defeq \vx_0$, for epochs $K \geq 6\kappa \log(MNK^2)$. Then, with probability at least $1-\delta$,
\begin{equation*}
    \min_{k \in [K+1]} F(\vx_{k,0}) - F^* \leq \frac{F(\vx_0)-F^*}{MNK^2}
    + \tilde{\mc O} \left (
    \frac{L^2}{\mu^3}\frac{\nu^2}{MNK^2}
    \right ),
\end{equation*}
where the constant $\nu < \infty$ is defined as
\begin{equation}
\label{eq:avoidRd-1}
    \nu \defeq \sup_{\vx: F(\vx) \leq F(\vx_0)} \max_{i \in [N]} \max_{m \in [M]} \norm{\nabla f^m_i(\vx) - \nabla F^m(\vx)}.
\end{equation}
\end{theorem}
In the theorem, we used $\vx_{K+1,0}$ to denote the last iterate of the algorithm $\vx_{K,\frac{N}{B}}$. Theorem~\ref{thm:minibatchRS-bestiter} differs from Theorem~\ref{thm:minibatchRS} in three aspects: 1) it considers the \emph{best-iterate}, not the \emph{last-iterate}; 2) it additionally assumes that the set of global minima of $F$ is nonempty and compact, which always holds if $F$ is strongly convex; and 3) it does not rely on Assumption~\ref{assm:intra-dev}, but instead ``proves'' it for the $F(\vx_0)$-sublevel set of $F$~\eqref{eq:avoidRd-1}. Note that the constant $\nu$~\eqref{eq:avoidRd-1} can be much smaller than the uniform bound required to make Assumption~\ref{assm:intra-dev} hold for the entire $\reals^d$. For Theorems~\ref{thm:localRS}, \ref{thm:minibatchRS-sync}, and \ref{thm:localRS-sync}, we can also apply similar techniques to prove best-iterate bounds with smaller intra- and inter-machine deviation constants $\nu$, $\tau$, $\rho$, and $\lambda$; we omit the precise statements. 

We conclude this subsection with the proof of Theorem~\ref{thm:minibatchRS-bestiter}.
\begin{proof}
The proof follows that of \citet[Theorem~1]{ahn2020sgd}.

\paragraph{Existence of $\nu$.}
We first show the existence of $\nu < \infty$~\eqref{eq:avoidRd-1}. 
The global objective function $F$ is $\mu$-P{\L}. If we denote the set of global minima of $F$ as $\sX^*$, the set $\sX^*$ is nonempty and compact by assumption. Then, by \citet[ Theorem~2]{karimi2016linear} $\mu$-P{\L} functions satisfy quadratic growth, i.e., denoting by $\vx^*$ the closest global minimum in $\sX^*$ to the point $\vx$,
\begin{equation*}
    F(\vx) - F^* \geq 2 \mu \norm{\vx - \vx^*}^2.
\end{equation*}
Define the sublevel set $\sS := \{\vx \mid F(\vx)\leq F(\vx_0)\}$. Due to the quadratic growth property, we have $F(\vx_0) - F^* \geq F(\vx) - F^* \geq 2 \mu \norm{\vx-\vx^*}^2$ for all $\vx \in \sS$. This implies that
\begin{equation*}
    \sS \defeq \{ \vx \in \reals^d \mid F(\vx) \leq F(\vx_0) \} \subset \left \{ \vx \in \reals^d \mid \norm{\vx - \vx^*}^2 \leq \frac{F(\vx_0)-F^*}{2\mu} \right \}.
\end{equation*}
Since we assumed that $\sX^*$ is compact, $\sS$ is also bounded, and hence compact. 
Now, for any $m \in [M]$ and $i \in [N]$, $\norm{\nabla f^m_i(\vx) - \nabla F^m(\vx)}$ is a continuous function on a compact set $\sS$, so there must exist a constant $\nu^m_i < \infty$ such that $\norm{\nabla f^m_i(\vx) - \nabla F^m(\vx)} \leq \nu^m_i$ for all $\vx \in \sS$. Taking the maximum of $\nu^m_i$ over all $m$ and $i$ gives $\nu.$

\paragraph{Proving the best-iterate bound.} 
With the constant $\nu$~\eqref{eq:avoidRd-1}, if all the iterates $\{\vx_{k,0}\}_{k \in [K+1]}$ stay within the sublevel set $\sS := \{\vx \mid F(\vx)\leq F(\vx_0)\}$, one can consider Assumption~\ref{assm:intra-dev} to be true with constant $\nu$.
From this observation, we consider two cases:
\begin{enumerate}
    \item All the iterates $\{\vx_{k,0}\}_{k \in [K+1]}$ stay in the sublevel set $\sS$.
    \item There exists an iterate $\vx_{k,0} \notin \sS$.
\end{enumerate}
In fact, the first case can be proven by exactly the same steps as Theorem~\ref{thm:minibatchRS}, described in Appendix~\ref{sec:proof-thm-minibatchRS}.

For the second case, suppose that there exists an iterate $\vx_{k,0}$ that escapes the sublevel set $\sS$. Let $k' \in \{2, \dots, K+1 \}$ be the first such $k$. Then, since $\vx_{k'-1,0}$ is still in $\sS$, it follows from \eqref{eq:proof-thm-minibatchRS-12} in Appendix~\ref{sec:proof-thm-minibatchRS} that we have
\begin{align}
\label{eq:avoidRd-2}
    F(\vx_{k',0}) \!-\! F^*
    \leq (1 \!-\! \eta \mu N) (F(\vx_{k'-1,0}) \!-\! F^*) 
    +
    \frac{15 \eta^3 L^2 \nu^2 (N^{3/2}-B^{3/2})^2}{MN}
    \log\frac{2NK}{B\delta}.
\end{align}
However, the fact that $\vx_{k',0} \notin \sS$ and $\vx_{k'-1,0} \in \sS$ implies 
\begin{equation}
\label{eq:avoidRd-3}
F(\vx_{k',0}) > F(\vx_0) \geq F(\vx_{k'-1,0}).
\end{equation}
Combining the two bounds \eqref{eq:avoidRd-2} and \eqref{eq:avoidRd-3}, we get 
\begin{align*}
    0 < -\eta \mu N (F(\vx_{k'-1,0}) - F^*) 
    +
    \frac{15 \eta^3 L^2 \nu^2 (N^{3/2}-B^{3/2})^2}{MN}
    \log\frac{2NK}{B\delta},
\end{align*}
which implies
\begin{align*}
    \min_{k \in [K+1]} F(\vx_{k,0}) - F^*
    \leq
    F(\vx_{k'-1,0}) - F^*
    <
    \frac{15 \eta^2 L^2 \nu^2 (N^{3/2}-B^{3/2})^2}{\mu MN^2}
    \log\frac{2NK}{B\delta}.
\end{align*}
Substituting $\eta = \frac{\log(MNK^2)}{\mu NK}$\footnote{Recall that this is different from $\eta = \frac{B\log(MNK^2)}{\mu NK}$ in the theorem statement, because for the proofs, we consider an equivalent ``rescaled'' version of minibatch $\randshuf$ defined in the beginning of Appendix~\ref{sec:proof-thm-minibatchRS}.} gives the desired bound and finishes the proof.
\end{proof}

\newcommand{\red}[1]{
  \textcolor{red}{#1}
}
\section{Proof of lower bound for minibatch $\randshuf$ (Theorem~\ref{thm:minibatchRS-LB})} 
\label{sec:proof-thm-minibatchRS-LB}
For Theorem~\ref{thm:minibatchRS-LB}, we consider three step-size ranges and do case analysis for each of them. We construct functions for each corresponding step-size regime such that the convergence of minibatch $\randshuf$ is ``slow'' for the functions on their corresponding step-size regime. The final lower bound is the minimum among the lower bounds obtained for the three regimes.
More concretely, we will construct three one-dimensional functions $F_1(x)$, $F_2(x)$, and $F_3(x)$ satisfying $L$-smoothness~\eqref{eq:smooth}, $\mu$-P{\L} condition~\eqref{eq:pl}, and Assumption~\ref{assm:intra-dev} such that\footnote{In fact, the functions constructed in this theorem are $\mu$-strongly convex, which is stronger than $\mu$-PL required in Definition~\ref{def:funccls}.} 
\begin{itemize}
    \item Minibatch $\randshuf$ on $F_1(x)$ with $\eta \leq \frac{B}{\mu NK}$ and initialization $x_{0}=\frac{\nu}{\mu}$ results in 
    \begin{align*}
        \E[F_1(x_{K,\frac{N}{B}})] = \Omega\left(\frac{\nu^2}{\mu}\right).
    \end{align*}
    \item Minibatch $\randshuf$ on $F_2(x)$ with $\eta \geq \frac{B}{\mu NK}$ and $\eta \leq \frac{B}{513 LN}$ and initialization $x_{0}=0$ results in 
    \begin{align*}
        \E[F_2(x_{K,\frac{N}{B}})] = \Omega\left(\frac{\nu^2}{\mu MNK^2}\right).
    \end{align*}
    Note that the step-size range requires $K \geq 513 \kappa$, hence this lower bound occurs only in the ``large-epoch'' regime, i.e., $K \gtrsim \kappa$.
    \item Minibatch $\randshuf$ on $F_3(x)$ with $\eta \geq \frac{B}{\mu NK}$ and $\eta \geq \frac{B}{513 LN}$ and initialization $x_{0}=0$ results in 
    \begin{align*}
        \E[F_3(x_{K,\frac{N}{B}})] = \Omega\left(\frac{\nu^2}{ \mu MNK}\right).
    \end{align*}
\end{itemize}

Then, the three dimensional function $F([x,y,z]^\top)=F_1(x)+F_2(y)+F_3(z)$ will show bad convergence in any step-size regime. Furthermore, 
\begin{align*}
    \mu \mI \preceq
    \min(\nabla^2F_1, \nabla^2F_2, \nabla^2F_3) \mI \preceq \nabla^2 F \preceq 
    \max(\nabla^2F_1, \nabla^2F_2, \nabla^2F_3) \mI \preceq
    L \mI,
\end{align*}
that is, if $F_1$, $F_2$ and $F_3$ are $\mu$-strongly convex and $L$-smooth, then so is $F$.
Moreover, since the component functions in each coordinate are designed to satisfy Assumption~\ref{assm:intra-dev} with $\nu$, the resulting three dimensional function $F$ also satisfies Assumption~\ref{assm:intra-dev} with $\sqrt{3} \nu$.

Since the final lower bound is the minimum among the lower bounds obtained in the step-size ranges, the lower bound becomes $\Omega\left(\frac{\nu^2}{\mu MNK^2}\right)$ if $K \geq 513 \kappa$, and $\Omega\left(\frac{\nu^2}{\mu MNK}\right)$ if $K < 513 \kappa$ (in which case the second step-size range does not exist).

In the subsequent subsections, we prove the lower bounds for $F_1$, $F_2$, and $F_3$ separately.

\subsection{Lower bound for $\eta \leq \frac{B}{\mu NK}$}
Consider the case where every function at every machine is the same: for all $i \in [N]$ and $m \in [M]$, $f^m_i(x) \defeq \frac{\mu x^2}{2}$. Hence, $F_1(x)=\frac{\mu x^2}{2}$.

Let $x_{k, 0}$ and $x_{k,\frac{N}{B}}$ denote the iterates where the $k$-th epoch starts and ends respectively. Then,
\begin{align*}
    x_{k+1, 0} = x_{k, \frac{N}{B}}=(1-\eta \mu)^{\frac{N}{B}}x_{k,0}.
\end{align*}
Initializing at $x_{1,0}=\frac{\nu}{\mu}$ and unrolling this for $K$ epochs, we get
\begin{align*}
    x_{K,\frac{N}{B}}&=(1-\eta \mu)^{\frac{NK}{B}}\cdot \frac{\nu}{\mu}
    \geq \left(1-\frac{B}{NK}\right)^{\frac{NK}{B}}\cdot \frac{\nu}{\mu}
    \geq \frac{\nu}{4\mu},
\end{align*}
since $N \geq 2$, $K \geq 1$, and $B$ divides $N$. Hence, $F_1(x_{K,\frac{N}{B}})=\Omega(\frac{\nu^2}{\mu})$.

\subsection{Lower bound for $\eta \geq \frac{B}{\mu NK}$ and $\eta \leq \frac{B}{513 LN}$}
For most part of this subsection, we consider iterates within a single epoch, and hence we will omit the subscripts denoting epochs. Let $x_0$ denote the iterate at the beginning of the epoch, and $x_i$ denote the iterate after the $i$-th communication round in that epoch. In our construction, each machine will have the same set of component functions, that is, there will be no inter-machine deviation. We therefore omit the superscript $m$ from the local component functions $f^m_i$.
The function we construct for the lower bound and its component functions are as follows:
\begin{align*}
    F_2(x)&:=\frac{1}{N}\left(\sum_{i=1}^{\frac{N}{2}}f_{+1}(x)+\sum_{i=\frac{N}{2}+1}^{N}f_{-1}(x)\right), \text{ where}\\
    f_{+1}(x)&:=(L1_{x\leq 0}+\mu1_{x>0})\frac{x^2}{2}+\nu x,\text{ and}\\
    f_{-1}(x)&:=(L1_{x\leq 0}+\mu1_{x>0})\frac{x^2}{2}-\nu x
\end{align*}

Note that the function $F_2(x)=(L1_{x\leq 0}+\mu1_{x>0})\frac{x^2}{2}$ is $\mu$-strongly convex and $L$-smooth with minimizer at 0, and also satisfies Assumption~\ref{assm:intra-dev}.

Let $\sigma^m$ be a random permutation of $\frac{N}{2}$ $+1$'s and $\frac{N}{2}$ $-1$'s.
Then, machine $m$ computes gradients on $f_{-1}$ and $f_{+1}$ in the order given by $\sigma^m$. Let $\sigma_j^m$ denote the $j$-th ordered element of $\sigma^m$. Then,
\begin{align*}
    \nabla f_{\sigma_j^m}(x)=(L1_{x\leq 0}+\mu 1_{x>0})x + \nu \sigma_j^m.
\end{align*}
Hence, the last iterate of an epoch, $x_{\frac{N}{B}}$, is given by
\begin{align}
    x_{\frac{N}{B}}-x_0&=\sum_{i=0}^{\frac{N}{B}-1}\left(-\frac{\eta}{MB}\sum_{m=1}^M\sum_{j=iB+1}^{(i+1)B}\nabla f_{\sigma^m_j}(x_{i})\right)\nonumber\\
    &=
    \sum_{i=0}^{\frac{N}{B}-1}\left(-\frac{\eta}{MB}\sum_{m=1}^M\sum_{j=iB+1}^{(i+1)B}((L1_{x_i\leq 0}+\mu1_{x_i>0})x_{i} + \nu\sigma_j^m)\right)\nonumber\\
     &=\sum_{i=0}^{\frac{N}{B}-1}\left(-\frac{\eta}{MB}\sum_{m=1}^M\sum_{j=iB+1}^{(i+1)B}(L1_{x_i\leq 0}+\mu1_{x_i>0})x_{i} \right)\tag*{(Since $\sum_{j=1}^N\sigma_j^m=0$)}\nonumber\\
     &=-\eta \sum_{i=0}^{\frac{N}{B}-1}(L1_{x_i\leq 0}+\mu1_{x_i>0})x_{i}.\label{eq:lwrbnd_minibatch_epoch}
\end{align}

Thus, $\E[x_{\frac{N}{B}}-x_0]=-\eta \sum_{i=0}^{\frac{N}{B}-1}\E[(L1_{x_i\leq 0}+\mu1_{x_i>0})x_{i}]$. We want to prove that $\E[x_{\frac{N}{B}}]$ keeps increasing over an epoch, that is $\E[x_{\frac{N}{B}}-x_0]>0$ when $x_0$ is close enough to the minimizer 0.

For this, we first consider the case where the first iterate $x_0$ of the epoch satisfies $x_0 \geq 0$. The $x_0 < 0$ case will be considered later. For the case $x_0 \geq 0$, we will show that whenever $x_0$ is small, the expected amount of update made in the $(i+1)$-th iteration, $\E[(L1_{x_i\leq 0}+\mu1_{x_i>0})x_{i}]$, is negative in the first half of the epoch and not too big in the second half.

We use the following lemmas, proven in Appendices~\ref{sec:proof-lem-lwrBndBias} and \ref{sec:proof-lem-lwrBndBias2}, respectively.
\begin{lemma}\label{lem:lwrBndBias}
For $x_0 \geq 0$, $0 \leq i \leq \lfloor \frac{N}{2B} \rfloor$, $\eta \leq \frac{B}{513 LN}$, and $\frac{L}{\mu} \geq 7695$,
\begin{align*}
\E[(L1_{x_i\leq 0}+\mu 1_{x_i > 0})x_i] 
\leq \frac{6}{7}L x_0 - \frac{\eta L \nu}{1536} \sqrt{\frac{i}{MB}}.
\end{align*}
\end{lemma}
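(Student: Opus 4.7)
The plan is to prove the bound by unrolling the minibatch $\randshuf$ update within a single epoch and carefully tracking how the without-replacement noise interacts with the ``skewed'' quadratic. First I would rewrite the step update as
\[
x_i = (1-\eta A_{i-1}) x_{i-1} - \frac{\eta \nu}{MB} \sum_{m=1}^M \Delta S^m_i,
\]
where $A_j := L \mathbf{1}_{x_j \leq 0} + \mu \mathbf{1}_{x_j > 0}$ and $\Delta S^m_i = \sum_{l=(i-1)B+1}^{iB} \sigma^m_l$. Iterating and re-arranging by summation-by-parts, $x_i$ becomes a random contraction of $x_0$ plus a linear combination of the partial sums $T^m_j := \sum_{l=1}^{jB} \sigma^m_l$, with weights $\prod (1-\eta A_l)$ that themselves depend on the noise through the signs of past iterates.

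The second step is to use the identity
\[
A_i x_i = \mu\, x_i - (L-\mu)\, x_i^-,
\]
so that $\E[A_i x_i] = \mu \E[x_i] - (L-\mu)\, \E[x_i^-]$, and then produce two quantitative estimates: an upper bound $\E[x_i] \leq x_0 + O\!\bigl(\eta \nu \sqrt{i/(MB)}\bigr)$ using the unrolled recursion together with $\eta \leq B/(513LN)$ and $i \leq N/(2B)$, which together give $\eta L i \leq 1/1026$ so that every compounded factor $\prod(1-\eta A_l)$ stays within a tiny neighborhood of $1$; and a lower bound $\E[x_i^-] \gtrsim \eta \nu \sqrt{i/(MB)}$ coming from the noise. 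The lower bound on $\E[x_i^-]$ is the crux, and it rests on the classical fact that for partial sums of a uniform random permutation of $N/2$ $+1$'s and $N/2$ $-1$'s, $\E|T^m_j| \gtrsim \sqrt{jB}$ for $jB \leq N/2$, which combined across $M$ independent machines gives $\E\bigl|\sum_m T^m_j\bigr| \gtrsim \sqrt{MjB}$ (i.i.d.\ sums of zero-mean variance-$\Theta(jB)$ random variables). This mean-absolute-value estimate says the random displacement $\tfrac{\eta\nu}{MB}\sum_m T^m_i$ has typical size $\eta\nu\sqrt{i/(MB)}$, and by symmetry of the noise distribution it pushes $x_i$ below $0$ with positive probability, producing the claimed lower bound on $\E[x_i^-]$.

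Putting the pieces together: because $\kappa = L/\mu \geq 7695$, the term $\mu \E[x_i]$ is at most $(L/7695)\,x_0 + \mu \cdot O\bigl(\eta \nu \sqrt{i/(MB)}\bigr)$, which is strictly dominated by $\tfrac{6}{7} L x_0$, while $(L-\mu) \E[x_i^-]$ contributes at least a constant fraction of $L \cdot \eta \nu \sqrt{i/(MB)}$. Chasing the constants (and absorbing the small $\mu/L$ cross-terms against the slack in $\tfrac{6}{7}$) yields the $1/1536$ in the stated bound.

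The main obstacle will be the lower bound on $\E[x_i^-]$: the dependence between the $A_j$ coefficients and the noise makes the recursion genuinely nonlinear, and naively conditioning runs into the fact that the values of $\sigma^m_l$ for $l \leq iB$ reshape the conditional law of the remaining signs through without-replacement sampling. I would resolve this by comparing to the auxiliary ``pure-$\mu$'' process $\bar x_i := (1-\eta\mu)\bar x_{i-1} - \tfrac{\eta\nu}{MB}\sum_m \Delta S^m_i$ with $\bar x_0 = x_0$, for which $\E[\bar x_i]$ and $\E[\bar x_i^-]$ admit closed-form expressions linear in the noise; I would then control $|x_i - \bar x_i|$ using the smallness of $\eta L i$ and transfer the lower bound on $\E[\bar x_i^-]$ to $\E[x_i^-]$. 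A symmetrization step (flipping the sign of every $\sigma^m_l$) turns the estimate $\E|\sum_m T^m_i| \gtrsim \sqrt{MiB}$ into a lower bound on $\E[\bar x_i^-]$, closing the argument.
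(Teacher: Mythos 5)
Your decomposition $A_i x_i = \mu x_i - (L-\mu)x_i^-$ (with $x^-:=\max(0,-x)$ and $A_i:=L1_{x_i\leq 0}+\mu 1_{x_i>0}$) is a genuinely different skeleton from the paper's proof, which instead conditions on the sign of $\gE:=\sum_{m=1}^M\sum_{j=1}^{iB}\sigma^m_j$ and bounds $\E[A_ix_i]\leq\P(\gE>0)L\,\E[x_i\mid\gE>0]+\P(\gE\leq 0)\mu\,\E[x_i\mid\gE\leq 0]$; both then invoke the same two quantitative ingredients (the absolute-deviation bound of Lemma~\ref{lem:absolute_deviation_bound} and the permutation anti-concentration of Lemma~\ref{lem:lwrBndKey}). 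However, there is a genuine gap in your step~2: the unconditional lower bound $\E[x_i^-]\gtrsim\eta\nu\sqrt{i/(MB)}$ is false. If $x_0$ is much larger than $\eta\nu\sqrt{i/(MB)}$ (say $x_0=10^{10}\nu/\mu$), every $x_j$ and every $\bar x_j$ stays strictly positive throughout the epoch, so $x_i^-\equiv\bar x_i^-\equiv 0$. Writing $\bar x_i=(1-\eta\mu)^ix_0-\tfrac{\eta\nu}{MB}W_i$ with $W_i$ the accumulated symmetric noise, the estimate you invoke, $\E[\max(0,W_i-c)]\geq\tfrac12\E|W_i|-c$, degenerates precisely when the threshold $c\propto x_0$ overtakes $\E|W_i|\asymp\sqrt{iMB}$, which is exactly this regime; no amount of symmetrization rescues a nonnegative quantity that is identically zero.

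The lemma is still recoverable with your identity, but it requires a case split on $x_0$ that the proposal omits. When $x_0\gtrsim\eta\nu\sqrt{i/(MB)}$, simply discard $\E[x_i^-]\geq 0$ and use that $\mu\E[x_i]\leq\tfrac{513}{512}\mu\bigl(x_0+\eta\nu\sqrt{i/(MB)}\bigr)$ lies far below $\tfrac67 Lx_0$ because $\kappa\geq 7695$, leaving slack to absorb $-\tfrac{\eta L\nu}{1536}\sqrt{i/(MB)}$. When $x_0\lesssim\eta\nu\sqrt{i/(MB)}$, the $\bar x$-comparison does give the lower bound, but note one more subtlety: since $A_j\geq\mu$ and $x_j^-\geq 0$, one has $x_i-\bar x_i=\eta(L-\mu)\sum_{j<i}(1-\eta\mu)^{i-1-j}x_j^-\geq 0$ always, so the naive comparison yields $x_i^-\leq\bar x_i^-$, the \emph{wrong} direction; you must use the magnitude bound together with $\eta Li\leq 1/513$ to conclude $\E[x_i^-]\geq\E[\bar x_i^-]-\E[x_i-\bar x_i]$ with a small correction. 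The paper's conditioning avoids all of this in one stroke: conditioned on $\gE>0$ (probability at least $1/6$), the noise depresses $\E[x_i\mid\gE>0]$ by $\gtrsim\eta\nu\sqrt{i/(MB)}$ \emph{additively in $x_0$}, and this depression is multiplied by the full $L$ through $(L1_{t\leq 0}+\mu1_{t>0})t\leq Lt$, so no separate treatment of large $x_0$ is needed.
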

\begin{lemma}\label{lem:lwrBndBias2}
For $x_0 \geq 0$, $0 \leq i \leq \frac{N}{B}-1$, and $\eta \leq \frac{B}{513 LN}$,
\begin{align*}
\E[(L1_{x_i\leq 0}+\mu 1_{x_i > 0})x_i]
\leq 
\mu \left ( 1 + \frac{513 i \eta L}{512}  \right) x_0 + \frac{513\eta \mu \nu}{512}  \sqrt{\frac{i}{MB}}.
\end{align*}
\end{lemma}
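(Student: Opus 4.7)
The plan is as follows. Substituting into \eqref{eq:lwrbnd_minibatch_epoch}, one iteration of the algorithm reads $x_{i+1} = (1 - \eta s_i)\,x_i - \eta \nu Z_i$, where $s_i \defeq L\,1_{x_i \leq 0} + \mu\,1_{x_i > 0}$ and $Z_i \defeq \tfrac{1}{MB}\sum_{m=1}^M\sum_{j=iB+1}^{(i+1)B} \sigma^m_j$. My starting observation is the pointwise identity $s_i x_i = \mu x_i - (L-\mu)\,|x_i|_-$ with $|x|_- \defeq \max(-x, 0)$, which immediately yields $\E[s_i x_i] \leq \mu\,\E[x_i]$. It therefore suffices to show $\E[x_i] \leq (1 + O(i\eta L))\,x_0 + O(\eta \nu \sqrt{i/(MB)})$.

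Taking expectations in the update and using $\E[Z_i]=0$ gives the exact scalar recursion $\E[x_{i+1}] = (1-\eta\mu)\,\E[x_i] + \eta(L-\mu)\,\E[|x_i|_-]$, and unrolling produces $\E[x_i] \leq x_0 + \eta L \sum_{l=0}^{i-1}\E[|x_l|_-]$. To control $\E[|x_l|_-]$ I would couple $x_l$ with a ``straight-line'' iterate $\tilde x_l$ driven by the same noises but with $s\equiv\mu$: $\tilde x_{l+1} \defeq (1-\eta\mu)\tilde x_l - \eta\nu Z_l$ and $\tilde x_0 \defeq x_0$. A short induction on $\epsilon_l \defeq x_l - \tilde x_l$ shows $\epsilon_l \geq 0$ pointwise: the error recursion $\epsilon_{l+1} = (1-\eta s_l)\epsilon_l - \eta(L-\mu)\,1_{x_l \leq 0}\,\tilde x_l$ has both summands nonnegative, because when $x_l \leq 0$ the inductive hypothesis forces $\tilde x_l \leq x_l \leq 0$. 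Hence $|x_l|_- \leq |\tilde x_l|_-$, and since $\tilde x_l = (1-\eta\mu)^l x_0 - \eta\nu W_l$ is linear in the noise $W_l \defeq \sum_{k=0}^{l-1}(1-\eta\mu)^{l-1-k} Z_k$ with $x_0 \geq 0$, we get $|\tilde x_l|_- \leq \eta\nu|W_l|$.

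It remains to bound $\E[|W_l|] \leq \sqrt{\Var(W_l)}$ (since $\E[W_l]=0$). Using $\Cov(\sigma^m_j, \sigma^m_{j'}) = -1/(N-1)$ for $j \neq j'$ and independence across machines, a direct computation yields $\Var(W_l) = \tfrac{1}{MB(N-1)}\bigl[N\sum_k w_k^2 - B(\sum_k w_k)^2\bigr] \leq l/(MB)$ up to lower-order corrections in $\eta\mu l$; alternatively, Lemma~\ref{lem:concineq} applied to the weighted partial sum (whose weights $w_k = (1-\eta\mu)^{l-1-k}$ are within $o(1)$ of $1$) delivers the same bound. Plugging $\E[|\tilde x_l|_-] \leq \eta\nu\sqrt{l/(MB)}$ back gives $\E[x_i] \leq x_0 + \eta^2 L\nu \cdot O(i^{3/2}/\sqrt{MB}) = x_0 + O(\eta L i)\cdot \eta\nu\sqrt{i/(MB)}$, and because $\eta \leq B/(513 LN)$ and $i \leq N/B$ force $\eta L i \leq 1/513$, the second term is a small multiple of $\eta\nu\sqrt{i/(MB)}$ (the $513/512$ slack in the statement originates from exactly this geometric telescoping). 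Multiplying by $\mu$ completes the proof.

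The main obstacle I anticipate is the variance bound on $W_l$: a naive triangle-inequality estimate $\E[|W_l|] \leq \sum_k|w_k|\,\E[|Z_k|]$ scales like $l/\sqrt{MB}$, an $\sqrt{l}$ loss that would prevent the proof from closing unless $\eta L i^{3/2} = O(1)$, a more stringent condition than the hypothesis. Exploiting the cross-block cancellations via the negative covariance structure of sampling without replacement (or equivalently via concentration of the full weighted partial sum) is what recovers the tight $\sqrt{l/(MB)}$ rate and lets the $\eta L i \leq 1/513$ budget absorb the extra factor of $i$ coming from summation over $l$.
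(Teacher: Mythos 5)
Your proof is correct and reaches the same conclusion, but it takes a genuinely different route from the paper's. The paper's proof of Lemma~\ref{lem:lwrBndBias2} is a three-line argument: it applies the pointwise inequality $(L1_{x_i\le 0}+\mu 1_{x_i>0})x_i\le \mu x_i$, bounds $\E[x_i]\le x_0+\E[|x_i-x_0|]$, and then invokes Lemma~\ref{lem:absolute_deviation_bound} directly. That auxiliary lemma is in turn proved by combining the unweighted concentration estimate from Lemma~\ref{lem:lwrBndKey} with a Gronwall-style recursion: one derives the implicit inequality $\E[|x_i-x_0|]\le \eta\nu\sqrt{i/(MB)}+i\eta Lx_0+\eta L\sum_{j<i}\E[|x_j-x_0|]$ and solves it by introducing the auxiliary sequence $h(i)$. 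You start from the same pointwise inequality $s_ix_i\le\mu x_i$ but then bypass Lemma~\ref{lem:absolute_deviation_bound} entirely: you unroll the exact scalar recursion for $\E[x_i]$, couple the true iterate $x_l$ pathwise against the linear comparison iterate $\tilde x_l$ driven by the same noise, verify by induction that $x_l\ge\tilde x_l$ pointwise so that $|x_l|_-\le|\tilde x_l|_-\le\eta\nu|W_l|$, and then estimate $\E[|W_l|]$ via a direct variance computation that exploits the negative covariance $\Cov(\sigma^m_j,\sigma^m_{j'})=-1/(N-1)$. What your route buys is a self-contained, non-recursive argument and a marginally sharper conclusion (your bound on $\E[x_i]$ has no $i\eta Lx_0$ drift term, since that term in the paper comes from the self-referencing $\sum_j\E[|x_j|]$). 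What you lose is modularity: the paper reuses Lemma~\ref{lem:absolute_deviation_bound} in the companion Lemma~\ref{lem:lwrBndBias}, so factoring it out pays for itself. Two minor points worth tightening if you were to write this up: the claim ``$\Var(W_l)\le l/(MB)$ up to lower-order corrections'' is justified more cleanly by the paper's observation that the cross terms $\E[\sigma_j\sigma_{j'}]$ are nonpositive (so $\Var(\sum_j c_j\sigma_j)\le\sum_j c_j^2$), rather than by keeping $-1/(N-1)$ explicitly and then discarding a \emph{different} negative term; and Lemma~\ref{lem:concineq} as stated applies only to unweighted partial sums, so the parenthetical ``or equivalently via [that lemma]'' is a handwave that your direct computation correctly avoids relying on.
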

The key intuition is that, for $\frac{L}{\mu}$ big enough, we can use the lemmas above in \eqref{eq:lwrbnd_minibatch_epoch} to get
\begin{align*}
    \E[x_{\frac{N}{B}}-x_0]&=-\eta \sum_{i=0}^{\frac{N}{B}-1}(L1_{x_i\leq 0}+\mu1_{x_i>0})x_{i}\\
    &\approx \Omega\left( \eta \frac{N}{B} \eta L \nu \sqrt{\frac{N/B}{MB}}\right),
\end{align*}
whenever $|x_0|$ is small. 
Multiplying the above by $K$ (for $K$ epochs) will give us the required lower bound (up to factors of $\frac{L}{\mu}$).
We will make this approximate calculation precise in the rest of the proof.

Using the two lemmas above in \eqref{eq:lwrbnd_minibatch_epoch}, we get that 
\begin{align}
   &\E[ x_{\frac{N}{B}}-x_0]\notag\\
   =&\,-\eta \sum_{i=0}^{\frac{N}{B}-1}\E[(L1_{x_i\leq 0}+\mu1_{x_i>0})x_{i}]\notag\\
   =&\,-\eta \sum_{i=0}^{\lfloor \frac{N}{2B} \rfloor}\E[(L1_{x_i\leq 0}+\mu1_{x_i>0})x_{i}]-\eta \sum_{i=\lfloor \frac{N}{2B} \rfloor+1}^{\frac{N}{B}-1}\E[(L1_{x_i\leq 0}+\mu1_{x_i>0})x_{i}]\notag\\
   \geq&\,-\eta \sum_{i=0}^{\lfloor \frac{N}{2B} \rfloor}\left( \frac{6}{7}L x_0 - \frac{\eta L \nu}{1536} \sqrt{\frac{i}{MB}} \right)\notag\\
   &\,-\eta \sum_{i=\lfloor \frac{N}{2B} \rfloor+1}^{\frac{N}{B}-1}\left(\mu \left ( 1 + \frac{513 i \eta L}{512}  \right) x_0 + \frac{513\eta \mu \nu}{512}  \sqrt{\frac{i}{MB}}\right).\label{eq:proof-thm-minibatchRS-LB-5}
\end{align}
Since $i \eta L \leq \frac{\eta LN}{B} \leq \frac{1}{513}$, $\mu \leq \frac{L}{7695}$, and $N/B \geq 2$, the following bound holds:
\begin{align}
    &\,\sum_{i=0}^{\lfloor \frac{N}{2B} \rfloor} \frac{6}{7}L 
    +\sum_{i=\lfloor \frac{N}{2B} \rfloor+1}^{\frac{N}{B}-1} \mu \left ( 1 + \frac{513 i \eta L}{512}  \right)\notag\\
    \leq&\,
    \left ( \left \lfloor \frac{N}{2B} \right \rfloor + 1 \right ) \frac{6L}{7}
    + \left ( \frac{N}{B} - \left \lfloor \frac{N}{2B} \right \rfloor -1 \right ) \frac{L}{7695} \left ( 1 + \frac{1}{512} \right )\notag\\
    \leq&\,
    \frac{6LN}{7B} + \frac{LN}{7680 B} \leq \frac{7LN}{8B}.
    \label{eq:proof-thm-minibatchRS-LB-6}
\end{align}
Also, note that $\lfloor \frac{N}{2B} \rfloor \geq \frac{N}{3B}$ whenever $N/B \geq 2$. We have
\begin{align}
    \sum_{i=0}^{\lfloor \frac{N}{2B} \rfloor} \sqrt{i} 
    &\geq \int_0^{\lfloor \frac{N}{2B} \rfloor} \sqrt{t} dt 
    = \frac{2}{3} \left ( \left \lfloor \frac{N}{2B} \right \rfloor \right )^{3/2} 
    \geq \frac{2}{3} \left ( \frac{N}{3B} \right )^{3/2}
    = \frac{2N^{3/2}}{9\sqrt{3} B^{3/2}},\label{eq:proof-thm-minibatchRS-LB-7}\\
    \sum_{i=\lfloor \frac{N}{2B} \rfloor+1}^{\frac{N}{B}-1} \sqrt{i} 
    &\leq \int_{\lfloor \frac{N}{2B} \rfloor+1}^{\frac{N}{B}} \sqrt{t} dt 
    \leq \frac{2}{3} \left [ \left ( \frac{N}{B} \right)^{3/2} - \left(\left \lfloor \frac{N}{2B} \right \rfloor+1 \right)^{3/2} \right ]\notag\\
    &\leq \frac{2}{3} \left [ \left ( \frac{N}{B} \right)^{3/2} - \left(\frac{N}{2B} \right)^{3/2} \right ]
    = \frac{(2\sqrt{2}-1) N^{3/2}}{3\sqrt{2} B^{3/2}}
    \leq \frac{N^{3/2}}{2B^{3/2}}.\label{eq:proof-thm-minibatchRS-LB-8}
\end{align}
Substituting the bounds \eqref{eq:proof-thm-minibatchRS-LB-6}, \eqref{eq:proof-thm-minibatchRS-LB-7}, and \eqref{eq:proof-thm-minibatchRS-LB-8} into \eqref{eq:proof-thm-minibatchRS-LB-5}, and using $\mu \leq \frac{L}{7695}$,
\begin{align}
    \E[ x_{\frac{N}{B}}-x_0]
    &\geq - \frac{7\eta LN}{8B} x_0 + \frac{\eta^2 L \nu N^{3/2}}{6912 \sqrt{3} M^{1/2} B^2} - \frac{513\eta^2 \mu \nu N^{3/2}}{1024 M^{1/2}B^2}\notag\\
    &\geq - \frac{7\eta LN}{8B} x_0 + \frac{\eta^2 L \nu N^{3/2}}{56000 M^{1/2}B^2}.\label{eq:drift1}
\end{align}

For the other case $x_0 < 0$, we have the following Lemma, which we prove in Appendix~\ref{sec:proof-lem-coupling}:
\begin{lemma}\label{lem:coupling}
If $\eta\leq \frac{B}{513 NL}$ and an epoch starts at $x_0 < 0$, then
    \begin{align*}
        \E[ x_{\frac{N}{B}} \mid x_0 < 0]\geq \left (1-\frac{7\eta L N}{8B} \right ) x_0. 
    \end{align*}
    Further, if the first epoch of the algorithm is initialized at $0$, then for any starting iterate $x_0$ of any following epoch, we have $\P(x_0\geq 0)\geq 1/2$.
\end{lemma}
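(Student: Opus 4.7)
The plan is to prove both assertions via coupling arguments that exploit the piecewise structure of the gradient and the sign-symmetry of the $\pm 1$ noise. Recall from \eqref{eq:lwrbnd_minibatch_epoch} that within an epoch the updates read $x_{i+1}=(1-\eta c(x_i))x_i-\frac{\eta\nu}{MB}S_i$, where $c(x)\defeq L1_{x\leq 0}+\mu1_{x>0}$ and $S_i\defeq\sum_{m=1}^M\sum_{j=iB+1}^{(i+1)B}\sigma_j^m$, with $\E[S_i]=0$ because each $\sigma^m$ is a uniform permutation of $N/2$ $+1$'s and $N/2$ $-1$'s.

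For the first claim, I would introduce an auxiliary ``always-$L$'' process $\tilde x_{i+1}=(1-\eta L)\tilde x_i-\frac{\eta\nu}{MB}S_i$ driven by the same noise with $\tilde x_0=x_0<0$. A short case analysis on the signs of $(x_i,\tilde x_i)$ establishes the pointwise invariant $x_i\geq\tilde x_i$: when both iterates share a sign, the difference contracts by the nonnegative factor $(1-\eta L)$ or $(1-\eta\mu)$; in the mixed case $\tilde x_i<0\leq x_i$, the quantity $(1-\eta\mu)x_i-(1-\eta L)\tilde x_i$ is a sum of two nonnegative terms. Since $\tilde x_{N/B}$ is linear in the $S_i$'s, taking expectations gives $\E[\tilde x_{N/B}]=(1-\eta L)^{N/B}x_0$. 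A second-order expansion $1-(1-\eta L)^{N/B}\geq \eta LN/B-\binom{N/B}{2}(\eta L)^2$ together with $\eta LN/B\leq 1/513$ yields $(1-\eta L)^{N/B}\leq 1-\tfrac{7}{8}\eta LN/B$, and multiplying by $x_0<0$ flips the inequality to the desired form.

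For the second claim, I would couple the algorithm on permutations $\{\sigma^m_k\}$ with the algorithm on sign-flipped permutations $\{-\sigma^m_k\}$; call the two trajectories $\{x_i^{(k)}\}$ and $\{y_i^{(k)}\}$, both initialized at $0$. Because a uniform permutation of $N/2$ $+1$'s and $N/2$ $-1$'s has the same distribution as its negation, the two processes have identical joint laws, so in particular $\P(x_{k,0}\geq 0)=\P(y_{k,0}\geq 0)$. The invariant to prove by induction on $(k,i)$ is $x_i^{(k)}+y_i^{(k)}\geq 0$: across epoch boundaries this is automatic; within an epoch the noise terms cancel when summing the two updates, and one verifies that the case $x_i,y_i\leq 0$ contradicts the hypothesis, the case $x_i,y_i\geq 0$ yields $x_{i+1}+y_{i+1}=(1-\eta\mu)(x_i+y_i)\geq 0$, and in the mixed case (say $x_i\geq 0\geq y_i$) one has $(1-\eta\mu)x_i+(1-\eta L)y_i\geq(1-\eta L)(x_i-|y_i|)\geq 0$ using $x_i\geq|y_i|$ and $1-\eta\mu\geq 1-\eta L\geq 0$. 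The invariant forces $\{x_{k,0}<0\}\cap\{y_{k,0}<0\}=\emptyset$, so $\P(x_{k,0}\geq 0)+\P(y_{k,0}\geq 0)\geq 1$; equality in distribution then gives each probability at least $1/2$.

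The main obstacle in both parts is ensuring that the coupling invariants survive the jump of $c(\cdot)$ at zero: two coupled iterates sitting on opposite sides of zero are driven by different Lipschitz constants in the same step, yet the inequality must still propagate. Both cases collapse to the elementary bound $(1-\eta\mu)a\geq(1-\eta L)b$ for $a\geq b\geq 0$, which is secured by the step-size assumption $\eta\leq\frac{B}{513 NL}$ together with $L\geq\mu$.
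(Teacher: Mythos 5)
Your proof is correct. The first claim is essentially the paper's argument: your ``always-$L$'' recursion $\tilde x_{i+1}=(1-\eta L)\tilde x_i-\frac{\eta\nu}{MB}S_i$ is literally minibatch $\randshuf$ applied to the auxiliary function $G_2(x)=\frac{Lx^2}{2}$ (with components $g_{\pm1}(x)=\frac{Lx^2}{2}\pm\nu x$), so the coupled case analysis establishing $x_i\geq\tilde x_i$ and the linearity-in-noise computation $\E[\tilde x_{N/B}]=(1-\eta L)^{N/B}x_0$ reproduce the paper's coupling with $G_2$; the paper checks the algebraic inequality $(1-\eta L)^{N/B}\leq 1-\frac{7\eta LN}{8B}$ by a calculus argument on $1-\frac{7zN}{8B}-(1-z)^{N/B}$, whereas you use a truncated binomial expansion, but both are elementary and correct.

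The second claim is where you genuinely depart from the paper. The paper reuses the $F\geq G$ coupling and combines it with the observation that the $G_2$-process started at $0$ is symmetric (since $\nabla g_{+1}(-x)=-\nabla g_{-1}(x)$ and $\sigma\stackrel{d}{=}-\sigma$), so $\P(x_{0,G}\geq 0)\geq\frac12$ and hence $\P(x_{0,F}\geq 0)\geq\frac12$. You instead couple the $F$-process with its own sign-flipped copy, establish the additive invariant $x_i+y_i\geq 0$ by a sign case analysis that parallels the one used for the order invariant, and deduce $\P(x_{k,0}\geq 0)+\P(y_{k,0}\geq 0)\geq 1$ with equality of marginals. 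This is self-contained in the sense that it does not lean on any separate symmetry property of an auxiliary process, at the cost of running a second induction; the paper's route is shorter because the Part-1 coupling is already in hand. One small inaccuracy: you say the case $x_i,y_i\leq 0$ ``contradicts the hypothesis,'' but it does not --- the hypothesis $x_i+y_i\geq 0$ allows $x_i=y_i=0$ (where $c(0)=L$); the invariant still propagates there since $x_{i+1}+y_{i+1}=(1-\eta L)\cdot 0=0$, so the conclusion is unaffected.
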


Using \eqref{eq:drift1} and Lemma \ref{lem:coupling} we get
\begin{align*}
    \E[x_{\frac{N}{B}}]
    &=\P(x_0\geq 0)\E[x_{\frac{N}{B}} \mid x_0\geq 0]+\P(x_0<0)\E[x_{\frac{N}{B}} \mid x_0 < 0]\\
    &\geq \P(x_0\geq 0)\left( \left(1- \frac{7\eta L N}{8B} \right)x_0 +
    \frac{\eta^2 L \nu N^{3/2}}{56000 M^{1/2}B^2} \right)
    +\P(x_0<0) \left(1- \frac{7\eta L N}{8B} \right) x_0\\
    &\geq \left(1- \frac{7\eta L N}{8B} \right)x_0 
    + \frac{\eta^2 L \nu N^{3/2}}{112000 M^{1/2}B^2}.
\end{align*}

Thus far, we have characterized the expected per-epoch update, starting from the initial iterate $x_0$ and iterating until the last iterate $x_{\frac{N}{B}}$ of the epoch. 
Now recall that we run the algorithm for $K$ epochs. Using $x_{k,i}$ to denote the $i$-th iterate of the $k$-th epoch, we get a lower bound on the expectation of the last iterate $x_{k,\frac{N}{B}}$ if we initialize at $x_{1,0}=0$:
\begin{align*}
    \E[x_{K,\frac{N}{B}}]
    &\geq \left(1- \frac{7\eta L N}{8B} \right)^K x_{1,0} 
    + \frac{\eta^2 L \nu N^{3/2}}{112000 M^{1/2}B^2} \sum_{k=0}^{K-1} \left(1- \frac{7\eta L N}{8B} \right)^{k}\\
    &= \frac{\eta^2 L \nu N^{3/2}}{112000 M^{1/2}B^2} \frac{1-\left(1- \frac{7\eta L N}{8B} \right)^K}{\frac{7\eta L N}{8B}}\\
    &=  \frac{\eta \nu N^{1/2}}{98000 M^{1/2}B} \left(1-\left(1-\frac{7\eta L N}{8B}\right)^K\right)\\
    &\geq \frac{\eta \nu N^{1/2}}{98000 M^{1/2}B} \left(1-\left(1-\frac{7L}{8\mu K}\right)^K\right).\tag*{(Since $\eta\geq \frac{B}{\mu NK}$)}
\end{align*}
Note that since $\frac{L}{\mu} \geq 7695$ and $K \geq \frac{513L}{\mu}$ (which is implied by $\frac{B}{\mu N K} \leq \eta \leq \frac{B}{513LN}$),
\begin{align*}
    1-\left(1-\frac{7L}{8\mu K}\right)^K
    \geq 1-e^{-\frac{7L}{8\mu}} \geq 1-e^{-6733} \approx 1.
\end{align*}
Hence, we get from $\eta \geq \frac{B}{\mu NK}$ that
\begin{align*}
    \E[x_{K,\frac{N}{B}}]=\Omega\left(   \frac{\eta \nu N^{1/2}}{M^{1/2}B} \right)
    =\Omega \left ( \frac{\nu}{\mu M^{1/2} N^{1/2} K} \right),
\end{align*}
and by Jensen's inequality, we finally have
\begin{align*}
    \E[F(x_{K,\frac{N}{B}})] 
    \geq \bhalf \E[\mu x^2_{K,\frac{N}{B}}] 
    = \Omega (\mu \E[ x_{K,\frac{N}{B}}]^2) 
    = \Omega\left(  \frac{\nu^2}{\mu MNK^2} \right).
\end{align*}

\subsection{Lower bound for $\eta \geq \frac{B}{\mu NK}$ and $\eta \geq \frac{B}{513 LN}$}
Similar to earlier parts of the proof, here as well, each machine will have the same component functions, that is, there will be no inter-machine deviation.
The proof uses a similar construction as \cite{safran2020good, safran2021random}:
\begin{align*}
    F_3(x)&:=\frac{1}{N}\left(\sum_{i=1}^{\frac{N}{2}}f_{+1}(x)+\sum_{i=\frac{N}{2}+1}^{N}f_{-1}(x)\right), \text{ where}\\
    f_{+1}(x)&:=\frac{L x^2}{2}+\nu x,\text{ and }
    f_{-1}(x):=\frac{L x^2}{2}-\nu x.
\end{align*}
Hence, $F_3(x)=\frac{L x^2}{2}$, and has its minimizer at 0.

We first compute the expected ``progress'' over a given epoch. For simplicity, let us omit the subscript for epochs for now.
Let $x_0$ denote the iterate at the beginning of the epoch and $x_i$ denote the iterate after the $i$-th communication round in that epoch. For a given epoch, let $\sigma^m$ be the permutation of $\frac{N}{2}$ $+1$'s and $\frac{N}{2}$ $-1$'s, sampled by machine $m$. Then, 
\begin{align*}
    x_{\frac{N}{B}}&= x_{\frac{N}{B}-1}- \frac{\eta }{MB} 
    \sum_{m=1}^M\sum_{j=(\frac{N}{B}-1)B+1}^{N}(Lx_{\frac{N}{B}-1} + \nu \sigma^m_{j})\\
    &= (1-\eta L)x_{\frac{N}{B}-1} - \frac{\eta \nu }{MB} 
    \sum_{m=1}^M\sum_{j=(\frac{N}{B}-1)B+1}^{N} \sigma^m_{j}\\
    &= \cdots\\
    &= (1-\eta L)^{\frac{N}{B}}x_0 - \frac{\eta \nu}{MB} 
    \sum_{i=1}^{\frac{N}{B}}(1-\eta L )^{\frac{N}{B}-i}\sum_{m=1}^M\sum_{j=(i-1)B+1}^{iB}\sigma^m_{j}.
\end{align*}
For the rest of this proof, $x_i^2$ refers to the square of the $i$-th iterate. Then,
\begin{align}
    \E[x_{\frac{N}{B}}^2]
    &= 
    (1-\eta L )^{\frac{2N}{B}}x_0^2 
    - \frac{2\eta \nu (1-\eta L )^{\frac{N}{B}}x_0}{MB}
    \E\left[\left(\sum_{i=1}^{\frac{N}{B}}(1-\eta L )^{\frac{N}{B}-i}\sum_{m=1}^M\sum_{j=(i-1)B+1}^{iB}\sigma^m_{j}\right)\right]\notag\\
    &\quad+\frac{\eta^2 \nu^2}{M^2B^2}
    \E\left[\left(\sum_{i=1}^{\frac{N}{B}}(1-\eta L )^{\frac{N}{B}-i}\sum_{m=1}^M\sum_{j=(i-1)B+1}^{iB}\sigma^m_{j}\right)^2\right]\notag\\
    &= (1-\eta L )^{\frac{2N}{B}}x_0^2
    +\frac{\eta^2 \nu^2}{M^2B^2} \E\left[\left(\sum_{i=1}^{\frac{N}{B}}(1-\eta L )^{\frac{N}{B}-i}\sum_{m=1}^M\sum_{j=(i-1)B+1}^{iB}\sigma^m_{j}\right)^2\right],\label{eq:proof-thm-minibatchRS-LB-1}
\end{align}
where we used the fact that $\E[\sigma^m_j]=0$. Further, because $\sigma^m$ and $\sigma^{m'}$ are independent and identically distributed for different $m$ and $m'$, we get that
\begin{align*}
    &\,
    \E\left[\left(\sum_{i=1}^{\frac{N}{B}}(1-\eta L )^{\frac{N}{B}-i}\sum_{m=1}^M\sum_{j=(i-1)B+1}^{iB}\sigma^m_{j}\right)^2\right]\\
    =&\,
    \E\left[\left(\sum_{m=1}^M\sum_{i=1}^{\frac{N}{B}}(1-\eta L )^{\frac{N}{B}-i}\sum_{j=(i-1)B+1}^{iB}\sigma^m_{j}\right)^2\right]\\
    =&\,
    \sum_{m=1}^M\E\left[\left(\sum_{i=1}^{\frac{N}{B}}(1-\eta L )^{\frac{N}{B}-i}\sum_{j=(i-1)B+1}^{iB}\sigma^m_{j}\right)^2\right]\\
    &\quad+\sum_{m\neq m'}
    \E\left[\sum_{i=1}^{\frac{N}{B}}(1-\eta L )^{\frac{N}{B}-i}\sum_{j=(i-1)B+1}^{iB}\sigma^m_{j}\right]\E\left[\sum_{i=1}^{\frac{N}{B}}(1-\eta L )^{\frac{N}{B}-i}\sum_{j=(i-1)B+1}^{iB}\sigma^{m'}_{j}\right]\\
    =&\,
    M\E\left[\left(\sum_{i=1}^{\frac{N}{B}}(1-\eta L )^{\frac{N}{B}-i}\sum_{j=(i-1)B+1}^{iB}\sigma^1_{j}\right)^2\right],
\end{align*}
where the last equality used the fact that $\E[\sigma^m_j]=0$ for all $m \in [M]$ and $i \in [N]$, and that $\sigma^m$ are identically distributed. Since we only consider the permutation $\sigma^1$ (i.e., the one for machine $1$) from now on, we henceforth omit the superscript.
Substituting this to \eqref{eq:proof-thm-minibatchRS-LB-1} gives
\begin{align}
    \E[x_{\frac{N}{B}}^2] = (1-\eta L )^{\frac{2N}{B}}x_0^2+\frac{\eta^2\nu^2}{MB^2}
    \underbrace{\E\left[\left(\sum_{i=1}^{\frac{N}{B}}(1-\eta L )^{\frac{N}{B}-i}\sum_{j=(i-1)B+1}^{iB}\sigma_{j}\right)^2\right]}_{\eqdef \Phi}.\label{eq:lwrbnd_minibatch_epoch_bigstep}
\end{align}

From \eqref{eq:lwrbnd_minibatch_epoch_bigstep}, we have calculated the per-epoch expected update. Recall that we run the algorithm for $K$ epochs. Using $x_{k,i}$ to denote the $i$-th iterate of the $k$-th epoch, we get a lower bound on the expectation of the last iterate $x_{k,\frac{N}{B}}$ squared:
\begin{equation}
    \E[x_{k,\frac{N}{B}}^2] 
    = (1-\eta L )^{\frac{2NK}{B}}x_{1,0}^2
    + \frac{\eta^2\nu^2}{MB^2} \Phi \sum_{k=0}^{K-1} (1-\eta L)^{\frac{2Nk}{B}}
    \geq \frac{\eta^2\nu^2}{MB^2} \Phi,
    \label{eq:proof-thm-minibatchRS-LB-2}
\end{equation}
where the inequality used $x_{1,0} = 0$ and $\sum_{k=0}^{K-1} (1-\eta L)^{\frac{2Nk}{B}} \geq (1-\eta L)^0 = 1$.
Next, we analyze the expectation term, i.e., $\Phi$, defined in \eqref{eq:lwrbnd_minibatch_epoch_bigstep}.
\begin{align}
    \Phi
    &\defeq\E\left[\left(\sum_{i=1}^{\frac{N}{B}}(1-\eta L )^{\frac{N}{B}-i}\sum_{j=(i-1)B+1}^{iB}\sigma_{j}\right)^2\right]\notag\\
    &=
    \sum_{j=1}^{N}(1-\eta L )^{2(\frac{N}{B}-\lfloor\frac{j-1}{B}\rfloor-1)}\sigma_j^2
    +\sum_{j\neq j'} (1-\eta L )^{\frac{N}{B}-\lfloor\frac{j-1}{B}\rfloor-1} (1-\eta L )^{\frac{N}{B}-\lfloor\frac{j'-1}{B}\rfloor-1} \E[\sigma_j\sigma_{j'}]\notag
\end{align}
Noting that $\sigma_j^2=1$ and $\E[\sigma_j\sigma_{j'}]=-\frac{1}{N-1}$, we get
\begin{align}
    \Phi&=B\sum_{j=0}^{\frac{N}{B}-1}(1-\eta L)^{2j}
    -\frac{1}{N-1}\sum_{j\neq j'}(1-\eta L )^{\frac{N}{B}-\lfloor\frac{j-1}{B}\rfloor-1}(1-\eta L )^{\frac{N}{B}-\lfloor\frac{j'-1}{B}\rfloor-1}\notag\\
    &=B\sum_{j=0}^{\frac{N}{B}-1}(1-\eta L)^{2j}
    -\frac{1}{N-1}\left(\left(\sum_{j=1}^{N}(1-\eta L )^{\frac{N}{B}-\lfloor\frac{j-1}{B}\rfloor-1}\right)^2
    -\sum_{j=1}^{N}(1-\eta L )^{2(\frac{N}{B}-\lfloor\frac{j-1}{B}\rfloor-1)}\right)\notag\\
    &=B\sum_{j=0}^{\frac{N}{B}-1}(1-\eta L )^{2j}-\frac{1}{N-1}\left(B^2\left(\sum_{j=0}^{\frac{N}{B}-1}(1-\eta L )^{j}\right)^2-B\sum_{j=0}^{\frac{N}{B}-1}(1-\eta L )^{2j}\right)\notag\\
    &=\frac{B^2}{N-1}\left(\frac{N}{B}\sum_{j=0}^{\frac{N}{B}-1}(1-\eta L )^{2j}-\left(\sum_{j=0}^{\frac{N}{B}-1}(1-\eta L )^{j}\right)^2\right)\notag\\
    &=\frac{B^2(\frac{N}{B}-1)}{N-1}
    \left(\left(1 + \frac{1}{\frac{N}{B}-1}\right) \sum_{j=0}^{\frac{N}{B}-1}(1-\eta L )^{2j}-\frac{1}{\frac{N}{B}-1}\left(\sum_{j=0}^{\frac{N}{B}-1}(1-\eta L )^{j}\right)^2\right).\label{eq:proof-thm-minibatchRS-LB-3}
\end{align}
Note that the term in the parenthesis is exactly the right hand side of Equation (23) in \cite{safran2020good} modulo $n$ and $\alpha$ replaced with $\frac{N}{B}$ and $\eta L$, respectively. Hence, by Lemma~1 of \cite{safran2020good}, we have
\begin{align}
    \left(1 + \frac{1}{\frac{N}{B}-1}\right) \sum_{j=0}^{\frac{N}{B}-1}(1-\eta L )^{2j}
    -\frac{1}{\frac{N}{B}-1}\left(\sum_{j=0}^{\frac{N}{B}-1}(1-\eta L )^{j}\right)^2
    \!\geq c \cdot \min \left \{ \frac{1}{\eta L}, \frac{\eta^2 L^2 N^3}{B^3} \right\},\label{eq:proof-thm-minibatchRS-LB-4}
\end{align}
for some universal constant $c > 0$.
Using the fact that $\eta \geq \frac{B}{513 LN}$, it is easy to check that the RHS of \eqref{eq:proof-thm-minibatchRS-LB-4} is lower-bounded by $\frac{c'}{\eta L}$, where $c'>0$ is a universal constant. Combining \eqref{eq:proof-thm-minibatchRS-LB-2}, \eqref{eq:proof-thm-minibatchRS-LB-3}, and \eqref{eq:proof-thm-minibatchRS-LB-4} gives
\begin{align*}
    \E[x_{k,\frac{N}{B}}^2] 
    \geq \frac{\eta^2\nu^2}{MB^2} \cdot \frac{B^2 (\frac{N}{B}-1)}{N-1} \cdot \frac{c'}{\eta L} 
    = \frac{c' \eta \nu^2}{LMB}\frac{N-B}{N-1}.
\end{align*}
Since $2B$ divides $N$, we have $B \leq N/2$. Since $N \geq 2$, we have $\frac{N-B}{N-1} \geq \frac{N}{2(N-1)} \geq \frac{1}{2}$. Using this and the fact that $\eta \geq \frac{B}{\mu N K}$, we get
\begin{align*}
    \E[F_3(x_{k,\frac{N}{B}})] = \frac{L}{2} \E [x_{k,\frac{N}{B}}^2]
    \geq \frac{c' \nu^2}{4 \mu MNK}.
\end{align*}


\section{Proofs of helper lemmas for Appendix~\ref{sec:proof-thm-minibatchRS-LB}}
\subsection{Proof of Lemma \ref{lem:lwrBndBias}}
\label{sec:proof-lem-lwrBndBias}
First, if $i = 0$ then the lemma trivially holds, because $x_0 \geq 0$ gives
\begin{equation*}
    \E[(L1_{x_0\leq 0}+\mu 1_{x_0 > 0})x_0] = \mu x_0 \leq \frac{6}{7}Lx_0.
\end{equation*}
The inequality holds because $\frac{L}{\mu} \geq 7695$.

For the rest of the proof, we consider the case $1 \leq i \leq \frac{N}{2B}$.
By the law of total expectation we have
\begin{align}
    \E[(L1_{x_i\leq 0}+\mu 1_{x_i > 0})x_i]&=\P\left(\sum_{m=1}^M\sum_{j=1}^{iB}\sigma_j^m> 0\right)\E\left[(L1_{x_i\leq 0}+\mu 1_{x_i > 0})x_i\middle|\sum_{m=1}^M\sum_{j=1}^{iB}\sigma_j^m> 0\right]\nonumber\\
    &\quad +\P\left(\sum_{m=1}^M\sum_{j=1}^{iB}\sigma_j^m \leq 0\right)\E\left[(L1_{x_i\leq 0}+\mu 1_{x_i > 0})x_i\middle|\sum_{m=1}^M\sum_{j=1}^{iB}\sigma_j^m \leq 0\right]\nonumber\\
    &\leq \P\left(\sum_{m=1}^M\sum_{j=1}^{iB}\sigma_j^m> 0\right)L \E\left[x_i\middle|\sum_{m=1}^M\sum_{j=1}^{iB}\sigma_j^m> 0\right]\nonumber\\
    &\quad +\P\left(\sum_{m=1}^M\sum_{j=1}^{iB}\sigma_j^m \leq  0\right)\mu\E\left[x_i\middle|\sum_{m=1}^M\sum_{j=1}^{iB}\sigma_j^m \leq 0\right],\label{eq:lem5eq0}
\end{align}
where the last inequality used the fact that $(L1_{t\leq 0}+\mu 1_{t > 0})t\leq Lt$ and $(L1_{t\leq 0}+\mu 1_{t > 0})t\leq \mu t$ for any $t \in \reals$.

Define $\gE:=\sum_{m=1}^M\sum_{j=1}^{iB}\sigma_j^m$. We handle each of the two expectations in \eqref{eq:lem5eq0} separately. We first bound $\E\left[x_i\middle|\gE>0\right]$.
\begin{align}
\E\left[x_i\middle|\gE>0\right]&=\E\left[x_0- \sum_{j=0}^{i-1} \frac{\eta}{MB} \sum_{m=1}^M\sum_{k=jB+1}^{(j+1)B} (\nu\sigma_k^m + (L1_{x_j\leq 0}+\mu 1_{x_j > 0})x_j) \middle|\gE>0\right] \notag\\
&=\E\left[x_0- \sum_{j=0}^{i-1} \frac{\eta}{MB} \sum_{m=1}^M\sum_{k=jB+1}^{(j+1)B} (\nu\sigma_k^m + (L1_{x_j\leq 0}+\mu 1_{x_j > 0})(x_j-x_0)) \middle|\gE>0\right] \notag\\
&\quad +\E\left[- \sum_{j=0}^{i-1} \frac{\eta}{MB} \sum_{m=1}^M\sum_{k=jB+1}^{(j+1)B}  (L1_{x_j\leq 0}+\mu 1_{x_j > 0})x_0 \middle|\gE>0\right] \notag\\
&=x_0 \E\left[1-\eta\sum_{j=0}^{i-1}(L1_{x_j\leq 0}+\mu 1_{x_j > 0})\middle|\gE>0\right]-\frac{\eta  \nu}{MB}\E \left[ \gE \middle|\gE>0 \right] \notag\\
&\quad -\eta \sum_{j=0}^{i-1}  \E\left[(L1_{x_j\leq 0}+\mu 1_{x_j > 0})(x_j-x_0) \middle|\gE>0\right] \notag\\
&\leq x_0 \E\left[1-\eta\sum_{j=0}^{i-1}(L1_{x_j\leq 0}+\mu 1_{x_j > 0})\middle|\gE>0\right]-\frac{\eta  \nu}{MB}\E \left[ \gE \middle|\gE>0 \right] \notag\\
&\quad + \eta L \sum_{j=0}^{i-1}  \E[|x_j-x_0| \  |\ \gE>0].\label{eq:proof-lem-lwrBndBias-1}
\end{align}
Next, we use the following lemma to bound the conditional expectations that arise in \eqref{eq:proof-lem-lwrBndBias-1}. This lemma is proven in Appendix~\ref{sec:proof-lem-lwrBndKey} and it may be of independent interest to readers.
\begin{restatable}{lemma}{lwrBndKey}
\label{lem:lwrBndKey}
For $m \in [M]$, let $\sigma^m$ be a random permutation of $\frac{N}{2}$ $+1$'s and $\frac{N}{2}$ $-1$'s. Then, for any $i\leq \frac{N}{2}$ and $k\leq \frac{B}{2}$, we have 
\begin{align*}
\frac{1}{64} \left(\sqrt{\frac{i}{M}} + \sqrt{k}\right)\leq \E\left[\left|\left(\frac{1}{M}\sum_{m=1}^M \sum_{j=1}^i \sigma^m_j \right) + \sum_{j=i+1}^{i+k}\sigma_j^M\right|\right].
\end{align*}
Furthermore, for any $0\leq i \leq N$ and $0 \leq k \leq N$ satisfying $i + k \leq N$, we have
\begin{align*}
    \E\left[\left|\left(\frac{1}{M}\sum_{m=1}^M \sum_{j=1}^i \sigma^m_j \right) + \sum_{j=i+1}^{i+k}\sigma_j^M\right|\right] \leq  \sqrt{\frac{i}{M}} + \sqrt{k}.
\end{align*}
Lastly, for any $0 \leq i \leq \frac{N}{2}$ and $0 \leq k \leq \frac{B}{2}$ satisfying $i+k \geq 1$, we have
\begin{equation*}
    \P\left (\sum_{m=1}^M\sum_{j=1}^i\sigma^m_j + M\sum_{j=i+1}^{i+k} \sigma_j^M >0 \right)
    =\P\left (\sum_{m=1}^M\sum_{j=1}^i\sigma^m_j + M\sum_{j=i+1}^{i+k} \sigma_j^M <0 \right) \geq \frac{1}{6}.
\end{equation*}
\end{restatable}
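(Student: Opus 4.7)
My plan is to prove the three claims of Lemma~\ref{lem:lwrBndKey} in order, using the shorthand $X \defeq \frac{1}{M}\sum_{m=1}^M S_i^m + T$ with $S_i^m \defeq \sum_{j=1}^i \sigma_j^m$ and $T \defeq \sum_{j=i+1}^{i+k}\sigma_j^M$, and the independent decomposition $X = A + B$ with $A \defeq \frac{1}{M}\sum_{m=1}^{M-1} S_i^m$ (which vanishes when $M=1$) and $B \defeq \frac{1}{M}S_i^M + T$. The three key tools will be (i) second- and fourth-moment computations for partial sums drawn without replacement from $N/2$ copies each of $\pm 1$, (ii) the Paley-Zygmund anti-concentration inequality, and (iii) the global sign-flip symmetry of the joint law of $(\sigma^1,\dots,\sigma^M)$.

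For the upper bound I would apply Jensen's inequality $\E[|X|] \leq \sqrt{\Var(X)}$ (valid since $\E[X]=0$). Using the classical identities $\Var(S_i^m)=\tfrac{i(N-i)}{N-1} \leq i$ and $\Cov(S_i^M,T)=-\tfrac{ik}{N-1} \leq 0$, together with the independence of the $\sigma^m$'s for distinct $m$, a direct computation yields $\Var(X) \leq \tfrac{i}{M}+k$, and $\sqrt{a+b}\le \sqrt a+\sqrt b$ completes this part.

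For the lower bound, the technical core, I would use $\sqrt{i/M}+\sqrt{k} \leq 2\max(\sqrt{i/M},\sqrt{k})$ and prove $\E[|X|] \gtrsim \sqrt{i/M}$ and $\E[|X|] \gtrsim \sqrt{k}$ separately (the explicit constant $1/64$ leaves ample slack). For $\E[|X|]\gtrsim \sqrt{i/M}$ with $M\ge 2$: since $A$ is symmetric about $0$ and independent of $B$, the identity $|a+c|+|a-c|=2\max(|a|,|c|)$ yields the deterministic bound $\E[|A+B| \mid B] \geq \E[|A|]$; Paley-Zygmund applied to $A^2$ then gives $\E[|A|]\gtrsim \sqrt{\Var(A)}\asymp \sqrt{i/M}$ once the fourth-moment bound $\E[A^4]\lesssim \Var(A)^2$ is established for the i.i.d.\ sum of $M-1$ without-replacement partial sums in the regime $i\le N/2$; the corner case $M=1$ reduces to the classical estimate $\E[|S_{i+k}|]\gtrsim \sqrt{i+k}\gtrsim \sqrt{i}$. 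For $\E[|X|]\gtrsim \sqrt{k}$, I would apply Paley-Zygmund directly to $X^2$ using the explicit variance
\begin{equation*}
\Var(X) = \frac{i(N-i)}{M(N-1)} + \frac{k(N-k)}{N-1} - \frac{2ik}{M(N-1)},
\end{equation*}
which under $i\le N/2$ and $k\le B/2\le N/4$ is $\gtrsim k$, together with the fourth-moment bound $\E[X^4]\lesssim \Var(X)^2$ obtained from the decomposition $\E[X^4]=\E[A^4]+6\E[A^2]\E[B^2]+\E[B^4]$ and direct fourth-moment estimates on $A$ and $B$.

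For the probability statement, the entrywise sign-flip symmetry of $(\sigma^1,\dots,\sigma^M)$ negates $X$ while preserving its distribution, so $X$ is symmetric about $0$ and $\P(X>0)=\P(X<0)=\tfrac{1}{2}(1-\P(X=0))$. Paley-Zygmund applied to $X^2$ then gives $\P(X\ne 0)\ge \Var(X)^2/\E[X^4]$, which under the regime constraints is bounded below by a universal constant $\ge \tfrac{1}{3}$ (a Rademacher-like fourth moment bound $\E[X^4]\le 3\,\Var(X)^2$ is available when the without-replacement corrections are controlled), yielding $\P(X=0)\le \tfrac{2}{3}$ and therefore $\P(X>0)\ge \tfrac{1}{6}$. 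The main obstacle I anticipate is the uniform fourth-to-second moment ratio estimates that underlie every Paley-Zygmund step: the without-replacement fourth moments of the $S_i^m$ and the cross-covariance contributions to $\Var(B)$ can deteriorate in the extremal corner $(i,k)\to(N/2,N/4)$, and the constraints $i\le N/2$ and $k\le B/2\le N/4$ are exactly what keeps all the required moment ratios bounded by universal constants.
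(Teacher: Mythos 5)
Your route (Paley--Zygmund plus sign symmetry, no external lemma) is genuinely different from the paper's. The paper applies Khintchine's inequality conditionally on the magnitudes $|S_i^m|$ and $|T_k^M|$, uses $\|z\|_2\ge\|z\|_1/\sqrt d$ twice to reach $\E\bigl[\tfrac1{\sqrt M}|S_i^M|+|T_k^M|\bigr]$, and then cites Lemma~12 of \citet{rajput2020closing} for the single-machine estimates $\E[|S_i|]\gtrsim\sqrt i$, $\E[|T_k|]\gtrsim\sqrt k$. For the probability claim the paper uses only an explicit hypergeometric count, $\P\bigl(\sum_{j\le l}\sigma_j^1=0\bigr)=\binom{l}{l/2}\binom{N-l}{(N-l)/2}/\binom{N}{N/2}\le N/(2(N-1))\le 2/3$, plus a conditioning reduction; no moment bound of any kind is needed. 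Your plan being self-contained is attractive, but it has a step that fails.

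The gap is in the $\sqrt k$ part of the lower bound. You apply Paley--Zygmund to $X^2$ and invoke ``$\Var(X)\gtrsim k$ under $i\le N/2$ and $k\le B/2\le N/4$''; but the lemma is stated for $B$ up to $N$, so the actual range is $k\le N/2$. Rewriting your variance as
\[
\Var(X)=\frac{i(N-i-2k)}{M(N-1)}+\frac{k(N-k)}{N-1},
\]
the first term turns negative once $i+2k>N$, which is attainable for $k>N/4$. In particular at $M=1$ everything collapses to $X=S_{i+k}$ with $\Var(X)=(i+k)(N-i-k)/(N-1)$, which is $\ll k$ as $i+k\to N$ (identically zero at $i=k=N/2$), so Paley--Zygmund on $X^2$ cannot produce a $\sqrt k$ estimate and your fallback ``$\E[|S_{i+k}|]\gtrsim\sqrt{i+k}$'' also breaks there. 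The culprit is the decomposition $X=A+B$: lumping $T_k^M$ together with $\tfrac1M S_i^M$ inside $B$ lets the negative cross-covariance $\Cov(S_i^M,T_k^M)=-ik/(N-1)$ eat the variance of $B$. The paper's Khintchine-then-$\ell^1$ manipulation exists precisely to keep $|S_i^M|$ and $|T_k^M|$ as \emph{separate} random quantities until after the norm inequalities, so that only the individual variances $\Var(S_i)\ge i/2$ and $\Var(T_k)\ge k/2$ (both valid for $i,k\le N/2$) ever appear; your plan needs an analogous decoupling step rather than a bound on $\Var(X)$. Two secondary concerns: the fourth-moment estimates ($\E[A^4]\lesssim\Var(A)^2$, $\E[B^4]\lesssim\Var(B)^2$, $\E[X^4]\le 3\Var(X)^2$) that underpin every Paley--Zygmund step are asserted but not proved, and the one for the weighted cross-correlated sum $B=\tfrac1M S_i^M+T_k^M$ is not immediate; and for the final probability claim the paper's elementary counting argument (tight at $N=4$, $l=2$) sidesteps moment control entirely, so there is little to gain from Paley--Zygmund there.
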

Lemma~\ref{lem:lwrBndKey} implies that $\frac{1}{M}\E \left[ \gE \middle|\gE>0 \right]\in \left [\frac{1}{64}\sqrt{\frac{iB}{M}},\sqrt{\frac{iB}{M}} \right ]$ and $\P(\gE >0)=\P(\gE <0)\geq 1/6$. 
From this, we get
\begin{align}
    \frac{\eta  \nu}{MB}\E \left[ \gE \middle|\gE>0 \right] 
    &\geq \frac{\eta \nu}{64} \sqrt{\frac{i}{MB}},\label{eq:proof-lem-lwrBndBias-2}\\
    \E[|x_j-x_0| \  |\ \gE>0] &\leq 
    \frac{\E[|x_j-x_0|]}{\P(\gE>0)} \leq 6 \E[|x_j-x_0|].
    \label{eq:proof-lem-lwrBndBias-3}
\end{align}
Also, since $\eta \leq \frac{B}{LN}$ we have 
\begin{align*}
    1-i \eta \mu \geq 
    1-\eta\sum_{j=0}^{i-1}(L1_{x_j\leq 0}+\mu 1_{x_j > 0})
    \geq 1-\frac{\eta L N}{B} \geq 0,
\end{align*}
which implies that
\begin{align}
    x_0 \E\left[1-\eta\sum_{j=0}^{i-1}(L1_{x_j\leq 0}+\mu 1_{x_j > 0}) \middle|\gE>0\right]
    \leq
    (1-i\eta \mu) x_0.\label{eq:proof-lem-lwrBndBias-4}
\end{align}

Substituting \eqref{eq:proof-lem-lwrBndBias-2}, \eqref{eq:proof-lem-lwrBndBias-3}, and \eqref{eq:proof-lem-lwrBndBias-4} to \eqref{eq:proof-lem-lwrBndBias-1}, we obtain
\begin{align}
\E\left[x_i\middle|\gE > 0\right] 
\leq (1-i\eta\mu) x_0-\frac{\eta\nu}{64}\sqrt{\frac{i}{MB}} + 6\eta L \sum_{j=0}^{i-1}  \E[|x_j-x_0|].
\label{eq:proof-lem-lwrBndBias-5}
\end{align}
Next, we have the following lemma that we can apply to $\E[|x_j-x_0|]$. Proof of Lemma~\ref{lem:absolute_deviation_bound} can be found in Appendix~\ref{sec:proof-lem-absolute_deviation_bound}.
\begin{lemma}\label{lem:absolute_deviation_bound}
For $x_0 \geq 0$, $0 \leq i \leq \frac{N}{B}-1$ and $\eta \leq \frac{B}{513 LN}$,
\begin{align*}
\E[|x_i-x_0|]\leq \frac{513}{512} \eta \nu \sqrt{\frac{i}{MB}}+ \frac{513}{512} i \eta L x_0.
\end{align*}
\end{lemma}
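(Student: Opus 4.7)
The plan is to unroll the one-epoch recursion, split the deviation $x_i-x_0$ into a deterministic contraction term (scaled by $x_0$) and a noise term (driven by the permutation sums), and then bound each using Bernoulli's inequality and Lemma~\ref{lem:lwrBndKey}, respectively. Writing $\gamma_i\defeq L\mathbf{1}_{x_i\leq 0}+\mu\mathbf{1}_{x_i>0}$ and $S_i\defeq\sum_{m=1}^M\sum_{j=iB+1}^{(i+1)B}\sigma_j^m$, the update simplifies to $x_{i+1}=(1-\eta\gamma_i)x_i-\frac{\eta\nu}{MB}S_i$, so unrolling gives
\begin{equation*}
x_i-x_0=\Bigl(\prod_{j=0}^{i-1}(1-\eta\gamma_j)-1\Bigr)x_0-\frac{\eta\nu}{MB}\sum_{j=0}^{i-1}P_jS_j,\qquad P_j\defeq\prod_{k=j+1}^{i-1}(1-\eta\gamma_k).
\end{equation*}
Since $\gamma_j\in[\mu,L]$ and $\eta L\leq 1$, Bernoulli's inequality gives $1-\prod_{j=0}^{i-1}(1-\eta\gamma_j)\in[0,i\eta L]$, so the deterministic term contributes at most $i\eta L\,x_0$.

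For the noise term, the key step is summation by parts with $T_j\defeq\sum_{k=0}^{j}S_k=\sum_{m=1}^M\sum_{l=1}^{(j+1)B}\sigma_l^m$: using the identity $P_j-P_{j+1}=-P_{j+1}\eta\gamma_{j+1}$, one obtains
\begin{equation*}
\sum_{j=0}^{i-1}P_jS_j=T_{i-1}-\eta\sum_{j=0}^{i-2}P_{j+1}\gamma_{j+1}T_j.
\end{equation*}
Applying Lemma~\ref{lem:lwrBndKey} with the tail parameter set to zero yields $\E[|T_j|]\leq\sqrt{(j+1)BM}$, provided $(j+1)B\leq N$, which holds throughout. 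Using $P_{j+1}\leq 1$, $\gamma_{j+1}\leq L$, and the elementary estimate $\sum_{j=1}^{i-1}\sqrt{j}\leq\tfrac{2}{3}i^{3/2}$ gives
\begin{equation*}
\E\Bigl[\Bigl|\sum_{j=0}^{i-1}P_jS_j\Bigr|\Bigr]\leq\sqrt{iBM}\,\bigl(1+\tfrac{2}{3}\eta L\,i\bigr).
\end{equation*}

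Finally, the step-size hypothesis $\eta\leq\frac{B}{513LN}$ combined with $i\leq\frac{N}{B}-1$ gives $\eta L\,i\leq\frac{1}{513}$, so the multiplicative correction $1+\frac{2}{3}\eta Li$ is at most $1+\frac{2}{1539}\leq\frac{513}{512}$. Substituting back, the noise contribution is bounded by $\frac{\eta\nu}{MB}\cdot\frac{513}{512}\sqrt{iBM}=\frac{513}{512}\eta\nu\sqrt{\frac{i}{MB}}$, and adding the deterministic term (after absorbing its coefficient into $\frac{513}{512}$) yields the claim. The only nontrivial step is choosing the right ``rearrangement'' of $\sum_j P_j S_j$: bounding each $\E[|S_j|]\leq\sqrt{BM}$ separately would give a loose $\sqrt{i}$-factor deterioration, whereas summation by parts exploits the cancellation inside cumulative sums $T_j$ and matches the target $\sqrt{i/(MB)}$ scaling exactly.
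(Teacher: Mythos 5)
Your proof is correct, but it takes a genuinely different route from the paper's, and the comparison is instructive. The paper telescopes the updates \emph{additively}: summing $x_{j+1}-x_j=-\eta\gamma_j x_j-\tfrac{\eta\nu}{MB}S_j$ over $j$ yields $x_i-x_0=-\eta\sum_{j<i}\gamma_j x_j-\tfrac{\eta\nu}{MB}T_{i-1}$, so the noise is already an \emph{unweighted} cumulative sum $T_{i-1}$ and Lemma~\ref{lem:lwrBndKey} applies to it directly; the price is that the signal term carries $x_j$ rather than $x_0$, which the paper then resolves by splitting $x_j=x_0+(x_j-x_0)$ and controlling the resulting recursion via an auxiliary sequence $h(i)=a(i)+\eta L\sum_{j<i}h(j)$ and the monotonicity bound $h(i)\le a(i)/(1-i\eta L)$. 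You instead unroll \emph{multiplicatively}, which cleanly isolates the $x_0$ contribution (bounded by Bernoulli, no recursion needed) but introduces the random weights $P_j$ in front of the $S_j$'s; you then need Abel summation to convert the weighted sum back into cumulative sums $T_j$, after which Lemma~\ref{lem:lwrBndKey} applies. Note that your parenthetical criticism of ``bounding each $\E[|S_j|]$ separately'' is aimed at a baseline the paper never uses; the additive telescoping sidesteps the issue entirely. Both routes produce the same leading constant up to the slack between $1+\tfrac{2}{3}x$ and $(1-x)^{-1}$, which is immaterial here since both fit under $\tfrac{513}{512}$ when $x\le\tfrac{1}{513}$. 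One small caution for a write-up: after the Abel step you use $P_{j+1}\gamma_{j+1}\le L$ \emph{inside} the expectation (i.e.\ $\E[P_{j+1}\gamma_{j+1}|T_j|]\le L\,\E[|T_j|]$), which is valid precisely because $0\le P_{j+1}\le 1$ and $\gamma_{j+1}\le L$ hold deterministically; it is worth stating that explicitly since $P_{j+1}$ and $T_j$ are dependent.
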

Applying this lemma to \eqref{eq:proof-lem-lwrBndBias-5}, we get 
\begin{align}
\E\left[x_i\middle|\gE > 0\right]
&\leq 
(1-i\eta \mu ) x_0 - \frac{\eta \nu}{64} \sqrt{\frac{i}{MB}} + \frac{1539\eta^2 L\nu}{256\sqrt{MB}} \sum_{j=0}^{i-1} \sqrt{j} + \frac{1539 \eta^2L^2 x_0}{256}   \sum_{j=0}^{i-1} j\nonumber\\
&\leq 
(1-i\eta \mu ) x_0 - \frac{\eta \nu}{64} \sqrt{\frac{i}{MB}} + \frac{513 i^{3/2} \eta^2 L\nu }{128\sqrt{MB}} + \frac{1539 i^2 \eta^2L^2}{512} x_0\nonumber\\
&=
\left (1-i\eta \mu + \frac{1539 i^2 \eta^2L^2}{512}\right ) x_0 - \left ( \frac{1}{64} - \frac{513 i \eta L}{128}  \right ) \eta \nu \sqrt{\frac{i}{MB}}\nonumber\\
&\leq 
\left (1 - i\eta \mu + \frac{3 i \eta L}{512}\right )  x_0-\frac{\eta \nu}{128} \sqrt{\frac{i}{MB}}.\label{eq:lem5eq1}
\end{align}
where we got the last inequality by using the fact that $i \eta L \leq \frac{\eta L N}{B} \leq \frac{1}{513}$, which follows from $\eta \leq \frac{B}{513 LN}$. So far, we have obtained an upper bound for $\E\left[x_i\middle|\gE>0\right]$. 

Recall that there is another conditional expectation in \eqref{eq:lem5eq0} that we want to bound, namely $\E\left[x_i\middle|\gE \leq 0\right]$. We bound it below, using the tools developed so far. For $i\leq \frac{N}{2B}$,
\begin{align}
    \E\left[x_i\middle|\gE \leq 0\right]
    &=x_0 + \E\left[x_i-x_0 \mid \gE \leq 0\right]\nonumber\\
    &\leq x_0 + \E\left[|x_i-x_0| \mid \gE\leq 0\right]\nonumber\\
    &\leq x_0+\frac{\E\left[|x_i-x_0|\right]}{\P(\gE\leq 0)}\nonumber\\
     &\leq x_0 + 6 \E\left[|x_i-x_0|\right]\tag*{(Using Lemma \ref{lem:lwrBndKey})}\nonumber\\
     &\leq x_0+\frac{1539 \eta \nu}{256} \sqrt{\frac{i}{MB}}+ \frac{1539 i \eta L x_0}{256}\tag*{(Using Lemma \ref{lem:absolute_deviation_bound})}\nonumber\\
     &\leq \left (1+ \frac{1539 i \eta L}{256} \right ) x_0 + \frac{1539 \eta \nu}{256} \sqrt{\frac{i}{MB}}\label{eq:lem5eq2}
\end{align}

Using \eqref{eq:lem5eq1} and \eqref{eq:lem5eq2} in \eqref{eq:lem5eq0}, we get that for $i\leq \frac{N}{2B}$:
\begin{align}
    &\,\E[(L1_{x_i\leq 0}+\mu 1_{x_i > 0})x_i]\notag\\
    \leq&\, \P\left(\gE> 0\right)L \E\left[x_i\middle|\gE>0\right] +\P\left(\gE\leq 0\right)\mu\E\left[x_i\middle|\gE\leq 0\right]\notag\\
    \leq&\, \P\left(\gE> 0\right)L \left(\left(1 - i\eta \mu + \frac{3 i \eta L}{512}\right )  x_0-\frac{\eta \nu}{128} \sqrt{\frac{i}{MB}}\right)\notag\\
    &\, +\P\left(\gE\leq 0\right)\mu\left( \left (1+ \frac{1539 i \eta L}{256} \right ) x_0 + \frac{1539 \eta \nu}{256} \sqrt{\frac{i}{MB}} \right).\label{eq:proof-lem-lwrBndBias-6}
\end{align}
From Lemma~\ref{lem:lwrBndKey}, note that $\frac{1}{6} \leq \P \left(\gE> 0\right) \leq \frac{5}{6}$ and $\frac{1}{6} \leq \P \left(\gE \leq 0\right) \leq \frac{5}{6}$. We use these inequalities, along with $i \eta L \leq \frac{\eta L N}{B} \leq \frac{1}{513}$ and $\frac{L}{\mu} \geq 7695$, to bound the terms appearing in \eqref{eq:proof-lem-lwrBndBias-6}.
\begin{align}
    &\,\P\left(\gE > 0\right)L \left(1 - i\eta \mu + \frac{3 i \eta L}{512}\right )  x_0 
    + 
    \P\left(\gE \leq 0\right)\mu \left (1+ \frac{1539 i \eta L}{256} \right ) x_0\notag\\
    \leq&\,
    \frac{5}{6} L \left ( 1 + \frac{1}{87552} \right ) x_0 +
    \frac{5}{6} \cdot \frac{L}{7695} \left ( 1 + \frac{3}{256} \right ) x_0
    \leq \frac{6}{7}L x_0.\label{eq:proof-lem-lwrBndBias-7}
\end{align}
We also have
\begin{align}
    &\, - \P\left(\gE> 0\right) \frac{\eta L \nu}{128} \sqrt{\frac{i}{MB}}
    + \P\left(\gE\leq 0\right) \frac{1539 \eta \mu \nu}{256} \sqrt{\frac{i}{MB}}\notag\\
    \leq&\, - \frac{\eta L \nu}{768} \sqrt{\frac{i}{MB}}
    + \frac{2565 \eta \mu \nu}{512} \sqrt{\frac{i}{MB}}\notag\\
    \leq&\, - \frac{\eta L \nu}{1536} \sqrt{\frac{i}{MB}},\label{eq:proof-lem-lwrBndBias-8}
\end{align}
where we used the assumption $\frac{L}{\mu} \geq 7695$.
Substituting \eqref{eq:proof-lem-lwrBndBias-7} and \eqref{eq:proof-lem-lwrBndBias-8} to \eqref{eq:proof-lem-lwrBndBias-6}, we get
\begin{equation*}
    \E[(L1_{x_i\leq 0}+\mu 1_{x_i > 0})x_i] \leq \frac{6}{7}L x_0 - \frac{\eta L \nu}{1536} \sqrt{\frac{i}{MB}},
\end{equation*}
as desired.

\subsection{Proof of Lemma \ref{lem:lwrBndBias2}}
\label{sec:proof-lem-lwrBndBias2}
\begin{align*}
    \E[(L1_{x_i\leq 0}+\mu 1_{x_i > 0})x_i]&\leq \mu \E[x_i]\\
    &= \mu x_0+\mu \E[x_i-x_0]\\
    &\leq  \mu x_0+\mu \E[|x_i-x_0|]\\
    &\leq \mu x_0 + \frac{513}{512} i \eta L \mu x_0 + \frac{513}{512} \eta \mu \nu \sqrt{\frac{i}{MB}}.\tag*{(Using Lemma~\ref{lem:absolute_deviation_bound})}
\end{align*}

\subsection{Proof of Lemma \ref{lem:coupling}}
\label{sec:proof-lem-coupling}

We consider iterates within a single epoch, and hence we will omit the subscripts denoting epochs. In our construction, each machine has the same set of component functions, that is, there will be no inter-machine deviation. We therefore omit the superscript $m$ from the local component functions.
Consider the function 
\begin{align*}
    G_2(x)&\defeq \frac{1}{N}\left(\sum_{i=1}^{\frac{N}{2}}g_{+1}(x)+\sum_{i=\frac{N}{2}+1}^{N}g_{-1}(x)\right), \text{ where}\\
    g_{+1}(x)&\defeq \frac{Lx^2}{2}+\nu x,\text{ and }
    g_{-1}(x)\defeq \frac{Lx^2}{2}-\nu x.
\end{align*}
Hence, $G_2(x)=\frac{Lx^2}{2}$. We will prove the lemma by coupling iterates corresponding to $F_2$ and $G_2$. In particular, we will perform minibatch $\randshuf$ on $F_2$ and $G_2$ such that both start the given epoch at $x_0$ and all the corresponding machines use the same random permutations. Let $x_{i,F}$ be the iterate after the $i$-th round of communication for $F_2$ and $x_{i,G}$ be the iterate after the $i$-th round of communication for $G_2$. We use mathematical induction to prove that $x_{i,F}\geq x_{i,G}$ for all $i = 0,\dots, \frac{N}{B}$. After that, we will use this to prove our desired statement $\E[x_{\frac{N}{B},F} \mid x_0 < 0] \geq (1- \frac{7\eta LN}{8B})x_0$.
Let $\sigma^m$ be a random permutation of $\frac{N}{2}$ $+1$'s and $\frac{N}{2}$ $-1$'s.

\paragraph{Base case.} $x_{0,F}\geq x_{0,G}$ since both start the epoch at the same point $x_0$. 

\paragraph{Inductive case.} There can be three cases:
\begin{itemize}
    \item Case 1: $x_{i,F}\geq x_{i,G}\geq 0$. Then, 
    \begin{align*}
        x_{i+1,F}-x_{i+1,G}&=x_{i,F}-x_{i,G} -\frac{\eta}{MB}\sum_{m=1}^M\sum_{j=iB+1}^{(i+1)B}(\nabla f_{\sigma^m_j}(x_{i,F})-\nabla g_{\sigma^m_j}(x_{i,G}))\\
        &= x_{i,F}-x_{i,G}-\frac{\eta}{MB}\sum_{m=1}^M\sum_{j=iB+1}^{(i+1)B}\left(\mu x_{i,F} + \nu \sigma^m_j -L x_{i,G}- \nu \sigma^m_j\right)\\
        &=x_{i,F}-x_{i,G} -\eta \left(\mu x_{i,F}  -L x_{i,G}\right)\\
        &=x_{i,F}(1-\eta \mu )-x_{i,G}(1 -\eta L)\\
        &\geq 0.
    \end{align*}
    \item Case 2: $0\geq x_{i,F}\geq x_{i,G}$. Then, 
    \begin{align*}
        x_{i+1,F}-x_{i+1,G}&=x_{i,F}-x_{i,G} -\frac{\eta}{MB}\sum_{m=1}^M\sum_{j=iB+1}^{(i+1)B}(\nabla f_{\sigma^m_j}(x_{i,F})-\nabla g_{\sigma^m_j}(x_{i,G}))\\
        &= x_{i,F}-x_{i,G}-\frac{\eta}{MB}\sum_{m=1}^M\sum_{j=iB+1}^{(i+1)B}\left(L x_{i,F} + \nu\sigma^m_j -L x_{i,G}- \nu\sigma^m_j\right)\\
        &=x_{i,F}-x_{i,G} -\eta \left(L x_{i,F}  -L x_{i,G}\right)\\
        &=x_{i,F}(1-\eta L )-x_{i,G}(1 -\eta L)\\
        &\geq 0.
    \end{align*}
    \item Case 3: $x_{i,F}\geq 0\geq  x_{i,G}$. Then, 
    \begin{align*}
        x_{i+1,F}-x_{i+1,G}&=x_{i,F}-x_{i,G} -\frac{\eta}{MB}\sum_{m=1}^M\sum_{j=iB+1}^{(i+1)B}(\nabla f_{\sigma^m_j}(x_{i,F})-\nabla g_{\sigma^m_j}(x_{i,G}))\\
        &= x_{i,F}-x_{i,G}-\frac{\eta}{MB}\sum_{m=1}^M\sum_{j=iB+1}^{(i+1)B}\left(\mu x_{i,F} + \nu\sigma^m_j -L x_{i,G}- \nu\sigma^m_j\right)\\
        &=x_{i,F}-x_{i,G} -\eta \left(\mu x_{i,F}  -L x_{i,G}\right)\\
        &=x_{i,F}(1-\eta \mu )-x_{i,G}(1 -\eta L).
    \end{align*}
    Note that since $\eta \leq \frac{B}{LN}$, $x_{i,F}(1-\eta \mu )\geq 0$ and $x_{i,G}(1 -\eta L)\leq 0$, which proves that $x_{i+1,F}-x_{i+1,G}\geq 0$.
\end{itemize}
Thus, we see that $x_{i+1,F}\geq x_{i+1,G}$. Further, by linearity of expectation and gradient, it is easy to check that
\begin{align*}
    \E[x_{\frac{N}{B},G}]&=\E [x_{\frac{N}{B}-1,G} - \eta \nabla G_2(x_{\frac{N}{B}-1,G})]\\
    &=(1-\eta L)\E[x_{\frac{N}{B}-1,G}]\\
    &=\cdots\\
    &=(1-\eta L)^{\frac{N}{B}}x_0.
\end{align*}
Using the result that $x_{\frac{N}{B},F}\geq x_{\frac{N}{B},G}$ which we proved above, we get $\E[x_{\frac{N}{B},F}]\geq (1-\eta L)^{\frac{N}{B}}x_0$ for any 
initial iterate $x_0$. Specifically for $x_0 < 0$, this implies
$\E[x_{\frac{N}{B},F} \mid x_0 < 0] \geq (1-\eta L)^{\frac{N}{B}}x_0$.

Further, since $\eta L \leq \frac{B}{513 N}$, we have $(1-\eta L)^{\frac{N}{B}} \leq 1 - \frac{7\eta L N}{8B}$. This is because $1-\frac{7zN}{8B} - (1-z)^{\frac{N}{B}}$ is nonnegative on the interval $\left [0,1-(7/8)^\frac{1}{N/B-1} \right ]$, and $1-(7/8)^\frac{1}{N/B-1} \geq \frac{B}{513N}$ for all $\frac{N}{B} \geq 2$. To see why, note that
$(1-\frac{1}{513(n-1)})^{n-1} \geq \frac{7}{8}$ for all $n \geq 2$, and this gives $1-(7/8)^{\frac{1}{n-1}} \geq \frac{1}{513(n-1)}$, which then implies $1-(7/8)^{\frac{1}{n-1}} \geq \frac{1}{513n}$ for all $n \geq 2$.
Therefore, for $x_0 < 0$, we have
$\E[x_{\frac{N}{B},F} \mid x_0 < 0]\geq (1-\frac{7\eta LN}{8B})x_0$.

For the last statement of the lemma, note that by symmetry of the function $G_2$, if we initialize the Algorithm~\ref{alg:minibatchRS} at $0$, then for any starting iterate of an epoch we have $\P(x_{0,G}\geq 0)\geq 1/2$. This combined with the fact that $x_{i,F} \geq x_{i,G}$ gives us that $\P(x_{0,F}\geq 0)\geq 1/2$.

\subsection{Proof of Lemma \ref{lem:lwrBndKey}}
\label{sec:proof-lem-lwrBndKey}

For $m=1,\dots, M$, let $\sigma^m$ be a random permutation of $\frac{N}{2}$ $+1$'s and $\frac{N}{2}$ $-1$'s. Then, we first show that for any $i\leq N/2$ and $k\leq B/2$,
\begin{align*}
\frac{1}{64} \left(\sqrt{\frac{i}{M}} + \sqrt{k}\right)\leq \E\left[\left|\left(\frac{1}{M}\sum_{m=1}^M \sum_{j=1}^i \sigma^m_j \right) + \sum_{j=i+1}^{i+k}\sigma_j^M\right|\right].
\end{align*}
To prove the lower bound, we will use Khintchine's inequality along with Lemma 12 from \cite{rajput2020closing}. Let us define random variables $\ra_m \defeq |\frac{1}{M}\sum_{j=1}^i \sigma^m_j|$,  $\rx_m \defeq \sign(\sum_{j=1}^i \sigma^m_j)$, $\rb_M \defeq |\sum_{j=i+1}^{i+k}\sigma_j^M|$, and $\ry_M \defeq \sign(\sum_{j=i+1}^{i+k}\sigma_j^M)$. For $\rx_m$, if the sum $\sum_{j=1}^i \sigma_j^m = 0$ then $\rx_m$ is $+1$ with probability $0.5$ and $-1$ with probability $0.5$. Ties occurring in $\vy_M$ are also broken similarly. We can note that $\rx_m$'s and $\vy_M$ are i.i.d.\ Rademacher random variables, which allows us to apply Khintchine's inequality,
Then, by Khintchine's inequality,
\begin{align*}
    \E\left[\left|\left(\frac{1}{M}\sum_{m=1}^M \sum_{j=1}^i \sigma^m_j \right) + \sum_{j=i+1}^{i+k}\sigma_j^M\right|\right] 
    &=
    \E\left[\left|\sum_{m=1}^M \ra_m \rx_m + \rb_M\ry_M\right|\right]\\
    &\geq \frac{1}{\sqrt{2}} \E\left[\left(\sum_{m=1}^{M} \ra_m^2 + \rb_M^2\right)^{1/2}\right].
\end{align*}
By applying $\|\vz\|_2\geq \frac{1}{\sqrt{d}}\|\vz\|_1$ for $\vz\in \R^d$ twice, we get
\begin{align*}
    \frac{1}{\sqrt{2}} \E\left[\left(\sum_{m=1}^{M} \ra_m^2 + \rb_M^2\right)^{1/2}\right]
    \geq \frac{1}{2}\E\left[\left(\sum_{m=1}^{M} \ra_m^2\right)^{1/2} + \rb_M \right]
    \geq \frac{1}{2}\E\left[\frac{1}{\sqrt{M}}\sum_{m=1}^{M}\ra_m + \rb_M \right].
\end{align*}
Next, noticing that $\ra_m$'s are i.i.d.,
\begin{align*}
    \frac{1}{2}\E\left[\frac{1}{\sqrt{M}}\sum_{m=1}^{M}\ra_m + \rb_M \right]
    &=
    \frac{1}{2}\E\left[\sqrt{M} \ra_M + \rb_M \right]
    =
    \frac{1}{2}\E\left[\frac{1}{\sqrt{M}}\left|\sum_{j=1}^i\sigma_j^M\right| + \left|\sum_{j=i+1}^{i+k}\sigma_j^M\right| \right]\\
   & \geq \frac{1}{64} \left(\sqrt{\frac{i}{M}} + \sqrt{k}\right)\tag*{(Lemma 12 from \cite{rajput2020closing})}.
\end{align*}
Note that Lemma~12 from \citet{rajput2020closing} has the requirement that $N \geq 256$. However, that requirement is for the entire lemma to hold, whereas we need only the first inequality in the lemma. For that, the requirement is simply $N \geq 8$. Further, note that for $N=2,4$, and $6$ it can be manually verified that the required inequalities in Lemma~12 of \citet{rajput2020closing} hold. Hence, this lemma holds for all even $N$. 

The upper bound comes from Jensen's inequality:
\begin{align*}
    \E\left[\left|\left(\frac{1}{M}\sum_{m=1}^M \sum_{j=1}^i \sigma^m_j \right) + \sum_{j=i+1}^{i+k}\sigma_j^M\right|\right]&\leq  \frac{1}{M} \E\left[\left|\sum_{m=1}^M \sum_{j=1}^i \sigma^m_j \right|\right] + \E\left[\left|\sum_{j=i+1}^{i+k}\sigma_j^M\right|\right]\\
    &\leq 
    \frac{1}{M}\sqrt{ \E\left[\left(\sum_{m=1}^M \sum_{j=1}^i \sigma^m_j \right)^2\right]} + \sqrt{\E\left[\left(\sum_{j=i+1}^{i+k}\sigma_j^M\right)^2\right]}\\
    &=
    \frac{1}{M}\sqrt{ \sum_{m=1}^M \E\left[\left(\sum_{j=1}^i \sigma^m_j \right)^2\right]} + \sqrt{\E\left[\left(\sum_{j=i+1}^{i+k}\sigma_j^M\right)^2\right]}\tag*{(Since $\sum_{j=1}^i \sigma^m_j$ for $m=1,2\dots$ are mean 0 and independent.)}\\
&=\sqrt{ \frac{1}{M} \E\left[\left(\sum_{j=1}^i \sigma^M_j \right)^2\right]} + \sqrt{\E\left[\left(\sum_{j=i+1}^{i+k}\sigma_j^M\right)^2\right]}\\
&=\sqrt{ \frac{1}{M} \left(i+\sum_{j\neq l}\E[\sigma_j^M\sigma_l^M]\right)} + \sqrt{k+\sum_{j\neq l}\E[\sigma_j^M\sigma_l^M]}.
\end{align*}
$\E[\sigma_j^M\sigma_l^M]\leq 0$ because $\sigma_j^M$ and $\sigma_l^M$ are negatively correlated. Hence, we get
\begin{align*}
\E\left[\left|\left(\frac{1}{M}\sum_{m=1}^M \sum_{j=1}^i \sigma^m_j \right) + \sum_{j=i+1}^{i+k}\sigma_j^M\right|\right] \leq  \sqrt{\frac{i}{M}} + \sqrt{k},
\end{align*}
as desired.

Next, it is left to show that for $0 \leq i \leq \frac{N}{2}$ and $0 \leq k \leq \frac{B}{2}$ satisfying $i+k \geq 1$, we have
\begin{equation*}
    \P\left (\sum_{m=1}^M\sum_{j=1}^i\sigma^m_j + M\sum_{j=i+1}^{i+k} \sigma_j^M >0 \right)
    =\P\left (\sum_{m=1}^M\sum_{j=1}^i\sigma^m_j + M\sum_{j=i+1}^{i+k} \sigma_j^M <0 \right) \geq \frac{1}{6}.
\end{equation*}
By symmetry, proving the equality is straightforward, and hence it is sufficient prove that 
\begin{equation}
    \P\left (\sum_{m=1}^M\sum_{j=1}^i\sigma^m_j + M\sum_{j=i+1}^{i+k} \sigma_j^M = 0 \right) \leq \frac{2}{3}.
    \label{eq:proof-lem-lwrBndKey-1}
\end{equation}
For this, it in fact suffices to show that 
\begin{equation}
    \P\left (\sum_{j=1}^l \sigma^1_j=0 \right )\leq \frac{2}{3} \text{ for all } 1\leq l\leq N-1,\label{eq:proof-lem-lwrBndKey-2}
\end{equation}
because \eqref{eq:proof-lem-lwrBndKey-1} can be derived from \eqref{eq:proof-lem-lwrBndKey-2}. We first explain why \eqref{eq:proof-lem-lwrBndKey-2} implies \eqref{eq:proof-lem-lwrBndKey-1}, and then show \eqref{eq:proof-lem-lwrBndKey-2}.

Suppose \eqref{eq:proof-lem-lwrBndKey-2} is true. Then,
\begin{itemize}
    \item Case 1: If $i = 0$, then \eqref{eq:proof-lem-lwrBndKey-1} becomes $\P(\sum_{j=1}^k \sigma_j^M = 0) \leq \frac{2}{3}$, which is true due to \eqref{eq:proof-lem-lwrBndKey-2}.
    \item Case 2: If $M = 1$, then \eqref{eq:proof-lem-lwrBndKey-1} becomes $\P(\sum_{j=1}^{i+k} \sigma_j^1 = 0) \leq \frac{2}{3}$, which is true due to \eqref{eq:proof-lem-lwrBndKey-2}.
    \item Case 3: If $i \geq 1$ and $M \geq 2$, then we can consider two events that partition the probability space:
    \begin{enumerate}
        \item $E_1 \defeq \{\sum_{m=2}^M\sum_{j=1}^i \sigma_j^m + M\sum_{j=i+1}^{i+k} \sigma_j^M = 0\}$. Conditioned on this event $E_1$, 
        $$\P\left (\sum_{m=1}^M\sum_{j=1}^i\sigma^m_j + M\sum_{j=i+1}^{i+k} \sigma_j^M = 0 \mid E_1 \right) 
        = \P\left (\sum_{j=1}^i\sigma^1_j =0\right ) \leq \frac{2}{3}$$
        due to independence of machines and \eqref{eq:proof-lem-lwrBndKey-2}.
        \item $E_2 \defeq \{\sum_{m=2}^M\sum_{j=1}^i \sigma_j^m + M\sum_{j=i+1}^{i+k} \sigma_j^M \neq 0\}$. Conditioned on this event $E_2$, let $c \defeq \sum_{m=2}^M\sum_{j=1}^i \sigma_j^m + M\sum_{j=i+1}^{i+k} \sigma_j^M$. Then,
        $$\P\left (\sum_{m=1}^M\sum_{j=1}^i\sigma^m_j + M\sum_{j=i+1}^{i+1} \sigma_j^M = 0 \mid E_2 \right) 
        = \P\left (\sum_{j=1}^i\sigma^1_j = -c \right ).$$
        However, by symmetry, $\P\left (\sum_{j=1}^i\sigma^1_j = -c \right ) = \P\left (\sum_{j=1}^i\sigma^1_j = c \right ) \leq \frac{1}{2}$.
    \end{enumerate}
    From these two events, we conclude that \eqref{eq:proof-lem-lwrBndKey-1} must hold. 
\end{itemize}

It is now left to prove \eqref{eq:proof-lem-lwrBndKey-2}.
It is clear that  $\P(\sum_{j=1}^i 
\sigma^1_j=0)=0$ for all odd $i$, so we assume that $i$ is even.
Also note that $\P(\sum_{j=1}^i \sigma^1_j = 0) = \P(\sum_{j=i+1}^N \sigma^1_j = 0) = \P(\sum_{j=1}^{N-i} \sigma^1_j = 0)$, since $\sigma^1_j$'s sum to zero.
Therefore, for the rest of the proof, we can focus on even $i$'s in the range $2 \leq i \leq \frac{N}{2}$.
Note that $\P(\sum_{j=1}^i 
\sigma^m_j=0)$ is just the probability of having $\frac{i}{2}$ $+1$'s and $\frac{i}{2}$ $-1$'s in the first $i$ spots in a random shuffling of $\frac{N}{2}$ $+1$'s and $\frac{N}{2}$ $-1$'s. This is equivalent to choosing $\frac{i}{2}$ indices (for $+1$) out of the first $i$, and then choosing $\frac{N-i}{2}$ indices out of the remaining $N-i$. Thus,
\begin{align*}
    \P
    \left ( \sum_{j=1}^i  \sigma^m_j=0 \right )
    &=\frac{\binom{i}{i/2}\cdot \binom{N-i}{\frac{N-i}{2}}}{\binom{N}{N/2}}.
\end{align*}
Note that the term above is a decreasing function of $i$ for $i\leq N/2$. Hence, putting $i=2$ to the RHS we get 
\begin{align*}
    \P\left (\sum_{j=1}^i  \sigma^m_j=0 \right )
    \leq \frac{\binom{2}{1}\cdot \binom{N-2}{\frac{N-2}{2}}}{\binom{N}{N/2}}
    = \frac{N}{2(N-1)} \leq \frac{2}{3},
\end{align*}
where the inequality holds for $N \geq 4$. 


\subsection{Proof of Lemma \ref{lem:absolute_deviation_bound}}
\label{sec:proof-lem-absolute_deviation_bound}
\begin{align*}
\E[|x_i-x_0|]&=\E\left[\left|\frac{\eta}{MB} \sum_{j=0}^{i-1}\sum_{m=1}^M\sum_{k=jB+1}^{(j+1)B} \nu \sigma_k^m + (L1_{x_j<0} + \mu 1_{x_j\geq 0})x_j \right|\right]\\
&\leq \frac{\eta \nu }{B} \sqrt{\frac{iB}{M}}+\eta \E\left[\left|\sum_{j=0}^{i-1} (L1_{x_j<0} + \mu 1_{x_j\geq 0})x_j \right|\right]\tag*{(By Lemma~\ref{lem:lwrBndKey})}\\
&\leq \eta \nu \sqrt{\frac{i}{MB}}+ \eta L \sum_{j=0}^{i-1} \E[|x_j |]\\
&\leq \eta \nu \sqrt{\frac{i}{MB}}+ i \eta L x_0+ \eta L \sum_{j=0}^{i-1} \E[|x_j -x_0 |].
\end{align*}
Let $h(i):=\eta \nu \sqrt{\frac{i}{MB}}+ i \eta L x_0 + \eta L \sum_{j=0}^{i-1} h(j)$, starting with $h(0) = 0$. Then using induction, it can be seen that $\E[|x_i-x_0|]\leq h(i)$.  Further, since $h(i)$ is an increasing function of $i$, we get
\begin{align*}
    h(i)\leq \frac{\eta \nu \sqrt{\frac{i}{MB}}+ i \eta L x_0}{1-i \eta L}.
\end{align*}
Since $i \leq \frac{N}{B}$ and $\eta \leq \frac{B}{513 LN}$, we get that $\frac{1}{1-i\eta L} \leq \frac{513}{512}$, so $\E[|x_i-x_0|]\leq \frac{513}{512} \eta \nu \sqrt{\frac{i}{MB}}+ \frac{513}{512} i \eta L x_0$.

\section{Proof of lower bound for local $\randshuf$: homogeneous case (Theorem~\ref{thm:localRS-LB})}
\label{sec:proof-thm-localRS-LB}
Recall that Theorem~\ref{thm:localRS-LB} gives the bound for local $\randshuf$ in the homogeneous setting, where all machines have the same local objectives.
Similar to Theorem~\ref{thm:minibatchRS-LB}, we consider three step-size ranges and do case analysis for each of them. We construct functions for each corresponding step-size regime such that the convergence of local $\randshuf$ is ``slow'' for the functions on their corresponding step-size regime. The final lower bound is the minimum among the lower bounds obtained for the three regimes. More concretely, we will construct three one-dimensional functions $F_1(x)$, $F_2(x)$, and $F_3(x)$ satisfying $L$-smoothness~\eqref{eq:smooth}, $\mu$-P{\L} condition~\eqref{eq:pl}, and Assumption~\ref{assm:intra-dev} such that\footnote{Again, the functions constructed in this theorem are $\mu$-strongly convex, which is stronger than $\mu$-PL required in Definition~\ref{def:funccls}. Also, our functions satisfy Assumption~\ref{assm:inter-dev} with $\tau = 0$, $\rho = 1$.} 
\begin{itemize}
    \item Local $\randshuf$ on $F_1(x)$ with $\eta \leq \frac{1}{\mu NK}$ and initialization $y_{0}=\frac{\nu}{\mu}$ results in 
    \begin{align*}
        \E[F_1(y_{K,\frac{N}{B}})] = \Omega\left(\frac{\nu^2}{\mu}\right).
    \end{align*}
    \item Local $\randshuf$ on $F_2(x)$ with $\eta \geq \frac{1}{\mu NK}$ and $\eta \leq \frac{1}{1025LN}$ and initialization $y_{0}=0$ results in 
    \begin{align*}
        \E[F_2(y_{K,\frac{N}{B}})] = \Omega\left(\frac{\nu^2}{\mu MNK^2}+\frac{\nu^2 B}{\mu N^2K^2}\right).
    \end{align*}
    Note that the step-size range requires $K \geq 1025 \kappa$, hence this lower bound occurs only in the ``large-epoch'' regime, i.e., $K \gtrsim \kappa$.
    \item Local $\randshuf$ on $F_3(x)$ with $\eta \geq \frac{1}{\mu NK}$ and $\eta \geq \frac{1}{1025LN}$ and initialization $y_{0}=0$ results in 
    \begin{align*}
        \E[F_3(y_{K,\frac{N}{B}})] =\Omega\left(\frac{\nu^2}{\mu MNK}\right).
    \end{align*}
\end{itemize}

Then, the three dimensional function $F([x,y,z]^\top)=F_1(x)+F_2(y)+F_3(z)$ will show bad convergence in any step-size regime. Furthermore, 
\begin{align*}
    \mu \mI \preceq
    \min(\nabla^2F_1, \nabla^2F_2, \nabla^2F_3) \mI \preceq \nabla^2 F \preceq 
    \max(\nabla^2F_1, \nabla^2F_2, \nabla^2F_3) \mI \preceq
    L \mI,
\end{align*}
that is, if $F_1$, $F_2$ and $F_3$ are $\mu$-strongly convex and $L$-smooth, then so is $F$.
Moreover, since the component functions in each coordinate are designed to satisfy Assumption~\ref{assm:intra-dev} with $\nu$, the resulting three dimensional function $F$ also satisfies Assumption~\ref{assm:intra-dev} with $\sqrt{3} \nu$.

Since the final lower bound is the minimum among the lower bounds obtained in the step-size ranges, the lower bound becomes $\Omega\left(\frac{\nu^2}{\mu MNK^2}+\frac{\nu^2 B}{\mu N^2K^2}\right)$ if $K \geq 1025 \kappa$ and $K \geq \frac{MB}{N}$ (this inequality is required to make sure $\frac{\nu^2B}{\mu N^2 K^2} \leq \frac{\nu^2}{\mu MNK}$) and $\Omega\left(\frac{\nu^2}{\mu MNK}\right)$ otherwise.

In the subsequent subsections, we prove the lower bounds for $F_1$, $F_2$, and $F_3$ separately.

\subsection{Lower bound for $\eta \leq \frac{1}{\mu NK}$}
\label{sec:proof-thm-localRS-LB-sub1}
Consider the case where every function at every machine is the same: for all $i \in [N]$ and $m \in [M]$, $f^m_i(x) \defeq \frac{\mu x^2}{2}$. Hence, $F_1(x)=\frac{\mu x^2}{2}$.

Since all $f^m_i$'s are the same, the local updates in all the machines are identical. Hence, for this subsection we omit the superscript for local machines.
Let $x_{k, 0}$ and $x_{k,N}$ denote the local iterates at the beginning and end of the $k$-th epoch. Then,
\begin{align*}
    x_{k, N}=(1-\eta \mu)^{N}x_{k,0}.
\end{align*}
Initializing at $x_{1,0}=\frac{\nu}{\mu}$ and repeating this for $K$ epochs, we get that after $K$ epochs, the last iterate $y_{K,\frac{N}{B}} = x_{K,N}$ satisfies 
\begin{align*}
    y_{K,\frac{N}{B}} = x_{K,N} 
    =(1-\eta \mu)^{{NK}}\cdot \frac{\nu}{\mu}
    \geq \left(1-\frac{1}{NK}\right)^{{NK}}\cdot \frac{\nu}{\mu}
    \geq \frac{\nu}{4\mu},
\end{align*}
since $N \geq 2$, and $K \geq 1$. Hence, $F_1(y_{K,\frac{N}{B}})=\Omega(\frac{\nu^2}{\mu})$.



\subsection{Lower bound for $\eta \geq \frac{1}{\mu NK}$ and $\eta \leq \frac{1}{1025LN}$}
For most part of this subsection, we consider iterates within a single epoch, and hence we will omit the subscripts denoting epochs. 
Let $x_0$ denote the iterate at the beginning of the epoch (which is the same across all the machines), and $x_{i}^m$ denote the $i$-th local iterate for machine $m$. After every $B$ local iterates, the server aggregates the local iterates $x_{iB}^m$, computes their average $y_i \defeq \frac{1}{M} \sum_{m=1}^M x_{iB}^m$, and synchronizes all the machines $x_{iB}^m \defeq y_i$.

Let $y_0$ denote the iterate at the beginning of the epoch (which is the same across all the machines $x^m_0 = y_0$), and $x_i^m$ denote the $i$-th local iterate at machine $m$. In our construction, each machine will have the same set of component functions, that is, there will be no inter-machine deviation. We therefore omit the superscript $m$ from the local component functions $f^m_i$.
The function we construct for the lower bound and its component functions are as follows:
\begin{align*}
    F_2(x)&:=\frac{1}{N}\left(\sum_{i=1}^{\frac{N}{2}}f_{+1}(x)+\sum_{i=\frac{N}{2}+1}^{N}f_{-1}(x)\right), \text{ where}\\
    f_{+1}(x)&:=(L1_{x\leq 0}+\mu1_{x>0})\frac{x^2}{2}+\nu x,\text{ and}\\
    f_{-1}(x)&:=(L1_{x\leq 0}+\mu1_{x>0})\frac{x^2}{2}-\nu x
\end{align*}

Note that the function $F_2(x)=(L1_{x\leq 0}+\mu1_{x>0})\frac{x^2}{2}$ is $\mu$-strongly convex and $L$-smooth with minimizer at 0, and also satisfies Assumption~\ref{assm:intra-dev}.

Let $\sigma^m$ be a random permutation of $\frac{N}{2}$ $+1$'s and $\frac{N}{2}$ $-1$'s.
Then, machine $m$ computes gradients on $f_{-1}$ and $f_{+1}$ in the order given by $\sigma^m$. Let $\sigma_j^m$ denote the $j$-th ordered element of $\sigma^m$. Then,
\begin{align*}
    \nabla f_{\sigma_j^m}(x)=(L1_{x\leq 0}+\mu 1_{x>0})x + \nu \sigma_j^m.
\end{align*}
Hence, the last iterate of an epoch, $y_{\frac{N}{B}}$, is given by
\begin{align*}
    y_{\frac{N}{B}}-y_0
    &=
    \sum_{i=0}^{\frac{N}{B}-1}
    \left(-\frac{\eta}{M} \sum_{m=1}^M \sum_{j=0}^{B-1}
    \nabla f_{\sigma^m_{iB+j+1}}(x_{iB+j}^m)\right)\\
    &=
    -\frac{\eta}{M}
    \sum_{i=0}^{\frac{N}{B}-1} \sum_{m=1}^M \sum_{j=0}^{B-1} 
    ((L1_{x_{iB+j}^m\leq 0}+\mu1_{x_{iB+j}^m>0})x_{iB+j}^m + \nu\sigma_{iB+j+1}^m)\\
    &=
    -\frac{\eta}{M} 
    \sum_{i=0}^{\frac{N}{B}-1} \sum_{m=1}^M \sum_{j=0}^{B-1}
    (L1_{x_{iB+j}^m\leq 0}+\mu1_{x_{iB+j}^m>0})x_{iB+j}^m,
\end{align*}
where, in the last line, we used the fact that $\sum_{j=1}^N\sigma_j^m=0$.

Recall that in the construction, each machine has the same component functions. Hence,
\begin{align}
   \E[y_{\frac{N}{B}}-y_0]
   &=
   -\frac{\eta}{M} \E\left[\sum_{i=0}^{\frac{N}{B}-1} \sum_{m=1}^M \sum_{j=0}^{B-1}(L1_{x_{iB+j}^m\leq 0}+\mu1_{x_{iB+j}^m>0})x_{{iB+j}}^m \right]\notag\\
   &=-\eta \sum_{i=0}^{\frac{N}{B}-1} \sum_{j=0}^{B-1} \E\left[(L1_{x_{iB+j}^1\leq 0}+\mu1_{x_{iB+j}^1>0})x_{{iB+j}}^1 \right],\label{eq:proof-thm-localRS-LB-5}
\end{align}
where the last equality holds because the iterates $x_{iB+j}^m$ are identically distributed across different $m \in [M]$.
Hence, we need to bound $\E[(L1_{x_{iB+j}^1\leq 0}+\mu1_{x_{iB+j}^1>0})x_{iB+j}^1]$. 
As we did for Theorem~\ref{thm:minibatchRS-LB}, we want to prove that $\E[y_{\frac{N}{B}}]$ keeps increasing over an epoch, that is $\E[y_{\frac{N}{B}}-y_0] > 0$ when $y_0$ is close enough to the minimizer 0.

For this, we first consider the case where the first iterate $y_0$ of the epoch satisfies $y_0 \geq 0$. The $y_0 < 0$ case will be considered later. For the case $y_0 \geq 0$, we will show that whenever $y_0$ is small, the expected amount of update made in the $(iB+j+1)$-th iteration, $\E[(L1_{x_{iB+j}^1\leq 0}+\mu1_{x_{iB+j}^1>0})x_{iB+j}^1]$, is negative if $i \leq \lfloor \frac{N}{2B} \rfloor$ and $\frac{B}{4} \leq j \leq \frac{B}{2}$, and not too big otherwise.

We use the following lemmas, proven in Appendices~\ref{sec:proof-lem-lwrBndBias_local} and \ref{sec:proof-lem-lwrBndBias2_local}, respectively.
\begin{lemma}\label{lem:lwrBndBias_local}
For $y_0 \geq 0$, $0 \leq i\leq \lfloor \frac{N}{2B} \rfloor$, $\frac{B}{4} \leq j \leq \frac{B}{2}$, $\eta \leq \frac{1}{1025 LN}$, and $\frac{L}{\mu} \geq \frac{15375}{2}$, 
\begin{align*}
\E[(L1_{x_{iB+j}^1\leq 0}+\mu 1_{x_{iB+j}^1 > 0})x_{iB+j}^1]
\leq 
\frac{6}{7}L y_0 - \frac{\eta L \nu}{1536} \left ( \sqrt{\frac{iB}{M}}+\sqrt{j} \right).
\end{align*}
\end{lemma}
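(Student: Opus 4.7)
The plan is to mirror the structure of the proof of Lemma~\ref{lem:lwrBndBias} (minibatch case, Appendix~\ref{sec:proof-lem-lwrBndBias}), but adapted to account for $j$ local steps made after the most recent synchronization point. First, I would unroll the local RS update rule to write
$$x^1_{iB+j} = y_0 - \eta \sum_{l=0}^{i-1} \tfrac{1}{M}\sum_{m=1}^M \sum_{r=1}^{B}\bigl[(L\mathbf{1}_{\le 0}+\mu\mathbf{1}_{>0})x^m_{lB+r-1} + \nu\sigma^m_{lB+r}\bigr] - \eta \sum_{r=1}^{j}\bigl[(L\mathbf{1}_{\le 0}+\mu\mathbf{1}_{>0})x^1_{iB+r-1} + \nu\sigma^1_{iB+r}\bigr],$$
so that the ``drift'' driving the iterate away from $y_0$ is governed by the signed quantity
$$\gE \;:=\; \tfrac{1}{M}\sum_{m=1}^M\sum_{l=1}^{iB}\sigma^m_l \;+\; \sum_{l=iB+1}^{iB+j}\sigma^1_l,$$
which is precisely the random variable controlled by Lemma~\ref{lem:lwrBndKey}. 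Note the critical asymmetry: the first-stage sums enjoy the variance reduction from averaging over $M$ machines (giving $\sqrt{iB/M}$), while the second-stage local sum on machine $1$ does not (giving $\sqrt{j}$)---this is exactly why local RS cannot match minibatch RS when $B$ is large.

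Next, I would split $\E[(L\mathbf{1}_{x^1_{iB+j}\le 0}+\mu\mathbf{1}_{x^1_{iB+j}>0})x^1_{iB+j}]$ via the law of total expectation over the sign of $\gE$, using $(Lt_-+\mu t_+)t \le Lt$ on $\{\gE>0\}$ and $(Lt_-+\mu t_+)t \le \mu t$ on $\{\gE\le 0\}$. On each branch, I would isolate the signal $y_0(1-\Theta(\eta\mu(iB+j)))$, the ``good drift'' $-\eta\nu \cdot \E[\gE \mid \gE>0]$, and an error term $\eta L\sum \E[|x_\cdot - y_0|]$. Lemma~\ref{lem:lwrBndKey} gives both sides: $\P(\gE>0)=\P(\gE<0)\ge 1/6$ and $\E[|\gE|]\in [\tfrac{1}{64}(\sqrt{iB/M}+\sqrt{j}),\,\sqrt{iB/M}+\sqrt{j}]$, which is where the constraint $j \le B/2$ is used (Lemma~\ref{lem:lwrBndKey} requires the ``local half'' to be at most $B/2$); the lower bound $j\ge B/4$ is used to ensure the drift term stays of order $\sqrt{j}$ rather than being swamped by other error terms.

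Then I would prove an analogue of Lemma~\ref{lem:absolute_deviation_bound}: for $y_0\ge 0$ and any index $iB+j$ in the stated range,
$$\E\bigl[\,|x^1_{iB+j}-y_0|\,\bigr] \;\le\; \tfrac{1025}{1024}\,\eta\nu\bigl(\sqrt{iB/M}+\sqrt{j}\bigr) \;+\; \tfrac{1025}{1024}(iB+j)\,\eta L\, y_0,$$
which I would establish by the same inductive/self-bounding trick as in Appendix~\ref{sec:proof-lem-absolute_deviation_bound}, using the $\gE$ decomposition above and the constraint $\eta L N \le 1/1025$ to close the recursion. Plugging these bounds back into the two branches, combining with $\P(\gE>0),\P(\gE\le 0) \in [1/6,5/6]$, and using $L/\mu \ge 15375/2$ (twice the minibatch threshold, since the local analogue introduces an extra factor of $2$ from the two-stage structure) to dominate the $\mu$-terms by the $L$-terms, yields exactly
$$\E[(L\mathbf{1}_{x^1_{iB+j}\le 0}+\mu\mathbf{1}_{x^1_{iB+j}>0})x^1_{iB+j}] \;\le\; \tfrac{6}{7}L y_0 \;-\; \tfrac{\eta L\nu}{1536}\bigl(\sqrt{iB/M}+\sqrt{j}\bigr).$$

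The main obstacle I anticipate is the self-bounding step for $\E[|x^1_{iB+j}-y_0|]$: unlike the minibatch case where a single sum over an epoch sufficed, here the deviation involves a coupled recursion between cross-machine averaged iterates $y_l$ and local iterates $x^1_{lB+r}$, and both must be controlled simultaneously so that the recursion closes under the step-size budget $\eta \le 1/(1025LN)$. The arithmetic of tracking the constants $1/1025$, $1/1536$, $6/7$, $15375/2$ through the two-stage structure (rather than one-stage as in Lemma~\ref{lem:lwrBndBias}) is tedious but mechanical once the correct decomposition via $\gE$ is in place.
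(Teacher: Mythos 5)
Your proposal matches the paper's proof essentially step for step: same decomposition via $\gE := \frac{1}{M}\sum_m\sum_{l\le iB}\sigma^m_l + \sum_{l=iB+1}^{iB+j}\sigma^1_l$, same law-of-total-expectation split on $\sign(\gE)$, same invocation of Lemma~\ref{lem:lwrBndKey} for the first moment and sign probabilities, and the deviation bound you propose is exactly the paper's Lemma~\ref{lem:absolute_deviation_bound_local}, proved by the same self-bounding recursion. One small slip in your commentary: $\frac{15375}{2} = 7687.5$ is \emph{not} twice the minibatch threshold of $7695$ (it is in fact marginally smaller); the constant simply comes out of tracking the different arithmetic in the two-stage $\sqrt{iB/M}+\sqrt{j}$ bound, not from a clean doubling.
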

\begin{lemma}\label{lem:lwrBndBias2_local}
For $y_0 \geq 0$, $0 \leq i \leq \frac{N}{B}-1$, $0 \leq j \leq B-1$, and $\eta \leq \frac{1}{1025 LN}$,
\begin{align*}
\E[(L1_{x_{iB+j}^1\leq 0}+\mu 1_{x_{iB+j}^1 > 0})x_{iB+j}^1]
\leq 
\mu \left(1\!+\!\frac{1025(iB+j)\eta L}{1024} \right) y_0
\!+\! \frac{1025\eta \mu \nu}{1024} \left(\sqrt{\frac{iB}{M}}+\sqrt{j}\right).
\end{align*}
\end{lemma}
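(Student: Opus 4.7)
The plan is to mirror the minibatch proof of Lemma~\ref{lem:lwrBndBias2}, with the only real work being to replace Lemma~\ref{lem:absolute_deviation_bound} by an analogous absolute-deviation bound for the local iterate $x_{iB+j}^1$. The starting observation is that since $L \geq \mu$, for every $x \in \reals$ we have $(L\mathbf{1}_{x\leq 0}+\mu\mathbf{1}_{x>0})x \leq \mu x$ (when $x\leq 0$ multiplying by $L$ only makes it smaller since $x\leq 0$, and when $x>0$ it is an equality). So
\[
\E[(L\mathbf{1}_{x_{iB+j}^1\leq 0}+\mu\mathbf{1}_{x_{iB+j}^1>0})x_{iB+j}^1]
\leq \mu \E[x_{iB+j}^1]
\leq \mu y_0 + \mu\,\E\bigl[|x_{iB+j}^1 - y_0|\bigr],
\]
and the task reduces to bounding $\E[|x_{iB+j}^1-y_0|]$.

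Next I would derive the local analogue of Lemma~\ref{lem:absolute_deviation_bound}: for $y_0 \geq 0$, $0\leq i \leq \frac{N}{B}-1$, $0\leq j \leq B-1$, and $\eta \leq \frac{1}{1025 LN}$,
\[
\E[|x_{iB+j}^1 - y_0|] \leq \frac{1025}{1024}\,\eta\nu\left(\sqrt{\tfrac{iB}{M}}+\sqrt{j}\right) + \frac{1025}{1024}(iB+j)\,\eta L\, y_0.
\]
To prove this, unroll the update to write $x_{iB+j}^1-y_0$ as the sum of two pieces: (i) the $iB$ synchronized steps, each of which averages $M$ i.i.d.\ without-replacement component gradients, and (ii) the $j$ purely local steps on machine $1$. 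The noise part is exactly of the form handled by the upper-bound half of Lemma~\ref{lem:lwrBndKey}:
\[
\E\!\left[\left|\tfrac{1}{M}\!\sum_{m=1}^M\!\sum_{t=1}^{iB}\!\sigma^m_t + \sum_{t=iB+1}^{iB+j}\!\sigma^1_t\right|\right] \leq \sqrt{\tfrac{iB}{M}}+\sqrt{j},
\]
which after scaling by $\eta\nu$ contributes the $\eta\nu(\sqrt{iB/M}+\sqrt{j})$ term. The drift part is bounded using $|(L\mathbf{1}+\mu\mathbf{1})x| \leq L|x|$ on each iterate, then splitting $|x_{t}^m| \leq |y_0| + |x_t^m - y_0|$ and applying the triangle inequality across all $iB+j$ steps; this sets up a recursive inequality
\[
h(iB+j) \leq \eta\nu\!\left(\sqrt{\tfrac{iB}{M}}+\sqrt{j}\right) + (iB+j)\eta L y_0 + \eta L \!\!\sum_{\text{earlier steps}}\!\! h(\cdot),
\]
identical in structure to Lemma~\ref{lem:absolute_deviation_bound}. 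Monotonicity of $h$ and $\eta(iB+j)L \leq \eta NL \leq \frac{1}{1025}$ close the recursion with factor $\frac{1025}{1024}$.

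Plugging this absolute-deviation bound into $\mu y_0 + \mu\E[|x_{iB+j}^1-y_0|]$ immediately yields the claim. The main obstacle is the careful bookkeeping when expanding $x_{iB+j}^1-y_0$: the synchronization operation on machine $m=1$ mixes its own local permutation $\sigma^1$ with those of the other machines on the first $iB$ steps, while only $\sigma^1$ appears in the final $j$ steps. Fortunately Lemma~\ref{lem:lwrBndKey} is stated in exactly this mixed form, so the noise accounting is clean; the only numerical care needed is verifying that the step-size condition $\eta \leq \frac{1}{1025 LN}$ is tight enough to produce the stated $\frac{1025}{1024}$ constant, which is a routine check analogous to the minibatch case.
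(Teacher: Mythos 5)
Your proposal is correct and follows the paper's own argument exactly: first bound $(L\mathbf{1}_{x\leq 0}+\mu\mathbf{1}_{x>0})x \leq \mu x$, then split $\mu\E[x_{iB+j}^1] = \mu y_0 + \mu\E[x_{iB+j}^1 - y_0] \leq \mu y_0 + \mu\E[|x_{iB+j}^1 - y_0|]$, and finish by invoking the local absolute-deviation bound (the paper's Lemma~\ref{lem:absolute_deviation_bound_local}). Your sketch of that helper lemma---unroll the update, apply the upper-bound half of Lemma~\ref{lem:lwrBndKey} to the permutation noise, set up a recursion for the drift, and close it with $\eta LN \leq \tfrac{1}{1025}$ to get the $\tfrac{1025}{1024}$ factor---is also precisely what the paper does.
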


Next, we apply the two lemmas above in \eqref{eq:proof-thm-localRS-LB-5}.
For now, we consider the case $N/B \geq 2$. The case $B = N$ will be handled separately at the end of this subsection.
For simplicity of notation, define $E_{iB+j} \defeq \E[(L1_{x_{iB+j}^1\leq 0}+\mu 1_{x_{iB+j}^1 > 0})x_{iB+j}^1]$.
We will divide the summation in \eqref{eq:proof-thm-localRS-LB-5} into four groups; for one of them we can apply Lemma~\ref{lem:lwrBndBias_local}, and for the other three we apply Lemma~\ref{lem:lwrBndBias2_local}.
\begin{align}
   &\,\E[y_{\frac{N}{B}}-y_0]
   =
   -\eta \sum_{i=0}^{\frac{N}{B}-1} \sum_{j=0}^{B-1} E_{iB+j}\notag\\
   =&\,
   -\eta \left (
   \sum_{i=0}^{\lfloor \frac{N}{2B} \rfloor} \sum_{j=0}^{\frac{B}{4}-1} E_{iB+j}
   + \sum_{i=0}^{\lfloor \frac{N}{2B} \rfloor} \sum_{j=\frac{B}{4}}^{\frac{B}{2}} E_{iB+j}
   + \sum_{i=0}^{\lfloor \frac{N}{2B} \rfloor} \sum_{j=\frac{B}{2}+1}^{B-1} E_{iB+j}
   + \sum_{i=\lfloor \frac{N}{2B} \rfloor+1}^{\frac{N}{B}-1} \sum_{j=0}^{B-1} E_{iB+j}
   \right )\notag\\
   \geq&\, - \eta \sum_{i=0}^{\lfloor \frac{N}{2B} \rfloor} \sum_{j=0}^{\frac{B}{4}-1} 
   \left (
   \mu \left(1 + \frac{1025(iB+j)\eta L}{1024} \right) y_0
   + \frac{1025\eta \mu \nu}{1024} \left(\sqrt{\frac{iB}{M}}+\sqrt{j}\right)
   \right)\notag\\
   &\, - \eta \sum_{i=0}^{\lfloor \frac{N}{2B} \rfloor} \sum_{j=\frac{B}{4}}^{\frac{B}{2}}
   \left (
   \frac{6}{7}L y_0 - \frac{\eta L \nu}{1536} \left ( \sqrt{\frac{iB}{M}}+\sqrt{j} \right)
   \right)\notag\\
   &\, - \eta \sum_{i=0}^{\lfloor \frac{N}{2B} \rfloor} \sum_{j=\frac{B}{2}+1}^{B-1}
   \left (
   \mu \left(1 + \frac{1025(iB+j)\eta L}{1024} \right) y_0
   + \frac{1025\eta \mu \nu}{1024} \left(\sqrt{\frac{iB}{M}}+\sqrt{j}\right)
   \right )\notag\\
   &\, - \eta \sum_{i=\lfloor \frac{N}{2B} \rfloor+1}^{\frac{N}{B}-1} \sum_{j=0}^{B-1}
   \left (
   \mu \left(1 + \frac{1025(iB+j)\eta L}{1024} \right) y_0
   + \frac{1025\eta \mu \nu}{1024} \left(\sqrt{\frac{iB}{M}}+\sqrt{j}\right)
   \right )\notag\\
   \geq&\,
   - \eta \sum_{i=0}^{\lfloor \frac{N}{2B} \rfloor} \sum_{j=\frac{B}{4}}^{\frac{B}{2}}
   \left (
   \frac{6}{7}L y_0 - \frac{\eta L \nu}{1536} \left ( \sqrt{\frac{iB}{M}}+\sqrt{j} \right)
   \right)\notag\\
   &\,- \eta \sum_{i=0}^{\frac{N}{B}-1} \sum_{j=0}^{B-1}
   \left (
   \mu \left(1 + \frac{1025(iB+j)\eta L}{1024} \right) y_0
   + \frac{1025\eta \mu \nu}{1024} \left(\sqrt{\frac{iB}{M}}+\sqrt{j}\right)
   \right),
   \label{eq:proof-thm-localRS-LB-6}
\end{align}
where the last inequality is true because the RHS of the inequality in Lemma~\ref{lem:lwrBndBias2_local} is nonnegative. 
First consider the terms in \eqref{eq:proof-thm-localRS-LB-6} that involve $y_0$.
Since $(iB+j)\eta L \leq \eta L N \leq \frac{1}{1025}$, $\mu \leq \frac{2L}{15375}$, $N/B \geq 2$, and $B \geq 4$, we have the following loose bound:
\begin{align}
    &\,\sum_{i=0}^{\lfloor \frac{N}{2B} \rfloor} \sum_{j=\frac{B}{4}}^{\frac{B}{2}} \frac{6}{7}L 
    + \sum_{i=0}^{\frac{N}{B}-1} \sum_{j=0}^{B-1} \mu \left(1 + \frac{1025(iB+j)\eta L}{1024} \right)\notag\\
    \leq&\, \left(\left \lfloor \frac{N}{2B} \right \rfloor+1\right)\left(\frac{B}{4}+1\right) \frac{6}{7}L
    + \frac{N}{B}\cdot B \cdot \frac{2L}{15375} \left ( 1 + \frac{1}{1024} \right )\notag\\
    \leq&\, \frac{N}{B}\cdot\frac{B}{2} \cdot \frac{6}{7} L
    + \frac{LN}{7680} \leq \frac{7LN}{8}.\label{eq:proof-thm-localRS-LB-7}
\end{align}
We next bound the terms in \eqref{eq:proof-thm-localRS-LB-6} that involve summation of square roots. From $N/B \geq 2$, we have $\lfloor \frac{N}{2B} \rfloor \geq \frac{N}{3B}$ and $\lfloor \frac{N}{2B} \rfloor + 1 \geq \frac{N}{2B}$, so
\begin{align}
    \sum_{i=0}^{\lfloor \frac{N}{2B} \rfloor} \sum_{j=\frac{B}{4}}^{\frac{B}{2}} \left(\sqrt{\frac{iB}{M}}+\sqrt{j}\right)
    &= \left (\frac{B}{4}+1\right)\sqrt{\frac{B}{M}} \sum_{i=0}^{\lfloor \frac{N}{2B} \rfloor} \sqrt{i} 
    + \left(\left \lfloor \frac{N}{2B} \right \rfloor+1\right) \sum_{j=\frac{B}{4}}^{\frac{B}{2}} \sqrt{j}\notag\\
    &\geq \frac{B^{3/2}}{4M^{1/2}} \int_0^{\lfloor \frac{N}{2B} \rfloor} \sqrt{t}dt 
    + \frac{N}{2B} \int_{\frac{B}{4}-1}^{\frac{B}{2}} \sqrt{t}dt\notag\\
    &\geq \frac{B^{3/2}}{6M^{1/2}} \left ( \frac{N}{3B} \right )^{3/2} 
    + \frac{N}{3B} \left [ \left ( \frac{B}{2} \right )^{3/2} - \left ( \frac{B}{4} \right )^{3/2} \right ]\notag\\
    &= \frac{N^{3/2}}{18\sqrt{3} M^{1/2}} + \frac{(2\sqrt{2}-1)NB^{1/2}}{24}.\label{eq:proof-thm-localRS-LB-8}
\end{align}
For the other sum, we have
\begin{align}
    \sum_{i=0}^{\frac{N}{B}-1} \sum_{j=0}^{B-1} \left(\sqrt{\frac{iB}{M}}+\sqrt{j}\right)
    &= \frac{B^{3/2}}{M^{1/2}} \sum_{i=0}^{\frac{N}{B}-1} \sqrt{i} 
    + \frac{N}{B} \sum_{j=0}^{B-1} \sqrt{j}\notag\\
    &\leq \frac{B^{3/2}}{M^{1/2}} \int_{0}^{\frac{N}{B}} \sqrt{t} dt
    + \frac{N}{B} \int_{0}^{B} \sqrt{t} dt\notag\\
    &= \frac{2B^{3/2}}{3M^{1/2}} \left( \frac{N}{B} \right)^{3/2} 
    + \frac{2N}{3B} B^{3/2}\notag\\
    &= \frac{2N^{3/2}}{3M^{1/2}} + \frac{2NB^{1/2}}{3}.\label{eq:proof-thm-localRS-LB-9}
\end{align}
Substituting the bounds \eqref{eq:proof-thm-localRS-LB-7}, \eqref{eq:proof-thm-localRS-LB-8}, and \eqref{eq:proof-thm-localRS-LB-9} into \eqref{eq:proof-thm-localRS-LB-6}, and using $\mu \leq \frac{L}{40000}$,
\begin{align}
    \E[y_{\frac{N}{B}}-y_0] 
    &\geq -\frac{7\eta L N}{8}y_0 + \frac{\eta^2 L \nu}{1536} \left ( \frac{N^{3/2}}{18\sqrt{3} M^{1/2}} + \frac{(2\sqrt{2}-1)NB^{1/2}}{24} \right )\notag\\
    &\quad - \frac{1025\eta^2 \mu \nu}{1024} \left ( \frac{2N^{3/2}}{3M^{1/2}} + \frac{2NB^{1/2}}{3} \right )\notag\\
    &\geq -\frac{7\eta L N}{8}y_0 + \frac{\eta^2 L \nu}{240000} \left ( \frac{N^{3/2}}{M^{1/2}} + NB^{1/2} \right ).\label{eq:proof-thm-localRS-LB-10}
\end{align}

For the other case $y_0 < 0$, we have the following lemma, a local $\randshuf$ counterpart of Lemma~\ref{lem:coupling}. In Appendix~\ref{sec:proof-lem-coupling2}, we prove the following:
\begin{lemma}\label{lem:coupling2}
If $\eta\leq \frac{1}{1025LN}$ 
and an epoch starts at $y_0 < 0$, then
    \begin{align*}
        \E\left [ y_{\frac{N}{B}} \mid y_0 < 0 \right ] \geq \left(1-\frac{7\eta LN}{8}\right)y_0.
    \end{align*}
    Further, if the first epoch of the algorithm is initialized at $0$, then for any starting iterate $y_0$ of any following epoch, we have $\P(y_0\geq 0)\geq 1/2$.
\end{lemma}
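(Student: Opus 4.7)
The plan is to mimic the coupling argument used for Lemma~\ref{lem:coupling} in the minibatch setting, extending it to handle local updates. Consider the auxiliary function $G_2(x) := \frac{Lx^2}{2}$ decomposed into components $g_{+1}(x) := \frac{Lx^2}{2}+\nu x$ and $g_{-1}(x) := \frac{Lx^2}{2}-\nu x$, and run local $\randshuf$ on both $F_2$ and $G_2$ starting from the same $y_0$, with all $M$ machines using the same random permutations $\sigma^m$. Denote the coupled local iterates $x_{i,F}^m$ and $x_{i,G}^m$, and the post-synchronization iterates $y_{i,F}$ and $y_{i,G}$.

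First I would prove by induction on $i$ that $x_{i,F}^m \geq x_{i,G}^m$ for every $m\in[M]$ and every local index $i$ in the epoch. The inductive step splits into the same three sign cases as in the proof of Lemma~\ref{lem:coupling}: when $x_{i,F}^m \geq x_{i,G}^m \geq 0$, $F_2$'s update uses $\mu$ while $G_2$'s uses $L$, and one checks $(1-\eta\mu)x_{i,F}^m\geq(1-\eta L)x_{i,G}^m$ using $\eta L\leq 1$; when $0\geq x_{i,F}^m\geq x_{i,G}^m$ both use $L$ and the inequality is preserved by the factor $1-\eta L\geq 0$; in the mixed case $x_{i,F}^m\geq 0\geq x_{i,G}^m$, both $(1-\eta\mu)x_{i,F}^m$ and $-(1-\eta L)x_{i,G}^m$ are nonnegative. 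The inequality passes through synchronization because averaging preserves ordering: $y_{i,F}=\frac{1}{M}\sum_m x_{iB,F}^m\geq\frac{1}{M}\sum_m x_{iB,G}^m=y_{i,G}$.

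Next I would compute $\E[y_{N/B,G}]$ directly. For $G_2$ the local update is linear: $x_{i+1,G}^m=(1-\eta L)x_{i,G}^m-\eta\nu\sigma_{i+1}^m$ with $\E[\sigma_{i+1}^m]=0$, so $\E[x_{i+1,G}^m]=(1-\eta L)\E[x_{i,G}^m]$, and this relation is preserved by averaging at synchronization points. Unrolling over all $N$ local gradient steps of the epoch yields $\E[y_{N/B,G}]=(1-\eta L)^N y_0$. Combined with the coupling, $\E[y_{N/B,F}\mid y_0]\geq(1-\eta L)^N y_0$, and since $y_0<0$ the lemma follows once $(1-\eta L)^N\leq 1-\frac{7\eta LN}{8}$.

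The only real calculation is this last inequality, which I expect to be the main (but routine) obstacle. Setting $z=\eta L$, I would verify that $1-\frac{7zN}{8}\geq(1-z)^N$ on $z\in[0,\,1-(7/8)^{1/(N-1)}]$ and then check $\frac{1}{1025N}\leq 1-(7/8)^{1/(N-1)}$ for $N\geq 2$, by the same elementary argument as in Lemma~\ref{lem:coupling} (it suffices that $(1-\frac{1}{1025(N-1)})^{N-1}\geq \frac{7}{8}$). For the second statement, I would exploit the symmetry of $G_2$: the component gradients $Lx\pm\nu$ are antisymmetric under $(x,\sigma^m)\mapsto(-x,-\sigma^m)$, and $\sigma^m$ and $-\sigma^m$ are equal in distribution. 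Hence, if the algorithm on $G_2$ is initialized at $0$, the distribution of any later epoch start $y_{0,G}$ is symmetric about $0$, so $\P(y_{0,G}\geq 0)\geq\tfrac12$. By the coupling, $\P(y_{0,F}\geq 0)\geq\P(y_{0,G}\geq 0)\geq\tfrac12$, completing the proof.
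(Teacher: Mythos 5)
Your proposal is correct and follows essentially the same route as the paper's proof: coupling local $\randshuf$ iterates on $F_2$ against the quadratic surrogate $G_2$, establishing the dominance $x^m_{i,F}\geq x^m_{i,G}$ by the same three-case induction (and noting it survives averaging at synchronization), computing $\E[y_{N/B,G}]=(1-\eta L)^N y_0$ by linearity, and finishing with the same elementary bound $(1-\eta L)^N\leq 1-\frac{7\eta LN}{8}$ and symmetry argument for the probability claim. No meaningful differences.
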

Using \eqref{eq:proof-thm-localRS-LB-10} and Lemma~\ref{lem:coupling2}, we get
\begin{align*}
    &\,\E[y_{\frac{N}{B}}]\\
    =&\,\P(y_0\geq 0)\E[y_{\frac{N}{B}} \mid y_0\geq 0]+\P(y_0<0)\E[y_{\frac{N}{B}} \mid y_0 < 0]\\
    \geq&\, \P(y_0\geq 0)\left( \left(1- \frac{7\eta L N}{8} \right)y_0 +
    \frac{\eta^2 L \nu}{240000} \left ( \frac{N^{3/2}}{M^{1/2}} + NB^{1/2} \right ) \right)
    +\P(y_0<0) \left(1- \frac{7\eta L N}{8} \right) y_0\\
    \geq&\, \left(1- \frac{7\eta L N}{8} \right)y_0 
    + \frac{\eta^2 L \nu}{480000} \left ( \frac{N^{3/2}}{M^{1/2}} + NB^{1/2} \right ).
\end{align*}
Thus far, we have characterized the expected per-epoch update, starting from the initial iterate $y_0$ and iterating until the last iterate $y_{\frac{N}{B}}$ of the epoch. 
Now recall that we run the algorithm for $K$ epochs. Using $y_{k,i}$ to denote the $i$-th aggregated iterate of the $k$-th epoch, we get a lower bound on the expectation of the last iterate $y_{k,\frac{N}{B}}$ if we initialize at $y_{1,0}=0$:
\begin{align*}
    \E[y_{K,\frac{N}{B}}]
    &\geq \left(1- \frac{7\eta L N}{8} \right)^K y_{1,0} 
    + \frac{\eta^2 L \nu}{480000} \left ( \frac{N^{3/2}}{M^{1/2}} + NB^{1/2} \right ) \sum_{k=0}^{K-1} \left(1- \frac{7\eta L N}{8} \right)^{k}\\
    &= 
    \frac{\eta^2 L \nu}{480000} \left ( \frac{N^{3/2}}{M^{1/2}} + NB^{1/2} \right ) \frac{1-\left(1- \frac{7\eta L N}{8} \right)^K}{\frac{7\eta L N}{8}}\\
    &= 
    \frac{\eta \nu}{420000} \left ( \frac{N^{1/2}}{M^{1/2}} + B^{1/2} \right )
    \left(1-\left(1-\frac{7\eta L N}{8}\right)^K\right)\\
    &\geq \frac{\eta \nu}{420000} \left ( \frac{N^{1/2}}{M^{1/2}} + B^{1/2} \right ) \left(1-\left(1-\frac{7L}{8\mu K}\right)^K\right).\tag*{(Since $\eta\geq \frac{1}{\mu NK}$)}
\end{align*}
Note that since $\frac{L}{\mu} \geq 40000$ and $K \geq \frac{1025L}{\mu}$ (which is implied by $\frac{1}{\mu N K} \leq \eta \leq \frac{1}{1025LN}$),
\begin{align*}
    1-\left(1-\frac{7L}{8\mu K}\right)^K
    \geq 1-e^{-\frac{7L}{8\mu}} \geq 1-e^{-35000} \approx 1.
\end{align*}
Hence, we get from $\eta \geq \frac{1}{\mu NK}$ that
\begin{align*}
    \E[y_{K,\frac{N}{B}}]=\Omega\left(   \frac{\eta \nu N^{1/2}}{M^{1/2}} + 
    \eta \nu B^{1/2} \right)
    =\Omega \left( 
    \frac{\nu}{\mu M^{1/2} N^{1/2} K} +
    \frac{\nu B^{1/2}}{\mu NK}
    \right),
\end{align*}
and by Jensen's inequality, we finally have
\begin{align*}
    \E[F(y_{K,\frac{N}{B}})] 
    \geq \bhalf \E[\mu y^2_{K,\frac{N}{B}}] 
    = \Omega (\mu \E[ y_{K,\frac{N}{B}}]^2) 
    = \Omega\left(  \frac{\nu^2}{\mu MNK^2}
    + \frac{\nu^2 B}{\mu N^2K^2}
    \right).
\end{align*}

Recall that, from the paragraph below Lemmas~\ref{lem:lwrBndBias_local} and \ref{lem:lwrBndBias2_local} to this point, we have assumed $N/B \geq 2$. We handle the case $B = N$ now. In this case, notice that all the $\sqrt{\frac{iB}{M}}$ terms that appear in \eqref{eq:proof-thm-localRS-LB-6} disappear, because we always have $i = 0$. Therefore, the proof goes through in the same why, modulo the fact that we do not have the terms that originate from the $\sqrt{\frac{iB}{M}}$ terms in \eqref{eq:proof-thm-localRS-LB-6}. Therefore, we can show 
\begin{align*}
    \E[F(y_{K,\frac{N}{B}})] 
    \geq \bhalf \E[\mu y^2_{K,\frac{N}{B}}] 
    = \Omega (\mu \E[ y_{K,\frac{N}{B}}]^2) 
    = \Omega\left( \frac{\nu^2}{\mu NK^2} \right).
\end{align*}

\subsection{Lower bound for $\eta \geq \frac{1}{\mu NK}$ and $\eta \geq \frac{1}{1025LN}$}
Similar to earlier parts of the proof, here as well, each machine will have the same component functions, that is, there will be no inter-machine deviation. The proof uses a similar construction as \cite{safran2020good, safran2021random}:
\begin{align*}
    F_3(x)&:=\frac{1}{N}\left(\sum_{i=1}^{\frac{N}{2}}f_{+1}(x)+\sum_{i=\frac{N}{2}+1}^{N}f_{-1}(x)\right), \text{ where}\\
    f_{+1}(x)&:=\frac{L x^2}{2}+\nu x,\text{ and }
    f_{-1}(x):=\frac{L x^2}{2}-\nu x.
\end{align*}
Hence, $F_3(x)=\frac{L x^2}{2}$, and has its minimizer at 0.

We first compute the expected ``progress'' over a given epoch. For simplicity, let us omit the subscript for epochs for now.
Let $y_0$ denote the iterate at the beginning of the epoch (which is the same across all the machines $x^m_0 = y_0$) and $x_{i}^m$ denote the $i$-th local iterate for machine $m$. After every $B$ local iterates, the server aggregates the local iterates $x_{iB}^m$, computes their average $y_i \defeq \frac{1}{M} \sum_{m=1}^M x_{iB}^m$, and synchronizes all the machines $x_{iB}^m \defeq y_i$.

For the epoch, let $\sigma^m$ be the permutation of $\frac{N}{2}$ $+1$'s and $\frac{N}{2}$ $-1$'s sampled by machine $m$. Upon receiving the aggregated iterate $x_{iB}^m = y_{i}$, each machine $m$ performs $B$ local updates. Unrolling the local update rules, the iterate after the $B$ updates (and before synchronization) can be written as follows:
\begin{align*}
    x_{(i+1)B}^m&=x_{(i+1)B-1}^m-\eta \nabla f_{\sigma^m_{(i+1)B}}(x_{(i+1)B-1}^m)\\
    &=x_{(i+1)B-1}^m - \eta (L x_{(i+1)B-1}^m + \nu \sigma^m_{(i+1)B})\\
    &=(1-\eta L)x_{(i+1)B-1}^m - \eta \nu \sigma^m_{(i+1)B}\\
    &= \cdots\\
    &=(1-\eta L)^{B}x_{iB}^m - \eta \nu \sum_{j=iB+1}^{(i+1)B} (1-\eta L)^{(i+1)B-j} \sigma^m_{j}\\
    &=(1-\eta L)^{B}y_i - \eta \nu \sum_{j=iB+1}^{(i+1)B} (1-\eta L)^{(i+1)B-j} \sigma^m_{j},
\end{align*}
After synchronization, we get
\begin{align*}
    y_{i+1} 
    \defeq 
    \frac{1}{M} \sum_{m=1}^M x_{(i+1)B}^m
    =
    (1-\eta L)^{B}y_i - \frac{\eta \nu}{M} \sum_{m=1}^M \sum_{j=iB+1}^{(i+1)B} (1-\eta L)^{(i+1)B-j} \sigma^m_{j}.
\end{align*}
Unrolling the equation above from $y_{\frac{N}{B}}$ (the final iterate of the epoch, after synchronization) to $y_0$ (the starting iterate), we get
\begin{align*}
    y_{\frac{N}{B}} 
    &= (1-\eta L)^{B}y_{\frac{N}{B}-1} - \frac{\eta \nu}{M}\sum_{m=1}^M \sum_{j=N-B+1}^N (1-\eta L)^{N-j} \sigma^m_{j}\\
    &= \cdots\\
    &= (1-\eta L )^{{N}}y_0 - \sum_{i=1}^{\frac{N}{B}}(1-\eta L )^{{N}-iB}\frac{\eta\nu}{M}\sum_{m=1}^M\sum_{j=(i-1)B+1}^{iB}(1-\eta L )^{iB-j}\sigma^m_{j}\\
    &= (1-\eta L )^{N}y_0 - \frac{\eta\nu}{M}\sum_{m=1}^M\sum_{j=1}^{{N}}(1-\eta L)^{N-j}\sigma^m_{j}.
\end{align*}
Then, by squaring both sides and taking expectations,
\begin{align}
    \E[y_{\frac{N}{B}}^2]
    &= (1-\eta L)^{2N}y_0^2 - 
    \frac{2\eta\nu(1-\eta L )^{N}x_0}{M}
    \E\left[\sum_{m=1}^M\sum_{i=1}^{{N}}(1-\eta L )^{N-i}\sigma^m_{i}\right]\notag\\
    &\quad+\frac{\eta^2\nu^2}{M^2}\E\left[\left(\sum_{m=1}^M\sum_{i=1}^{{N}}(1-\eta L )^{N-i}\sigma^m_{i}\right)^2\right]\notag\\
    &= 
    (1-\eta L )^{2N}y_0^2
    + \frac{\eta^2\nu^2}{M^2} \E\left[\left(\sum_{m=1}^M\sum_{i=1}^{{N}}(1-\eta L )^{N-i}\sigma^m_{i}\right)^2\right],\label{eq:proof-thm-localRS-LB-1}
\end{align}
where we used the fact that $\E[\sigma^m_i]=0$. Further, because $\sigma^m$ and $\sigma^{m'}$ are independent` and identically distributed for different $m$ and $m'$, we get that 
\begin{align*}
    &\,\E\left[\left(\frac{\eta\nu}{M}\sum_{m=1}^M\sum_{i=1}^{{N}}(1-\eta L )^{N-i}\sigma^m_{i}\right)^2\right]\\
    =&\,
    \sum_{m=1}^M \E\left[\left(\sum_{i=1}^{{N}}(1-\eta L )^{N-i}\sigma^m_{i}\right)^2\right]
    \!+\! \sum_{m \neq m'}
    \E\left[\sum_{i=1}^{{N}}(1-\eta L )^{N-i}\sigma^m_{i}\right]
    \E\left[\sum_{i=1}^{{N}}(1-\eta L )^{N-i}\sigma^{m'}_{i}\right]\\
    =&\, M \E\left[\left(\sum_{i=1}^{{N}}(1-\eta L )^{N-i}\sigma^1_{i}\right)^2\right],
\end{align*}
where the last equality used the fact that $\E[\sigma^m_j]=0$ for all $m \in [M]$ and $i \in [N]$, and that $\sigma^m$ are identically distributed. Since we only consider the permutation $\sigma^1$ (i.e., the one for machine $1$) from now on, we henceforth omit the superscript.
Substituting this to \eqref{eq:proof-thm-localRS-LB-1} gives
\begin{align}
    \E[y_{\frac{N}{B}}^2]
    = (1-\eta L )^{2N}y_0^2
    + \frac{\eta^2\nu^2}{M} \underbrace{\E\left[\left(\sum_{i=1}^{{N}}(1-\eta L )^{N-i}\sigma_{i}\right)^2\right]}_{\defeq \Phi}.\label{eq:proof-thm-localRS-LB-2}
\end{align}
From \eqref{eq:proof-thm-localRS-LB-2}, we have calculated the per-epoch expected update, because the final iterate $y_{\frac{N}{B}}$ is also the initial iterate of the next epoch. Recall that we run the algorithm for $K$ epochs. We now use $y_{k,i}$ to denote the iterate after the $i$-th communication round in the $k$-th epoch. Using \eqref{eq:proof-thm-localRS-LB-2}, we get a lower bound on the expectation of the last iterate $y_{K,\frac{N}{B}}$ squared:
\begin{align}
    \E[y_{K,\frac{N}{B}}^2] = (1-\eta L)^{2NK} y_{1,0}^2 +
    \frac{\eta^2\nu^2}{M} \Phi \sum_{k=0}^{K-1} (1-\eta L)^{2Nk} \geq \frac{\eta^2\nu^2}{M} \Phi.
    \label{eq:proof-thm-localRS-LB-3}
\end{align}
where the inequality used that we initialize at $y_{1,0} = 0$ and $\sum_{k=0}^{K-1} (1-\eta L)^{2Nk} \geq (1-\eta L)^0 = 1$.
Next, we bound the expectation term, i.e., $\Phi$, defined in \eqref{eq:proof-thm-localRS-LB-2}.
Using Lemma~1 from \cite{safran2020good} with $n$ and $\alpha$ replaced with $N$ and $\eta L$ respectively,
we have
\begin{align}
    \Phi \geq c \cdot \min \left \{ \frac{1}{\eta L}, \eta^2 L^2 N^3 \right \},\label{eq:proof-thm-localRS-LB-4}
\end{align}
for some universal constant $c > 0$.
Using the fact that $\eta \geq \frac{1}{1025LN}$, it is easy to check that the RHS of \eqref{eq:proof-thm-localRS-LB-4} is lower-bounded by $\frac{c'}{\eta L}$, where $c'>0$ is a universal constant. Combining \eqref{eq:proof-thm-localRS-LB-3} and \eqref{eq:proof-thm-localRS-LB-4} gives
\begin{equation*}
    \E[y_{K,\frac{N}{B}}^2]\geq \frac{\eta^2\nu^2}{M} \cdot \frac{c'}{\eta L} = \frac{c' \eta \nu^2}{L M}.
\end{equation*}
Also using the fact that $\eta \geq \frac{1}{\mu NK}$, we get
\begin{equation*}
    \E[F_3(y_{K,\frac{N}{B}})] = \frac{L}{2}\E[y_{K,\frac{N}{B}}^2]
    \geq \frac{c' \nu^2}{2\mu MNK}.
\end{equation*}

\section{Proofs of helper lemmas for Appendix \ref{sec:proof-thm-localRS-LB}}
\subsection{Proof of Lemma \ref{lem:lwrBndBias_local}}
\label{sec:proof-lem-lwrBndBias_local}
The proof of Lemma~\ref{lem:lwrBndBias_local} is similar to its minibatch $\randshuf$ counterpart, Lemma~\ref{lem:lwrBndBias}.
From the given $0 \leq i \leq \frac{N}{B}-1$, $0 \leq j \leq B-1$, define $k \defeq iB + j$, in order to simplify notation.
We also define $\gE:=\frac{1}{M}\left(\sum_{m=1}^M\sum_{l=1}^{iB} \sigma_l^m\right) + \sum_{l=iB+1}^k\sigma_l^1$.


By the law of total expectation we have
\begin{align}
    \E[(L1_{x_k^1\leq 0}+\mu 1_{x_k^1 > 0})x_k^1]&=\P\left(\gE> 0\right)\E\left[(L1_{x_k^1\leq 0}+\mu 1_{x_k^1 > 0})x_k^1\middle|\gE> 0\right]\notag\\
    &\quad +\P\left(\gE\leq 0\right)\E\left[(L1_{x_k^1\leq 0}+\mu 1_{x_k^1 > 0})x_k^1\middle|\gE\leq 0\right]\notag\\
    &\leq \P\left(\gE> 0\right)L\E\left[x_k^1\middle|\gE> 0\right]  +\P\left(\gE\leq 0\right)\mu\E\left[x_k^1\middle|\gE\leq 0\right],
    \label{eq:proof-lem-lwrBndBias_local-1}
\end{align}
where the last inequality used the fact that $(L1_{t\leq 0}+\mu 1_{t > 0})t\leq Lt$ and $(L1_{t\leq 0}+\mu 1_{t > 0})t\leq \mu t$ for any $t \in \reals$.

We handle each of the two expectations in \eqref{eq:proof-lem-lwrBndBias_local-1} separately. We first bound $\E\left[x_k^1\middle|\gE> 0\right]$.
Recall from the definition of algorithm iterates that 
\begin{align}
    x_k^1-y_0
    =x_{iB + j}^1 - y_0
    =-\frac{\eta}{M} \sum_{m=1}^M \sum_{l=0}^{iB-1} \nabla f_{\sigma^m_{l+1}}(x^m_l)
    - \eta \sum_{l=iB}^{k-1} f_{\sigma^1_{l+1}}(x^1_l).\label{eq:proof-lem-lwrBndBias_local-2}
\end{align}
Expanding \eqref{eq:proof-lem-lwrBndBias_local-2} using the definition of $\nabla f_{\sigma^m_{l+1}}$'s, we obtain
\begin{align}
&\,\E\left[x_k^1\middle|\gE > 0\right]\notag\\
=&\,
\E\left[y_0 - 
\frac{\eta}{M}\sum_{m=1}^M\sum_{l=0}^{iB-1} \left (\nu \sigma_{l+1}^m+(L1_{x_l^m\leq 0}+\mu 1_{x_l^m > 0})x_l^m \right)\right.\notag\\
&\qquad\left. - \eta \sum_{l=iB}^{k-1} \left(\nu\sigma_{l+1}^1+(L1_{x_l^1\leq 0}+\mu 1_{x_l^1 > 0})x_l^1\right) \middle|\gE>0\right]\notag\\
=&\,
\E\left[y_0 - 
\frac{\eta}{M}\sum_{m=1}^M\sum_{l=0}^{iB-1} (L1_{x_l^m\leq 0}+\mu 1_{x_l^m > 0})(x_l^m-y_0)\right.\notag\\
&\qquad\left. - \eta \sum_{l=iB}^{k-1}(L1_{x_l^1\leq 0}+\mu 1_{x_l^1 > 0})(x_l^1-y_0) \middle|\gE>0\right]
- \eta \nu \E\left [ \gE \middle | \gE > 0 \right ]\notag\\
&
- \E\left[ 
\frac{\eta}{M}\sum_{m=1}^M\sum_{l=0}^{iB-1} (L1_{x_l^m\leq 0}+\mu 1_{x_l^m > 0})y_0 + \eta \sum_{l=iB}^{k-1} (L1_{x_l^1\leq 0}+\mu 1_{x_l^1 > 0})y_0 \middle|\gE>0\right]\notag\\
=&\, y_0 \E\left[ 
1 - \frac{\eta}{M}\sum_{m=1}^M\sum_{l=0}^{iB-1} (L1_{x_l^m\leq 0}+\mu 1_{x_l^m > 0}) - \eta\sum_{l=iB}^{k-1} (L1_{x_l^1\leq 0}+\mu 1_{x_l^1 > 0}) \middle | \gE > 0
\right ]\notag\\
&- \E \left [
\frac{\eta}{M}\sum_{m=1}^M\sum_{l=0}^{iB-1} (L1_{x_l^m\leq 0}+\mu 1_{x_l^m > 0})(x_l^m-y_0) \right.\notag\\
&\qquad\quad\left.+ \eta \sum_{l=iB}^{k-1}(L1_{x_l^1\leq 0}+\mu 1_{x_l^1 > 0})(x_l^1-y_0) \middle | \gE > 0
\right ] 
- \eta \nu \E\left [ \gE \middle | \gE > 0 \right ]\notag\\
\leq&\,y_0 \E\left[ 
1 - \frac{\eta}{M}\sum_{m=1}^M\sum_{l=0}^{iB-1} (L1_{x_l^m\leq 0}+\mu 1_{x_l^m > 0}) - \eta\sum_{l=iB}^{k-1} (L1_{x_l^1\leq 0}+\mu 1_{x_l^1 > 0}) \middle | \gE > 0
\right ]\notag\\
&+ \frac{\eta L(M-1)}{M} \sum_{l=0}^{iB-1} \E\left [|x_l^2 - y_0| \mid \gE > 0 \right]+\frac{\eta L}{M} \sum_{l=0}^{iB-1} \E\left [|x_l^1 - y_0| \mid \gE > 0 \right] \notag \\
&+ \eta L \sum_{l=iB}^{k-1} \E\left [|x_l^1 - y_0| \mid \gE > 0 \right]- \eta \nu \E\left [ \gE \middle | \gE > 0 \right ]\notag,
\end{align}
where the last inequality used the fact that $(L1_{t\leq 0}+\mu 1_{t > 0})t\leq Lt$ and $(L1_{t\leq 0}+\mu 1_{t > 0})t\leq \mu t$ for any $t \in \reals$; and that for different $m =2,\dots, M$, the local iterates $x_l^m$ are identically distributed conditioned on $\gE > 0$. Next, we use the fact that for any nonnegative random variable $v$ and event $\Delta$, we have that $\E[v  | \Delta ]\leq \E[v]/\P(\Delta)$. Hence, 
\begin{align}
&\,\E\left[x_k^1\middle|\gE > 0\right]\notag\\
\leq&\,y_0 \E\left[ 
1 - \frac{\eta}{M}\sum_{m=1}^M\sum_{l=0}^{iB-1} (L1_{x_l^m\leq 0}+\mu 1_{x_l^m > 0}) - \eta\sum_{l=iB}^{k-1} (L1_{x_l^1\leq 0}+\mu 1_{x_l^1 > 0}) \middle | \gE > 0
\right ]\notag\\
&
+ \frac{\eta L (M-1)}{M} \sum_{l=0}^{iB-1} \frac{\E\left [|x_l^2 - y_0|\right]}{\P(\gE > 0)}+\frac{\eta L}{M}  \sum_{l=0}^{iB-1} \frac{\E\left [|x_l^1 - y_0|\right]}{\P(\gE > 0)} + \eta L \sum_{l=iB}^{k-1} \frac{\E\left [|x_l^1 - y_0|\right]}{\P(\gE > 0)}\notag\\
&- \eta \nu \E\left [ \gE \middle | \gE > 0 \right ]\notag\\
=&\,y_0 \E\left[ 
1 - \frac{\eta}{M}\sum_{m=1}^M\sum_{l=0}^{iB-1} (L1_{x_l^m\leq 0}+\mu 1_{x_l^m > 0}) - \eta\sum_{l=iB}^{k-1} (L1_{x_l^1\leq 0}+\mu 1_{x_l^1 > 0}) \middle | \gE > 0
\right ]\notag\\
&- \eta \nu \E\left [ \gE \middle | \gE > 0 \right ]
+\eta L \sum_{l=0}^{k-1} \frac{\E\left [|x_l^1 - y_0|\right]}{\P(\gE > 0)}
\label{eq:proof-lem-lwrBndBias_local-3}
\end{align}
where the last equality used the fact that for different $m \in [M]$, the local iterates $x_l^m$ are identically distributed when they are not conditioned.

Next, we use Lemma~\ref{lem:lwrBndKey} again to bound the conditional expectations that arise in \eqref{eq:proof-lem-lwrBndBias_local-3}. We restate the lemma for the reader's convenience.
\lwrBndKey*
Lemma~\ref{lem:lwrBndKey} implies that $\E \left[ \gE \middle|\gE>0 \right] \in \left [\frac{1}{64}\left ( \sqrt{\frac{iB}{M}}+\sqrt{j} \right ), \sqrt{\frac{iB}{M}}+\sqrt{j} \right ]$ and $\P(\gE >0)=\P(\gE <0)\geq 1/6$. 
From this, we get
\begin{align}
    \eta \nu \E\left [ \gE \middle | \gE > 0 \right ] 
    &\geq \frac{\eta \nu}{64} \left ( \sqrt{\frac{iB}{M}}+\sqrt{j} \right),\label{eq:proof-lem-lwrBndBias_local-4}\\
    \frac{\E\left [|x_l^1 - y_0|\right]}{\P(\gE > 0)} &\leq 6 \E\left [|x_l^1 - y_0|\right].\label{eq:proof-lem-lwrBndBias_local-5}
\end{align}
Also, since $\eta \leq \frac{1}{LN}$ we have
\begin{equation*}
    1 - k \eta \mu \geq 1 - \frac{\eta}{M}\sum_{m=1}^M\sum_{l=0}^{iB-1} (L1_{x_l^m\leq 0}+\mu 1_{x_l^m > 0}) - \eta\sum_{l=iB}^{k-1} (L1_{x_l^1\leq 0}+\mu 1_{x_l^1 > 0}) \geq 1 - \eta L N \geq 0,
\end{equation*}
which implies that
\begin{align}
y_0 \E\left[ 
1 - \frac{\eta}{M}\sum_{m=1}^M\sum_{l=0}^{iB-1} (L1_{x_l^m\leq 0}+\mu 1_{x_l^m > 0}) - \eta\sum_{l=iB}^{k-1} (L1_{x_l^1\leq 0}+\mu 1_{x_l^1 > 0}) \middle | \gE > 0
\right ]
\!\leq\! (1-k\eta \mu)y_0.   \label{eq:proof-lem-lwrBndBias_local-6}
\end{align}
Substituting \eqref{eq:proof-lem-lwrBndBias_local-4}, \eqref{eq:proof-lem-lwrBndBias_local-5}, and \eqref{eq:proof-lem-lwrBndBias_local-6} to \eqref{eq:proof-lem-lwrBndBias_local-3}, we obatin
\begin{align}
    \E\left[x_k^1\middle|\gE > 0\right] 
    \leq (1-k\eta \mu)y_0 - \frac{\eta \nu}{64} \left ( \sqrt{\frac{iB}{M}}+\sqrt{j} \right) + 6 \eta L \sum_{l=0}^{k-1} \E\left [|x_l^1 - y_0| \right].\label{eq:proof-lem-lwrBndBias_local-7}
\end{align}
Next, we have the following lemma that we can apply to $\E[|x_l^1-y_0|]$. Proof of Lemma~\ref{lem:absolute_deviation_bound_local} can be found in Appendix~\ref{sec:proof-lem-absolute_deviation_bound_local}.
\begin{lemma}
\label{lem:absolute_deviation_bound_local}
For $y_0 \geq 0$, $0 \leq i \leq \frac{N}{B}-1$, $0 \leq j \leq B-1$, and $\eta \leq \frac{1}{1025 LN}$,
\begin{align*}
\E[|x_{iB+j}^1-y_0|]\leq \frac{1025}{1024} \eta\nu \left( \sqrt{\frac{iB}{M}}+\sqrt{j} \right)+ \frac{1025}{1024} (iB+j)\eta L y_0.
\end{align*}
\end{lemma}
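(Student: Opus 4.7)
The plan is to mirror the minibatch proof of Lemma~\ref{lem:absolute_deviation_bound}, with modifications to track the block structure introduced by the $B$ local steps between synchronizations.

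First, by telescoping the $i$ synchronization-aggregation steps and then iterating the local update rule on machine~$1$ for the final $j$ steps within the current block, I would write
\[
x_{iB+j}^1 - y_0 = -\frac{\eta}{M}\sum_{m=1}^M\sum_{l=1}^{iB}\nabla f_{\sigma_l^m}(x_{l-1}^m) - \eta\sum_{l=iB+1}^{iB+j}\nabla f_{\sigma_l^1}(x_{l-1}^1).
\]
Decomposing each gradient as $(L\mathbf{1}_{x_{l-1}^m\leq 0}+\mu\mathbf{1}_{x_{l-1}^m>0})\,x_{l-1}^m + \nu\sigma_l^m$, taking absolute values, and then taking expectations with the triangle inequality yields
\[
\E[|x_{iB+j}^1-y_0|] \leq \eta\nu\,\E\!\left[\left|\tfrac{1}{M}\sum_{m=1}^M\sum_{l=1}^{iB}\sigma_l^m + \sum_{l=iB+1}^{iB+j}\sigma_l^1\right|\right] + \eta L\sum_{l=0}^{iB+j-1}\E[|x_l^1|],
\]
where I use that across $m$, the iterates $x_l^m$ are identically distributed, so the $L$-Lipschitz drift piece collapses to a single-machine summation.

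Next, the upper-bound half of Lemma~\ref{lem:lwrBndKey} controls the noise term by $\eta\nu(\sqrt{iB/M}+\sqrt{j})$, while $\E[|x_l^1|]\leq y_0+\E[|x_l^1-y_0|]$ (valid since $y_0\geq 0$) turns the inequality into the discrete recursive form
\[
a_k \leq u_k + \eta L\sum_{l=0}^{k-1}a_l, \quad\text{where}\ a_k \defeq \E[|x_k^1-y_0|],\ u_k \defeq \eta\nu(\sqrt{iB/M}+\sqrt{j})+k\eta L y_0,
\]
for $k=iB+j$. Introducing the solution $h$ of the matching recursion $h(0)=0$, $h(k)=u_k+\eta L\sum_{l<k}h(l)$ gives $a_k\leq h(k)$ by induction. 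Passing to a monotone majorant and using $h(k)\leq\max_{l\leq k}u_l/(1-k\eta L)$, the assumption $\eta\leq 1/(1025LN)$ ensures $k\eta L\leq 1/1025$, which yields the announced multiplicative slack of $1025/1024$.

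The main obstacle is that $u_k$ is \emph{not} monotone in $k$: at each synchronization boundary $k=(i+1)B$ the term $\sqrt{j}$ resets from $\sqrt{B-1}$ down to $0$. A naive ``maximum over $l\leq k$'' majorant would therefore produce $\sqrt{B-1}$ from the previous block in place of the desired $\sqrt{j}$ in the current block. Circumventing this requires a block-wise split of $\sum_{l<k}h(l)$ into contributions from already-completed blocks, whose $\sqrt{j'}$ pieces are absorbed into the $\eta\nu\sqrt{iB/M}$ term via the slack $\eta L\cdot B=O(1/N)$ combined with the arithmetic bound $\sum_{j'=0}^{B-1}\sqrt{j'}=O(B^{3/2})$, and contributions from the current partial block, which naturally retain the $\sqrt{j}$ scaling. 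Carrying out this block-wise accounting is where essentially all of the technical weight of the proof resides.
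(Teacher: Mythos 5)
Your setup is exactly the paper's: decompose $x_{iB+j}^1 - y_0$ into a permutation-noise sum and a Lipschitz-drift sum, apply the upper-bound half of Lemma~\ref{lem:lwrBndKey} to the noise, and arrive at the recursion $a_k \leq u_k + k\eta L y_0 + \eta L\sum_{l<k}a_l$ with $u_k = \eta\nu\bigl(\sqrt{B\lfloor k/B\rfloor/M}+\sqrt{k-B\lfloor k/B\rfloor}\bigr)$. Where the paper closes is by defining $h$ to solve the matching recursion and asserting ``$h(k)$ is an increasing function of $k$'' so that $\sum_{l<k}h(l)\leq kh(k)$. Your observation that $u_k$ resets at block boundaries is correct, and it does in fact invalidate the paper's monotonicity claim as stated: with $y_0=0$, $B\geq 2$, and $M$ large, $h(iB)\approx\eta\nu\sqrt{iB/M}+O(\eta L)\cdot h$ is strictly smaller than $h(iB-1)\approx\eta\nu\sqrt{B-1}$, since $\eta L\leq 1/(1025N)$ makes the $\eta L\sum h(l)$ correction tiny. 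So you have put your finger on a real soft spot that the paper itself does not justify.

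What you have not done, however, is resolve it, and the block-wise absorption you sketch does not close as written. To propagate $a_l\leq c\,u_l+c\,l\eta Ly_0$ with $c=1025/1024$ you need $1025\,\eta L\sum_{l<k}u_l\leq u_k$. Take $k=iB$, $j=0$, $y_0=0$, and use $\eta L\leq 1/(1025N)$: the previous-block $\sqrt{j'}$ contributions give $i\sum_{j'<B}\sqrt{j'}\approx\tfrac{2}{3}iB^{3/2}$, and after dividing by $N\,u_k/(\eta\nu)=N\sqrt{iB/M}$ this is $\tfrac{2}{3}\sqrt{i}\,B\sqrt{M}/N$, which for $i\approx N/B$ equals $\tfrac{2}{3}\sqrt{BM/N}$ and exceeds $1$ once $M\gtrsim N/B$ — a regime the theorem's hypothesis $MB\leq NK$ permits. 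So the ``$\eta L B=O(1/N)$ slack'' does not absorb these terms for large $M$, and neither your outline nor the paper's monotonicity one-liner delivers the uniform $\tfrac{1025}{1024}$ factor. The step you defer as carrying ``essentially all of the technical weight'' is therefore a genuine gap in the proposal, and the route through it is unlikely to be a local patch on the recursion (which, by triangle-inequalitying the drift across machines, throws away precisely the $1/\sqrt{M}$ cancellation at synchronization that makes the $j=0$ bound plausible).
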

Applying this lemma to \eqref{eq:proof-lem-lwrBndBias_local-7} and arranging the bounds (recall that $k \defeq iB + j$), we get
\begin{align}
    &\,\frac{1024}{1025} \sum_{l=0}^{k-1} \E\left [|x_l^1 - y_0| \right]\notag\\
    \leq&\,
    \eta\nu 
    \sum_{l=0}^{k-1} \left ( \sqrt{\frac{B\lfloor l/B \rfloor}{M}} + \sqrt{l-B\lfloor l/B \rfloor} \right )
    +
    \eta L y_0 \sum_{l=0}^{k-1} l\notag\\
    =&\, \eta\nu \sum_{l=0}^{iB-1} \left ( \sqrt{\frac{B\lfloor l/B \rfloor}{M}} + \sqrt{l-B\lfloor l/B \rfloor} \right )\notag\\
    &\quad+ \eta\nu \sum_{l=iB}^{iB+j-1} \left ( \sqrt{\frac{B\lfloor l/B \rfloor}{M}} + \sqrt{l-B\lfloor l/B \rfloor} \right )
    + \eta L y_0 \sum_{l=0}^{k-1} l\notag\\
    =&\,
    \eta\nu \left ( 
    B\sum_{l=0}^{i-1} \sqrt{\frac{lB}{M}}
    + j\sqrt{\frac{iB}{M}}
    + i\sum_{l=0}^{B-1}\sqrt{l} 
    + \sum_{l=0}^{j-1}\sqrt{l}
    \right)
    + \eta L y_0 \sum_{l=0}^{k-1} l.
    \label{eq:proof-lem-lwrBndBias_local-8}
\end{align}
The terms in \eqref{eq:proof-lem-lwrBndBias_local-8} can be bounded using $\sum_{l=0}^{c-1} \sqrt{l} \leq \int_0^c \sqrt{t}dt$:
\begin{align}
    \frac{1024}{1025} \sum_{l=0}^{k-1} \E\left [|x_l^1 - y_0| \right]
    &\leq 
    \eta\nu
    \left (
    \frac{2 i^{3/2} B^{3/2}}{3M^{1/2}} 
    + \frac{i^{1/2} j B^{1/2}}{M^{1/2}}
    + \frac{2 i B^{3/2}}{3}
    + \frac{2 j^{3/2}}{3}
    \right )
    + \frac{k^2\eta L y_0}{2}\notag\\
    &\leq\eta\nu
    \left (
    \frac{(2iB+3j)}{3} \sqrt{\frac{iB}{M}}
    + \frac{4 iB j^{1/2}}{3} 
    + \frac{2 j^{3/2}}{3} 
    \right )
    + \frac{k^2\eta L y_0}{2}\notag\\
    &=\eta\nu
    \left (
    \frac{(2iB+3j)}{3} \sqrt{\frac{iB}{M}}
    + \frac{(4iB+2j)}{3} \sqrt{j}
    \right )
    + \frac{k^2\eta L y_0}{2}\notag\\
    &\leq\frac{4k\eta\nu}{3}
    \left ( \sqrt{\frac{iB}{M}} + \sqrt{j}
    \right )
    + \frac{k^2\eta L y_0}{2},\label{eq:proof-lem-lwrBndBias_local-9}
\end{align}
where the second last inequality used $\frac{B}{4} \leq j$ (and hence $B^{1/2} \leq 2j^{1/2}$), and the last inequality used $2iB + 3j \leq 4(iB+j) = 4k$ and $4iB + 2j \leq 4(iB+j) = 4k$. Substituting \eqref{eq:proof-lem-lwrBndBias_local-9} to \eqref{eq:proof-lem-lwrBndBias_local-7}, we get
\begin{align}
    &\,\E\left[x_k^1\middle|\gE > 0\right] \\
    \leq&\, 
    (1-k\eta \mu)y_0 - \frac{\eta \nu}{64} \left ( \sqrt{\frac{iB}{M}}+\sqrt{j} \right) 
    +\frac{1025 k\eta^2 L \nu}{128} \left ( \sqrt{\frac{iB}{M}} + \sqrt{j}
    \right )
    + \frac{3075 k^2\eta^2 L^2 y_0}{1024}\notag\\
    =&\,
    \left ( 1 - k \eta \mu + \frac{3075 k^2\eta^2 L^2 }{1024}\right )y_0 
    -
    \left ( \frac{1}{64} - \frac{1025 k\eta L}{128} \right ) \eta \nu  \left ( \sqrt{\frac{iB}{M}}+\sqrt{j} \right)\notag\\
    \leq&\,
    \left ( 1 - k \eta \mu + \frac{3 k\eta L }{1024}\right )y_0 
    -
    \frac{\eta \nu}{128} \left ( \sqrt{\frac{iB}{M}}+\sqrt{j} \right).\label{eq:proof-lem-lwrBndBias_local-10}
\end{align}
The last inequality here used $k \eta L \leq \eta L N \leq \frac{1}{1025}$, which follows from $\eta \leq \frac{1}{1025LN}$. Thus far, we have obtained an upper bound for $\E\left[x_k^1\middle|\gE > 0\right]$.

Recall that there is another conditional expectation in \eqref{eq:proof-lem-lwrBndBias_local-1} that we want to bound, namely $\E\left[x_k^1\middle|\gE \leq 0\right]$. We bound it below, using the tools developed so far. For $i\leq \frac{N}{2B}$ and $\frac{B}{4} \leq j \leq \frac{B}{2}$,
\begin{align}
    \E\left[x_k^1 \middle|\gE \leq 0\right]
    &=y_0 + \E\left[x_k^1-y_0 \mid \gE \leq 0\right]\nonumber\\
    &\leq y_0 + \E\left[|x_k^1-y_0| \mid \gE\leq 0\right]\nonumber\\
    &\leq y_0 + \frac{\E\left[|x_k^1-x_0|\right]}{\P(\gE\leq 0)}\nonumber\\
    &\leq y_0 + 6 \E\left[|x_k^1-x_0|\right]\tag*{(Using Lemma \ref{lem:lwrBndKey})}\nonumber\\
    &\leq y_0 + \frac{3075 \eta \nu}{512} \left ( \sqrt{\frac{iB}{M}}+\sqrt{j} \right ) 
    + \frac{3075 k \eta L y_0}{512}\tag*{(Using Lemma \ref{lem:absolute_deviation_bound_local})}\nonumber\\
    &\leq \left (1+ \frac{3075 k \eta L}{512} \right ) y_0 
    + \frac{3075 \eta \nu}{512} \left ( \sqrt{\frac{iB}{M}}+\sqrt{j} \right ). \label{eq:proof-lem-lwrBndBias_local-11}
\end{align}
Using \eqref{eq:proof-lem-lwrBndBias_local-10} and \eqref{eq:proof-lem-lwrBndBias_local-11} in \eqref{eq:proof-lem-lwrBndBias_local-1}, we get that for $i\leq \frac{N}{2B}$ and $\frac{B}{4} \leq j \leq \frac{B}{2}$:
\begin{align}
    &\,\E[(L1_{x_k^1\leq 0}+\mu 1_{x_k^1 > 0})x_k^1]\notag\\
    \leq&\, \P\left(\gE> 0\right)L\E\left[x_k^1\middle|\gE> 0\right]  +\P\left(\gE\leq 0\right)\mu\E\left[x_k^1\middle|\gE\leq 0\right]\notag\\
    \leq&\, \P\left(\gE> 0\right)L \left (\left ( 1 - k \eta \mu + \frac{3 k\eta L }{1024}\right )y_0 
    -
    \frac{\eta \nu}{128} \left ( \sqrt{\frac{iB}{M}}+\sqrt{j} \right)\right)\notag\\
    &\,+ \P\left(\gE\leq 0\right)\mu \left (
    \left (1+ \frac{3075 k \eta L}{512} \right ) y_0 
    + \frac{3075 \eta \nu}{512} \left ( \sqrt{\frac{iB}{M}}+\sqrt{j} \right ) \right).\label{eq:proof-lem-lwrBndBias_local-12}
\end{align}
From Lemma~\ref{lem:lwrBndKey}, note that $\frac{1}{6} \leq \P \left(\gE> 0\right) \leq \frac{5}{6}$ and $\frac{1}{6} \leq \P \left(\gE \leq 0\right) \leq \frac{5}{6}$. We use these inequalities, along with $k \eta L \leq \eta L N \leq \frac{1}{1025}$ and $\frac{L}{\mu} \geq \frac{15375}{2}$, to bound the terms appearing in \eqref{eq:proof-lem-lwrBndBias_local-12}.
\begin{align}
    &\,\P\left(\gE > 0\right)L \left( 1 - k \eta \mu + \frac{3 k\eta L }{1024}\right )y_0
    + \P\left(\gE \leq 0\right) \mu \left (1 + \frac{3075 k \eta L}{512} \right ) y_0 \notag\\
    \leq&\,
    \frac{5}{6} L \left ( 1 + \frac{3}{1049600} \right ) y_0 +
    \frac{5}{6} \cdot \frac{2L}{15375} \left ( 1 + \frac{3}{512} \right ) y_0
    \leq \frac{6}{7}L y_0.\label{eq:proof-lem-lwrBndBias_local-13}
\end{align}
We also have
\begin{align}
    &\, - \P\left(\gE> 0\right) \frac{\eta L \nu}{128} \left ( \sqrt{\frac{iB}{M}}+\sqrt{j} \right)
    + \P\left(\gE\leq 0\right) \frac{3075 \eta \mu \nu}{512} \left ( \sqrt{\frac{iB}{M}}+\sqrt{j} \right)\notag\\
    \leq&\, - \frac{\eta L \nu}{768} \left ( \sqrt{\frac{iB}{M}}+\sqrt{j} \right)
    + \frac{5125 \eta \mu \nu}{1024} \left ( \sqrt{\frac{iB}{M}}+\sqrt{j} \right)\notag\\
    \leq&\, - \frac{\eta L \nu}{1536} \left ( \sqrt{\frac{iB}{M}}+\sqrt{j} \right),\label{eq:proof-lem-lwrBndBias_local-14}
\end{align}
where we used the assumption $\frac{L}{\mu} \geq \frac{15375}{2}$. Substituting \eqref{eq:proof-lem-lwrBndBias_local-13} and \eqref{eq:proof-lem-lwrBndBias_local-14} to \eqref{eq:proof-lem-lwrBndBias_local-12}, we get
\begin{equation*}
    \E[(L1_{x_k^1\leq 0}+\mu 1_{x_k^1 > 0})x_k^1] \leq \frac{6}{7}L y_0 - \frac{\eta L \nu}{1536} \left ( \sqrt{\frac{iB}{M}}+\sqrt{j} \right), 
\end{equation*}
which finishes the proof.

\subsection{Proof of Lemma \ref{lem:lwrBndBias2_local}}
\label{sec:proof-lem-lwrBndBias2_local}
\begin{align*}
    \E[(L1_{x_{iB+j}^1\leq 0}+\mu 1_{x_{iB+j}^1 > 0})x_{iB+j}^1]&\leq \mu \E[x_{iB+j}^1]\\
    &= \mu y_0+\mu \E[x_{iB+j}^1-y_0]\\
    &\leq \mu y_0 +\mu \E[|x_{iB+j}^1-x_0|]\\
    &\leq \mu y_0 +\frac{1025(iB+j)\eta L \mu}{1024}y_0 + \frac{1025\eta \mu \nu}{1024}  \left( \sqrt{\frac{iB}{M}}+\sqrt{j} \right),
\end{align*}
where the last inequality used Lemma~\ref{lem:absolute_deviation_bound_local}.

\subsection{Proof of Lemma \ref{lem:coupling2}}
\label{sec:proof-lem-coupling2}

We consider iterates within a single epoch, and hence we omit the subscripts denoting epochs. In our construction, each machine has the same set of component functions, that is, there will be no inter-machine deviation. We therefore omit the superscript $m$ from the local component functions.
Consider the function 
\begin{align*}
    G_2(x)&\defeq \frac{1}{N}\left(\sum_{i=1}^{\frac{N}{2}}g_{+1}(x)+\sum_{i=\frac{N}{2}+1}^{N}g_{-1}(x)\right), \text{ where}\\
    g_{+1}(x)&\defeq \frac{Lx^2}{2}+\nu x,\text{ and }
    g_{-1}(x)\defeq \frac{Lx^2}{2}-\nu x.
\end{align*}
Hence, $G_2(x)=\frac{Lx^2}{2}$. We prove the lemma by coupling iterates corresponding to $F_2$ and $G_2$. In particular, we perform local $\randshuf$ on $F_2$ and $G_2$ such that both start the given epoch at $y_0$ and all the corresponding machines use the same random permutations. Let $x_{iB+j,F}^m$ and $x_{iB+j,G}^m$ denote the iterates (for $(iB+j)$-th iteration at machine $m$) for $F_2$ and $G_2$ respectively. We use mathematical induction to prove that $x_{iB+j,F}^m\geq x_{iB+j,G}^m$ for all $i = 0,\dots, \frac{N}{B}-1$ and $j=0,\dots, B$ and machines $m = 1, \dots, M$. After that, we will use this to prove our desired statement $\E[y_{\frac{N}{B},F} \mid y_0 < 0] = \E[\frac{1}{M}\sum_{m=1}^M x_{N,F}^m \mid y_0 < 0] \geq (1- \frac{7\eta LN}{8B})y_0$.

Let $\sigma^m$ be a random permutation of $\frac{N}{2}$ $+1$'s and $\frac{N}{2}$ $-1$'s.
First we consider $i=0$ and $0\leq j \leq B$.

\paragraph{Base case.} For the base case, we know that $x_{0,F}^m\geq x_{0,G}^m$, since $x_{0,F}^m=x_{0,G}^m=y_0$ for all $m$.

\paragraph{Inductive case.} There can be three cases:
\begin{itemize}
    \item Case 1: $x_{iB+j,F}^m\geq x_{iB+j,G}^m\geq 0$. Then, 
    \begin{align*}
        &\,x_{iB+j+1,F}^m-x_{iB+j+1,G}^m\\
        =&\,x_{iB+j,F}^m-x_{iB+j,G}^m - \eta (\nabla f_{\sigma^m_{iB+j+1}}(x_{iB+j,F})-\nabla g_{\sigma^m_{iB+j+1}}(x_{iB+j,G}))\\
        =&\, x_{iB+j,F}^m-x_{iB+j,G}^m-\eta \left(\mu x_{iB+j,F}^m + \nu \sigma^m_{iB+j+1} -L x_{iB+j,G}^m- \nu \sigma^m_{iB+j+1}\right)\\
        =&\,x_{iB+j,F}^m-x_{iB+j,G}^m -\eta \left(\mu x_{iB+j,F}^m  -L x_{iB+j,G}^m\right)\\
        =&\,x_{iB+j,F}^m(1-\eta \mu )-x_{iB+j,G}^m(1 -\eta L) \geq 0.
    \end{align*}
    \item Case 2: $0\geq x_{iB+j,F}^m\geq x_{iB+j,G}^m$. Then, 
     \begin{align*}
        &\,x_{iB+j+1,F}^m-x_{iB+j+1,G}^m\\
        =&\,x_{iB+j,F}^m-x_{iB+j,G}^m - \eta (\nabla f_{\sigma^m_{iB+j+1}}(x_{iB+j,F})-\nabla g_{\sigma^m_{iB+j+1}}(x_{iB+j,G}))\\
        =&\, x_{iB+j,F}^m-x_{i,G}^m-\eta \left(L x_{iB+j,F}^m + \nu \sigma^m_{iB+j+1} -L x_{iB+j,G}^m- \nu \sigma^m_{iB+j+1}\right)\\
        =&\,x_{iB+j,F}^m-x_{iB+j,G}^m -\eta \left(L x_{iB+j,F}^m  -L x_{iB+j,G}^m\right)\\
        =&\,x_{iB+j,F}^m(1-\eta L )-x_{iB+j,G}^m(1 -\eta L) \geq 0.
    \end{align*}
    \item Case 3: $x_{iB+j,F}\geq 0\geq  x_{iB+j,G}$. Then, 
    \begin{align*}
        &\,x_{iB+j+1,F}^m-x_{iB+j+1,G}^m\\
        =&\,x_{iB+j,F}^m-x_{iB+j,G}^m - \eta (\nabla f_{\sigma^m_{iB+j+1}}(x_{iB+j,F})-\nabla g_{\sigma^m_{iB+j+1}}(x_{iB+j,G}))\\
        =&\, x_{iB+j,F}^m-x_{iB+j,G}^m-\eta \left(\mu x_{iB+j,F}^m + \nu \sigma^m_{iB+j+1} -L x_{iB+j,G}^m- \nu \sigma^m_{iB+j+1}\right)\\
        =&\,x_{iB+j,F}^m-x_{iB+j,G}^m -\eta \left(\mu x_{iB+j,F}^m  -L x_{iB+j,G}^m\right)\\
        =&\,x_{iB+j,F}^m(1-\eta \mu )-x_{iB+j,G}^m(1 -\eta L).
    \end{align*}
    Note that since $\eta \leq \frac{1}{LN}$, we get that $x_{iB+j,F}^m(1-\eta \mu )\geq 0$ and $x_{iB+j,G}^m(1 -\eta L)\leq 0$, which proves that $x_{iB+j+1,F}^m-x_{iB+j+1,G}^m\geq 0$.
\end{itemize}
Thus, we see that $x_{iB+j+1,F}^m\geq x_{iB+j+1,G}^m$ for all the three cases, which proves by mathematical induction that $x_{iB+j,F}^m\geq x_{iB+j,G}^m$ for all $0\leq j\leq B$ and $i=0$. 
Note that this implies that, the aggregated averages $y_{1,F} \defeq \frac{1}{M}\sum_{m=1}^M x_{B,F}^m$ and $y_{1,G} \defeq \frac{1}{M}\sum_{m=1}^M x_{B,G}^m$ satisfy $y_{1,F} \geq y_{1,G}$. 
Hence, after synchronization is complete, we get that for $i=1$ and $j=0$, $x_{iB+j,F}^m\geq x_{iB+j,G}^m$ for all machines $m$. This proves the base case for $i = 1$.
Now, we can repeat the Inductive cases for $1\leq j\leq B$ and $i = 1$, and thereby prove that $y_{2,F} \geq y_{2,G}$. Continuing on this process, we get that $x_{iB+j,F}^m\geq x_{iB+j,G}^m$ for all $0\leq j\leq B$ and $0\leq i\leq \frac{N}{B}-1$, and consequently, $y_{\frac{N}{B},F} \geq y_{\frac{N}{B},G}$.
Further, by linearity of expectation and gradient, it is easy to check that for any machine $m$,
\begin{align*}
    \E[y_{\frac{N}{B},G}] = (1-\eta L)^{N} y_0.
\end{align*}
Using the result that $y_{\frac{N}{B},F} \geq y_{\frac{N}{B},G}$ which we proved above, we get $\E[y_{\frac{N}{B},F}]\geq (1-\eta L)^{N}y_0$ for any 
initial iterate $y_0$. Specifically for $y_0 < 0$, this implies
$\E[y_{\frac{N}{B},F} \mid y_0 < 0] \geq (1-\eta L)^{N}y_0$.

Further, since $\eta L \leq \frac{1}{1025 N}$, we have $(1-\eta L)^{N} \leq 1 - \frac{7\eta L N}{8}$. This is because $1-\frac{7zN}{8} - (1-z)^{N}$ is nonnegative on the interval $\left [0,1-(7/8)^\frac{1}{N-1} \right ]$, and $1-(7/8)^\frac{1}{N-1} \geq \frac{1}{1025N}$ for all $N \geq 2$. To see why, note that
$(1-\frac{1}{1025(n-1)})^{n-1} \geq \frac{7}{8}$ for all $n \geq 2$, and this gives $1-(7/8)^{\frac{1}{n-1}} \geq \frac{1}{1025(n-1)}$, which then implies $1-(7/8)^{\frac{1}{n-1}} \geq \frac{1}{1025n}$ for all $n \geq 2$.
Therefore, for $y_0 < 0$, we have
$\E[y_{\frac{N}{B},F} \mid y_0 < 0]\geq (1-\frac{7\eta LN}{8})y_0$.

For the last statement of the lemma, note that by symmetry of the function $G_2$, if we initialize Algorithm~\ref{alg:localRS} at $0$, then for any starting iterate of an epoch we have $\P(y_{0,G}\geq 0)\geq 1/2$. This combined with the fact that $y_{i,F} \geq y_{i,G}$ gives us that $\P(y_{0,F}\geq 0)\geq 1/2$.

\subsection{Proof of Lemma~\ref{lem:absolute_deviation_bound_local}}
\label{sec:proof-lem-absolute_deviation_bound_local}

\begin{align*}
&\,\E[|x_{iB+j}^1-y_0|]\\
=&\,\E\left[\left| \frac{\eta}{M} \!\sum_{m=1}^M \!\sum_{l=0}^{iB-1} \!\nu\sigma_{l+1}^m + (L1_{x_l^m<0} + \mu 1_{x_l^m\geq 0})x_l^m 
+ \eta \!\sum_{l=iB}^{iB+j-1} \!\nu\sigma_{l+1}^1 + (L1_{x_l^1<0} + \mu 1_{x_l^1\geq 0})x_l^1 \right|\right]\\
\leq&\, \eta \nu \left( \sqrt{\frac{iB}{M}}+\sqrt{j}\right) + \eta \E\left[\left|\sum_{l=0}^{iB+j-1} (L1_{x_l^1<0} + \mu 1_{x_l^1\geq 0})x_l^1 \right|\right]\tag*{(By Lemma~\ref{lem:lwrBndKey})}\\
\leq&\, \eta \nu \left( \sqrt{\frac{iB}{M}}+\sqrt{j}\right) + \eta L \sum_{l=0}^{iB+j-1} \E[|x_l^1|]\\
\leq&\, \eta \nu \left( \sqrt{\frac{iB}{M}}+\sqrt{j}\right) + (iB+j) \eta L y_0 + \eta L \sum_{l=0}^{iB+j-1} \E[|x_l^1 - y_0|].
\end{align*}
Now define 
\begin{equation*}
    h(k):= \eta \nu \left( \sqrt{\frac{B\lfloor k/B \rfloor}{M}}+\sqrt{k-B\lfloor k/B \rfloor}\right)+ k \eta L y_0 + \eta L \sum_{l=0}^{k-1} h(l).
\end{equation*}
In terms of $i$ and $j$, note that $k$ corresponds to $k = iB + j$.
Then using induction, it can be seen that $\E[|x_k^1-y_0|]\leq h(k)$.  Further, since $h(k)$ is an increasing function of $k$, we get
\begin{align*}
 h(k)&= \eta \nu \left( \sqrt{\frac{B\lfloor k/B \rfloor}{M}}+\sqrt{k-B\lfloor k/B \rfloor}\right)+ k \eta L y_0 + \eta L \sum_{l=0}^{k-1} h(l)\\
 &\leq  \eta \nu \left( \sqrt{\frac{B\lfloor k/B \rfloor}{M}}+\sqrt{k-B\lfloor k/B \rfloor}\right)+ k \eta L y_0 + k\eta L  h(k)\\
 \implies h(k)&\leq \frac{\eta \nu \left( \sqrt{\frac{B\lfloor k/B \rfloor}{M}}+\sqrt{k-B\lfloor k/B \rfloor}\right)+ k \eta L y_0}{1-k \eta L}.
\end{align*}
Since $k \leq N$ and $\eta \leq \frac{1}{1025 LN}$, we get that 
\begin{equation*}
    \E[|x_{iB+j}^1-y_0|]\leq \frac{1025}{1024} \eta\nu \left( \sqrt{\frac{iB}{M}}+\sqrt{j} \right)+ \frac{1025}{1024} (iB+j)\eta L y_0,    
\end{equation*}
as desired.

\section{Proof of lower bound for local $\randshuf$: heterogeneous case (Proposition~\ref{prop:localRS-LB2})}
\label{sec:proof-prop-localRS-LB2}
Recall that Proposition~\ref{prop:localRS-LB2} gives the bound for local $\randshuf$ in the heterogeneous setting, where different machines have different local objectives.
In this section, we construct examples where there is no intra-machine variation (i.e., $f_1^m = f_2^m = \cdots = f_N^m$ for all $m\in[M]$), but there is certain level of heterogeneity among different machines.

Similar to the other two lower bounds, we consider four step-size ranges and do case analysis for each of them. This time, we construct a single function $F$ for these step-size regimes such that the convergence of local $\randshuf$ is ``slow'' for $F$. The final lower bound is the minimum among the lower bounds obtained for the four regimes. More concretely, we will construct a one-dimensional function $F(x)$ satisfying $L$-smoothness~\eqref{eq:smooth}, $\mu$-P{\L} condition~\eqref{eq:pl}, and Assumption~\ref{assm:inter-dev} such that\footnote{Again, the functions constructed in this theorem are $\mu$-strongly convex, which is stronger than $\mu$-PL required in Definition~\ref{def:funccls}. 
Also, our functions satisfy Assumption~\ref{assm:intra-dev} with $\nu = 0$.} 
\begin{itemize}
    \item Local $\randshuf$ on $F(x)$ with $\eta \leq \frac{1}{8\mu NK}$ and initialization $y_{0}=\frac{\tau}{\mu}$ results in 
    \begin{align*}
        \E[F(y_{K,\frac{N}{B}})] = \Omega\left(\frac{\tau^2}{\mu}\right).
    \end{align*}
    \item Local $\randshuf$ on $F(x)$ with $\frac{1}{8\mu NK} \leq \eta \leq \frac{1}{8 \mu B}$ and initialization $y_{0}=0$ results in 
    \begin{align*}
        \E[F(y_{K,\frac{N}{B}})] = \Omega\left(\frac{\tau^2 B^2}{\mu N^2K^2}\right).
    \end{align*}
    \item Local $\randshuf$ on $F(x)$ with $\frac{1}{8 \mu B} \leq \eta \leq \frac{1}{\mu}$ and initialization $y_{0}=0$ results in 
    \begin{align*}
        \E[F(y_{K,\frac{N}{B}})] = \Omega\left(\frac{\tau^2}{\mu}\right).
    \end{align*}
    \item Local $\randshuf$ on $F(x)$ with $\eta \geq \frac{1}{\mu}$ and initialization $y_{0}=\frac{\tau}{\mu}$ results in 
    \begin{align*}
        \E[F(y_{K,\frac{N}{B}})] =\Omega\left(\frac{\tau^2}{\mu}\right).
    \end{align*}
\end{itemize}

In the subsequent subsections, we prove the lower bounds for $F$ for the four step-size intervals. 

\subsection{Lower bound for $\frac{1}{8\mu NK} \leq \eta \leq \frac{1}{8 \mu B}$ and $\frac{1}{8 \mu B} \leq \eta \leq \frac{1}{\mu}$}
We first consider the two intervals in the middle, because they are more interesting cases.
The global objective function $F$ and its local objective functions are as follows.
\begin{align*}
    F(x)& \defeq \frac{1}{M}\left(\sum_{i=1}^{\frac{M}{2}}f_{1}(x)+\sum_{i=\frac{M}{2}+1}^{M}f_{2}(x)\right), \text{ where}\\
    f_{1}(x)& \defeq -\tau x, \text{ and }
    f_{2}(x) \defeq \mu x^2 + \tau x
\end{align*}
In this construction, $M/2$ machines will have the function $f_1$ as their $N$ local component functions (and hence their local objective functions) and the other $M/2$ machines will have the function $f_2$. 

Then, $B$ local $\randshuf$ updates in each machine corresponds to $B$ updates using either $f_1$ or $f_2$. If we start from $x^m_{iB} = y_i$, the $B$ local updates on machine $m$ result in
\begin{equation*}
    x^m_{(i+1)B} =
    \begin{cases}
    x^m_{iB} + \eta \tau B & \text{ if machine $m$ has $f_1$,}\\
    (1-2 \eta \mu)^B x^m_{iB} - \eta \tau \sum_{j=0}^{B-1}(1-2 \eta \mu)^j & \text{ if machine $m$ has $f_2$.}\\
    \end{cases}
\end{equation*}
Taking the average of the $M$ machines, we get that
\begin{align*}
    y_{i+1}
    &=
    \frac{1}{M}\left(\frac{M}{2}(y_i+\eta \tau B)
    +
    \frac{M}{2}\left((1-2 \eta \mu)^B y_i-\eta \tau\sum_{j=0}^{B-1}(1-2 \eta \mu)^j\right)\right)\\
    &=
    \frac{1}{2}\left(1+(1-2 \eta \mu)^B\right)y_i
    +
    \frac{\eta \tau}{2}\left(B-\sum_{j=0}^{B-1}(1-2 \eta \mu)^j\right).
\end{align*}
Since there are total $\frac{NK}{B}$ such communication rounds over $K$ epochs, at the end of the run we have
\begin{align}
    y_{K,\frac{N}{B}}
    &=
    \left(\frac{1}{2}\left(1+(1-2 \eta \mu)^B\right)\right)^{\frac{NK}{B}} y_0\notag\\
    &\quad
    +\frac{\eta \tau}{2}
    \left(B-\sum_{j=0}^{B-1}(1-2 \eta \mu)^j\right) \sum_{l=0}^{\frac{NK}{B}-1}\left(\frac{1}{2}\left(1+(1-2 \eta \mu)^B\right)\right)^l\notag\\
    &=\frac{\eta \tau}{2}
    \left(B-\frac{1-(1-2 \eta \mu)^B}{2\eta\mu}\right) \frac{1-\left(\frac{1}{2}\left(1+(1-2 \eta \mu)^B\right)\right)^{\frac{NK}{B}}}{1-\frac{1}{2}\left(1+(1-2 \eta \mu)^B\right)},\label{eq:proof-thm-localRS-LB2-1}
\end{align}
where we used initialization $y_0 = 0$.
Having defined the function and calculated its last iterate \eqref{eq:proof-thm-localRS-LB2-1}, let us now handle the two step-size regimes separately.

We first consider $\frac{1}{8\mu NK} \leq \eta \leq \frac{1}{8 \mu B}$. In this case, we exploit the fact that 
\begin{align}
    1-2 \eta \mu B + \eta^2\mu^2B^2 
    \leq (1-2 \eta \mu)^B
    \leq 1-2 \eta \mu B + 4\eta^2\mu^2B^2,\label{eq:proof-thm-localRS-LB2-2}
\end{align}
when $0 \leq \eta \leq \frac{1}{8 \mu B}$. To see why, consider substituting $z \defeq 2\eta\mu$. Then $h_1(z) \defeq 1-Bz+B^2z^2-(1-z)^B$ has $h_1''(z) \geq 0$ on $z \in [0,1]$, $h_1'(0) = 0$, and $h_1(0) = 0$, implying that $h_1(z) \geq 0$ on $z \in [0,1]$. On the other hand, let $h_2(z) \defeq 1-Bz+\frac{B^2 z^2}{4} - (1-z)^B$. If $B = 2$, then $h_2 \equiv 0$. If $B > 2$, then it can be checked that $h_2(z) \leq 0$ for small enough interval $[0,\frac{1}{4B}]$.

Using \eqref{eq:proof-thm-localRS-LB2-2} on \eqref{eq:proof-thm-localRS-LB2-1},
\begin{align*}
    y_{K,\frac{N}{B}}
    &=
    \frac{\eta \tau}{2} 
    \left(B-\frac{1-(1-2 \eta \mu)^B}{2\eta\mu}\right) \frac{1-\left(\frac{1}{2}\left(1+(1-2 \eta \mu)^B\right)\right)^\frac{NK}{B}}{1-\frac{1}{2}\left(1+(1-2\eta \mu)^B\right)}\\
    &\geq 
    \frac{\eta \tau}{2}
    \left( B - \frac{1-(1-2\eta \mu B+\eta^2 \mu^2 B^2)}{2\eta \mu} \right )
    \frac{1-\left(\frac{1}{2}\left(1+1-2 \eta \mu B + 4\eta^2\mu^2B^2\right)\right)^\frac{NK}{B}}{1-\frac{1}{2}\left(1+1-2 \eta \mu B + \eta^2\mu^2B^2\right)}\\
    &=
    \frac{\eta \tau}{2}
    \left( \frac{\eta \mu B^2}{2} \right)
    \frac{1-\left(1- \eta \mu B + 2\eta^2\mu^2B^2\right)^\frac{NK}{B}}{\eta \mu B - \half \eta^2\mu^2B^2}\\
    &\geq
    \frac{\eta^2 \mu \tau B^2}{4} \cdot
    \frac{1-\left(1- \eta \mu B + \frac{1}{4}\eta \mu B\right)^\frac{NK}{B}}{\eta \mu B}\\
    &\geq
    \frac{\eta \tau B}{4} \left ( 
    1 - \left ( 1- \frac{3\eta \mu B}{4} \right )^{\frac{NK}{B}}
    \right ),
\end{align*}
where we used $\eta \mu B \leq \frac{1}{8}$. Now, substituting $\eta \geq \frac{1}{8\mu N K}$ to above, we obtain
\begin{align*}
    y_{K,\frac{N}{B}}
    &\geq \frac{\eta \tau B}{4} 
    \left ( 
    1 - \left ( 1- \frac{3\eta \mu B}{4} \right )^{\frac{NK}{B}}
    \right )\\
    &\geq \frac{\tau B}{32 \mu N K} 
    \left ( 
    1 - \left ( 1- \frac{3 B}{32 NK} \right )^{\frac{NK}{B}}
    \right )\\
    &\geq
    \frac{(1-e^{-3/32}) \tau B}{32 \mu N K}.
\end{align*}
Therefore, $F(y_{K,\frac{N}{B}})=\Omega\left (\frac{\tau^2 B^2}{\mu N^2 K^2} \right )$.

Next, consider $\frac{1}{8 \mu B} \leq \eta \leq \frac{1}{\mu}$. We take a close look at the term that appears in \eqref{eq:proof-thm-localRS-LB2-1}:
\begin{equation*}
    B-\sum_{j=0}^{B-1}(1-2 \eta \mu)^j
    =
    B-\frac{1-(1-2 \eta \mu)^B}{2\eta\mu}.
\end{equation*}
For this term, we would like to find a lower bound which holds for all $\eta \in [\frac{1}{8 \mu B}, \frac{1}{\mu}]$.
To this end, consider substituting $z\defeq 2\eta\mu$. Then, the function 
\begin{equation}
\label{eq:proof-thm-localRS-LB2-3}
    h_3(z) \defeq B-\sum_{j=0}^{B-1} (1-z)^j
\end{equation}
is increasing on $z \in [\frac{1}{4B},1]$, and we have \begin{align*}
    h_3\left (\tfrac{1}{4B}\right ) = B - 4B\left (1- \left (1-\tfrac{1}{4B} \right )^B \right ) \leq h_3(z), \text{ for all $z \in \left [\tfrac{1}{4B},1 \right ]$. }
\end{align*}
Using $B \geq 2$, $h_3 (\tfrac{1}{4B} )$ can be lower-bounded as
\begin{align*}
    h_3\left (\tfrac{1}{4B}\right ) 
    &\geq B - 4B \left ( 1- \left ( 1-\tfrac{1}{8} \right )^2 \right ) = \tfrac{B}{16}.
\end{align*}
Next, for $z \geq 1$, the derivative of $h_3(z) \defeq B-\sum_{j=0}^{B-1} (1-z)^j = B - \frac{1-(1-z)^B}{z}$ is $h_3'(z) = \frac{1-(1-z)^{B-1} ((B-1)z+1)}{z}$. Since $B \geq 2$ is assumed to be even, it is easy to check that $h_3'(z) \geq 0$ for $z \geq 1$, which means that $h_3$ keeps increasing on $[1, 2]$. Therefore, we conclude that 
\begin{align*}
    \tfrac{B}{16} \leq h_3\left (\tfrac{1}{4B}\right ) \leq h_3(z),  \text{ for all $z \in \left [\tfrac{1}{4B},2 \right ]$, }
\end{align*}
and hence
\begin{align*}
    B-\sum_{j=0}^{B-1}(1-2 \eta \mu)^j \geq \frac{B}{16},
\end{align*}
for all $\frac{1}{8 \mu B} \leq \eta \leq \frac{1}{\mu}$.
Using $y_{K,\frac{N}{B}}$ from \eqref{eq:proof-thm-localRS-LB2-1}, we get
\begin{align*}
    y_{K,\frac{N}{B}}
    &=
    \frac{\eta \tau}{2}
    \left(B-\sum_{j=0}^{B-1}(1-2 \eta \mu)^j\right) \sum_{l=0}^{\frac{NK}{B}-1}\left(\frac{1}{2}\left(1+(1-2 \eta \mu)^B\right)\right)^l\notag\\
    &\geq 
    \frac{\eta \tau}{2}
    \left(B-\sum_{j=0}^{B-1}(1-2 \eta \mu)^j\right)
    \geq 
    \frac{\eta\tau B}{32}
    \geq
    \frac{\tau}{256 \mu}.
\end{align*}
Here, we used the fact that  $\sum_{l=0}^{\frac{NK}{B}-1}\left(\frac{1}{2}\left(1+(1-2 \eta \mu)^B\right)\right)^l \geq \left(\frac{1}{2}\left(1+(1-2 \eta \mu)^B\right)\right)^0 = 1$. Hence, we obtain $F(y_{K,\frac{N}{B}})=\Omega (\frac{\tau^2}{\mu} )$, finishing the proof.

\subsection{Lower bound for $\eta \leq \frac{1}{8\mu NK}$ and $\eta \geq \frac{1}{\mu}$}
We now conclude with the ``extreme'' step-size regimes. We consider the same function $F$ as in the previous subsection, but with a different initialization $y_0 = \frac{\tau}{\mu}$.

For $F$, recall from \eqref{eq:proof-thm-localRS-LB2-1} that
\begin{align}
    y_{K,\frac{N}{B}}
    &=
    \left(\frac{1}{2}\left(1+(1-2 \eta \mu)^B\right)\right)^{\frac{NK}{B}} y_0\notag\\
    &\quad
    +\frac{\eta \tau}{2}
    \left(B-\sum_{j=0}^{B-1}(1-2 \eta \mu)^j\right) \sum_{l=0}^{\frac{NK}{B}-1}\left(\frac{1}{2}\left(1+(1-2 \eta \mu)^B\right)\right)^l.\label{eq:proof-thm-localRS-LB2-4}
\end{align}
This time, we want to lower-bound the second term on the RHS of \eqref{eq:proof-thm-localRS-LB2-4} with zero and focus on the first term.
To this end, we revisit our discussion on $h_3$ \eqref{eq:proof-thm-localRS-LB2-3}. It is easy to check that $h_3$ is in fact increasing on the entire $[0,\infty)$, and $h_3(0) = 0$. This shows $B-\sum_{j=0}^{B-1}(1-2 \eta \mu)^j \geq 0$ for any $\eta \geq 0$. 
Next, since $B$ is even, $\frac{1}{2}\left(1+(1-2 \eta \mu)^B\right) \geq 0$ for any $\eta \geq 0$. This gives
\begin{align}
    y_{K,\frac{N}{B}}
    \geq
    \left(\frac{1}{2}\left(1+(1-2 \eta \mu)^B\right)\right)^{\frac{NK}{B}} y_0
    =
    \left(\frac{1}{2}\left(1+(1-2 \eta \mu)^B\right)\right)^{\frac{NK}{B}} \frac{\tau}{\mu}.
    \label{eq:proof-thm-localRS-LB2-5}
\end{align}

First, consider the interval $0 \leq \eta \leq \frac{1}{8\mu N K}$.
Recall from \eqref{eq:proof-thm-localRS-LB2-2} that for this $\eta$,
\begin{align}
    (1-2 \eta \mu)^B 
    \geq
    1-2 \eta \mu B + \eta^2\mu^2B^2 
    \geq 
    1-2 \eta \mu B,
\end{align}
so
\begin{align*}
    y_{K,\frac{N}{B}} 
    \geq 
    \left(\frac{1}{2}\left(1+(1-2 \eta \mu)^B\right)\right)^{\frac{NK}{B}} \frac{\tau}{\mu}
    \geq 
    (1 - \eta \mu B)^{\frac{NK}{B}} \frac{\tau}{\mu}
    \geq
    \left (1 - \frac{B}{8NK} \right )^{\frac{NK}{B}} \frac{\tau}{\mu}
    \geq
    \frac{7\tau}{8\mu},
\end{align*}
since $\frac{NK}{B} \geq 1$. Hence, $F(y_{K,\frac{N}{B}})=\Omega(\frac{\tau^2}{\mu})$.

Finally, if $\eta \geq \frac{1}{\mu}$, then we have $(1-2\eta \mu)^B \geq 1$, so
\begin{align*}
    y_{K,\frac{N}{B}} 
    \geq 
    \left(\frac{1}{2}\left(1+(1-2 \eta \mu)^B\right)\right)^{\frac{NK}{B}} \frac{\tau}{\mu}
    \geq 
    \frac{\tau}{\mu}.
\end{align*}
As a result, $F(y_{K,\frac{N}{B}})=\Omega(\frac{\tau^2}{\mu})$.


\end{document}